\newcommand*{\addFileDependency}[1]{
  \typeout{(#1)}
  \@addtofilelist{#1}
  \IfFileExists{#1}{}{\typeout{No file #1.}}
}
\newcommand*{\myexternaldocument}[1]{%
    \externaldocument{#1}%
    \addFileDependency{#1.tex}%
    \addFileDependency{#1.aux}%
}
\newcommand\xNodemoinstiny{-1}
\newcommand\xNodeMoins{-3}
\newtheorem{theorem}{Theorem}
\crefname{theorem}{theorem}{Theorems}
\Crefname{Theorem}{Theorem}{Theorems}
\newtheorem*{lemma_nonumber*}{Lemma}
\newaliascnt{lemma}{theorem}
\newtheorem{lemma}[lemma]{Lemma}
\crefname{lemma}{lemma}{lemmas}
\Crefname{Lemma}{Lemma}{Lemmas}
\newaliascnt{corollary}{theorem}
\crefname{corollary}{corollary}{corollaries}
\Crefname{Corollary}{Corollary}{Corollaries}
\newaliascnt{proposition}{theorem}
\newtheorem{proposition}[proposition]{Proposition}
\crefname{proposition}{proposition}{propositions}
\Crefname{Proposition}{Proposition}{Propositions}
\newaliascnt{definition}{theorem}
\crefname{definition}{definition}{definitions}
\Crefname{Definition}{Definition}{Definitions}
\newaliascnt{remark}{theorem}
\crefname{remark}{remark}{remarks}
\Crefname{Remark}{Remark}{Remarks}
\crefname{example}{example}{examples}
\Crefname{Example}{Example}{Examples}
\newtheorem{technique}{Technique}
\crefname{technique}{technique}{techniques}
\Crefname{Technique}{Technique}{Techniques}
\crefname{figure}{figure}{figures}
\Crefname{Figure}{Figure}{Figures}
\newtheorem{assumption}{\textbf{A}\hspace{-3pt}}
\newtheorem{assumptionF}{\textbf{F}\hspace{-3pt}}
\newtheorem{assumptionB}{\textbf{B}\hspace{-3pt}}
\Crefname{assumptionB}{\textbf{B}\hspace{-3pt}}{\textbf{B}\hspace{-3pt}}
\crefname{assumptionB}{\textbf{B}}{\textbf{B}}
\Crefname{assumptionC}{\textbf{C}\hspace{-3pt}}{\textbf{C}\hspace{-3pt}}
\crefname{assumptionC}{\textbf{C}}{\textbf{C}}
\Crefname{assumptionH}{\textbf{H}\hspace{-3pt}}{\textbf{H}\hspace{-3pt}}
\crefname{assumptionH}{\textbf{H}}{\textbf{H}}
\Crefname{assumptionT}{\textbf{T}\hspace{-3pt}}{\textbf{T}\hspace{-3pt}}
\crefname{assumptionT}{\textbf{T}}{\textbf{T}}
\Crefname{assumptionT}{\textbf{T}\hspace{-3pt}}{\textbf{T}\hspace{-3pt}}
\crefname{assumptionT}{\textbf{T}}{\textbf{T}}
\Crefname{assumptionL}{\textbf{L}\hspace{-3pt}}{\textbf{L}\hspace{-3pt}}
\crefname{assumptionL}{\textbf{L}}{\textbf{L}}
\Crefname{assumptionQ}{\textbf{Q}\hspace{-3pt}}{\textbf{Q}\hspace{-3pt}}
\crefname{assumptionQ}{\textbf{Q}}{\textbf{Q}}
\Crefname{assumptionAR}{\textbf{AR}\hspace{-3pt}}{\textbf{AR}\hspace{-3pt}}
\crefname{assumptionAR}{\textbf{AR}}{\textbf{AR}}
\newcommand\diaW{11}
\newcommand\diaH{5}
\newcommand\diaJump{2.75}
\newcommand\nextRow{1.25}
\newcommand\imW{0.08}
\newcommand\imWB{0.1}
\newcommand\imOp{0.6}
\newcommand\bend{5}
\newcommand\offset{2}
\newcommand\offsety{2.3}
\newcommand\hsmall{1.75}
\newcommand\ww{3.25}
\newcommand\www{1.8}
\newcommand\wwww{3.5}
\newcommand\wwwww{4.8}
 \newcommand{\detLigne}[1]{\det(#1)}
\def\hlf{\hat{\ell}^f}
\def\hlb{\hat{\ell}^b}
\def\Ent{\mathrm{H}}
\def\lyap{V_{p,t,x_t}}
\def\lyapp{V_{p}}
\def\contspace{\mathcal{C}}
\def\pdata{p_{\textup{data}}}
\def\pprior{p_{\textup{prior}}}
\def\Pens{\mathscr{P}}
\def\Mens{\mathscr{M}}
\newcommand{\schro}{Schr\"{o}dinger\xspace}
\newcommand{\tta}{\mathtt{A}}
\newcommand{\Capprox}{\tta}
\newcommandx\ctun[1][1=T]{\Capprox_{#1,1}}
\newcommand{\rref}[1]{\tup{\Cref{#1}}}
\newcommandx{\expec}[2]{{\mathbb E}\left[#1 \middle \vert #2  \right]} 
\def\dim{d}
\newcommand{\rme}{\mathrm{e}}
\newcommand{\Mtt}{\mathtt{M}}
\newcommandx{\norm}[2][1=]{\ifthenelse{\equal{#1}{}}{\left\Vert #2 \right\Vert}{\left\Vert #2 \right\Vert^{#1}}}
\newcommandx{\normLigne}[2][1=]{\ifthenelse{\equal{#1}{}}{\Vert #2 \Vert}{\Vert #2\Vert^{#1}}}
\def\bfc{\mathbf{c}}
\def\bfY{\mathbf{Y}}
\def\bbfY{\bar{\mathbf{Y}}}
\def\bfX{\mathbf{X}}
\def\tbfX{\tilde{\mathbf{X}}}
\def\tbfY{\tilde{\mathbf{Y}}}
\def\bfs{\mathbf{s}}
\def\bfZ{\mathbf{Z}}
\def\bfZ{\mathbf{Z}}
\def\bfB{\mathbf{B}}
\def\msa{\mathsf{A}}
\def\msd{\mathsf{D}}
\def\msb{\mathsf{B}}
\def\msc{\mathsf{C}}
\def\mse{\mathsf{E}}
\def\msf{\mathsf{F}}
\def\msu{\mathsf{U}}
\def\msx{\mathsf{X}}
\def\msy{\mathsf{Y}}
\newcommand{\mcb}[1]{\mathcal{B}(#1)}
\def\mcy{\mathcal{Y}}
\def\mcx{\mathcal{X}}
\def\mce{\mathcal{E}}
\def\mcf{\mathcal{F}}
\def\Qbb{\mathbb{Q}}
\def\Mbb{\mathbb{M}}
\def\Pbb{\mathbb{P}}
\def\rset{\mathbb{R}}
\def\nset{\mathbb{N}}
\def\rmP{\mathrm{P}}
\def\rmd{\mathrm{d}}
\def\rms{\mathrm{s}}
\def\rme{\mathrm{e}}
\def\rmc{\mathrm{C}}
\def\rmg{\mathrm{g}}
\def\rmh{\mathrm{h}}
\def\trace{\operatorname{Tr}}
\newcommandx{\functionspace}[2][1=+]{\mathbb{F}_{#1}(#2)}
\newcommand{\argmax}{\operatorname*{arg\,max}}
\newcommand{\argmin}{\operatorname*{arg\,min}}
\newcommandx{\VarDeux}[3][3=]{\operatorname{Var}^{#3}_{#1}\left\{#2 \right\}}
\newcommand{\1}{\mathbbm{1}}
\newcommand{\LeftEqNo}{\let\veqno\@@leqno}
\newcommand{\N}{\ensuremath{\mathbb{N}}}
\newcommand{\PE}{\mathbb{E}}
\newcommand{\abs}[1]{\left\vert #1 \right\vert}
\newcommand{\absLigne}[1]{\vert #1 \vert}
\newcommand{\tvnormLigne}[1]{\| #1 \|_{\mathrm{TV}}}
\newcommandx{\Vnorm}[2][1=V]{\| #2 \|_{#1}}
\newcommandx{\VnormEq}[2][1=V]{\left\| #2 \right\|_{#1}}
\newcommand{\parenthese}[1]{\left(#1 \right)}
\newcommand{\parentheseLigne}[1]{(#1 )}
\newcommand{\parentheseDeuxLigne}[1]{[ #1 ]}
\newcommand{\defEns}[1]{\left\lbrace #1 \right\rbrace }
\newcommand{\defEnsLigne}[1]{\lbrace #1 \rbrace }
\newcommand{\probaLigne}[1]{\mathbb{P}( #1 )}
\newcommandx\probaMarkovTilde[2][2=]
\newcommand{\expeLigne}[1]{\PE [ #1 ]}
\newcommand{\expeMarkovLigne}[2]{\PE_{#1} [ #2 ]}
\newcommand{\bigO}{\ensuremath{\mathcal O}}
\def\ie{\textit{i.e.}}
\def\eqsp{\;}
\newcommand{\coint}[1]{\left[#1\right)}
\newcommand{\ocint}[1]{\left(#1\right]}
\newcommand{\ooint}[1]{\left(#1\right)}
\newcommand{\ccint}[1]{\left[#1\right]}
\newcommand{\ccintLigne}[1]{[#1]}
\newcommandx{\weight}[2][2=n]{\omega_{#1,#2}^N}
\newcommandx\sequence[3][2=,3=]
\newcommandx\sequenceD[3][2=,3=]
\newcommandx{\sequencen}[2][2=n\in\N]{\ensuremath{\{ #1_n, \eqsp #2 \}}}
\newcommandx\sequenceDouble[4][3=,4=]
\newcommandx{\sequencenDouble}[3][3=n\in\N]{\ensuremath{\{ (#1_{n},#2_{n}), \eqsp #3 \}}}
\def\eg{\textit{e.g.}}
\newcommand{\opnorm}[1]{{\left\vert\kern-0.25ex\left\vert\kern-0.25ex\left\vert #1
    \right\vert\kern-0.25ex\right\vert\kern-0.25ex\right\vert}}
\def\generator{\mathcal{A}}
\def\generatort{\tilde{\mathcal{A}}}
\def\Id{\operatorname{Id}}
\def\Idbf{\mathbf{I}}
\newcommandx{\CPE}[3][1=]{{\mathbb E}_{#1}\left[#2 \middle \vert #3  \right]} 
\newcommandx{\CPELigne}[3][1=]{{\mathbb E}_{#1}[#2  \vert #3  ]} 
\newcommandx{\CPEsq}[3][1=]{{\mathbb{E}^{1/2}}_{#1}\left[#2 \middle \vert #3  \right]} 
\newcommandx{\CPVar}[3][1=]{\mathrm{Var}^{#3}_{#1}\left\{ #2 \right\}}
\newcommand{\CPP}[3][]
{\ifthenelse{\equal{#1}{}}{{\mathbb P}\left(\left. #2 \, \right| #3 \right)}{{\mathbb P}_{#1}\left(\left. #2 \, \right | #3 \right)}}
\newcommandx{\osc}[2][1=]{\mathrm{osc}_{#1}(#2)}
\def\Id{\operatorname{Id}}
\def\V{V}
\def\bgamma{\bar{\gamma}}
\newcommand{\ensemble}[2]{\left\{#1\,:\eqsp #2\right\}}
\newcommand{\ensembleLigne}[2]{\{#1\,:\eqsp #2\}}
\newcommand\coupling[2]{\Gamma(\mu,\nu)}
\newcommand{\complementary}{\mathrm{c}}
\def\Leb{\lambda}
\def\vareps{\varepsilon}
\def\Psibf{\mathbf{\Psi}}
\newcommandx{\KL}[2]{\operatorname{KL}\left( #1 | #2 \right)}
\newcommandx{\KLsqrt}[2]{\operatorname{KL}^{1/2}\left( #1 | #2 \right)}
\newcommandx{\Jef}[2]{\operatorname{J}\left( #1 , #2 \right)}
\newcommandx{\JefLigne}[2]{\operatorname{J}( #1 , #2 )}
\newcommandx{\KLLigne}[2]{\operatorname{KL}( #1 | #2 )}
\def\gaStep
\def\QKer{Q}
\def\Tnplusun{\mathcal{T}_{k+1}}
\def\distance{\mathbf{d}}
\newcommandx{\wasserstein}[3][1=\distance,3=]{\mathbf{W}_{#1}^{#3}\left(#2\right)}
\newcommandx{\wassersteinLigne}[3][1=\distance,3=]{\mathbf{W}_{#1}^{#3}(#2)}
\newcommandx{\wassersteinD}[1][1=\distance]{\mathbf{W}_{#1}}
\newcommandx{\wassersteinDLigne}[1][1=\distance]{\mathbf{W}_{#1}}
\def\Rcoupling{\mathrm{R}}
\def\Kcoupling{\mathrm{K}}
\def\sigmaD{\sigma^2}
\newcommandx{\phibfs}[1][1=]{\pmb{\varphi}_{\sigmaD_{#1}}}
\newcommandx\sequenceg[3][2=,3=]
\def\Kker{\Kcoupling}
\def\Rker{\Rcoupling}
\def\Pker{\mathrm{P}}
\def\Qker{\mathrm{Q}}
\def\rmL{\mathrm{L}}
\def\rmG{\mathrm{G}}
\newcommandx{\distV}[1][1=\bfc]{\mathbf{W}_{#1}}
\newcommandx{\distVdeux}[1][1=W_2]{\mathbf{d}_{#1}}
\def\mtt{\mathtt{m}}
\newcommand{\tup}[1]{\textup{#1}}
\def\wass{\mathcal{W}}
\title{Diffusion Schr\"{o}dinger Bridge with Applications to\\ Score-Based Generative Modeling}
\author{%
  Valentin De Bortoli \\
  Department of Statistics,\\
  University of Oxford, UK
  \And
  James Thornton \\
  Department of Statistics,\\
  University of Oxford, UK
  \AND
  Jeremy Heng \\
  ESSEC Business School,\\
  Singapore
  \And
  Arnaud Doucet \\
  Department of Statistics,\\
  University of Oxford, UK
}
\begin{document}

\maketitle

\begin{abstract}
  Progressively applying Gaussian noise transforms complex data distributions to
  approximately Gaussian. Reversing this dynamic defines a generative
  model. When the forward noising process is given by a Stochastic Differential
  Equation (SDE), \cite{song2020score} demonstrate how the time inhomogeneous
  drift of the associated reverse-time SDE may be estimated using
  score-matching. A limitation of this approach is that the forward-time SDE must be run for a sufficiently long time for the final distribution to be approximately Gaussian while ensuring that the corresponding time-discretization error is controlled.  In contrast, solving the \schro Bridge (SB) problem,
  \ie \ an entropy-regularized optimal transport problem on path spaces, yields
  diffusions which generate samples from the data distribution in finite
  time. We present Diffusion SB (DSB), an original approximation of the
  Iterative Proportional Fitting (IPF) procedure to solve the SB problem, and
  provide theoretical analysis along with generative modeling experiments. The
  first DSB iteration recovers the methodology proposed by \cite{song2020score},
  with the flexibility of using shorter time intervals, as subsequent DSB
  iterations reduce the discrepancy between the final-time marginal of the
  forward (resp. backward) SDE with respect to the Gaussian prior (resp. data)
  distribution.  Beyond generative modeling, DSB offers a  computational optimal transport tool as the continuous state-space analogue of
  the popular Sinkhorn algorithm \citep{cuturi2013sinkhorn}.

\end{abstract}

\section{Introduction}
\label{sec:some-hist-notes}
 \emph{Score-Based Generative Modeling} (SGM) is a recently developed
approach to probabilistic generative modeling that exhibits state-of-the-art performance on several audio and image synthesis tasks; see \eg \
\cite{song2019generative,cai2020learning,chen2020wavegrad,kong2020diffwave,gao2020learning,jolicoeur2020adversarial,ho2020denoising,song2020improved,song2020denoising,song2020score,niu2020permutation,durkan2021maximum,hoogeboom2021argmax,saharia2021image,luhman2021knowledge,luhman2020diffusion,nichol2021improved,popov2021gradtts,nichol2021beatgans}. 
Existing SGMs generally consist of two parts. Firstly, noise is incrementally
added to the data in order to obtain a perturbed data distribution approximating
an easy-to-sample \emph{prior} distribution \eg \  Gaussian. Secondly, a neural network
is used to learn the reverse-time denoising dynamics, which when initialized at
this prior distribution, defines a generative model
\citep{sohl2015deep,ho2020denoising,song2019generative,song2020score}. \cite{song2020score}
have shown that one could fruitfully view the noising process as a
Stochastic Differential Equation (SDE) that progressively perturbs the initial data
distribution into an approximately Gaussian one.
\begin{figure}[H]
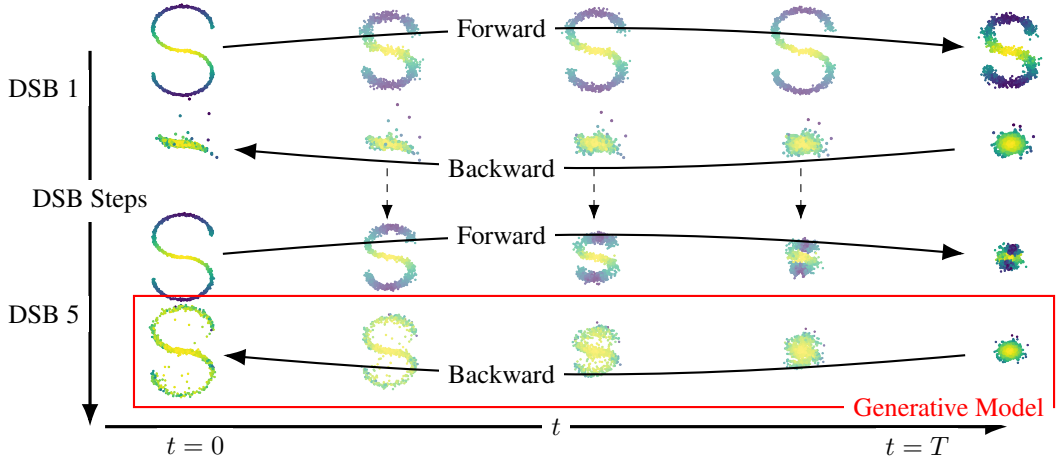

        \centering
        \begin{tikzpicture}
        \node[inner sep=0pt, label={\small }] (f1_T) at (\diaW,0)
            {\includegraphics[width=\imW\textwidth]{./fig/scurve/diag/0_forward_0_registration_29.png}};
        \node[inner sep=0pt, label={\small },opacity=\imOp] (f1_1) at (\diaW/4,0)
            {\includegraphics[width=\imW\textwidth]{./fig/scurve/diag/0_forward_0_registration_23.png}};
        \node[inner sep=0pt, label={\small },opacity=\imOp] (f1_2) at (\diaW/2,0)
            {\includegraphics[width=\imW\textwidth]{./fig/scurve/diag/0_forward_0_registration_15.png}};
        \node[inner sep=0pt, label={\small },opacity=\imOp] (f1_3) at (3*\diaW/4,0)
            {\includegraphics[width=\imW\textwidth]{./fig/scurve/diag/0_forward_0_registration_8.png}};        
        \node[inner sep=0pt, label={\small }] (f1_data) at (0,0)
            {\includegraphics[width=\imW\textwidth]{./fig/scurve/diag/0_forward_0_registration_0.png}};
        
        \node[inner sep=0pt, label={\small }] (b1_data) at (0,-\nextRow)    {\includegraphics[width=\imWB\textwidth]{./fig/scurve/diag/10000_backward_1_registration_29.png}};
        \node[inner sep=0pt, label={\small },opacity=\imOp] (b1_1) at (\diaW/4,-\nextRow)
            {\includegraphics[width=\imWB\textwidth]{./fig/scurve/diag/10000_backward_1_registration_23.png}};
        \node[inner sep=0pt, label={\small },opacity=\imOp] (b1_2) at (2*\diaW/4,-\nextRow)
            {\includegraphics[width=\imWB\textwidth]{./fig/scurve/diag/10000_backward_1_registration_15.png}};
        \node[inner sep=0pt, label={\small },opacity=\imOp] (b1_3) at (3*\diaW/4,-\nextRow)
            {\includegraphics[width=\imWB\textwidth]{./fig/scurve/diag/10000_backward_1_registration_8.png}};        
        \node[inner sep=0pt, label={\small }] (b1_T) at (\diaW,-\nextRow)
            {\includegraphics[width=\imWB\textwidth]{./fig/scurve/diag/10000_backward_1_registration_0.png}};
        
        \node[inner sep=0pt, label={\small }] (f15_data) at (0,-\diaJump)
            {\includegraphics[width=\imW\textwidth]{./fig/scurve/diag/10000_forward_5_registration_0.png}};
        \node[inner sep=0pt, label={\small },opacity=\imOp] (f15_1) at (\diaW/4,-\diaJump)
            {\includegraphics[width=\imW\textwidth]{./fig/scurve/diag/10000_forward_5_registration_8.png}};
        \node[inner sep=0pt, label={\small },opacity=\imOp] (f15_2) at (\diaW/2,-\diaJump)
            {\includegraphics[width=\imW\textwidth]{./fig/scurve/diag/10000_forward_5_registration_15.png}};
        \node[inner sep=0pt, label={\small },opacity=\imOp] (f15_3) at (3*\diaW/4,-\diaJump)
            {\includegraphics[width=\imW\textwidth]{./fig/scurve/diag/10000_forward_5_registration_23.png}};    
        \node[inner sep=0pt, label={\small }] (f15_T) at (\diaW,-\diaJump)
            {\includegraphics[width=\imW\textwidth]{./fig/scurve/diag/10000_forward_5_registration_29.png}};
            
        \node[inner sep=0pt, label={\small }] (b15_data) at (0,-\diaJump-\nextRow)
            {\includegraphics[width=\imW\textwidth]{./fig/scurve/diag/10000_backward_6_registration_29.png}};
            
        \node[inner sep=0pt, label={\small },opacity=\imOp] (b15_1) at (\diaW/4,-\diaJump-\nextRow)
            {\includegraphics[width=\imW\textwidth]{./fig/scurve/diag/10000_backward_6_registration_23.png}};
            
        \node[inner sep=0pt, label={\small }, opacity=\imOp] (b15_2) at (\diaW/2,-\diaJump-\nextRow)
            {\includegraphics[width=\imW\textwidth]{./fig/scurve/diag/10000_backward_6_registration_15.png}};
        
        \node[inner sep=0pt, label={\small }, opacity=\imOp] (b15_3) at (3*\diaW/4,-\diaJump-\nextRow)
            {\includegraphics[width=\imW\textwidth]{./fig/scurve/diag/10000_backward_6_registration_8.png}};
            
        \node[inner sep=0pt, label={\small }] (b15_T) at (\diaW,-\diaJump-\nextRow)
            {\includegraphics[width=\imW\textwidth]{./fig/scurve/diag/10000_backward_6_registration_0.png}};
        
        \draw[-{Latex[length=2mm]},dashed] (\diaW/2,-1.25*\diaH/4) -- (\diaW/2,-\diaJump+0.5) ;
        \draw[-{Latex[length=2mm]},dashed] (\diaW/4,-1.25*\diaH/4) -- (\diaW/4,-\diaJump+0.5) ;
        \draw[-{Latex[length=2mm]},dashed] (3*\diaW/4,-1.25*\diaH/4) -- (3*\diaW/4,-\diaJump+0.5) ;

        \path[-{Latex[length=3mm]}, thick]
         (f15_data) edge[bend left=\bend] node [fill=white,xshift=-35pt] {Forward} (f15_T)
         (f1_data) edge[bend left=\bend] node [fill=white,xshift=-35pt] {Forward} (f1_T)
         (b1_T) edge[bend left=\bend] node [fill=white,xshift=-35pt] {Backward} (b1_data)
         (b15_T) edge[bend left=\bend] node [fill=white,xshift=-35pt] {Backward} (b15_data);
        
        \draw[-{Latex[length=3mm]},very thick] (-1.2,0) -- (-1.2,-\diaH)
            node[pos=0.4,fill=white] {DSB Steps}
            node[pos=0.1,left,fill=white] {DSB 1}
            node[pos=0.7,left,fill=white] {DSB 5};
            
        \draw[-{Latex[length=3mm]}, very thick] (-1,-\diaH) -- (\diaW,-\diaH)
            node[midway,fill=white] {$t$}
            node[pos=0.1,below,fill=white] {$t=0$}
            node[pos=0.9,below,fill=white] {$t=T$};
        
        \draw[red,thick] ($(b15_data.north west)+(-0.05,0.05)$)  rectangle ($(b15_T.south east)+(0.05,-0.05)$)
        node[left,fill=white] {Generative Model};
      \end{tikzpicture}
      \vspace{-.5cm}
       \setlength{\belowcaptionskip}{-0.5cm}
        \caption{The reference forward diffusion initialized from the 2-dimensional data
          distribution fails to converge to the Gaussian prior in $T=0.2$ diffusion-time ($N=20$ discrete time steps), 
           and the reverse diffusion initialized from the Gaussian prior does not converge to the data distribution. However, convergence does occur after $5$ DSB iterations.}
       \label{fig:schro_bridge}          
     \end{figure}
The corresponding reverse-time
SDE is an inhomogeneous diffusion whose drift depends on the logarithmic
gradients of the perturbed data distributions, \ie \ the scores. In practice,
these scores are approximated using neural networks and score-matching
techniques \citep{hyvarinen2005estimation,vincent2011connection} while numerical SDE integrators are
used for the sampling procedure.
 
Although SGM provides state-of-the-art results \citep{nichol2021beatgans},
sample generation is computationally expensive. In order to learn the
reverse-time SDE from the prior, \ie \ the generative model, the forward noising
SDE must be run for a sufficiently long time to converge to the prior and the step size
must be sufficiently small to obtain a good numerical approximation of this SDE. By reformulating
generative modeling as a \schro bridge (SB) problem, we mitigate this issue and
propose a novel algorithm to solve SB problems. Our detailed contributions are
as follows.
  
     \textbf{Generative modeling as a \schro bridge problem.} The SB problem is
     a famous entropy-regularized Optimal Transport (OT) problem introduced by
     \cite{schrodinger1932theorie}; see \eg \
     \citep{leonard2014survey,chen2020optimal} for reviews. Given a reference
     diffusion with finite time horizon $T$, a data distribution and a prior
     distribution, solving the SB amounts to finding the closest diffusion to the reference
     (in terms of Kullback--Leibler divergence on path spaces) which admits the
     data distribution as marginal at time $t=0$ and the prior at time
     $t=T$. The reverse-time diffusion solving this SB problem
     provides a new SGM algorithm which enables approximate sample generation
     from the data distribution using shorter time intervals compared to the
     original SGM methods. 
     Our method differs from the entropy-regularized OT
     formulation in \citep{genevay2018learning}, which deals with discrete distributions and relies on a static formulation of SB, as opposed
     to our dynamical approach for continuous distributions which operates on
     path spaces. It also differs from \citep{finlay2020learning} which
     approximates the SB solution by a diffusion whose drift is computed
     using potentials of the dual formulation of SB. Finally, \cite{wang2021deepschro} have recently proposed to perform generative modeling by solving not one but two SB problems. Contrary to us, they do not formulate generative modeling as computing the SB between the data and prior distributions.
    
     \textbf{Solving the \schro bridge problem using score-based diffusions.}
     The SB problem can be solved using Iterative Proportional
     Fitting (IPF)
     \citep{fortet1940resolution,kullback1968probability,chen2020optimal}. We
     propose Diffusion SB (DSB), a novel implementation of IPF using score-based
     diffusion techniques. DSB does not
     require discretizing the state-space
     \citep{chen2016entropic,reich2018data}, approximating potential functions
     using regression
     \citep{bernton2019schr,dessein2017parameter,pavon2018data}, nor performing
     kernel density estimation \citep{pavon2018data}. The first DSB iteration
     recovers the method proposed by \cite{song2020score}, with the flexibility
     of using shorter time intervals, as additional DSB iterations reduce the
     discrepancy between the final-time marginal of the forward (resp. backward)
     SDE w.r.t. the prior (resp. data) distribution; see \Cref{fig:schro_bridge}
     for an illustration. An algorithm akin to DSB has been proposed
     concurrently and independently by \cite{vargas2021solving}; the main
     difference with our algorithm is that they estimate the drifts of the SDEs
     using Gaussian processes while we use neural networks and score matching
     ideas.
     
     \textbf{Theoretical results.} We provide the first quantitative convergence results
     for the methodology of \citet{song2020score}. In particular, we show that while we simulate Langevin-type diffusions in potentially extremely high-dimensional spaces, the SGM approach does \emph{not} suffer from poor mixing times. Additionally, we derive novel
     quantitative convergence results for IPF in continuous state-space which do
     not rely on classical compactness assumptions
     \citep{chen2016entropic,ruschendorf1995convergence} and improve on the
     recent results of \cite{leger2020gradient}. Finally, we show that DSB may be viewed as the time discretization of a dynamic version
     of IPF on path spaces based on forward/backward diffusions.
     
    \textbf{Experiments.} We validate our methodology by generating image
    datasets such as MNIST and CelebA. In particular, we show that using
    multiple steps of DSB always improve the generative model. We also show how
    DSB can be used to interpolate between two data distributions.

    \smallskip\noindent\textbf{Notation.}\label{sec:notation} In the
    continuous-time setting, we set $\contspace = \rmc(\ccint{0,T}, \rset^d)$
    the space of continuous functions from $\ccint{0,T}$ to $\rset^d$ and
    $\mcb{\contspace}$ the Borel sets on $\contspace$. For any measurable space
    $(\mse, \mce)$, we denote by $\Pens(\mse)$ the space of probability measures
    on $(\mse, \mce)$. For any $\ell \in \nset$, let
    $\Pens_\ell= \Pens((\rset^d)^\ell)$. When it is defined, we denote
    $\Ent(p) = -\int_{\rset^d} p(x) \log p(x)\rmd x$ as the entropy of $p$ and
    $\KLLigne{p}{q}$ as the Kullback--Leibler divergence between $p$ and $q$.
    When there is no ambiguity, we use the same notation for distributions
    and their densities. All proofs are postponed to the supplementary.

\section{Denoising Diffusion, Score-Matching and Reverse-Time SDEs}\label{sec:SGM}

    \subsection{Discrete-Time: Markov Chains and Time Reversal}
    \label{sec:discr-sett-mark}
    Consider a data distribution with positive density $\pdata$\footnote{In this
      presentation, we assume that all distributions admit a density w.r.t. the
      Lebesgue measure for simplicity. However, the algorithms presented here
      only require having access to samples from $\pdata$ and $\pprior$.}, a
    positive prior density $\pprior$ w.r.t. Lebesgue measure both with support on
    $\rset^d$ and a Markov chain with initial density $p_0=\pdata$ on
    $\mathbb{R}^d$ evolving according to positive transition densities $p_{k+1|k}$ for
    $k \in \{0, \dots,N-1\}$. Hence for any
    $x_{0:N}=\{x_k\}_{k=0}^N \in \mcx = (\rset^d)^{N+1}$, the joint density may be expressed as
\begin{equation}\label{eq:mu_forward}
           \textstyle{p(x_{0:N}) = p_0(x_0) \prod_{k=0}^{N-1}p_{k+1|k}(x_{k+1}|x_{k}). }
        \end{equation}
        This joint density also admits the backward decomposition 
        \begin{equation}\label{eq:timereversal}
          \textstyle{
           p(x_{0:N}) = p_N(x_N) \prod_{k=0}^{N-1}p_{k|k+1}(x_{k}|x_{k+1}),  \text{with~}  p_{k|k+1}(x_{k}|x_{k+1})=\frac{p_k(x_{k}) p_{k+1|k}(x_{k+1}|x_{k})}{p_{k+1}(x_{k+1})}, }
        \end{equation}
        where
        $p_k(x_{k})=\int
        p_{k|k-1}(x_k|x_{k-1})p_{k-1}(x_{k-1})\textrm{d}x_{k-1}$ is the marginal density at step $k \geq
        1$. For the purpose of generative modeling, we will choose transition densities such that
        $p_N(x_N) =\int p(x_{0:N})\textrm{d}x_{0:N-1}
        \approx \pprior(x_N)$ for
        large $N$, where $\pprior$ is an easy-to-sample \emph{prior} density. One
        may sample approximately from $\pdata$ using ancestral sampling with the
        reverse-time decomposition \eqref{eq:timereversal},~\ie \ first sample
        $X_N\sim \pprior$ followed by $X_k\sim p_{k|k+1}(\cdot|X_{k+1})$ for 
        $k \in \{N-1, \dots, 0\}$. This idea is at the core of all recent
        SGM methods. The reverse-time transitions in \eqref{eq:timereversal} cannot be
        simulated exactly but may be approximated if we consider a forward
        transition density of the form
        \begin{equation} \label{eq:euler_maru}
          \textstyle{p_{k+1|k}(x_{k+1}|x_{k})=\mathcal{N}(x_{k+1};x_{k}+\gamma_{k+1}
            f(x_{k}),2 \gamma_{k+1} \Idbf), }
          \end{equation} 
        with drift $f: \ \rset^d \to  \rset^d$ and stepsize $\gamma_{k+1}>0$. We first make the following approximation from \eqref{eq:timereversal}
        \begin{align}\label{eq:eulerbackward}
          p_{k|k+1}(x_{k}|x_{k+1})&= p_{k+1|k}(x_{k+1}|x_{k}) \exp[\log p_{k}(x_{k}) - \log p_{k+1}(x_{k+1})]\\
          &\approx \mathcal{N}(x_{k};x_{k+1}- \gamma_{k+1} f(x_{k+1})+2 \gamma_{k+1} \nabla \log p_{k+1}(x_{k+1}), 2\gamma_{k+1} \Idbf), \label{eq:approx_disc_uno}
        \end{align}
        using that $p_k\approx p_{k+1}$, a Taylor expansion of $\log p_{k+1}$ at
        $x_{k+1}$ and $f(x_{k}) \approx f(x_{k+1})$. In practice, the
        approximation holds if $\normLigne{x_{k+1} - x_k}$ is small which is ensured by
        choosing $\gamma_{k+1}$ small enough.
                Although $\nabla\log p_{k+1}$ is not available, one may 
        obtain an approximation using denoising score-matching methods
        \citep{hyvarinen2005estimation,vincent2011connection,song2020score}. 

        Assume that the conditional density $p_{k+1|0}(x_{k+1}|x_{0})$ is
        available analytically as in \citep{ho2020denoising,song2020score}.  We
        have
        $p_{k+1}(x_{k+1})=\int
        p_{0}(x_{0})p_{k+1|0}(x_{k+1}|x_{0})\textrm{d}x_{0}$ and elementary
        calculations show that
        $\nabla \log p_{k+1}(x_{k+1})=\mathbb{E}_{p_{0|k+1}}[\nabla_{x_{k+1}} \log
        p_{k+1|0}(x_{k+1}|X_{0})]$. We can therefore formulate score estimation
        as a regression problem and use a flexible class of functions, \eg \
        neural networks, to parametrize an approximation
        $s_{\theta^\star}(k,x_k) \approx \nabla \log p_{k}(x_k)$ such that
        \begin{equation}\label{eq:scorematching}
        \textstyle{
            \theta^\star=\argmin_{\theta} \sum_{k=1}^{N} \mathbb{E}_{ p_{0,k}}[||s_\theta(k, X_{k})-\nabla_{x_k} \log p_{k|0}(X_{k}|X_{0})||^2] },
        \end{equation}
        where $p_{0,k}(x_0,x_k)=p_0(x_0)p_{k|0}(x_k|x_0)$ is the joint density
        at steps $0$ and $k$.  If $p_{k|0}$ is not available, we use
        $\theta^\star = \argmin_{\theta} \mathbb
        \sum_{k=1}^N \mathbb{E}_{p_{k-1,k}}[||s_\theta(k,X_{k})-\nabla_{x_{k}} \log
        p_{k|k-1}(X_{k}|X_{k-1})||^2]$.  In summary, SGM involves first estimating
        the score function $s_{\theta^\star}$ from noisy data, and then
        sampling $X_0$ using $X_N \sim \pprior$ and the approximation 
        \eqref{eq:approx_disc_uno}, \ie
        \begin{equation}
          \label{eq:reverse_discrete}
          X_k = X_{k+1} - \gamma_{k+1} f(X_{k+1}) + 2 \gamma_{k+1} s_{\theta^\star}(k+1, X_{k+1}) + \sqrt{2 \gamma_{k+1}} Z_{k+1},  Z_{k+1} \overset{\textup{i.i.d.}}\sim \mathcal{N}(0,  \Idbf). 
        \end{equation}
        The random variable $X_0$ is approximately $p_0=\pdata$ distributed if $p_N(x_N)\approx \pprior(x_N)$. In what
        follows, we let $\{Y_k\}_{k=0}^{N} = \{X_{N-k}\}_{k=0}^{N}$ and remark that
        $\{Y_k\}_{k=0}^{N}$ satisfies a forward recursion.
    \subsection{Continuous-Time: SDEs, Reverse-Time SDEs and Theoretical results}
    For appropriate transition densities, \cite{song2020score} showed that the
    forward and reverse-time Markov chains may be viewed as discretized
    diffusions. We derive the continuous-time limit of the procedure
    presented in \Cref{sec:discr-sett-mark} and establish convergence results. The
    Markov chain with kernel \eqref{eq:euler_maru} corresponds to an
    Euler--Maruyama discretization of $(\bfX_t)_{t \in \ccint{0,T}}$, solving
    the following SDE
        \begin{equation}
          \label{eq:forward}
          \textstyle{
            \rmd \bfX_t = f(\bfX_t) \rmd t + \sqrt{2} \rmd \bfB_t  ,\quad \bfX_0 \sim p_0=\pdata,
            }
        \end{equation}
        where $(\bfB_t)_{t \in \ccint{0,T}}$ is a Brownian motion and
        $f: \ \rset^d \to \rset^d$ is regular enough so that (strong) solutions
        exist.  Under conditions on $f$, it is well-known (see
        \cite{haussmann1986time,follmer1985entropy,cattiaux2021time} for
        instance) that the reverse-time process
        $(\bfY_t)_{t \in \ccint{0,T}}=(\bfX_{T-t})_{t \in \ccint{0,T}}$
        satisfies
        \begin{equation}
          \label{eq:time_reversed}
          \textstyle{
          \rmd \bfY_t = \defEns{-f(\bfY_t) + 2 \nabla \log p_{T-t}(\bfY_t) } \rmd t + \sqrt{2}
          \rmd \bfB_t ,
          }
        \end{equation}
        with initialization $\bfY_0 \sim p_T$, where $p_t$ denotes the marginal density of $\bfX_t$. 
 
        The reverse-time Markov chain $\{Y_k\}_{k=0}^{N}$ associated with
        \eqref{eq:reverse_discrete} corresponds to an Euler--Maruyama
        discretization of \eqref{eq:time_reversed}, where the score functions
        $\nabla \log p_{t}(x)$ are approximated by $s_{\theta^{\star}}(t,x)$.

        In what follows, we consider $f(x) = -\alpha x$ for
        $\alpha \geq 0$. This framework includes the one of
        \cite{song2019generative} ($\alpha =0$,
        $\pprior(x)= \mathcal{N}(x;0, 2T \ \Idbf)$) for which
        $(\bfX_t)_{t \in \ccint{0,T}}$ is simply a Brownian motion and
        \cite{ho2020denoising} ($\alpha > 0$,
        $\pprior(x)= \mathcal{N}(x;0, \Idbf/\alpha)$) for which it is an
        Ornstein–Uhlenbeck process, see \Cref{sec:comparison-with-ho} for more
        details.  Contrary to \cite{song2020score} we consider time homogeneous
        diffusions. Both approaches approximate \eqref{eq:reverse_discrete} using
        distinct discretizations but our setting leverages the ergodic
        properties of the Ornstein--Uhlenbeck process 
        to establish \Cref{prop:convergence_score_matching}.
          \begin{theorem}    
            \label{prop:convergence_score_matching}
            Assume that there exists $\Mtt \geq 0$ such that for any
            $t \in \ccint{0,T}$ and $x \in \rset^d$
            \begin{equation}
              \label{eq:approx}
              \textstyle{
                \norm{s_{\theta^{\star}}(t,x) - \nabla \log p_{t}(x)} \leq \Mtt ,
                }
              \end{equation}
              with 
              $s_{\theta^\star} \in \rmc(\ccint{0,T} \times \rset^d, \rset^d)$.
              Assume that $\pdata \in \rmc^3(\rset^d, \ooint{0, +\infty})$ is
              bounded and that there exist $d_1, A_1, A_2, A_3 \geq 0$,
              $\beta_1, \beta_2, \beta_3 \in \nset$ and
              $\mtt_1 > 0$ such that for any $x \in \rset^d$ and
              $i \in \{1, 2, 3\}$
            \begin{equation}
              \textstyle{
              \normLigne{\nabla^i \log \pdata(x)} \leq A_i(1 + \normLigne{x}^{\beta_i}) , \quad \langle \nabla \log \pdata(x), x \rangle \leq -\mtt_1 \norm{x}^2 + d_1 \norm{x} ,}
          \end{equation}
          with $\beta_1 = 1$.  Then for any $\alpha \geq 0$, there exist
          $B_\alpha, C_\alpha, D_\alpha \geq0$ such that for any $N \in \nset$
          and $\{\gamma_k\}_{k=1}^N$ with $\gamma_k > 0$ for any
          $k \in \{1, \dots, N\}$, the following bounds on the total variation distance hold:
            \begin{enumerate}[wide, labelwidth=!, labelindent=0pt, label=(\alph*)]
            \item  if $\alpha > 0$,  we have $\tvnormLigne{\mathcal{L}(X_0)-\pdata} \leq   C_\alpha(\Mtt +  \bgamma^{1/2}) \exp[D_\alpha T]+ B_\alpha \exp[-\alpha^{1/2} T];      $
            \item if $\alpha = 0$, we have
              $\tvnormLigne{\mathcal{L}(X_0)-\pdata} \leq  C_0(\Mtt + \bgamma^{1/2}) \exp[D_0 T]+B_0(T^{-1} +  T^{-1/2});$
          \end{enumerate}
          where $T = \sum_{k=1}^N \gamma_k$, $\bgamma = \sup_{k \in \{1, \dots, N\}} \gamma_k$ and
          $\mathcal{L}(X_0)$ is the distribution of $X_0$ given in \eqref{eq:reverse_discrete}.
          \end{theorem}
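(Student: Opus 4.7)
The plan is to bound $\tvnormLigne{\mathcal{L}(X_0) - \pdata}$ via a triangle-type inequality that splits the error into three contributions: the mismatch between $\pprior$ and the true forward-time marginal $p_T$, the score-approximation error $\Mtt$, and the Euler--Maruyama discretization error. To this end, I would introduce three auxiliary continuous-time processes on $\ccint{0,T}$: $(\bfY_t)$ is the exact reverse-time SDE \eqref{eq:time_reversed} initialized at $\bfY_0 \sim p_T$, so that $\mathcal{L}(\bfY_T) = \pdata$; $(\bbfY_t)$ is the same SDE but initialized at $\bbfY_0 \sim \pprior$; and $(\tbfY_t)$ is the SDE where the true score $\nabla \log p_{T-t}$ is replaced by $s_{\theta^\star}(T-t,\cdot)$, still initialized at $\pprior$. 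The chain $\{X_k\}_{k=0}^N$ produced by \eqref{eq:reverse_discrete} is an Euler--Maruyama discretization of $(\tbfY_t)$ on the grid induced by $\{\gamma_k\}_{k=1}^N$, so the triangle inequality gives
\begin{equation*}
\tvnormLigne{\mathcal{L}(X_0) - \pdata} \leq \tvnormLigne{\mathcal{L}(\bbfY_T) - \mathcal{L}(\bfY_T)} + \tvnormLigne{\mathcal{L}(\tbfY_T) - \mathcal{L}(\bbfY_T)} + \tvnormLigne{\mathcal{L}(X_0) - \mathcal{L}(\tbfY_T)}.
\end{equation*}

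For the first (initialization) term, the data-processing inequality gives $\tvnormLigne{\mathcal{L}(\bbfY_T) - \mathcal{L}(\bfY_T)} \leq \tvnormLigne{p_T - \pprior}$. When $\alpha > 0$, the forward SDE \eqref{eq:forward} is an Ornstein--Uhlenbeck process with invariant measure $\pprior = \mathcal{N}(0, \Idbf/\alpha)$, and exponential ergodicity (together with the Lyapunov condition $\langle \nabla \log \pdata(x), x\rangle \leq -\mtt_1 \|x\|^2 + d_1 \|x\|$ to control moments of $p_0$) yields $B_\alpha \exp[-\alpha^{1/2} T]$. When $\alpha = 0$, the forward process is Brownian motion and $p_T$ is the convolution of $\pdata$ with $\mathcal{N}(0, 2T\Idbf)$; direct Gaussian computations, comparing this convolution to $\pprior = \mathcal{N}(0, 2T\Idbf)$, produce the polynomial rate $B_0(T^{-1} + T^{-1/2})$ under the moment bounds on $\pdata$.

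For the second (score-approximation) term, I would apply Girsanov's theorem between the path measures of $(\bbfY_t)$ and $(\tbfY_t)$ on $\contspace$. Since they share the same diffusion coefficient and initial law, the relative entropy equals a half-integrated squared drift difference, which by assumption \eqref{eq:approx} is at most $T\Mtt^2/2$. Pinsker's inequality then contributes a term of order $\Mtt T^{1/2}$, absorbed in $C_\alpha \Mtt \exp[D_\alpha T]$. The smoothness assumptions on $\pdata$ (which, via the semigroup $(\bfX_t)$, transfer to bounds on $\|\nabla^i \log p_t\|$ with polynomial growth) ensure the Novikov condition needed to validate Girsanov.

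The third (discretization) term is the standard weak Euler--Maruyama error for the SDE with drift $-\alpha x + 2 s_{\theta^\star}(T-t, x)$ over the horizon $T$. The main obstacle is that the drift is only piecewise Lipschitz in time and the reverse-time score $\nabla \log p_t$ may blow up polynomially as $t \downarrow 0$; this is where the assumed growth bounds on $\nabla^i \log \pdata$ ($i=1,2,3$) are crucial, as they propagate via the forward semigroup to give uniform-in-$t$ growth estimates on $\nabla^i \log p_t$. Combining these with a Lyapunov/moment control of both the continuous SDE and its discretization (which, for a generic non-contractive drift, is where the factor $\exp[D_\alpha T]$ arises), standard strong-error analysis yields a contribution of order $C_\alpha \bgamma^{1/2} \exp[D_\alpha T]$. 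Summing the three contributions gives the announced bounds.
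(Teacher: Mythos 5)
Your overall strategy is the paper's: peel off the mixing error $\tvnormLigne{p_T - \pprior}$ (handled exactly as you describe, via the spectral gap of the Ornstein--Uhlenbeck semigroup together with sub-Gaussianity of $\pdata$ for $\alpha>0$, and a direct Gaussian comparison for $\alpha=0$), then control what remains by Girsanov plus Pinsker using the uniform-in-time polynomial bounds on $\nabla^i \log p_t$ propagated from the assumptions on $\pdata$. The one structural difference is that the paper performs a \emph{single} Girsanov comparison, between the true reverse-time diffusion with drift $\alpha x + 2\nabla\log p_{T-t}$ and the continuous-time interpolation of the discrete chain whose drift is frozen at the grid points; the integrated squared drift difference is then split by the triangle inequality into a spatial increment, a temporal increment, and the $\Mtt$ term, so the score error and the discretization error are absorbed simultaneously. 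You instead insert the intermediate continuous process $(\tbfY_t)$ driven by $s_{\theta^\star}$ and treat the two errors in separate comparisons.

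Two points in your third term need repair. First, ``standard strong-error analysis'' is not the right tool: pathwise $\rmL^2$ closeness of the Euler--Maruyama scheme to its SDE does not control total variation. You must realize the discrete chain as the time-$T$ marginal of a diffusion with piecewise-frozen drift and apply Girsanov/Pinsker once more; that is what produces the $\bgamma^{1/2}$ rate. Second, that Girsanov bound requires a quantitative modulus of continuity of $s_{\theta^\star}(T-t,x)$ in both variables, whereas the theorem only assumes $s_{\theta^\star}$ is continuous and uniformly $\Mtt$-close to the true score. The fix is to write $s_{\theta^\star} = \nabla\log p + e$ with $\normLigne{e}\leq\Mtt$ and transfer all increment bounds to $\nabla\log p$, whose space- and time-regularity follow from the growth conditions on $\nabla^i\log\pdata$ and the Fokker--Planck equation, at the cost of an extra $O(T\Mtt^2)$ that is absorbed into the final bound. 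This is precisely why the paper compares the frozen drift directly to $\nabla\log p_{T-t}$ rather than to $s_{\theta^\star}(T-t,\cdot)$. With these two adjustments your decomposition delivers the stated bounds.
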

          
          \begin{proof}
            We provide here a sketch of the proof. The whole proof is detailed in \Cref{prop:convergence_score_matching:proof}. Denote $\Pbb \in \Pens(\mathcal{C})$ the path measure associated with \eqref{eq:forward} and $\Pbb^R$ its time-reversal. Denote $\Qker_N$ the Markov kernel taking us from $Y_0$ to $Y_N$ induced by \eqref{eq:reverse_discrete}. We have
             \begin{align}
      \tvnormLigne{\pprior \Qker_N - \pdata} &= \tvnormLigne{\pprior \Qker_N - \pdata \Pbb_{T|0} (\Pbb^R)_{T|0}} \\
                                         &\leq \tvnormLigne{\pprior \Qker_N - \pprior (\Pbb^R)_{T|0}} + \tvnormLigne{\pprior (\Pbb^R)_{T|0} - \pdata \Pbb_{T|0} (\Pbb^R)_{T|0}} \\
          &\leq  \tvnormLigne{\pprior \Qker_N - \pprior (\Pbb^R)_{T|0}} + \tvnormLigne{\pprior - p_T} .                                  
    \end{align}
    We control the first term by bounding the discretization error of $\Qker_N$ when compared to $(\Pbb^R)_{T|0}$ via the Girsanov theorem. The second term is controlled using the mixing properties of the forward diffusion process. 
          \end{proof}
          
          Condition \eqref{eq:approx} ensures that the neural network
          approximates the score with a given precision $\Mtt \geq 0$.  Under
           \eqref{eq:approx} and conditions on $\pdata$,
          \Cref{prop:convergence_score_matching} states how the Markov chain
          defined by \eqref{eq:reverse_discrete} approximates $\pdata$ in the
          total variation norm $\tvnormLigne{\cdot}$.  
          The bounds of \Cref{prop:convergence_score_matching} show that
          there is a trade-off between the mixing properties of the forward diffusion
          which increases with $\alpha$, and the quality of the discrete-time
          approximation which deteriorates as $\alpha$ and $T$ increase, since
          $B_{\alpha}, C_{\alpha} D_\alpha \to_{\alpha \to +\infty} +\infty$. Indeed increasing $\alpha$ makes the drift steeper and the continuous-time process converges faster but smaller step sizes are required in order to control the error between the discrete and the continuous-time processes. \Cref{prop:convergence_score_matching} is the first theoretical result assessing the convergence of SGM methods. Indeed while \cite{block2020generative} establish convergence results for a \emph{time-homogeneous} Langevin diffusion targeting a density whose score is approximated by a neural network, all SGM methods used in practice rely on \emph{time-inhomogeneous} processes. Contrary to the time-homogeneous case, this approach does not suffer from poor mixing times as the mixing time dependency in the bounds of \Cref{prop:convergence_score_matching} is entirely determined by the mixing time of the \emph{forward} process, given by a simple Brownian motion or an Ornstein--Ulhenbeck process, and is independent of the dimension. Finally, note that \eqref{eq:approx} is a strong assumption. In practice we expect to obtain such bounds in expectation over $X$ with high probability w.r.t. the data distribution as in \cite[Proposition 9]{block2020generative}. Our results are also related to 
          \citep[Theorem 3.1]{raginsky2019theoretical} which establishes the expressiveness of related generative models using tools from stochastic control.
    \section{Diffusion \schro Bridge and Generative Modeling}
    \label{sec:schro-bridges}
    
\subsection{\schro Bridges}
\label{sec:schro-bridge-problem}
The SB problem is a classical problem appearing in applied
mathematics, optimal control and probability; see
\eg \ \citet{follmer1988random,leonard2014survey,chen2020optimal}.  In the
discrete-time setting, it takes the following (dynamic) form. Consider as
\emph{reference} density $p(x_{0:N})$ given by \eqref{eq:mu_forward}, describing
the process adding noise to the data.  We aim to find $\pi^\star \in \Pens_{N+1}$ such
that
\begin{equation}
  \label{eq:discrete_schro}
  \pi^\star = \argmin \ensemble{\KLLigne{\pi}{p}}{\pi \in \Pens_{N+1}, \  \pi_0 = \pdata,~ \pi_N = \pprior}. 
\end{equation}
Assuming $\pi^\star$ is available, a generative model can be obtained by
sampling $X_N \sim \pprior$, followed by the reverse-time
dynamics $X_k \sim \pi^\star_{k|k+1}(\cdot | X_{k+1})$ for 
$k \in \{N-1, \dots, 0\}$. Before deriving a method to approximate
$\pi^\star$ in \Cref{sec:iterativeproportionalfitting}, we highlight some desirable
features of \schro bridges.
\paragraph{Static \schro bridge problem.} First, we recall that the dynamic
formulation \eqref{eq:discrete_schro} admits a static analogue. Using \eg \
\citet[Theorem 2.4]{leonard2014some}, the following decomposition holds for any
$\pi \in \Pens_{N+1}$,
$
\KLLigne{\pi}{p}=\KLLigne{\pi_{0,N}}{p_{0,N}}+\mathbb{E}_{\pi_{0,N}}[\KLLigne{\pi_{|0,N}}{p_{|0,N}}]$,
where for any $\mu \in \Pens_{N+1}$ we have $\mu=\mu_{0,N}\mu_{|0,N}$ with
$\mu_{|0,N}$ the conditional distribution of $X_{1:N-1}$ given $X_0,X_N$\footnote{See
  \Cref{sec:addit-form-kullb} for a rigorous presentation using the disintegration theorem 
  for probability measures.}.  Hence we have
$\pi^\star(x_{0:N}) = \pi^{\rms, \star}(x_0,x_N) p_{|0,N}(x_{1:N-1}|x_0,x_N) $
where $\pi^{\rms, \star} \in \Pens_2$ with marginals $\pi^{\rms,\star}_0$ and
$\pi^{\rms,\star}_N$ is the solution of the static SB problem
\begin{equation}
  \label{eq:static_sb}
  \textstyle{\pi^{\rms, \star} = \argmin \ensemble{\KLLigne{\pi^\rms}{p_{0,N}}}{\pi^\rms \in \Pens_2,
    \ \pi^\rms_0 = \pdata,~ \pi^\rms_N = \pprior}. }
  \end{equation}
  \smallskip\noindent\textbf{Link with optimal transport.}
  Under mild assumptions, the static SB problem can be seen as an
  entropy-regularized optimal transport problem since \eqref{eq:static_sb} is
  equivalent to
  \begin{equation}\label{eq:link_ot}
    \pi^{\rms, \star} 
    = \textstyle{\argmin  \ensemble{-\mathbb{E}_{\pi^\rms}[\log p_{N|0}(X_N|X_0)]- \Ent(\pi^\rms)} {\pi^\rms \in \Pens_2, \ \pi_0^\rms = \pdata,~ \pi_N^\rms = \pprior}. }
\end{equation}
If $p_{k+1|k}(x_{k+1}|x_k)=\mathcal{N}(x_{k+1};x_k,\sigma^2_{k+1})$ as in
\cite{song2019generative}, then $p_{N|0}(x_N|x_0)=\mathcal{N}(x_N;x_0,\sigma^2)$
with $\sigma^2=\sum_{k=1}^N \sigma_k^2$ which induces a quadratic cost and
\begin{align}
  \label{eq:discrete_schro_ot}
  \textstyle{
  \pi^{\rms, \star} = \argmin  \ensemble{\mathbb{E}_{\pi^\rms}[||X_0-X_N||^2]- 2\sigma^2 \Ent(\pi^\rms)} {\pi^\rms \in \Pens_2, \ \pi_0^\rms = \pdata,~ \pi_N^\rms = \pprior}.
  }
\end{align}
\cite{mikami2004monge} showed that $\pi^{\rms, \star} \to \pi^\star_{\wass}$
weakly and
$2 \sigma^2\KLLigne{\pi^{\rms, \star}}{p_{0,N}} \to \wass_2^2(\pdata, \pprior)$
as $\sigma \to 0$, where $\pi_{\wass}^\star$ is the optimal transport plan
between $\pdata$ and $\pprior$ and $\wass_2$ is the 2-Wasserstein distance. Note
that the transport cost $c(x,x')=-\log p_{N|0}(x'|x)$ is not necessarily
symmetric.
\subsection{Iterative Proportional Fitting and Time
  Reversal} \label{sec:iterativeproportionalfitting} In all but trivial cases,
the SB problem does not admit a closed-form solution. However, it can be solved
using Iterative Proportional Fitting (IPF)
\citep{fortet1940resolution,kullback1968probability,ruschendorf1995convergence}
which is defined by the following recursion for $n\in \nset$ with initialization
$\pi^0=p$ given in \eqref{eq:mu_forward}:
\begin{align}\label{eq:IFPrecursion}
  &\textstyle{\pi^{2n+1} = \argmin \ensemble{\KLLigne{\pi}{\pi^{2n}}}{\pi \in \Pens_{N+1}, \ \pi_N = \pprior},} \\
  &\textstyle{\pi^{2n+2} = \argmin  \ensemble{\KLLigne{\pi}{\pi^{2n+1}}}{\pi \in \Pens_{N+1}, \ \pi_0 = \pdata}. }
\end{align}
This sequence is well-defined if there exists $\tilde{\pi} \in \Pens_{N+1}$ such
that $\tilde{\pi}_0 = \pdata$, $\tilde{\pi}_N = \pprior$ and
$\KLLigne{\tilde{\pi}}{p} < +\infty$. A standard representation of $\pi^{n}$ is obtained by updating the joint density $p$ using potential functions, see \Cref{prop:IPFpotential:proof} for details.
However, this representation of the IPF iterates is difficult to approximate
as it requires approximating the potentials. Our methodology builds upon an alternative
representation that is better suited to numerical approximations for generative modeling where one has access to samples of 
$\pdata$ and $\pprior$.
\begin{proposition}
  \label{prop:IPFrecursion}
  Assume that $\KLLigne{\pdata \otimes \pprior}{p_{0,N}} < +\infty$.  Then for any
  $n \in \nset$, $\pi^{2n}$ and $\pi^{2n+1}$ admit positive densities w.r.t. the Lebesgue measure denoted as $p^n$ resp. $q^n$  and for any $x_{0:N} \in \mcx$, we have $p^0(x_{0:N})=p(x_{0:N})$ and
  \begin{equation}
    \textstyle{
      q^n(x_{0:N}) = \pprior(x_N) \prod_{k=0}^{N-1} p^n_{k|k+1}(x_k|x_{k+1}),}
    \
    \textstyle{p^{n+1}(x_{0:N}) = \pdata(x_0) \prod_{k=0}^{N-1} q^n_{k+1|k}(x_{k+1}|x_{k}).}
  \end{equation} 
\end{proposition}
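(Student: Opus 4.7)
The argument is by induction on $n$, relying on two standard ingredients: the chain rule for the Kullback--Leibler divergence with respect to a single coordinate obtained via the disintegration theorem (as hinted at in \Cref{sec:addit-form-kullb}), together with the elementary fact that minimizing $\KL{\pi}{\mu}$ over $\pi$ with a prescribed marginal keeps the corresponding conditional of $\mu$ unchanged. The assumption $\KL{\pdata \otimes \pprior}{p_{0,N}} < +\infty$ serves to guarantee that each IPF subproblem is feasible (so that the $\argmin$ is attained) and to propagate absolute continuity throughout the recursion.

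The base case is $\pi^0 = p$, whose density \eqref{eq:mu_forward} is positive because $\pdata > 0$ and the transition kernels $p_{k+1|k}$ are positive by assumption; moreover $p^0$ is Markov by construction. For the odd step, suppose $\pi^{2n}$ has a positive Markov density $p^n$. Using the backward decomposition \eqref{eq:timereversal} of $p^n$ and the disintegration
\[
\pi^{2n}(\rmd x_{0:N}) = p^n_N(x_N)\, p^n(x_{0:N-1} \mid x_N)\, \rmd x_{0:N},
\]
the KL chain rule with respect to the last coordinate yields, for any $\pi \in \Pens_{N+1}$ with $\pi_N = \pprior$,
\[
\KLLigne{\pi}{\pi^{2n}} = \KLLigne{\pprior}{p^n_N} + \int \KLLigne{\pi(\cdot\mid x_N)}{p^n(\cdot\mid x_N)}\, \pprior(x_N)\, \rmd x_N.
\]
The first term does not depend on $\pi$, so the minimum is attained by setting the conditional equal to $p^n(\cdot\mid x_N)$; by the Markov property of $p^n$ this equals $\prod_{k=0}^{N-1} p^n_{k|k+1}(x_k\mid x_{k+1})$. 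Multiplying by $\pprior(x_N)$ gives the claimed formula for $q^n$. Positivity is immediate from positivity of $\pprior$ and of each $p^n_{k|k+1}$ (the latter following from positivity of $p^n$), and the reverse-time Markov structure implies $q^n$ is itself Markov in forward time.

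The even step is entirely symmetric: apply the KL chain rule with respect to the first coordinate to the forward decomposition of $q^n$, keep the conditional $q^n(x_{1:N}\mid x_0)$ and replace the initial marginal $q^n_0$ by $\pdata$, obtaining $p^{n+1}(x_{0:N}) = \pdata(x_0) \prod_{k=0}^{N-1} q^n_{k+1|k}(x_{k+1}\mid x_k)$, again positive and Markov.

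The main obstacle is not the algebraic manipulation but the bookkeeping: one has to verify that the Markov property is preserved through the projections (so that the \emph{pairwise} conditionals $p^n_{k|k+1}$ and $q^n_{k+1|k}$ coincide with the full multi-step conditionals), and to check that the finite-KL assumption indeed guarantees $\pprior \ll p^n_N$ and $\pdata \ll q^n_0$ at every iteration, so that the chain rule applies and the $\argmin$ is attained. Both of these reduce to straightforward inductions once the first iterate is handled, using that $\KL{\pdata \otimes \pprior}{p_{0,N}} < +\infty$ provides a witness $\tilde{\pi}$ with the correct marginals and finite KL divergence with respect to $p$.
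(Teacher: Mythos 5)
Your proposal is correct and follows essentially the same route as the paper: the KL chain rule obtained by disintegrating with respect to the endpoint coordinate, the observation that the minimizer retains the reference conditional while swapping in the prescribed marginal, the Markov structure to factorize the full conditional into one-step reverse transitions, and an induction on $n$ with well-definedness supplied by the witness $\tilde{\pi}=(\pdata\otimes\pprior)p_{|0,N}$. The bookkeeping points you flag (positivity propagating through the recursion so that the pairwise conditionals are well defined and equal to the multi-step ones) are exactly how the paper closes the argument.
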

In practice we have access to $p^n_{k+1|k}$ and
$q^n_{k|k+1}$. Hence, to compute $p^n_{k|k+1}$ and
$q^n_{k+1|k}$ we use
\begin{equation}  
p^n_{k|k+1}(x_k|x_{k+1}) = \frac{p^n_{k+1|k}(x_{k+1}|x_k) p_k^n(x_k)}{p_{k+1}^n(x_{k+1})}, \ q^n_{k+1|k}(x_{k+1}|x_{k}) =\frac{q^n_{k|k+1}(x_{k}|x_{k+1}) q_{k+1}^n(x_{k+1})}{q_k^n(x_{k})}. 
\end{equation}To the best of our knowledge, this representation of the IPF iterates has surprisingly neither been presented nor explored in the literature. 
One may interpret these formulas as follows. At iteration $2n$, we have $\pi^{2n}=p^n$ with $p^0=p$
given by the noising process \eqref{eq:mu_forward}. This forward process initalized with $p^{n}_0=\pdata$ defines reverse-time transitions $p^{n}_{k|k+1}$, which, when combined with an initialization $\pprior$ at step $N$ defines the reverse-time process $\pi^{2n+1}=q^{n}$. The forward transitions $q^{n}_{k+1|k}$ associated to $q^n$ are then used to obtain $\pi^{2n+2}=p^{n+1}$. IPF then iterates this procedure.

\subsection{Diffusion \schro Bridge as Iterative Mean-Matching Proportional Fitting}
\label{sec:iter-score-match}
To approximate the IPF recursion defined in \Cref{prop:IPFrecursion}, we use
similar approximations to \Cref{sec:discr-sett-mark}. If at step $n \in \nset$
we have
$p^{n}_{k+1|k}(x_{k+1}|x_{k})= \mathcal{N}(x_{k+1};x_k+\gamma_{k+1}
f^{n}_k(x_k),2 \gamma_{k+1} \Idbf)$ where $p^0=p$ and $f^0_k=f$, then we can
approximate the reverse-time transitions in \Cref{prop:IPFrecursion} by
\begin{align}
  \label{eq:approx_gauss}
  q^n_{k|k+1}(x_{k}|x_{k+1})&= p^{n}_{k+1|k}(x_{k+1}|x_{k}) \exp[\log p^{n}_{k}(x_{k}) - \log p^{n}_{k+1}(x_{k+1})]\\
  &\approx \mathcal{N}(x_{k};x_{k+1}+ \gamma_{k+1} b^{n}_{k+1}(x_{k+1}), 2\gamma_{k+1} \Idbf),
\end{align}
with
$b^{n}_{k+1}(x_{k+1}) = -f^{n}_{k}(x_{k+1})+2 \nabla \log p^{n}_{k+1}(x_{k+1})$.
We can also approximate the forward transitions in
\Cref{prop:IPFrecursion} by
$ p^{n+1}_{k+1|k}(x_{k+1}|x_{k}) \approx \mathcal{N}(x_{k+1};x_{k}+ \gamma_{k+1}
f^{n+1}_{k}(x_{k}), 2\gamma_{k+1} \Idbf)$ with
$f^{n+1}_{k}(x_{k}) = -b^{n}_{k+1}(x_{k})+2 \nabla \log q^{n}_{k}(x_k)$. Hence
we have
$f^{n+1}_k(x_k)=f^{n}_{k}(x_k)-2 \nabla \log p^{n}_{k+1}(x_k)+2 \nabla \log
q^{n}_{k}(x_k)$. It follows that one could
estimate $f^{n+1}_k, b^{n+1}_k$ by using score-matching to approximate $\{\nabla \log p^{i}_{k+1}(x)\}_{i=0}^{n}$ ,
$\{\nabla \log q^{i}_{k}(x)\}_{i=0}^{n}$. This approach is prohibitively costly in terms of
memory and compute, see \Cref{sec:altern-vari-form}. 
We follow an alternative approach 
which avoids these difficulties.

\begin{proposition}\label{prop:generalizedscorematching} 
  Assume that for any $n \in \nset$ and $k \in \{0, \dots, N-1\}$,
  \begin{equation}
    q_{k|k+1}^n(x_k|x_{k+1}) = \mathcal{N}(x_k;B_{k+1}^n(x_{k+1}), 2\gamma_{k+1}
  \Idbf)  ,\ p_{k+1|k}^n(x_{k+1}|x_{k}) = \mathcal{N}(x_{k+1};F_{k}^n(x_{k}), 2\gamma_{k+1}
  \Idbf) ,
  \end{equation}
 with $B^{n}_{k+1}(x) = x +\gamma_{k+1}b^n_{k+1}(x)$,
  $F^{n}_{k}(x)= x +\gamma_{k+1}f_k^{n}(x)$ for any $x \in \rset^d$. Then we
  have for any $n \in \nset$ and $k\in \{0, \dots, N-1\}$
\begin{align}
&\textstyle{B^n_{k+1}=\argmin_{\mathrm{B}\in \rmL^2(\rset^d, \rset^d)} \expeMarkovLigne{{p^{n}_{k,k+1}}}{\normLigne{\mathrm{B}(X_{k+1})-(X_{k+1} + F^{n}_k(X_{k})-F^{n}_{k}(X_{k+1}))}^2}},\label{eq:regressionb}\\
&\textstyle{F^{n+1}_{k}=\argmin_{\mathrm{F}\in \rmL^2(\rset^d, \rset^d)} \expeMarkovLigne{q^{n}_{k,k+1}}{\normLigne{\mathrm{F}(X_k)-(X_{k} + B^{n}_{k+1}(X_{k+1})-B^n_{k+1}(X_{k}))}^2 }}.\label{eq:regressionf}
 \end{align} 
\end{proposition}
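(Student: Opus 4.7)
The plan is to reduce each of \eqref{eq:regressionb} and \eqref{eq:regressionf} to the standard fact that the conditional expectation is the $L^2$-minimizer, and then to identify the conditional expectation of the target random variable via Tweedie's identity applied to a Gaussian transition.

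First, I would recall that for any square-integrable random vector $Z$ and any $\sigma(X)$-measurable $g(X)$, the minimizer of $g \mapsto \mathbb{E}[\|g(X)-Z\|^2]$ over $\rmL^2$ is $g^\star(x)=\mathbb{E}[Z\mid X=x]$. Applied to \eqref{eq:regressionb} under the joint law $p^n_{k,k+1}$, the minimizer of the right-hand side is the function
\begin{equation}
x_{k+1}\mapsto \mathbb{E}_{p^n_{k,k+1}}\bigl[X_{k+1}+F^n_k(X_k)-F^n_k(X_{k+1}) \bigm| X_{k+1}=x_{k+1}\bigr],
\end{equation}
which equals $x_{k+1}-F^n_k(x_{k+1})+\mathbb{E}_{p^n_{k,k+1}}[F^n_k(X_k)\mid X_{k+1}=x_{k+1}]$. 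So the whole task is to evaluate the remaining conditional expectation and match it with $B^n_{k+1}(x_{k+1})$.

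Second, I would invoke Tweedie's formula for the Gaussian transition $p^n_{k+1|k}(x_{k+1}|x_k)=\mathcal{N}(x_{k+1};F^n_k(x_k),2\gamma_{k+1}\Idbf)$. Differentiating $p^n_{k+1}(x_{k+1})=\int p^n_k(x_k)p^n_{k+1|k}(x_{k+1}|x_k)\rmd x_k$ under the integral sign and using $\nabla_{x_{k+1}}\log p^n_{k+1|k}(x_{k+1}|x_k)=-(x_{k+1}-F^n_k(x_k))/(2\gamma_{k+1})$ yields
\begin{equation}
\mathbb{E}_{p^n_{k,k+1}}[F^n_k(X_k)\mid X_{k+1}=x_{k+1}]=x_{k+1}+2\gamma_{k+1}\nabla\log p^n_{k+1}(x_{k+1}).
\end{equation}
Plugging this back and using the identity $B^n_{k+1}(x)=2x-F^n_k(x)+2\gamma_{k+1}\nabla\log p^n_{k+1}(x)$, which follows from the derivation $b^n_{k+1}=-f^n_k+2\nabla\log p^n_{k+1}$ in \Cref{sec:iter-score-match} together with $B^n_{k+1}(x)=x+\gamma_{k+1}b^n_{k+1}(x)$ and $F^n_k(x)=x+\gamma_{k+1}f^n_k(x)$, gives exactly $B^n_{k+1}(x_{k+1})$. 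This proves \eqref{eq:regressionb}.

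The argument for \eqref{eq:regressionf} is symmetric: apply the same $L^2$-minimization identity under $q^n_{k,k+1}$ and Tweedie's formula to the Gaussian transition $q^n_{k|k+1}(x_k|x_{k+1})=\mathcal{N}(x_k;B^n_{k+1}(x_{k+1}),2\gamma_{k+1}\Idbf)$ to obtain
\begin{equation}
\mathbb{E}_{q^n_{k,k+1}}[B^n_{k+1}(X_{k+1})\mid X_k=x_k]=x_k+2\gamma_{k+1}\nabla\log q^n_k(x_k),
\end{equation}
and then use $F^{n+1}_k(x)=2x-B^n_{k+1}(x)+2\gamma_{k+1}\nabla\log q^n_k(x)$ from the analogous relation $f^{n+1}_k=-b^n_{k+1}+2\nabla\log q^n_k$. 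The only non-cosmetic obstacle is justifying the exchange of differentiation and integration in Tweedie's identity; this needs mild integrability on $p^n_{k}$ (and $q^n_{k+1}$), which follows from the Gaussian form of the transitions together with the finiteness of $\KLLigne{\pdata\otimes\pprior}{p_{0,N}}$ inherited from \Cref{prop:IPFrecursion}, so the densities and their gradients are well-defined and smooth enough for the interchange. Everything else is bookkeeping.
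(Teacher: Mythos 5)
Your proposal is correct and is essentially the paper's own argument: the paper likewise reduces the claim to the $L^2$-characterization of conditional expectation and obtains $\nabla\log p^n_{k+1}(x_{k+1})=\int (F^n_k(x_k)-x_{k+1})/(2\gamma_{k+1})\,p^n_{k|k+1}(x_k|x_{k+1})\rmd x_k$ by differentiating the Gaussian convolution under the integral (your ``Tweedie'' step is exactly this identity), then assembles $B^n_{k+1}$ via $b^n_{k+1}=-f^n_k+2\nabla\log p^n_{k+1}$. The only cosmetic difference is the order of bookkeeping, and your justification of the interchange of differentiation and integration matches the paper's appeal to dominated convergence.
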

\Cref{prop:generalizedscorematching} shows how one can recursively approximate
$B^n_{k+1}$ and $F^{n+1}_k$. In practice, we use neural networks
$B_{\beta^n}(k,x)\approx B^{n}_k(x)$ and $F_{\alpha^n}(k,x)\approx F^{n}_k(x)$. Note that the networks could also be learned jointly. In this case, at equilibrium, we would obtain a bridge between $\pdata$ and $\pprior$ but not necessarily the Schr\"{o}dinger bridge.

Network parameters $\alpha^n, \beta^n$ are learnt through
   gradient descent to minimize empirical versions of the sum over $k$ of the
   loss functions given by \eqref{eq:regressionb} and \eqref{eq:regressionf}
   computed using $M$ samples and denoted as $\hlb_n(\beta)$ and
   $ \hlf_{n+1}(\alpha)$. The resulting algorithm approximating $L \in \nset$ IPF
   iterations is called Diffusion \schro Bridge (DSB) and is summarized in
   \Cref{algo:ipf_score} with
   $Z^j_k, \tilde{Z}_k^j \overset{\textup{i.i.d.}}\sim \mathcal{N}(0, \Idbf)$,
   see \Cref{fig:schro_bridge} for an illustration. 
   
\begin{minipage}{.57\textwidth}
\begin{algorithm}[H]
    \caption{Diffusion \schro Bridge}
    \label{algo:ipf_score}
    \begin{algorithmic}[1] 
      \FOR{$n \in \{0, \dots,L\}$} \WHILE{not converged}
      \STATE Sample $\{X^j_{k}\}_{k,j=0}^{N,M}$, where  $X^j_0 \sim \pdata$, and \\
      $X^{j}_{k+1} = F_{\alpha^n}(k, X^{j}_{k})+\sqrt{2
        \gamma_{k+1}} Z^{j}_{k+1}$ 
        \STATE Compute $\hlb_n(\beta^n)$ approximating \eqref{eq:regressionb}
        \STATE $\beta^{n} \leftarrow \textrm{Gradient Step}(\hlb_n(\beta^n))$ 
      \ENDWHILE \WHILE{not
        converged}
      \STATE Sample $\{X^j_{k}\}_{k,j=0}^{N,M}$, where $X^j_N \sim \pprior$, and \\
      $X^j_{k-1}=B_{\beta^n}(k, X^{j}_k)+\sqrt{2 \gamma_{k}}
      \tilde{Z}^{j}_{k}$ 
      \STATE Compute $\hlf_{n+1}(\alpha^{n+1})$ approximating \eqref{eq:regressionf}
      \STATE
      $\alpha^{n+1} \leftarrow \textrm{Gradient Step}(\hlf_{n+1}(\alpha^{n+1}))$
      \ENDWHILE \ENDFOR \STATE \textbf{Output: } $(\alpha^{L+1}, \beta^{L})$
    \end{algorithmic}
  \end{algorithm}
\end{minipage}
\hfill
 \begin{minipage}{.40\textwidth}
   \vspace{10pt} The DSB algorithm is initialized using the reference dynamics
   $f_{\alpha^0}(k,x)=f(x)$.  Once $\beta^L$ is learnt we can easily
   approximately sample from $\pdata$ by sampling $X_N \sim \pprior$ and then
   using $X_{k-1} = B_{\beta^L}(k, X_k)+ \sqrt{2 \gamma_{k}} Z_{k}$ with
   $Z_k \overset{\textup{i.i.d.}}\sim \mathcal{N}(0, \Idbf)$.  The resulting
   samples $X_0$ will be approximately distributed from $\pdata$. Although DSB
   requires learning a sequence of network parameters, $\alpha^n, \beta^n$,
   fewer diffusion steps are needed compared to standard SGM. In addition, as
   detailed in \Cref{sec:arch-deta-addit}, $\beta^0$ may be trained efficiently
   in a similar manner to previous SGM methods. Subsequent
   $\alpha^{n+1}, \beta^{n+1}$ are refinements of $\alpha^{n}, \beta^{n}$, hence
   may be fine-tuned from previous iterations.
 \end{minipage}

  \subsection{Convergence of Iterative Proportional Fitting}\label{subsec:convergenceIPF}
  In this section, we investigate the theoretical properties of IPF.  When the
  state-space is discrete and finite
  \citep{franklin1989scaling,peyre2019computational} or in the case where
  $\pdata$ and $\pprior$ are compactly supported \citep{chen2016entropic}, IPF
  converges at a geometric rate w.r.t.  the Hilbert-Birkhoff metric, see
  \cite{lemmens2013birkhoff} for a definition. Other than recent work by \cite{leger2020gradient}, only qualitative results exist in the general case where $\pdata$ or $\pprior$ is not compactly
  supported \citep{ruschendorf1995convergence,ruschendorf1993note}. 
  We establish here quantitative convergence
  of IPF in this non-compact setting as well as novel monotonicity results. We
  require only the following mild assumption.

\begin{assumption}
  \label{assum:existence_spec}
  $p_{N}, \pprior > 0$, $\absLigne{\Ent(\pprior)} < +\infty$, $\int_{\rset^d} \absLigne{\log p_{N|0}(x_N|x_0) } \pdata(x_0) \pprior(x_N) \rmd x_0 \rmd x_N < +\infty$.
\end{assumption}
Assumption \Cref{assum:existence_spec} is satisfied in all of our experimental
settings. We recall that for $\mu, \nu \in \Pens(\mse)$ with $(\mse, \mce)$ a
measurable space, the Jeffrey's divergence is given by
$\JefLigne{\mu}{\nu} = \KLLigne{\mu}{\nu} + \KLLigne{\nu}{\mu}$.
\begin{proposition}
  \label{prop:monotonicity}
  Assume \rref{assum:existence_spec}. Then $(\pi^n)_{n \in \nset}$ is well-defined and for any $n \geq 1$ we have
  \begin{equation}
    \label{eq:fundamental_kl}
    \KLLigne{\pi^{n+1}}{\pi^n} \leq \KLLigne{\pi^{n-1}}{\pi^n}, \qquad \KLLigne{\pi^n}{\pi^{n+1}} \leq \KLLigne{\pi^n}{\pi^{n-1}}.
  \end{equation}
  In addition, $(\tvnormLigne{\pi^{n+1} - \pi^{n}})_{n \in \nset}$ and $(\JefLigne{\pi^{n+1}}{\pi^{n}})_{n \in \nset}$ are non-increasing.
  Finally,  we have $\lim_{n \to +\infty} n \defEns{\KLLigne{\pi_0^n}{\pdata} +
    \KLLigne{\pi_N^n}{\pprior}} = 0$.
\end{proposition}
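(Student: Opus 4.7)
The plan is to exploit the fact that each IPF step is a Kullback--Leibler I-projection onto a linear (marginal) constraint, so it satisfies a Pythagorean-type identity. I would first establish existence of $\pi^n$ by induction using the reference plan $\tilde{\pi}(x_{0:N}) = \pdata(x_0)\pprior(x_N) p_{|0,N}(x_{1:N-1}|x_0,x_N)$. A direct computation combined with \Cref{assum:existence_spec} shows $\KLLigne{\tilde{\pi}}{p} = -\Ent(\pprior) - \int \pdata(x_0) \pprior(x_N) \log p_{N|0}(x_N|x_0)\,\rmd x_0\,\rmd x_N < +\infty$, so $\tilde{\pi}$ belongs to every constraint set encountered along the recursion with finite KL to the current iterate. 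This guarantees that each minimization is well-posed and that the unique minimizer admits the explicit reweighted form $\pi^{2n+1}(x_{0:N}) = \pi^{2n}(x_{0:N})\,\pprior(x_N)/\pi^{2n}_N(x_N)$ (and symmetrically for $\pi^{2n+2}$), from which positivity of the density is inherited.

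Write $c_n \in \{0,N\}$ for the coordinate constrained at step $n$ and $\mu_{c_n} \in \{\pdata, \pprior\}$ for the corresponding target marginal; since parity alternates, $\pi^{n-1}$ and $\pi^{n+1}$ project onto the same constraint set and in particular share the marginal $\mu_{c_{n+1}}$ on coordinate $c_{n+1}$. The Pythagorean identity for I-projections reads $\KLLigne{\pi'}{\pi^n} = \KLLigne{\pi'}{\pi^{n+1}} + \KLLigne{\pi^{n+1}}{\pi^n}$ for all $\pi'$ in the constraint set of step $n+1$ with $\KLLigne{\pi'}{\pi^n}<+\infty$; specializing to $\pi' = \pi^{n-1}$ and dropping the nonnegative term yields the first inequality of \eqref{eq:fundamental_kl}. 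For the second, the explicit density ratio gives $\KLLigne{\pi^n}{\pi^{n+1}} = \KLLigne{\pi^n_{c_{n+1}}}{\mu_{c_{n+1}}}$, and by the data processing inequality $\KLLigne{\pi^n_{c_{n+1}}}{\pi^{n-1}_{c_{n+1}}} \leq \KLLigne{\pi^n}{\pi^{n-1}}$; since $\pi^{n-1}_{c_{n+1}} = \mu_{c_{n+1}}$, this produces $\KLLigne{\pi^n}{\pi^{n+1}} \leq \KLLigne{\pi^n}{\pi^{n-1}}$.

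The TV monotonicity follows from the same template: $\rmd\pi^{n+1}/\rmd\pi^n$ depends only on $x_{c_{n+1}}$, so $\tvnormLigne{\pi^{n+1}-\pi^n} = \tvnormLigne{\pi^{n+1}_{c_{n+1}}-\pi^n_{c_{n+1}}}$, and the data-processing inequality for TV combined with $\pi^{n-1}_{c_{n+1}} = \pi^{n+1}_{c_{n+1}} = \mu_{c_{n+1}}$ yields $\tvnormLigne{\pi^{n+1}-\pi^n}\leq \tvnormLigne{\pi^n-\pi^{n-1}}$; the Jeffrey's monotonicity is then immediate by summing the two KL inequalities. For the asymptotic statement, applying the Pythagorean identity with $\pi' = \tilde{\pi}$ and telescoping gives $\sum_{n\geq 0}\KLLigne{\pi^{n+1}}{\pi^n} \leq \KLLigne{\tilde{\pi}}{\pi^0} = \KLLigne{\tilde{\pi}}{p} < +\infty$. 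After a trivial index shift, this bound also controls $\sum_{n\geq 1}\KLLigne{\pi^n}{\pi^{n-1}}$, and the second inequality of \eqref{eq:fundamental_kl} then yields $\sum_n \KLLigne{\pi^n}{\pi^{n+1}} < +\infty$. Since $\KLLigne{\pi^n}{\pi^{n+1}}$ is non-increasing, the standard argument for non-increasing summable sequences forces $n\,\KLLigne{\pi^n}{\pi^{n+1}} \to 0$. To conclude, I would note that for odd $n$ one has $\pi^n_N = \pprior$ and $\KLLigne{\pi^n}{\pi^{n+1}} = \KLLigne{\pi^n_0}{\pdata}$, symmetrically for even $n \geq 2$, so that $\KLLigne{\pi^n_0}{\pdata} + \KLLigne{\pi^n_N}{\pprior} = \KLLigne{\pi^n}{\pi^{n+1}}$ for all $n\geq 1$.

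The main obstacle is the asymmetry of I-projections: the Pythagorean identity only controls the forward divergence $\KLLigne{\pi^{n+1}}{\pi^n}$, whereas both the reversed monotonicity and the stated rate concern $\KLLigne{\pi^n}{\pi^{n+1}}$. Recovering these via the data-processing trick and bootstrapping against the well-controlled direction is the most delicate conceptual step.
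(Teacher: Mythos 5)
Your proof is correct in substance and shares the overall architecture of the paper's argument (well-posedness via the reference coupling $\tilde{\pi}=(\pdata\otimes\pprior)p_{|0,N}$ and Csisz\'ar's telescoping identity; the explicit density-ratio form of each half-bridge; marginal identities plus data processing; summability plus a monotone-summable-sequence lemma), but two of your sub-arguments take a genuinely different and cleaner route. For $\KLLigne{\pi^{n+1}}{\pi^n}\leq\KLLigne{\pi^{n-1}}{\pi^n}$ you invoke the Pythagorean identity for the I-projection at step $n+1$ with $\pi'=\pi^{n-1}$ (legitimate, since $\pi^{n-1}$ lies in the same constraint set as $\pi^{n+1}$ by the parity alternation), whereas the paper's \Cref{lemma:kl_fundamental} derives this direction through an explicit Jensen's-inequality computation on the Schr\"odinger potentials $a_n,b_n$. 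Likewise, for the TV monotonicity you combine the identity $\tvnormLigne{\pi^{n+1}-\pi^n}=\tvnormLigne{\pi^{n+1}_{c_{n+1}}-\pi^n_{c_{n+1}}}$ with the contraction of TV under marginalization and the fact that $\pi^{n-1}_{c_{n+1}}=\mu_{c_{n+1}}$, whereas \Cref{lemma:tv_non_inc} proves the same bound by a Fortet-type pointwise estimate on $|b_{n-1}^{-1}-b_n^{-1}|$. Your versions buy brevity and avoid any manipulation of the potentials beyond their existence; the paper's potential-level computations are what generalize to the quantitative Gaussian analysis later, so nothing is lost either way. The second inequality of \eqref{eq:fundamental_kl} and the Jeffrey's/TV corollaries are argued exactly as in the paper (\Cref{lemma:identities_joint_marginal} plus data processing).

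One imprecision: in the final step you assert that $(\KLLigne{\pi^n}{\pi^{n+1}})_{n}$ is non-increasing in $n$. What the two inequalities of \eqref{eq:fundamental_kl} actually give, by chaining $\KLLigne{\pi^{n+2}}{\pi^{n+3}}\leq\KLLigne{\pi^{n+2}}{\pi^{n+1}}\leq\KLLigne{\pi^{n}}{\pi^{n+1}}$, is that each \emph{parity subsequence} is non-increasing; monotonicity of the full sequence does not follow, because consecutive terms compare divergences taken in opposite directions on different coordinates. This is harmless: apply \Cref{lemma:sum_cv} separately to the even and odd subsequences (each summable by your comparison with the telescoped sum) to get $2n\,c_{2n}\to 0$ and $(2n+1)\,c_{2n+1}\to 0$, which yields the stated limit. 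With that adjustment the argument is complete.
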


A more general result with additional monotonicity properties is given in
\Cref{sec:theor-study-schro}.  Under similar assumptions, \citet[Corollary
1]{leger2020gradient} established $\KLLigne{\pi^n_0}{p_0} \leq C/n$
with $C \geq 0$ using a Bregman divergence gradient descent perspective.  In contrast,
our proof relies only on tools from information geometry. In addition, we
improve the convergence rate and show that $(\pi^n)_{n \in \nset}$ converges in total variation
towards $\pi^{\infty}$, \ie \ we not only obtain convergence of the
marginals but also convergence of the joint distribution.  Under restrictive
conditions on $\pdata$ and $\pprior$, \cite{ruschendorf1995convergence} showed
that $\pi^\infty$ is the \schro bridge. In the following proposition, we avoid
this assumption using results on automorphisms of measures
\citep{beurling1960automorphism}.

\begin{proposition}
  \label{prop:convergence_ipf}
  Assume \rref{assum:existence_spec}.  Then there exists a solution
  $\pi^\star \in \Pens_{N+1}$ to the SB problem and we have 
  $\lim_{n \to +\infty} \tvnormLigne{\pi^n - \pi^\infty} = 0$ with
  $\pi^\infty \in \Pens_{N+1}$.  Let
  $h = p_{0,N} / (p_0 \otimes p_N)$ and assume that
  $h \in \rmc((\rset^d)^2, \ooint{0, +\infty})$ and that there exist
  $\Phi_0, \Phi_N \in \rmc(\rset^d, \ooint{0,+\infty})$ such that
  \begin{equation}
    \label{eq:growth}
    \textstyle{
      \int_{\rset^d \times \rset^d} (\abs{\log h(x_0,x_N)} + \abs{\log \Phi_0(x_0)} + \abs{\log \Phi_N (x_N)}) \pdata(x_0) \pprior(x_N) \rmd x_0 \rmd x_N < +\infty, 
      }
    \end{equation}
    with $h(x_0,x_N) \leq \Phi_0(x_0) \Phi_N(x_N)$.  If $p$ is
    absolutely continuous w.r.t. $\pi^\infty$ then $\pi^\infty = \pi^\star$.
\end{proposition}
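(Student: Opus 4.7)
The plan is to combine the structural information on the IPF iterates (each $\pi^n$ is a multiplicative perturbation of $p$ by a separable function of $(x_0,x_N)$) with the uniqueness characterization of the SB solution provided by Beurling's automorphism theorem under the growth condition \eqref{eq:growth}. I break the argument into three steps.

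\textbf{Step 1: existence of $\pi^\star$.} The domination assumption in \eqref{eq:growth} forces $\KLLigne{\pdata \otimes \pprior}{p_{0,N}}<+\infty$, so any coupling of $\pdata$ and $\pprior$ combined with $p_{|0,N}$ gives a feasible candidate with finite $\KL$ to $p$; hence the static problem \eqref{eq:static_sb} has finite value. The set of couplings with marginals $\pdata,\pprior$ is tight, and $\pi \mapsto \KLLigne{\pi}{p_{0,N}}$ is weakly lower semi-continuous, so the direct method yields a minimizer, which lifts to $\pi^\star \in \Pens_{N+1}$ via disintegration.

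\textbf{Step 2: structure of $\pi^\infty$.} Iterating \Cref{prop:IPFrecursion}, an easy induction shows that each iterate factorizes as
\begin{equation}
    \pi^{n}(x_{0:N}) = p(x_{0:N})\,\phi_0^{(n)}(x_0)\,\phi_N^{(n)}(x_N),
\end{equation}
for nonnegative measurable potentials $\phi_0^{(n)},\phi_N^{(n)}$ (with $\phi_0^{(0)}\equiv 1$). The TV convergence $\pi^n\to\pi^\infty$ together with $p \ll \pi^\infty$ lets me pass to the limit at the level of Radon--Nikodym derivatives: along a subsequence $\rmd\pi^n/\rmd p \to \rmd\pi^\infty/\rmd p$ $p$-almost everywhere. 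Since each $\rmd\pi^n/\rmd p$ depends only on $(x_0,x_N)$ and factorizes, the limit inherits both properties, so $\rmd\pi^\infty/\rmd p = \Phi_0^\infty(x_0)\Phi_N^\infty(x_N)$ for some nonnegative measurable $\Phi_0^\infty,\Phi_N^\infty$. By \Cref{prop:monotonicity}, $\pi_0^n\to\pdata$ and $\pi_N^n\to\pprior$ in TV, so the marginals of $\pi^\infty$ are exactly $\pdata$ and $\pprior$.

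\textbf{Step 3: Beurling uniqueness.} Both $\pi^\infty$ and $\pi^\star$ are measures of the form $p \cdot (\Phi_0 \otimes \Phi_N)$ with marginals $\pdata,\pprior$ (for $\pi^\star$ this is the classical duality description of the SB, see \citep{leonard2014survey,chen2020optimal}). The growth condition \eqref{eq:growth} is precisely the hypothesis under which Beurling's automorphism theorem \citep{beurling1960automorphism} (and its measure-theoretic successors) guarantees that such a representation, if it exists, is unique. Therefore $\pi^\infty = \pi^\star$.

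\textbf{Main obstacle.} The delicate step is Step 2: extracting a product factorization of the limiting Radon--Nikodym derivative from merely TV convergence. The potentials $\phi_0^{(n)},\phi_N^{(n)}$ are a priori unnormalized and can blow up on sets of small $p$-measure; the hypothesis $p \ll \pi^\infty$ is exactly what prevents the limit from being singular w.r.t. $p$ and therefore allows me to identify the limiting density as a product, while \eqref{eq:growth} is what keeps Beurling's uniqueness applicable. Without one of these ingredients the argument collapses, which explains why both conditions appear in the statement.
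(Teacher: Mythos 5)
Your overall architecture (IPF iterates are separable perturbations of $p$, pass to the limit, conclude by Beurling uniqueness) is the same as the paper's, and your Steps 1 and 3 are essentially sound — Step 1 via the direct method is a legitimate alternative to the paper's citation of Rüschendorf for existence, and Step 3 matches the paper's use of \cite[Theorem 2]{beurling1960automorphism} applied to $\bar{h} = h\,\Phi_0^{-1}\Phi_N^{-1}$, which \eqref{eq:growth} renders bounded and integrable in the required sense. However, Step 2 has a genuine gap at exactly the point you flag as "delicate" but then do not actually resolve. From TV convergence you correctly get, along a subsequence, $p$-a.e. convergence of $\phi_0^{(n)}(x_0)\phi_N^{(n)}(x_N)$ to $\rmd\pi^\infty/\rmd p$. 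But the assertion that "the limit inherits both properties" — i.e., that a pointwise a.e. limit of product functions is itself a product — is not a passage-to-the-limit argument, because the individual factors $\phi_0^{(n)}$ and $\phi_N^{(n)}$ need not converge: only their product does, and the factors are determined only up to a multiplicative constant $c_n$, $c_n^{-1}$ that can oscillate or blow up. Fixing a base point $(x_0^\ast, x_N^\ast)$ and renormalizing by $\phi_0^{(n)}(x_0^\ast)$ only yields convergence of each factor on the set of $(x_0,x_N)$ "reachable" from that base point; in general the support decomposes into a (possibly uncountable) family of product blocks, on each of which a different renormalization is needed, and one must show that only countably many blocks carry mass and that the block-wise limits glue into a single pair $(a,b)$ up to a set where the limit vanishes. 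This is the content of \Cref{prop:ruschendorf_prod} in the paper (adapted from \cite[Proposition 2]{ruschendorf1993note}), whose proof is a nontrivial partition argument using transfinite induction and the absolute continuity of $p$ w.r.t. $p_0\otimes p_N$; the hypothesis $p\ll\pi^\infty$ then enters afterwards, to discard the "$\Phi=0$" branch of the dichotomy so that the product representation holds $p$-a.e. Without this lemma your proof does not go through: the limit could a priori fail to be separable (e.g., limits of $e^{n f(x_0)}e^{-n f(x_N)}$ produce indicator-type functions that are products only in the weak "product-or-zero" sense).
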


\Cref{prop:convergence_ipf} extends previous IPF convergence results without the
assumption that the mapping $h$ is lower bounded, see
\cite{ruschendorf1995convergence,chen2016entropic}. Our assumption on $h$ can be
relaxed and replaced by a tighter condition on $\pi^{\infty}$, see
\Cref{prop:convergence_ipf:proof}.  \Cref{prop:monotonicity} suggests a
convergence rate of order $o(n)$ for the IPF in the non-compact
setting. However, in some situations, we recover geometric convergence rates
with explicit dependency w.r.t. the problem constants, see
\Cref{prop:geom_gaussian:proof}. In practice, we do not run IPF for
$\pdata, \pprior$ but using empirical versions of these distributions. Recent
results in \cite{deligiannidis2021quantitative} show that the iterates of IPF
based on empirical distributions remain close to the iterates one
would obtain using the true distributions, uniformly in time. In particular, the
SB computed using the empirical distributions converges to the one computed
using the true distributions as the number of samples goes to infinity.

\subsection{Continuous-time IPF}
\label{sec:continuous-ipf}
We describe an IPF algorithm for solving SB problems in continuous-time. We show that 
DSB proposed in \Cref{algo:ipf_score} can be seen as a discretization of
this  IPF. Given a reference measure $\Pbb \in \Pens(\contspace)$, the
continuous formulation of the SB involves solving the following problem
\begin{equation}
  \label{eq:dynamic_schro}
  \textstyle{
    \Pi^\star = \argmin \ensemble{\KLLigne{\Pi}{\Pbb}}{\Pi \in \Pens(\mathcal{C}), \ \Pi_0 = \pdata, \ \Pi_T = \pprior}, \quad T = \sum_{k=0}^{N-1}\gamma_{k+1}.
    }
\end{equation}
Similarly to
\eqref{eq:IFPrecursion}, we define the IPF $(\Pi^n)_{n \in \nset}$
with $\Pi^0 = \Pbb$ associated with \eqref{eq:forward} and for any $n \in \nset$
\begin{align}
  \textstyle{\Pi^{2n+1}} &= \textstyle{\argmin \ensemble{\KLLigne{\Pi}{\Pi^{2n}}}{\Pi \in \Pens(\mathcal{C}), \ \Pi_T = \pprior}, } \\
  \textstyle{\Pi^{2n+2}} &= \textstyle{\argmin \ensemble{\KLLigne{\Pi}{\Pi^{2n+1}}}{\Pi \in \Pens(\mathcal{C}), \ \Pi_0 = \pdata}.}
\end{align}
One can show that for any $n \in \nset$, $\Pi^n = \pi^{\rms, n} \Pbb_{|0,T}$,
with $(\pi^{\rms, n})_{n \in \nset}$ the IPF for the static SB problem.  In
particular, \Cref{prop:monotonicity} and \Cref{prop:convergence_ipf} extend to
the continuous IPF framework.
In what follows, for any $\Pbb \in \Pens(\contspace)$, we define $\Pbb^R$ as the
reverse-time measure, \ie \ for any $\msa \in \mcb{\contspace}$ we have
$\Pbb^R(\msa) = \Pbb(\msa^R)$ where
$\msa^R = \ensembleLigne{t \mapsto\omega(T-t)}{\omega \in \msa}$.
The following result is the continuous
counterpart of \Cref{prop:IPFrecursion} and states that each IPF iteration is
associated with a diffusion, showing that DSB can be seen as a
discretization of the continuous IPF.
\begin{proposition}
  \label{prop:continuous_schro}
  Assume \rref{assum:existence_spec} and that there exist
  $\Mbb \in \Pens(\contspace)$, $U \in \rmc^1(\rset^d, \rset)$, $C \geq 0$
  such that for any $n \in \nset$, $x \in \rset^d$,
  $\KLLigne{\Pi^n}{\Mbb} < +\infty$,
  $\langle x, \nabla U(x) \rangle \geq - C(1+\normLigne{x}^2)$ and $\Mbb$ is
  associated with
  \begin{equation}
    \label{eq:diff_q}
    \textstyle{
      \rmd \bfX_t = -\nabla U(\bfX_t) \rmd t + \sqrt{2} \rmd \bfB_t, 
      }
    \end{equation}
    with $\bfX_0$ distributed according to the invariant distribution of \eqref{eq:diff_q}.  Then,
    for any $n \in \nset$ we have:
  \begin{enumerate}[wide, labelwidth=!, labelindent=0pt, label=(\alph*)]
  \item $(\Pi^{2n+1})^R$ is associated with $\rmd \bfY_t^{2n+1} = b^n_{T-t}(\bfY_t^{2n+1}) \rmd t  + \sqrt{2} \rmd \bfB_t$ with $\bfY_0^{2n+1} \sim \pprior$;
  \item $\Pi^{2n+2}$ is associated with
    $\rmd \bfX_t^{2n+2} = f^{n+1}_t( \bfX_t^{2n+2}) \rmd t + \sqrt{2} \rmd
    \bfB_t$ with $\bfX_0^{2n+2} \sim \pdata$;
  \end{enumerate}
  \vspace{-.3cm} where for any $n \in \nset$, $t \in \ccint{0,T}$ and 
  $x \in \rset^d$, $b^{n}_t( x) = -f^{n}_t(x) +2 \nabla \log p^{n}_t(x)$, 
  $f^{n+1}_t(x) = -b^n_t(x) +2 \nabla \log q^n_t(x)$, with $f^0_t(x) = f(x)$, see  \eqref{eq:forward}, and $p^n_t$, $q_t^n$
  the densities of $\Pi^{2n}_t$ and  $\Pi_t^{2n+1}$.
\end{proposition}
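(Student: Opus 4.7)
The plan is to proceed by induction on $n$, alternating between the IPF half-bridge characterization and a time-reversal theorem for diffusions. The central lemma, classical in IPF theory, is that the half-bridge problem $\argmin\{\KLLigne{\Pi}{\Pi^{2n}} : \Pi_T = \pprior\}$ admits the unique solution $\Pi^{2n+1}(\rmd \omega) = (\rmd \pprior / \rmd \Pi^{2n}_T)(\omega_T) \, \Pi^{2n}(\rmd \omega)$, which preserves the reverse-time transition kernel of $\Pi^{2n}$ while replacing the time-$T$ marginal by $\pprior$. The analogous statement for the even step will then show that $\Pi^{2n+2}$ preserves the forward transition kernel of $\Pi^{2n+1}$ while replacing the time-$0$ marginal by $\pdata$.

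For the base case $n = 0$, $\Pi^0 = \Pbb$ is associated with \eqref{eq:forward}, so $f^0_t = f$. I would invoke the Haussmann--Pardoux--F\"ollmer time-reversal theorem in the form proved by \cite{cattiaux2021time} to show that $(\Pi^0)^R$ is associated with $\rmd \bfY_t = \{-f(\bfY_t) + 2 \nabla \log p^0_{T-t}(\bfY_t)\} \rmd t + \sqrt{2} \rmd \bfB_t$ with $\bfY_0 \sim p^0_T$. Combining with the half-bridge characterization then gives that $(\Pi^1)^R$ has the same drift $b^0_{T-t}$ but initial condition $\pprior$, establishing (a) for $n = 0$.

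For the inductive step, suppose $\Pi^{2n}$ is associated with the forward SDE of drift $f^n_t$ and initial marginal $\pdata$. First I would apply the time-reversal formula to produce the drift $b^n_{T-t}(x) = -f^n_{T-t}(x) + 2 \nabla \log p^n_{T-t}(x)$ for $(\Pi^{2n})^R$, and then invoke the half-bridge characterization to transfer this drift to $(\Pi^{2n+1})^R$ with initial marginal $\pprior$, giving (a). To obtain (b), I would apply time-reversal once more: since $(\Pi^{2n+1})^R$ is a reverse-time diffusion with drift $b^n_{T-t}$ and marginals $q^n_t$, another application of the time-reversal theorem shows $\Pi^{2n+1}$ in forward time has drift $-b^n_t + 2 \nabla \log q^n_t$. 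The forward half-bridge step then preserves this drift while replacing the time-$0$ marginal by $\pdata$, yielding $\Pi^{2n+2}$ associated with drift $f^{n+1}_t = -b^n_t + 2 \nabla \log q^n_t$.

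The principal technical obstacle will be justifying the time-reversal theorem and the associated score-existence statements at every iteration. This is precisely where the hypotheses on $\Mbb$ and the dissipative potential $U$ must enter: the bounds $\KLLigne{\Pi^n}{\Mbb} < +\infty$ combined with Girsanov's theorem represent each $\Pi^n$ as a diffusion absolutely continuous with respect to $\Mbb$, so that the marginals $p^n_t$ and $q^n_t$ inherit sufficient regularity from the smoothing induced by the Ornstein--Uhlenbeck-type reference dynamics to ensure that $\nabla \log p^n_t$ and $\nabla \log q^n_t$ are well-defined and satisfy the integrability conditions required by the time-reversal theorem.
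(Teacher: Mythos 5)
Your proposal is correct and follows essentially the same route as the paper: the half-bridge step is identified (via the additive/disintegration formula for the KL divergence) as preserving the reverse-time transition kernel of $\Pi^{2n}$ while replacing the terminal marginal by $\pprior$, and the drift of that kernel is obtained from the time-reversal theorem of \cite{cattiaux2021time}, justified by $\KLLigne{\Pi^{2n}}{\Mbb} < +\infty$ and the Markov property, with the even step handled symmetrically. The only cosmetic difference is that the paper writes out part (a) and declares (b) analogous, whereas you sketch both halves of the alternation explicitly.
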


\section{Experiments}
\label{sec:experiments}

\paragraph{Gaussian example.}

We first confirm that our algorithm recovers the true SB in a Gaussian setting where the ground truth is available. Let
$\pprior = \mathcal{N}(-a, \Idbf)$, $\pdata = \mathcal{N}(a, \Idbf)$ with
$a \in \rset^d$ and consider a Brownian motion as reference dynamics. The analytic expression for the static SB is $\mathcal{N}((-a,a), \Sigma)$ with
$\Sigma \in \rset^{2d \times 2d}$ given in \Cref{sec:convergence_ground}.  We let $a = 0.1 \times \mathbf{1}$ with
$d=50$ or $d=5$.
\begin{figure}[h]
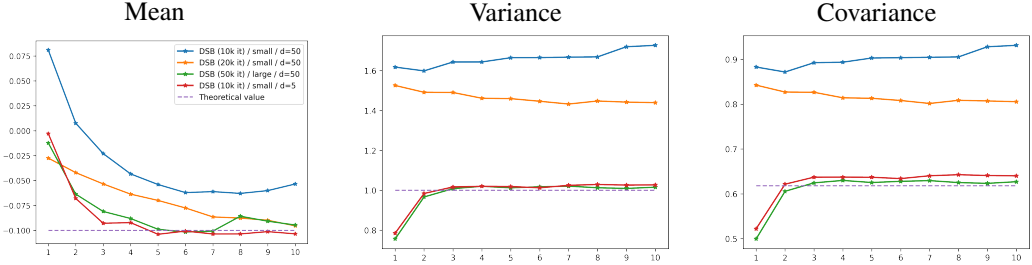

  \centering
  \hfill
\begin{tikzpicture}

  \node at  (0,1.85)  {Mean};
  \node at  (\wwwww,1.85)  {Variance};
  \node at  (2*\wwwww,1.85)  {Covariance};
        \node[inner sep=0pt, label={\small},] (scurve_orig) at (0,0)
        {\includegraphics[width=0.3\linewidth, trim=0cm 0cm 0cm 0.7cm, clip]{./fig/mean.png}};
        \node[inner sep=0pt, label={\small }] (scurve_uno) at (\wwwww,0)
        {\includegraphics[width=0.3\linewidth, trim=0cm 0cm 0cm 0.7cm, clip]{./fig/variance.png}};        
        \node[inner sep=0pt, label={\small }] (middle) at (2*\wwwww,0)
            {\includegraphics[width=0.3\linewidth, trim=0cm 0cm 0cm 0.7cm, clip]{./fig/covariance.png}};
          \end{tikzpicture}
  \caption{Convergence of DSB to ground-truth. From left to right: estimated mean, variance and covariance (first component) after each DSB iteration. The ground-truth value is given by the dashed line in each scenario.}
  \label{fig:conv_gt}
  \end{figure}
  In \Cref{fig:conv_gt}, we illustrate the convergence of DSB. We train each DSB
  with a batch size of $128$, $N = 20$ and $\gamma = 1/40$. We compare two
  network configurations: ``small'' where the network is given by
  \Cref{fig:2d_architecture} (30k parameters) whereas ``large'' corresponds to
  the same network but with twice as many latent dimensions (240k
  parameters). The small network recovers the statistics of SB in the
  low-dimensional setting ($d=5$) but is unable to recover the variance and
  covariance for $d=50$. Increasing the size of the network solves this
  problem. 

\paragraph{Two dimensional toy experiments.}
\label{sec:two-dimensional-toy}
 \begin{wrapfigure}{R}{0.5\textwidth}
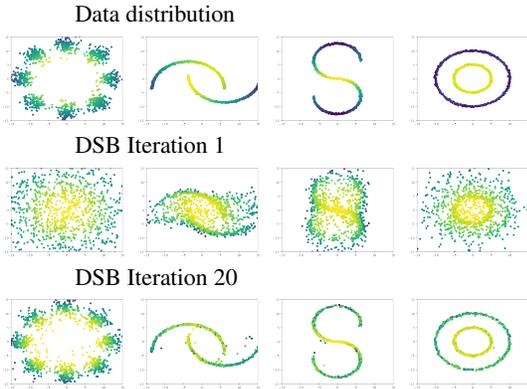

 \vspace{-0.75cm}
        \centering
         \begin{tikzpicture}
        \node[inner sep=0pt, label={[label distance=0.03cm]87:{\small Data distribution}}] (gaus_orig) at (0+\offset,0)
            {\includegraphics[width=0.24\linewidth]{./fig/8gaussians/im/0_forward_0_registration_0.png}};
        \node[inner sep=0pt, label={\small }] (moon_orig) at (\www+\offset,0)
            {\includegraphics[width=0.24\linewidth]{./fig/moon/im/0_forward_0_registration_0.png}};
        \node[inner sep=0pt, label={\small }] (scurve_orig) at (2*\www+\offset,0)
            {\includegraphics[width=0.24\linewidth]{./fig/scurve/im/0_forward_0_registration_0.png}};
        \node[inner sep=0pt, label={\small }] (circle_orig) at (3*\www+\offset,0)
            {\includegraphics[width=0.24\linewidth]{./fig/circle_gaussian_adaptive/im/0_forward_0_registration_0.png}};
    \node[inner sep=0pt, label={[label distance=0.03cm]87:{\small DSB Iteration 1}}] (gaus_1) at (0+\offset,-\hsmall)
            {\includegraphics[width=0.24\linewidth]{./fig/8gaussians/im/9999_backward_1_registration_19.png}};
        \node[inner sep=0pt, label={\small }] (moon_1) at (\www+\offset,-\hsmall)
            {\includegraphics[width=0.24\linewidth]{./fig/moon/im/9999_backward_1_registration_19.png}};
        \node[inner sep=0pt, label={\small }] (scurve_1) at (2*\www+\offset,-\hsmall)
            {\includegraphics[width=0.24\linewidth]{./fig/scurve/im/9999_backward_1_registration_19.png}};
        \node[inner sep=0pt, label={\small }] (circle_1) at (3*\www+\offset,-\hsmall)
            {\includegraphics[width=0.24\linewidth]{./fig/circle_gaussian_adaptive/im/9999_backward_1_registration_19.png}};
    %
    \node[inner sep=0pt, label={[label distance=0.03cm]87:{\small DSB Iteration 20}}] (gaus_20) at (0+\offset,-2*\hsmall)
            {\includegraphics[width=0.24\linewidth]{./fig/8gaussians/im/9999_backward_20_registration_19.png}};
        \node[inner sep=0pt, label={\small }] (moon_20) at (\www+\offset,-2*\hsmall)
            {\includegraphics[width=0.24\linewidth]{./fig/moon/im/9999_backward_20_registration_19.png}};
        \node[inner sep=0pt, label={\small }] (scurve_20) at (2*\www+\offset,-2*\hsmall)
            {\includegraphics[width=0.24\linewidth]{./fig/scurve/im/9999_backward_15_registration_19.png}};
        \node[inner sep=0pt, label={\small }] (circle_20) at (3*\www+\offset,-2*\hsmall)
            {\includegraphics[width=0.24\linewidth]{./fig/circle_gaussian_adaptive/im/9999_backward_15_registration_19.png}};
      \end{tikzpicture}
    \caption{Data distributions $\pdata$ vs
    distribution at $t=0$ for $T=0.2$ after $1$ and $20$ DSB iterations.}
    \vspace{-10pt}
  \label{fig:target_2d}
      \end{wrapfigure}
      We evaluate the validity of our approach on toy two dimensional examples.
      Contrary to existing SGM approaches we do \emph{not} require that the
      number of steps is large enough for $p_N\approx \pprior$ to hold. We use a
      fully connected network with positional encoding
      \citep{vaswani2017attention} to approximate $B^n_{k}$ and $F^{n}_k$, see
      \Cref{sec:addit-two-dimens} and our \href{https://github.com/JTT94/diffusion_schrodinger_bridge/}{code}\footnote{Code is available here \href{https://github.com/JTT94/diffusion_schrodinger_bridge}{https://github.com/JTT94/diffusion\_schrodinger\_bridge}} for implementation details.
      Animated plots of the DSB iterations may be found online on our project
      \href{https://vdeborto.github.io/publication/schrodinger\_bridge/}{webpage}\footnote{\href{https://vdeborto.github.io/publication/schrodinger\_bridge/}{https://vdeborto.github.io/publication/schrodinger\_bridge/}}.
      In \Cref{fig:target_2d}, we illustrate the benefits of DSB over classical
      SGM. We fix $f(x) = -\alpha x$ and choose
      $\pprior = \mathcal{N}(0, \sigma^2_\mathrm{data}\Idbf)$, hence
      $\alpha=1/\sigma^2_\mathrm{data}$ where $\sigma^2_\mathrm{data}$ is the
      variance of the dataset. We let $N = 20$ and $\gamma_k = 0.01$, \ie \
      $T = 0.2$. Since $T$ is small, we do not have $p_N \approx \pprior$ and the
      reverse-time process obtained after the first DSB iteration (corresponding
      to original SGM methods) does not yield a satisfactory generative
      model. However, multiple iterations of DSB improve the quality of the
      synthesis.

       \paragraph{Generative
        modeling.}\label{sec:generative-modeling}

      DSB is the first practical
      algorithm for approximating the solution to the SB problem in high
      dimension ($d=3072$ for CelebA). Whilst our implementation does not yet compete with state-of-the-art methods, we
      show promising results with fewer diffusion steps compared to initial
      SGMs \citep{song2019generative} and demonstrate its performance on MNIST \citep{mnist} and
      CelebA \citep{liu2015faceattributes}. 
\begin{figure}[h]
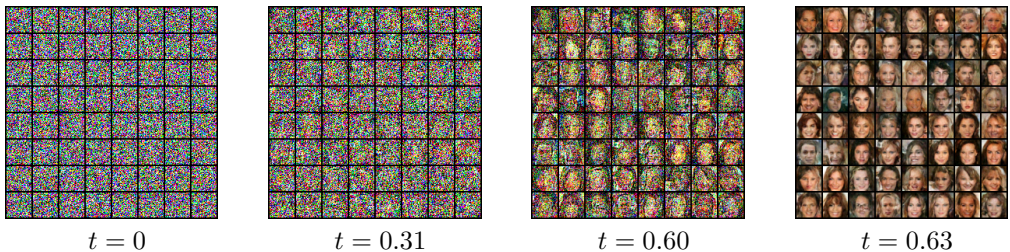

              \centering
              \hfill
\begin{tikzpicture}
        \vspace{-1cm}
        \node at  (0,-1.85)  {$t=0$};
        \node at  (\wwww,-1.85)  {$t=0.31$};
        \node at  (2*\wwww,-1.85)  {$t=0.60$};
        \node at  (3*\wwww,-1.85)  {$t=0.63$};
        \node[inner sep=0pt, label={\small},] (scurve_orig) at (0,0)
            {\includegraphics[width=0.15\linewidth, trim=1.5cm 1.5cm 1.0cm 0.5cm, ]{./fig/celeba_im/im_grid_1.png}};
        \node[inner sep=0pt, label={\small }] (scurve_uno) at (\wwww,0)
        {\includegraphics[width=0.15\linewidth, trim=1.5cm 1.5cm 1.0cm 0.5cm, ]{./fig/celeba_im/im_grid_24.png}};        
        \node[inner sep=0pt, label={\small }] (middle) at (2*\wwww,0)
            {\includegraphics[width=0.15\linewidth, trim=1.5cm 1.5cm 1.0cm 0.5cm, ]{./fig/celeba_im/im_grid_30.png}};
        \node[inner sep=0pt, label={\small }] (circle_orig) at (3*\wwww,0)
        {\includegraphics[width=0.15\linewidth, trim=1.5cm 1.5cm 1.0cm 0.5cm ]{./fig/celeba_im/im_grid_49.png}};
          \end{tikzpicture}
          \vspace{-0.25cm}
\caption{Generative model for CelebA $32 \times 32$ after 10 DSB iterations with $N=50$ ($T=0.63$)}
  \label{fig:celeba}
  \end{figure}

A reduced U-net architecture based on
      \citet{nichol2021improved} is used to approximate $B^n_{k}$ and
      $F^{n}_k$. Further details are given in
      \Cref{sec:generative-modeling-1}. Our method is validated on downscaled CelebA in \Cref{fig:celeba}. \Cref{fig:mnist} illustrates qualitative
      improvement over $8$ DSB iterations with as few as $N=12$ diffusion steps. Note, as shown in \Cref{sec:generative-modeling-1}, we obtain better results with higher $N$ yet still significantly fewer steps than in the original SGM
      procedures \citep{song2020improved, song2019generative} which use
      $N=100$. 
      \Cref{fig:fid} illustrates how the sample
      quality, measured quantitatively in terms of Fr\'{e}chet Inception
      Distance (FID) \citep{heusel2017gans}, improves with the number of DSB
      iterations for various numbers of steps $N$.

    


\begin{minipage}{0.5\textwidth}
    \begin{figure}[H]
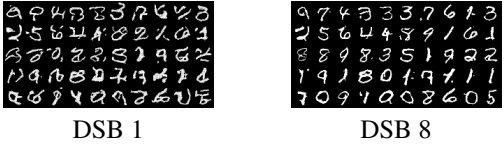

    \centering
         \begin{tikzpicture}
        \node[inner sep=0pt, label={\small }] (mist1) at (1.2*\offset,0)
            {\includegraphics[width=0.4\linewidth]{./fig/mnist/mnist_12_b1_5000.png}};
        \node[inner sep=0pt, label={\small }] (mnist8) at (1.3*\ww+\offset,0)
            {\includegraphics[width=0.4\linewidth]{./fig/mnist/mnist_12_b8_5000.png}};
    
        \node[] at (1.2*\offset,-0.5*\offset) {DSB 1};
        \node[] at (1.3*\ww+\offset,-0.5*\offset) {DSB 8};

      \end{tikzpicture}
      \caption{Generated samples $(N=12)$}
  \label{fig:mnist}
  \end{figure}
\end{minipage}
\hspace{1cm}
\begin{minipage}{0.4\textwidth}
 \begin{figure}[H]
  \centering
  \includegraphics[width=0.6\textwidth]{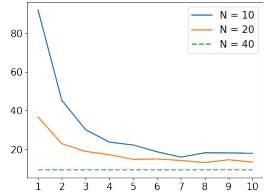}
  \caption{FID vs DSB Iterations.}
  \label{fig:fid}
  \end{figure}
\end{minipage}

\smallskip\noindent\textbf{Dataset interpolation.}
\schro bridges not only reduce the number of steps in SGM
methods but also enable flexibility in the choice of the prior density $\pprior$. Our approach is still valid for non-Gaussian $\pprior$, contrary to previous SGM works, and can be set as any other data distribution
$\pdata'$. In this case DSB converges towards a bridge between $\pdata$ and
$\pdata'$, see \Cref{fig:transport}. These experiments pave the way towards
high-dimensional optimal transport between arbitrary data distributions.
      
 \begin{figure}[H]
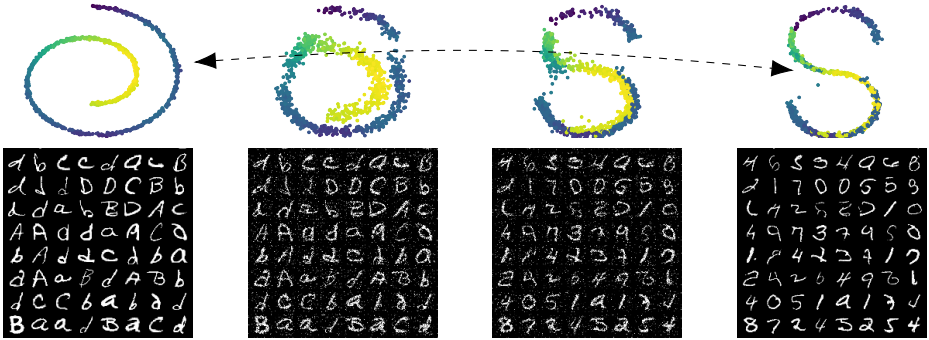

        \centering
        \vspace{-0.5cm}
         \begin{tikzpicture}
        \node[inner sep=0pt, label={\small},] (scurve_orig) at (0+\offset,0)
            {\includegraphics[width=0.18\linewidth, trim=1.5cm 1.5cm 1.0cm 0.5cm, clip]{./fig/crop_swiss/backward_9_registration_0_crop.png}};
        \node[inner sep=0pt, label={\small }] (scurve_uno) at (\ww+\offset,0)
        {\includegraphics[width=0.18\linewidth, trim=1.5cm 1.5cm 1.0cm 0.5cm, clip]{./fig/crop_swiss/backward_9_registration_19_crop.png}};        
        \node[inner sep=0pt, label={\small }] (middle) at (2*\ww+\offset,0)
            {\includegraphics[width=0.18\linewidth, trim=1.5cm 1.5cm 1.0cm 0.5cm, clip]{./fig/crop_swiss/backward_9_registration_39_crop.png}};
        \node[inner sep=0pt, label={\small }] (circle_orig) at (3*\ww+\offset,0)
        {\includegraphics[width=0.18\linewidth, trim=1.5cm 1.5cm 1.0cm 0.5cm, clip]{./fig/crop_swiss/backward_9_registration_49_crop.png}};
        \path[{Latex[length=3mm]}-{Latex[length=3mm]}, dashed]
         (scurve_orig) edge[bend left=\bend] (3*\ww+\offset-0.5,0);

        \node[inner sep=0pt, label={\small }] at (0+\offset,0-\offsety)
            {\includegraphics[width=0.18\linewidth, trim=1.0cm 1.0cm 1.0cm 1.0cm, clip]{./fig/emnist_to_mnist/im_grid_0.png}};
        \node[inner sep=0pt, label={\small }]  at (\ww+\offset,0-\offsety)
        {\includegraphics[width=0.18\linewidth, trim=1.0cm 1.0cm 1.0cm 1.0cm, clip]{./fig/emnist_to_mnist/im_grid_10.png}};        
        \node[inner sep=0pt, label={\small }]  at (2*\ww+\offset,0-\offsety)
            {\includegraphics[width=0.18\linewidth, trim=1.0cm 1.0cm 1.0cm 1.0cm, clip]{./fig/emnist_to_mnist/im_grid_20.png}};
        \node[inner sep=0pt, label={\small }] at (3*\ww+\offset,0-\offsety)
            {\includegraphics[width=0.18\linewidth, trim=1.0cm 1.0cm 1.0cm 1.0cm, clip]{./fig/emnist_to_mnist/im_grid_29.png}};
          \end{tikzpicture}
          \caption{First row: Swiss-roll to S-curve (2D). Iteration 9 of DSB with
            $T = 1$ ($N = 50$). From left to right:
            $t=0, 0.4, 0.6, 1$. Second row: EMNIST \citep{cohen2017emnist} to
            MNIST. Iteration 10 of DSB with $T=1.5$ ($N=30$).
            From left to right: $t=0, 0.4, 1.25, 1.5$.}
  \label{fig:transport}          
\end{figure}

\vspace{-.75cm}
\section{Discussion}
\label{sec:conclusion}
Score-based generative modeling (SGM) may be viewed as the first stage of
solving a \schro bridge problem.  Building on this interpretation, we develop novel methodology, the Diffusion \schro Bridge (DSB), that extends SGM
approaches and allows one to perform generative modeling with fewer diffusion
steps. DSB complements recent techniques to speed up existing SGM methods that
rely on either different noise schedules
\citep{nichol2021improved,san2021noise,watson2021learning}, alternative
discretizations \citep{jolicoeur2021gotta} or knowledge distillation
\citep{luhman2021knowledge}. Additionally, as the solution of the \schro problem
is a diffusion, it is possible as in \citet[Section 4.3]{song2020score} to
obtain an equivalent neural ordinary differential equation that admits the same
marginals as the diffusion but enables exact likelihood computation, see
\Cref{sec:likel-comp-schr}. Even though the final time $T>0$ within DSB can be
arbitrarily small, we observed that this has limits as choosing $T$ too close to
$0$ decreases the quality of the generative models.  One reason for this
behavior is that if the endpoint of the original forward process is too far from
the target distribution $\pprior$, then learning the score around the support of
$\pprior$ is challenging even for DSB. From a theoretical point of view, we have
provided quantitative convergence results for SGM methods and derived new
state-of-the-art convergence bounds for IPF as well as novel monotonicity
results. We have demonstrated DSB on generative modeling and data interpolation
tasks. Finally, although this work was motivated by generative modeling, DSB is
much more widely applicable as it can be thought of as the continuous
state-space counterpart of the celebrated Sinkhorn algorithm
\citep{cuturi2013sinkhorn,peyre2019computational}. For example, DSB could be
used to solve multi-marginal \schro bridges problems
\citep{dimarino2020optimal}, compute Wasserstein barycenters, find the
minimizers of entropy-regularized Gromov--Wasserstein problems
\citep{memoli2011gromov} or perform domain adaptation in continuous
state-spaces.

\newpage
\begin{ack}
Valentin De Bortoli and Arnaud Doucet are supported by the EPSRC CoSInES (COmputational Statistical INference for Engineering and Security) grant EP/R034710/1, James Thornton by the OxWaSP CDT through grant EP/L016710/1 and Jeremy Heng by the CY Initiative of Excellence (grant “Investissements d’Avenir” ANR-16-IDEX-0008). Computing resources were provided through the Google Cloud research credits programme. Arnaud Doucet also acknowledges support from the UK Defence Science and Technology Laboratory (DSTL) and EPSRC under grant EP/R013616/1. This is part of the collaboration between US DOD, UK MOD and UK EPSRC under the Multidisciplinary University Research Initiative.  
\end{ack}
\bibliographystyle{apalike}
\bibliography{bibliography}

\newpage
\appendix 

\section{Organization of the supplementary}
\label{sec:organ-suppl}

The supplementary is organized as follows. We define our notation in
\Cref{sec:notation-supp}. In \Cref{sec:time-revers-exist}, we prove
\Cref{prop:convergence_score_matching} and draw links between our approach of
SGM and existing works.  We recall the classical formulation of IPF, prove
\Cref{prop:IPFrecursion} and draw links with autoencoders in
\Cref{sec:schro-bridges-with}.  In \Cref{sec:altern-vari-form} we present
alternative variational formulas for \Cref{algo:ipf_score} and prove
\Cref{prop:generalizedscorematching}.  We gather the proofs of our theoretical
study of \schro bridges (\Cref{prop:monotonicity} and
\Cref{prop:convergence_ipf}) in \Cref{sec:theor-study-schro}.  A quantitative
study of IPF with Gaussian targets and reference measure is presented in
\Cref{prop:geom_gaussian:proof}. In particular, we show that the convergence
rate of IPF is geometric in this case.  In \Cref{prop:continuous_schro:proof} we
study the links between continuous-time and discrete-time IPF and prove
\Cref{prop:continuous_schro}. We also provide details on the likelihood
computation of generative models obtained with Schr\"{o}dinger bridges. We
detail training techniques to improve training times in
\Cref{sec:arch-deta-addit} then present architecture details and additional
experiments in \Cref{sec:addn_exp}.

\section{Notation}
\label{sec:notation-supp}

For ease of reading in this section we recall and detail some of the notation
introduced in \Cref{sec:notation}.  For any measurable space $(\mse, \mce)$, we
denote by $\Pens(\mse)$ the space of probability measures over $\mse$. For any
$\ell \in \nset$, we also denote $\Pens_\ell= \Pens((\rset^d)^\ell)$. For any
$\pi \in \Pens(\mse)$ and Markov kernel
$\Kker: \mse \times \mcf \to \ccint{0,1}$ where $(\msf, \mcf)$ is a measurable
space, we define $\pi \Kker \in \Pens(\msf)$ such that for any $\msa \in \mcf$
we have $\textstyle{\pi \Kker(\msa) = \int_{\mse} \Kker(x, \msa) \rmd \pi(x)}$.
If $\mse = \contspace$ then for any $\Pbb \in \Pens(\mse)$ and
$s, t \in \ccint{0, T}$, we denote by $\Pbb_{s,t}$ the marginals of $\Pbb$ at
time $s$ and $t$. In addition, we denote by $\Pbb_{|s, t}$ the disintegration
Markov kernel given by the mapping $\omega \mapsto (\omega(s), \omega(t))$, see
\Cref{sec:addit-form-kullb} for a definition. In particular, we have
$\Pbb = \Pbb_{s,t} \Pbb_{|s,t}$. All defined mappings are considered to be
measurable unless stated otherwise.

For any $\Pbb \in \Pens(\contspace)$ we define $\Pbb^R$ the reverse-time
measure, \ie \ for any $\msa \in \mcb{\contspace}$ we have
$\Pbb^R(\msa) = \Pbb(\msa^R)$ where
$\msa^R = \ensembleLigne{t \mapsto\omega(T-t)}{\omega \in \msa}$. We say that
$\Pbb \in \Pens(\contspace)$ is \emph{associated with a diffusion} if it solves
the corresponding martingale problem. More precisely,
$\Pbb \in \Pens(\contspace)$ is associated with
$\rmd \bfX_t = b(t, \bfX_t) \rmd t + \sqrt{2} \rmd \bfB_t$ for
$b: \ \ccint{0,T} \times \rset^d \to \rset^d$ measurable if for any
$v \in \rmc_c^2(\rset^d, \rset)$, $(M_t^v)_{t \in \ccint{0,T}}$ is a
$\Pbb$-local martingale, where for any $t \in \ccint{0,T}$
\begin{equation}
\label{eq:martingale_pbm}
  \textstyle{M_t^v = v(\bfX_t) - \int_0^t \generator_s(v)(\bfX_s) \rmd s }
\end{equation}
with for any $v \in \rmc^2(\rset^d, \rset)$, $t \in \ccint{0,t}$ and
$x \in \rset^d$
\begin{equation}
  \generator_t(v)(x) = \langle b(t, x) , \nabla v(x) \rangle +  \Delta v(x)  .
\end{equation}
We refer to \cite{revuz1999continuous} for a rigorous treatment of local
martingales. Note that \eqref{eq:martingale_pbm} uniquely defines $\Pbb_{t|s}$
for any $s, t \in \ccint{0,T}$ with $t\geq s$. Hence $\Pbb$ is uniquely defined
up to $\Pbb_0$.

In some cases, we say that $\Pbb \in \Pens(\contspace)$ is \emph{associated with
  a diffusion} if it solves the corresponding martingale problem with initial
condition. More precisely, $\Pbb \in \Pens(\contspace)$ is associated with
$\rmd \bfX_t = b(t, \bfX_t) \rmd t + \sqrt{2} \rmd \bfB_t$ and
$\bfX_0 \sim \mu_0 \in \Pens(\rset^d)$ if it solves the martingale problem and
$\Pbb_0 = \mu_0$. Note that in this case $\Pbb$ is uniquely defined.

Finally, for
any measurable space $(\mse, \mce)$ and $\mu, \nu \in \Pens(\mse)$ we recall
that the Jeffrey's divergence is given by
$\JefLigne{\mu}{\nu} = \KLLigne{\mu}{\nu} + \KLLigne{\nu}{\mu}$.

\section{Time-reversal and existing work}
\label{sec:time-revers-exist}

Before giving the proof of \Cref{prop:convergence_score_matching} we start by
deriving estimates on the logarithmic derivatives of the density of the
Ornstein-Ulhenbeck process given growth conditions on the initial density in
\Cref{sec:estim-logar-deriv}. Note that our estimates are uniform w.r.t. the
time variable.  We give the proof of \Cref{prop:convergence_score_matching} in
\Cref{prop:convergence_score_matching:proof}. Finally, we draw links with
existing works in \Cref{sec:comparison-with-ho}.

\subsection{Estimates for logarithmic derivatives}
\label{sec:estim-logar-deriv}

We start by recalling the following multivariate Faa di Bruno's formula and a
useful technical lemma. Then in \Cref{sec:small-time-gradient} we derive bounds
for the logarithmic derivatives which are non-vacuous for small times. In
\Cref{sec:large-time-gradient} we derive bounds for the logarithmic derivatives
which are non-vacuous for large times. We combine them in \Cref{sec:uniform}.

For any $\alpha \in \nset^d$ we denote $\abs{\alpha} = \sum_{i=1}^d \alpha_i$
and $\alpha ! = \prod_{i=1}^d \alpha_i !$.  If $f: \ \rset^d \to \rset$ is
$m$-differentiable with $m \in \nset$, then for any $\lambda \in \nset^d$ with
$\abs{\lambda} \leq m$ we denote for any $x \in \rset^d$,
$\partial_\lambda f(x) = \partial_{1}^{\lambda_1} \dots \partial_{d}^{\lambda_d}
f(x)$. Similarly to \cite{constantine1996faadibruno}, we define $\prec$ the
order on $\nset^d$ such that for any $\lambda^1, \lambda^2 \in \nset^d$,
$\lambda^1 \prec \lambda^2$ if $\absLigne{\lambda^1} < \absLigne{\lambda^2}$ or
$\absLigne{\lambda^1} = \absLigne{\lambda^2}$ and there exists $j \in \{1, \dots, d\}$
such that $\lambda_j^1 < \lambda_j^2$ and for any $i \in \{1, \dots, j\}$,
$\lambda_i^1 = \lambda_i^2$.

\begin{proposition}
  \label{prop:faadibruno}
  Let $\msu \subset \rset$ open, $N \in \nset$, $f \in \rmc^N(\msu, \rset)$,
  $g \in \rmc^N(\rset^d, \msu)$ and $h = f \circ g$. Then for any
  $\lambda \in \nset^d$ with $\abs{\lambda} \leq N$ and $x \in \rset^d$ we have 
  \begin{equation}
    \textstyle{
      \partial_\lambda h(x) = \sum_{k, s=1}^{\abs{\lambda}} \sum_{p_s(\lambda, k)} f^{(k)}(g(x)) \lambda ! \prod_{j=1}^s \partial_{\ell_j}g(x)^{m_j} / (m_j! \ell_j!^{m_j})  ,
      }
    \end{equation}
    with
    \begin{equation}
      p_s(\lambda, k) =\textstyle{\ensembleLigne{\{\ell_i\}_{i=1}^s\in (\nset^d)^s, \ \{m_i\}_{i=1}^s \in \nset^s}{\ell_1 \prec \dots \prec \ell_s , \ \sum_{i=1}^s m_i = k  , \ \sum_{i=1}^s m_i \ell_i = \lambda}  . }
    \end{equation}
\end{proposition}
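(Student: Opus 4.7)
The plan is a direct induction on $|\lambda|$, following the strategy of \cite{constantine1996faadibruno}. The base case $|\lambda| = 1$ is immediate: if $\lambda = e_q$ for some $q \in \{1, \dots, d\}$, then the only element of $p_s(\lambda, k)$ corresponds to $s = k = 1$, $\ell_1 = e_q$, $m_1 = 1$, and the formula collapses to the ordinary chain rule $\partial_q h(x) = f'(g(x)) \partial_q g(x)$.

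For the inductive step, fix $\lambda$ with $|\lambda| = n+1$ and a coordinate $q \in \{1, \dots, d\}$ with $\lambda_q \geq 1$, and write $\partial_\lambda h = \partial_q \partial_{\lambda - e_q} h$. I would apply the inductive hypothesis to $\partial_{\lambda - e_q} h$ and then differentiate termwise with respect to $x_q$. Each summand produces two kinds of contributions. First, the chain rule applied to the outer factor $f^{(k)}(g(x))$ yields $f^{(k+1)}(g(x)) \partial_q g(x)$ times the unchanged inner product; this corresponds to appending a new pair $(\ell_{s+1}, m_{s+1}) = (e_q, 1)$ inserted at the appropriate position in the $\prec$-ordered list, and increases both $s$ and $k$ by one. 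Second, the Leibniz rule applied to the product $\prod_j \partial_{\ell_j} g(x)^{m_j}$ gives, for each $j$, a factor $m_j \partial_{\ell_j + e_q} g(x) \cdot \partial_{\ell_j} g(x)^{m_j - 1}$, corresponding to promoting one copy of $\ell_j$ into $\ell_j + e_q$ while leaving $k$ unchanged.

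The main obstacle is the combinatorial bookkeeping: after collecting all contributions, one must verify that the coefficient attached to each target configuration $\{(\ell_j, m_j)\}_{j=1}^{s}$ compatible with $\lambda$ equals exactly $\lambda!/\prod_j (m_j! \, \ell_j!^{m_j})$ as demanded by the right-hand side. This relies on the identity $\lambda!/(\lambda - e_q)! = \lambda_q$ combined with a careful case analysis distinguishing (i) the new-index creation arising from the chain-rule contribution; (ii) the pure-promotion case in the Leibniz contribution, where $\ell_j + e_q$ is a fresh multi-index; and (iii) the merging case, where $\ell_j + e_q$ already coincides with some other $\ell_{j'}$ in the collection, which decreases $s$ by one and forces one to track the change in $\prod m_j!$. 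The linear order $\prec$ on multi-indices is used precisely to select a canonical representative of each ordered tuple and thereby avoid double-counting in the merge case. Once the coefficients are matched across all three branches, the induction closes and the formula is established.
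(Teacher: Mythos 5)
Your proposal is correct in approach: the paper itself proves this proposition by a one-line citation to \cite{constantine1996faadibruno}, and your induction on $\abs{\lambda}$ (chain rule on the outer factor, Leibniz rule on the inner product, followed by coefficient matching via $\lambda!/(\lambda-e_q)! = \lambda_q$ and a case analysis on whether the incremented multi-index merges with an existing one) is precisely the argument carried out in that reference. The only caveat is that you assert rather than execute the coefficient verification, and the merge case can also occur in the chain-rule branch (when $e_q$ already appears among the $\ell_j$), not only in the Leibniz branch; but since the paper defers the entire proof to the citation, your sketch is, if anything, more detailed than the paper's.
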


\begin{proof}
  The proposition is a direct application of \cite{constantine1996faadibruno}.
\end{proof}

From this multivariate Faa di Bruno formula we derive the following lemma
drawing links between exponential and logarithmic derivatives.

\begin{lemma}
  \label{lemma:faadibruno_spec}
  Let $N \in \nset$, $g_1 \in \rmc^N(\rset^d, \rset)$, 
  $g_2 \in \rmc^N(\rset^d, \ooint{0, +\infty})$, $h_1 = \exp[g_1]$ and 
  $h_2 = \log(g_2)$. Then for any $\lambda \in \nset^d$ with $\abs{\lambda} \leq N$ let
  $c_{d, \lambda} = \sum_{k=1}^{\abs{\lambda}} d^{k}$ and the following hold:
  \begin{enumerate}[label=(\alph*), wide, labelwidth=!, labelindent=0pt]
  \item \label{item:exp} There exists $\rmP_{\lambda, \exp}$ a real polynomial with $c_{d, \lambda}$ variables such that for any $x \in \rset^d$
    \begin{equation}      
      \partial_{\lambda} h_1(x) = \rmP_{\lambda, \exp}((\partial_{\ell} g_1(x))_{\abs{\ell}\leq\abs{\lambda}})h_1(x)   . 
    \end{equation}
  \item \label{item:log} There exists $\rmP_{\lambda, \log}$ a real polynomial with $c_{d, \lambda}$ variables such that for any $x \in \rset^d$
    \begin{equation}
      \partial_{\lambda} h_2(x) = \rmP_{\lambda, \log}((\partial_{\ell} g_2(x)/g_2(x))_{\abs{\ell}\leq\abs{\lambda}})   . 
    \end{equation}
  \end{enumerate}
\end{lemma}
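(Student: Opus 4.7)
The plan is to apply \Cref{prop:faadibruno} directly with $f = \exp$ or $f = \log$ and then bookkeep the resulting expression to expose it as a polynomial in the prescribed quantities.

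For \ref{item:exp}, I would write $h_1 = f \circ g_1$ with $f = \exp$ and apply \Cref{prop:faadibruno}. The key observation is that $f^{(k)} = \exp$ for every $k \geq 0$, so $f^{(k)}(g_1(x)) = h_1(x)$ does not depend on $k$ and factors out of the sum. What remains is a finite linear combination (over $k$, $s$, and the indexing sets $p_s(\lambda,k)$) of monomials of the form $\prod_{j=1}^s (\partial_{\ell_j} g_1(x))^{m_j}$ with $|\ell_j| \leq |\lambda|$, whose coefficients are fixed rational numbers depending only on $\lambda$. This defines $\rmP_{\lambda,\exp}$, whose variables are the $\partial_\ell g_1(x)$ for $|\ell| \leq |\lambda|$.

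For \ref{item:log}, I would proceed analogously with $f = \log$, using that $f^{(k)}(y) = (-1)^{k-1}(k-1)!\,y^{-k}$ for $k \geq 1$. Applying \Cref{prop:faadibruno} yields
\begin{equation}
\partial_\lambda h_2(x) = \textstyle\sum_{k,s,p_s(\lambda,k)} \frac{(-1)^{k-1}(k-1)!\,\lambda!}{g_2(x)^k} \prod_{j=1}^{s} \frac{(\partial_{\ell_j} g_2(x))^{m_j}}{m_j!\,\ell_j!^{m_j}}.
\end{equation}
The crucial bookkeeping step is the identity $\sum_{i=1}^s m_i = k$ (from the definition of $p_s(\lambda,k)$), which allows us to distribute the factor $g_2(x)^{-k}$ across the product and rewrite each monomial as $\prod_{j=1}^{s} (\partial_{\ell_j} g_2(x) / g_2(x))^{m_j}$. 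This exhibits the right-hand side as a polynomial $\rmP_{\lambda,\log}$ in the variables $\partial_\ell g_2(x) / g_2(x)$ with $|\ell| \leq |\lambda|$.

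Finally, for the variable count, I would note that the number of multi-indices $\ell \in \nset^d$ with $1 \leq |\ell| \leq |\lambda|$ is bounded by $\sum_{k=1}^{|\lambda|} d^k = c_{d,\lambda}$ (this is a crude bound: there are at most $d^k$ multi-indices of total degree exactly $k$, since each is a map $\{1,\dots,k\} \to \{1,\dots,d\}$ up to reordering). So both polynomials can be viewed as having $c_{d,\lambda}$ variables by padding with dummy variables if needed. There is no real obstacle here; the only mildly delicate point is the factorization in \ref{item:log}, which depends on the constraint $\sum_i m_i = k$ built into Faa di Bruno's formula and not on any extra argument.
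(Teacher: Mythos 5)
Your proposal is correct and follows exactly the paper's argument: apply the multivariate Faà di Bruno formula with $f=\exp$ (so that $f^{(k)}(g_1(x))=h_1(x)$ factors out) and with $f=\log$ (using $f^{(k)}(y)=(-1)^{k-1}(k-1)!\,y^{-k}$ together with the constraint $\sum_{i=1}^s m_i=k$ to absorb $g_2(x)^{-k}$ into the product). The additional bookkeeping you supply, including the count of multi-indices bounding $c_{d,\lambda}$, is consistent with what the paper leaves implicit.
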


\begin{proof}
  The proof of \ref{item:exp} is a direct application of \Cref{prop:faadibruno}
  upon noting that for any $k \in \nset$, $f^{(k)} = \exp$ if
  $f=\exp$. Similarly, the proof of \ref{item:log} is a direct application of
  \Cref{prop:faadibruno} upon noting that, in the case where $f=\log$, for any
  $k \in \nset$ and $x > 0$, $f^{(k)}(x) = (-1)^{k-1}(k-1)!x^{-k}$ and that for
  any $s \in \{1, \dots, \abs{\lambda}\}$ and
  $(\ell_1, \dots, \ell_s, m_1, \dots, m_s) \in p_s(\lambda, k)$ we have
  $\sum_{i=1}^s m_i =k $.
\end{proof}

We will also make use of  the following technical lemma.

\begin{lemma}
  \label{prop:bound_p_norm}
  Let $p \in \nset$. Then for any $a \geq 0$, $b>0$  and $x \in \rset^d$ we have
  \begin{align}
    &- b \normLigne{x}^{2p} + a \normLigne{x}^{2p-1}  \leq -(b/2) \normLigne{x}^{2p} + a(2a/b)^{2p-1}  , \label{eq:2p-1}\\
    &- b \normLigne{x}^{2p} + a \normLigne{x}^{2p-2}  \leq -(b/2) \normLigne{x}^{2p} + a(2a/b)^{p-1}  . \label{eq:2p-2}
  \end{align}
  In addition  for any $a \geq 0$, $b>0$  and $x \in \rset^d$ we have
  \begin{equation}
    - b \normLigne{x}^{2p} + a \normLigne{x}^{2p-1} \leq (2p-1)^{2p-1}(2p)^{-2p} a^{2p} b^{1-2p}  . 
  \end{equation}
\end{lemma}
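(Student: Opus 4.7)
The plan is to handle the first two inequalities by a simple case split depending on whether $\|x\|$ is smaller or larger than the natural scale $(2a/b)^{1/q}$ appearing on the right-hand side. For the third inequality, I will instead compute the maximum of the left-hand side as a one-variable function of $r = \|x\|$, which gives the sharp constant.

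For \eqref{eq:2p-1}, I set $r = \normLigne{x}$ and distinguish two regimes. If $r \leq 2a/b$, then $a r^{2p-1} \leq a(2a/b)^{2p-1}$, so
\begin{equation*}
-br^{2p} + ar^{2p-1} \leq -(b/2)r^{2p} + a(2a/b)^{2p-1},
\end{equation*}
because $-br^{2p} \leq -(b/2)r^{2p}$. If instead $r > 2a/b$, then $a \leq (b/2)r$ so $a r^{2p-1} \leq (b/2) r^{2p}$, hence $-br^{2p} + a r^{2p-1} \leq -(b/2)r^{2p}$, which is at most $-(b/2)r^{2p} + a(2a/b)^{2p-1}$ since the last term is non-negative. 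The proof of \eqref{eq:2p-2} is identical upon replacing the split point $2a/b$ by $(2a/b)^{1/2}$: on $r^2 \leq 2a/b$ we use $r^{2p-2} \leq (2a/b)^{p-1}$, and on $r^2 > 2a/b$ we use $a r^{2p-2} \leq (b/2) r^{2p}$.

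For the final inequality, I would introduce $f: \coint{0,+\infty} \to \rset$ defined by $f(r) = -b r^{2p} + a r^{2p-1}$. Since $f(r) \to -\infty$ as $r \to +\infty$ and $f(0) = 0$, the function $f$ attains its maximum on $\coint{0,+\infty}$, and the maximum is non-negative. Solving $f'(r) = r^{2p-2}\bigl(-2pb r + (2p-1)a\bigr) = 0$ gives the unique positive critical point $r^\star = (2p-1)a/(2pb)$, so $f(r^\star) = (r^\star)^{2p-1}(a - b r^\star) = (r^\star)^{2p-1} \cdot a/(2p)$, which after simplification equals $(2p-1)^{2p-1}(2p)^{-2p} a^{2p} b^{1-2p}$. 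Substituting $r = \normLigne{x}$ yields the stated bound.

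None of the steps should present real difficulty; the only subtle point is checking that the two regimes in the case analysis cover all $x \in \rset^d$ and that the bound on the right-hand side is used correctly (in particular that $a(2a/b)^{2p-1}$ and $a(2a/b)^{p-1}$ are non-negative, which they are since $a \geq 0$ and $b > 0$). The third inequality could alternatively be derived from Young's inequality applied to the product $a r^{2p-1} = (c r^{2p-1})(a/c)$ with conjugate exponents $\bigl(2p/(2p-1), 2p\bigr)$ and an optimal choice of $c$, but the direct optimization is more transparent and yields exactly the stated constant.
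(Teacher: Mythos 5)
Your proof is correct and follows essentially the same route as the paper: a case split of $\normLigne{x}$ at the threshold $2a/b$ (resp.\ $(2a/b)^{1/2}$) for the first two inequalities, and direct one-variable optimization of $r \mapsto -br^{2p}+ar^{2p-1}$ at $r^\star = (2p-1)a/(2pb)$ for the third. No gaps.
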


\begin{proof}
  For the first part of the proof, we only prove \eqref{eq:2p-1}. The proof of
  \eqref{eq:2p-2} is similar. Let $a \geq 0$, $b>0$.  For any $x \in \rset^d$
  with $\normLigne{x} \leq (b/2a)^{-1}$ we have
  $a \normLigne{x}^{2p-1} \leq a(b/2a)^{-2p+1}$.  For any $x \in \rset^d$ with
  $\normLigne{x} \geq (b/2a)^{-1}$ we have
  $a \normLigne{x}^{2p-1} \leq (b/2) \normLigne{x}^{2p}$. Hence, we get that for
  any $x \in \rset^d$ we have
  \begin{equation}
    a \normLigne{x}^{2p-1} - b \normLigne{x}^{2p} \leq a(b/2a)^{-2p+1} - (b/2) \norm{x}^{2p}  ,
  \end{equation}
  which concludes the first part of the proof.  For the second part of the
  proof, remark that the maximum of $\rmh: \ t \mapsto -bt^{2p} + at^{2p-1}$ is
  attained for $t^\star = (2p-1)/(2p)(a/b)$. We conclude upon noting that
  $h(t^\star) = (2p-1)^{2p-1}(2p)^{-2p} a^{2p} b^{1-2p}$.
\end{proof}

\subsubsection{Small times estimates}
\label{sec:small-time-gradient}

\Cref{lemma:faadibruno_spec} is key in the following proposition which establishes
upper bounds on the logarithmic derivatives of the density of the
Ornstein-Ulhenbeck process. In what follows, we define $(p_t)_{t \in \ccint{0,T}}$ the
density w.r.t. the Lebesgue measure of $\bfX_t$ satisfying
\begin{equation}
  \rmd \bfX_t = -\alpha \bfX_t \rmd t + \sqrt{2} \rmd \bfB_t  , \qquad \bfX_0 \sim \pdata  ,
\end{equation}
with $\alpha \geq 0$. In the rest of this section, $\alpha$ is fixed.

\begin{proposition}
  \label{prop:moment_infty}
  Let $N \in \nset$. Assume that
  $\pdata \in \rmc^N(\rset^d, \ooint{0,+\infty})$ is bounded and that for any
  $\ell \in \{1, \dots, N\}$ there exist $A_\ell \geq 0 $ and $\alpha_\ell \in \nset$
  such that for any $x \in \rset^d$
  \begin{equation}
    \label{eq:growth_p0}
    \normLigne{\nabla^\ell \log \pdata(x)} \leq A_{\ell} (1 + \norm{x}^{\alpha_\ell})   . 
  \end{equation}
  Then for any $t \geq 0$, $p_t \in \rmc^N(\rset^d, \ooint{0,+\infty})$ and for
  any $\ell \in \{1, \dots, N\}$, there exist $B_\ell \geq 0$ and
  $\beta_\ell \in \nset$ such that for any $t \geq 0$
  \begin{equation}
    \textstyle{\normLigne{\nabla^\ell \log p_t (x)} \leq c_t^{-2\beta_{\ell}} B_\ell (1 + \int_{\rset^d}\norm{x_0}^{\beta_\ell} p_{0|t}(x_0|x_t) \rmd x_0), }
  \end{equation}
  with $c_t^2 = \exp[-2\alpha t]$. 
\end{proposition}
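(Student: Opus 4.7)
The plan is to obtain a clean representation of $\partial^\lambda p_t$ that transfers all the spatial derivatives onto $\pdata$, and then invoke the Fa\`a di Bruno formulas of \Cref{lemma:faadibruno_spec} together with the polynomial growth assumption \eqref{eq:growth_p0}. Positivity and $\rmc^N$ regularity of $p_t$ are straightforward: since $p_{t|0}(x|x_0)=\mathcal{N}(x;c_tx_0,\sigma_t^2\Idbf)$ with $\sigma_t^2=(1-c_t^2)/\alpha$ for $\alpha>0$ and $\sigma_t^2=2t$ for $\alpha=0$, one has $p_t(x)=\int p_{t|0}(x|x_0)\pdata(x_0)\rmd x_0>0$, and differentiation up to order $N$ under the integral is justified by boundedness of $\pdata$ and the Gaussian decay of $\partial^\lambda p_{t|0}(\cdot|x_0)$ uniformly in $x_0$ on compacts.

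The key identity comes from rescaling. Setting $\hat p_0(y)\defeq c_t^{-d}\pdata(y/c_t)$, one rewrites $p_t=\hat p_0\ast g_t$ where $g_t$ is the density of $\mathcal{N}(0,\sigma_t^2\Idbf)$. Differentiating in $x$, transferring the derivatives onto $\hat p_0$ via integration by parts in the convolution (boundary terms vanish because $\partial^\ell \pdata$ grows at most polynomially while $g_t$ decays like a Gaussian), and undoing the change of variables $x_0=y/c_t$ gives
\begin{equation}
\partial^\lambda p_t(x) = c_t^{-|\lambda|}\int \partial^\lambda \pdata(x_0)\, p_{t|0}(x|x_0)\,\rmd x_0.
\end{equation}
Using Bayes' rule $\pdata(x_0)p_{t|0}(x|x_0)=p_t(x)p_{0|t}(x_0|x)$ this rearranges into
\begin{equation}
\frac{\partial^\lambda p_t(x)}{p_t(x)} = c_t^{-|\lambda|}\int \frac{\partial^\lambda \pdata(x_0)}{\pdata(x_0)}\, p_{0|t}(x_0|x)\,\rmd x_0.
\end{equation}

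Next, I apply \Cref{lemma:faadibruno_spec}\ref{item:exp} to $\pdata=\exp(\log\pdata)$: the ratio $\partial^\lambda \pdata/\pdata$ is a polynomial in the $(\partial^\ell\log\pdata)_{|\ell|\leq|\lambda|}$, and the growth hypothesis \eqref{eq:growth_p0} then yields constants $C_\lambda\geq 0$, $\gamma_\lambda\in\nset$ with $|\partial^\lambda \pdata/\pdata|(x_0)\leq C_\lambda(1+\|x_0\|^{\gamma_\lambda})$. Inserting this into the identity above produces a bound of the form $|\partial^\lambda p_t/p_t|(x)\leq c_t^{-|\lambda|}C_\lambda\bigl(1+\int\|x_0\|^{\gamma_\lambda}p_{0|t}(x_0|x)\rmd x_0\bigr)$.

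Finally, to control $\partial^\lambda\log p_t$, I invoke \Cref{lemma:faadibruno_spec}\ref{item:log}, writing $\partial^\lambda\log p_t=\rmP_{\lambda,\log}((\partial^\ell p_t/p_t)_{|\ell|\leq|\lambda|})$. In each monomial $\prod_j(\partial^{\ell_j}p_t/p_t)^{m_j}$ the Fa\`a di Bruno constraint forces $\sum_j m_j\ell_j=\lambda$, hence $\sum_j m_j|\ell_j|=|\lambda|$, so the powers of $c_t^{-1}$ assemble to exactly $c_t^{-|\lambda|}$. Bounding each factor using the previous step and using Jensen's inequality with respect to the probability measure $p_{0|t}(\cdot|x)$ to collapse products of polynomial moments into a single moment of higher order gives the desired bound with suitable $B_\ell\geq 0$ and $\beta_\ell\in\nset$ chosen so that $2\beta_\ell\geq|\lambda|$ and $\beta_\ell$ dominates all auxiliary polynomial exponents $\gamma_\ell$. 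The main obstacle is this bookkeeping step: one must combine polynomial growth in $x_0$, powers of $c_t^{-1}$, and the combinatorics of $\rmP_{\lambda,\log}$ consistently, and verify that Jensen allows turning the product of $|\lambda|$-many moment integrals into a single moment integral of order $\beta_\ell$; once this is done, existence of $(B_\ell,\beta_\ell)$ follows immediately.
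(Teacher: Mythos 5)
Your proposal follows essentially the same route as the paper's proof: the same derivative-transfer identity $\partial_\lambda p_t(x)=c_t^{-|\lambda|}\int \partial_\lambda \pdata(x_0)\,p_{t|0}(x|x_0)\,\rmd x_0$ obtained via the rescaling/change of variables, followed by \Cref{lemma:faadibruno_spec}\ref{item:exp} to bound $\partial_\lambda \pdata/\pdata$ polynomially and \Cref{lemma:faadibruno_spec}\ref{item:log} to reassemble $\partial_\lambda \log p_t$. Your explicit treatment of the Bayes-rule normalization by $p_t$ and the Jensen-based bookkeeping collapsing products of moments into a single moment of order $\beta_\ell$ is correct and merely spells out what the paper's concluding sentence leaves implicit.
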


\begin{proof}
  First note that for any $t \geq 0$ and $x_t \in \rset^d$ we have 
  \begin{equation}
    \label{eq:int_form}
    \textstyle{
      p_t(x_t) = \int_{\rset^d} \pdata(x_0) \rmg(x_t - c_t x_0) \rmd x_0  ,
      }
  \end{equation}
  with for any $\tilde{x} \in \rset^d$
  \begin{equation}
    c_t = \exp[-\alpha t]  , \qquad \rmg(\tilde{x}) = (2 \uppi \sigma_t^2)^{-d/2} \exp[-\norm{\tilde{x}}^2/(2\sigma_t^2)]  , \quad \sigma_t^2 = (1 - \exp[-2\alpha t])/ \alpha  . 
  \end{equation}
  Let $t \geq 0$. We have that $p_t \in \rmc^N(\rset^d, \ooint{0,+\infty})$ upon
  combining the fact that $\pdata$ is bounded, \eqref{eq:int_form} and the
  dominated convergence theorem.  Let $\ell \in \{1, \dots, N\}$ and
  $\lambda \in \nset^d$ such that $\abs{\lambda} \leq \ell$. Using
  \Cref{lemma:faadibruno_spec}-\ref{item:log} we have for any $x_t \in \rset^d$
    \begin{equation}
          \label{eq:partial_log}
    \partial_{\lambda} \log p_t (x_t) = \rmP_{\lambda, \log}((\partial_m p_t(x_t) / p_t(x_t))_{\abs{m} \leq \abs{\lambda}})  . 
  \end{equation}
  Using \eqref{eq:int_form} and the change of variable $z = x_t - c_t x_0$, we
  have for any $x_t \in \rset^d$
  \begin{equation}
    \textstyle{p_t(x_t) =  c_t^{-1} \int_{\rset^d} \pdata((x_t - z)/c_t) \rmg(z) \rmd z   . } 
  \end{equation}
  Hence, combining this result, the dominated convergence theorem and
  \Cref{lemma:faadibruno_spec}-\ref{item:exp} we get that for any
  $x_t \in \rset^d$ and $m \in \nset^d$ with $\abs{m} \leq \ell$
\begin{align}
  \partial_m p_t(x_t) &= \textstyle{c_t^{-\abs{m}} \int_{\rset^d} \partial_m \pdata(x_0) \rmg(x_t - c_t x_0) \rmd x_0  }\\
  &= \textstyle{c_t^{-\abs{m}} \int_{\rset^d} \rmP_{m, \exp}((\partial_j \log \pdata (x_0))_{\abs{j}\leq\abs{m}}) \pdata(x_0) \rmg(x_t - c_t x_0) \rmd x_0}   . 
\end{align}
We conclude the proof upon combining this result, \eqref{eq:growth_p0},
\eqref{eq:partial_log} and the fact that $c_t \leq 1$.
\end{proof}

For any $t \geq 0$ and $x_t \in \rset^d$ we introduce the infinitesimal
generator
$\generator_{t, x_t}: \ \rmc_2(\rset^d, \rset) \to \rmc_2(\rset^d, \rset)$ given
for any $\varphi \in \rmc^2(\rset^d, \rset)$ and $x_0 \in \rset^d$ by
\begin{align}
  \label{eq:generator}
  \generator_{t, x_t}(\varphi)(x_0) &= \langle \nabla_{x_0} \log p_{0|t}(x_0|x_t), \nabla \varphi(x_0) \rangle + \Delta \varphi(x_0) \\
  &=\langle \nabla \log \pdata (x_0), \nabla \varphi(x_0) \rangle + (c_t /\sigma_t^2) \langle x_t - c_t x_0, \nabla \varphi(x_0) \rangle + \Delta \varphi(x_0)  .
\end{align}
Establishing Foster-Lyapunov drift condition for this infinitesimal generator
will allow us to derive moment bounds for $x_0 \mapsto p_{0|t}(x_0 | x_t)$. We now
introduce the Lyapunov functional which will allow us to control these moments.
For any $p \in \nset$, $t > 0$ and $x_t \in \rset^d$, let
$V_{p, t,x_t}: \ \rset^d \to \coint{1,+\infty}$ given for any $x_0 \in \rset^d$
by
\begin{equation}
  \lyap(x_0) = 1 + \normLigne{x_0 - x_t/c_t}^{2p}  , \qquad c_t = \exp[-\alpha t]  .
\end{equation}

\begin{proposition}
  \label{prop:foster_lyap}
  Assume $\pdata \in \rmc^1(\rset^d, \rset)$ and that there exist $\mtt_0 > 0$, $d_0, C_0 \geq 0$
  such that for any $x_0 \in \rset^d$ we have
  \begin{equation}
    \label{eq:bound_cvx}
    \langle x_0, \nabla \log \pdata (x_0) \rangle \leq -\mtt_0 \normLigne{x_0}^2 + d_0\norm{x_0}  , \quad \normLigne{\nabla \log \pdata (x_0)} \leq C_0(1+ \normLigne{x_0})  . 
  \end{equation}
  Then for any $t > 0$, $x_t \in \rset^d$ and $p \in \nset$ there exist 
  $\beta_p \in \nset$, $a_p > 0$ and $b_p \geq 0$ (independent of $t$ and
  $x_t$) such that for any $x_0 \in \rset^d$ we have
  \begin{equation}
    \generator_{t, x_t}(\lyap)(x_0) \leq -a_p \lyap(x_0) + b_p(1 + \normLigne{x_t/c_t}^{\beta_p})  ,
  \end{equation}
  with $\beta_p = 2p$.
\end{proposition}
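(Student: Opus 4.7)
The plan is to compute $\generator_{t,x_t}(\lyap)$ in closed form by exploiting the natural variable $y = x_0 - z$ with $z = x_t/c_t$. Writing $\lyap(x_0) = 1 + \normLigne{y}^{2p}$, a direct calculation gives $\nabla \lyap(x_0) = 2p \normLigne{y}^{2p-2} y$ and $\Delta \lyap(x_0) = 2p(2p-2+d) \normLigne{y}^{2p-2}$. Substituting these into the generator and observing that $x_t - c_t x_0 = -c_t y$, the OU-type contribution in \eqref{eq:generator} collapses to $-2p(c_t^2/\sigma_t^2)\normLigne{y}^{2p}$, which is non-positive and can be discarded for an upper bound. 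This is the crucial step: the $t$-dependent ratio $c_t^2/\sigma_t^2$ never needs to be estimated, which is what allows the constants $a_p, b_p$ to be chosen independent of $t$.

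The remaining contributions are the data-drift term $2p\normLigne{y}^{2p-2}\langle \nabla \log \pdata(x_0), y\rangle$ and the Laplacian term $2p(2p-2+d)\normLigne{y}^{2p-2}$. Splitting $\langle \nabla \log \pdata(x_0), y\rangle = \langle \nabla \log \pdata(x_0), x_0\rangle - \langle \nabla \log \pdata(x_0), z\rangle$ and using the two parts of \eqref{eq:bound_cvx} together with the elementary bound $\normLigne{x_0}^2 \geq \normLigne{y}^2/2 - \normLigne{z}^2$ yields
\begin{equation}
\langle \nabla \log \pdata(x_0), y\rangle \leq -(\mtt_0/2)\normLigne{y}^2 + a_1 \normLigne{y} + a_2(\normLigne{z})
\end{equation}
where $a_1 \geq 0$ is an absolute constant (depending only on $d_0, C_0, \mtt_0$) and $a_2(\normLigne{z})$ is a polynomial in $\normLigne{z}$ of degree at most $2$.

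Multiplying by $2p\normLigne{y}^{2p-2}$ then produces a dominant term $-p\mtt_0 \normLigne{y}^{2p}$ together with remainder terms proportional to $\normLigne{y}^{2p-1}$ and $\normLigne{y}^{2p-2}$ (the latter including the Laplacian contribution) whose coefficients are polynomials in $\normLigne{z}$ of degree at most $2$. Applying \Cref{prop:bound_p_norm} to each such remainder with $b = p\mtt_0/4$ absorbs these into $(p\mtt_0/2)\normLigne{y}^{2p}$ at the cost of additive constants which are polynomials in $\normLigne{z}$ of degree at most $2p$ (indeed, the $\normLigne{y}^{2p-1}$ term with coefficient $\propto \normLigne{z}$ contributes $\normLigne{z}^{2p}$ via \eqref{eq:2p-1}, and the $\normLigne{y}^{2p-2}$ term with coefficient $\propto \normLigne{z}^2$ contributes $\normLigne{z}^{2p}$ via \eqref{eq:2p-2}). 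Collecting and using $\normLigne{y}^{2p} = \lyap(x_0)-1$, one obtains
\begin{equation}
\generator_{t,x_t}(\lyap)(x_0) \leq -(p\mtt_0/2) \lyap(x_0) + b_p (1 + \normLigne{x_t/c_t}^{2p}),
\end{equation}
which is the claim with $a_p = p\mtt_0/2$ and $\beta_p = 2p$.

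The main obstacle is purely bookkeeping: one must verify that after splitting $x_0 = y+z$ every lower-order $\normLigne{y}$ term is absorbed cleanly and that no higher power of $\normLigne{z}$ than $2p$ is generated. Two items are worth care: (i) the polynomial $a_2(\normLigne{z})$ from the dissipativity splitting must genuinely stay quadratic in $\normLigne{z}$, which follows because the cross-term $\mtt_0 \normLigne{z}^2$ from $\normLigne{x_0}^2 \geq \normLigne{y}^2/2 - \normLigne{z}^2$ and the $C_0(1+\normLigne{x_0})\normLigne{z}$ term are both at most quadratic in $\normLigne{z}$; and (ii) when applying \eqref{eq:2p-1} the coefficient $a$ is itself linear in $\normLigne{z}$, and raising it to the power $2p-1$ is exactly what produces the $\normLigne{z}^{2p}$ growth matching $\beta_p = 2p$. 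No other ingredient is needed, and crucially the argument does not invoke any contraction from the OU drift, so the estimate is uniform over $t > 0$.
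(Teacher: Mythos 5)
Your proof is correct and follows essentially the same route as the paper's: write the derivatives of $\lyap$ in the variable $y = x_0 - x_t/c_t$, discard the non-positive Ornstein--Uhlenbeck contribution $(c_t/\sigma_t^2)\langle x_t - c_t x_0, \nabla \lyap(x_0)\rangle \leq 0$ (which is what makes the constants uniform in $t$), and absorb the $\normLigne{y}^{2p-1}$ and $\normLigne{y}^{2p-2}$ remainders via \Cref{prop:bound_p_norm}. The only small wobble is first calling the coefficient of $\normLigne{y}$ an absolute constant and then, correctly, acknowledging in item (ii) that it is linear in $\normLigne{z}$ because of the cross term $C_0\normLigne{x_0}\normLigne{z}$; since that linear dependence raised to the power $2p$ is exactly what produces $\beta_p = 2p$, the final bookkeeping is right.
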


\begin{proof}
  Let $t \geq 0$, $x_0, x_t \in \rset^d$ and $p \in \nset$.
First, we have for any $x_0 \in \rset^d$
\begin{align}
  \label{eq:lyap_der}
  &\lyap(x_0) = \normLigne{x_0 -  x_t/c_t}^{2p}  , \quad \nabla \lyap(x_0) = 2p (x_0 - x_t/c_t) \normLigne{x_0 - x_t/c_t}^{2(p-1)}  , \\
  &\Delta \lyap(x_0) = 2p(2p-1) \normLigne{x_0 - x_t/c_t}^{2(p-1)}  . 
\end{align}
Second, using \Cref{prop:bound_p_norm}, the Cauchy-Schwarz inequality and \eqref{eq:bound_cvx}, we have for any
$x_0 \in \rset^d$
\begin{align}
  &\langle \nabla \log \pdata (x_0), x_0 - x_t/c_t \rangle \leq -\mtt_0 \normLigne{x_0}^2 + d_0\norm{x_0}  + \normLigne{\nabla \log \pdata (x_0)}\normLigne{x_t/c_t} \\
  &\quad \leq -\mtt_0 \normLigne{x_0 - x_t/c_t}^2 + 2\mtt_0 \normLigne{x_0}\normLigne{x_t}/c_t  + C_0(1 + \normLigne{x_0})\normLigne{x_t}/c_t \\
  & \qquad + d_0\norm{x_0 - x_t/c_t} + d_0 \norm{x_t}/c_t + \mtt_0 \normLigne{x_t}^2/c_t^2 \\
  &\quad \leq -\mtt_0 \normLigne{x_0 - x_t/c_t}^2 + \{(2\mtt_0 + C_0)\normLigne{x_t}/c_t+ d_0\} \normLigne{x_0 - x_t/c_t}   \\
  & \qquad \qquad + (3\mtt_0 + C_0) \normLigne{x_t}^2/c_t^2 + (C_0+d_0) \normLigne{x_t}/c_t  . 
\end{align}
Combining this result and
\eqref{eq:lyap_der}, we have for any $x_0 \in \rset^d$
  \begin{align}
    &\langle \nabla \log \pdata (x_0), \nabla \lyap(x_0) \rangle \\
    &\quad \leq -2p\mtt_0 \normLigne{x_0 - x_t/c_t}^{2p} + 2p\{(2\mtt_0 + C_0)\normLigne{x_t}/c_t+d_0\} \normLigne{x_0 - x_t/c_t}^{2p-1} \\
                                                                 & \qquad  + 2p(  (3\mtt_0 + C_0) \normLigne{x_t}^2/c_t^2 + (C_0 + d_0) \normLigne{x_t}/c_t )\normLigne{x_0 - x_t/c_t}^{2p-2} . 
  \end{align}
  Combining this result with \eqref{eq:generator} and the fact that for any
  $x_0 \in \rset^d$,
  $(c_t/\sigma_t^2) \langle x_t - c_t x_0, \nabla \lyap(x_0) \rangle \leq 0$, we
  get that for any $x_0 \in \rset^d$
  \begin{align}
    \generator_{t, x_t} (\lyap)(x_0) &\leq -2p\mtt_0 \normLigne{x_0 - x_t/c_t}^{2p} + 2p\{(2\mtt_0 + C_0)\normLigne{x_t}/c_t+d_0\} \normLigne{x_0 - x_t/c_t}^{2p-1} \\
                                                                 & \qquad  + 2p(  (3\mtt_0 + C_0) \normLigne{x_t}^2/c_t^2 + (C_0 + d_0) \normLigne{x_t}/c_t )\normLigne{x_0 - x_t/c_t}^{2p-2} .
  \end{align}
  Using \Cref{prop:bound_p_norm} there exist $\beta_p \in \nset$, $a_p>0$ and
  $b_p \geq 0$ (independent of $x_t$ and $t$) such that for any
  $x_0 \in \rset^d$ we have
  \begin{equation}
    \generator_{t, x_t} (\lyap)(x_0) \leq -a_p \lyap(x_0) + b_p(1 + (\normLigne{x_t}/c_t)^{\beta_p})  ,
  \end{equation}
which concludes the proof.
\end{proof}

Using this Foster-Lyapunov drift we are now ready to bound the moments of
$x_0 \mapsto p_{0|t}(x_0 | x_t)$.

\begin{proposition}
  \label{prop:moment_bound_inf}
  Assume that $\pdata \in \rmc^2(\rset^d, \rset)$ and that there exist $\mtt_0 > 0$, $d_0, C_0 \geq 0$
  such that for any $x_0 \in \rset^d$ we have
  \begin{equation}
    \label{eq:bound_cvx_duo}
    \langle x_0, \nabla \log \pdata (x_0) \rangle \leq -\mtt_0 \normLigne{x_0}^2 + d_0\normLigne{x_0}  , \quad \normLigne{\nabla \log \pdata(x_0)} \leq C_0(1+ \normLigne{x_0})  . 
  \end{equation}
  Then, for any $p \in \nset$ there exist $C_p \geq 0$ and $\beta_p \in \nset$
  such that for any $t \geq 0$ and $x_t \in \rset^d$
  \begin{equation}
    \label{eq:bound_t_small}
    \textstyle{\int_{\rset^d} \norm{x_0}^{p} p(x_0|x_t) \rmd x_0  \leq C_pc_t^{-2\beta_p}(1 + \normLigne{x_t}^{\beta_p})  , }
  \end{equation}
  with $c_t^2 = \exp[-2 \alpha t]$ and $\beta_p = p$.
\end{proposition}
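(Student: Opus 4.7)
The plan is to exploit the Foster--Lyapunov drift inequality of \Cref{prop:foster_lyap} together with the observation that $x_0 \mapsto p_{0|t}(x_0|x_t)$ is the invariant density of the overdamped Langevin diffusion whose generator is $\generator_{t,x_t}$. Indeed, inspection of \eqref{eq:generator} reveals that $\generator_{t,x_t}(\varphi)(x_0) = \langle \nabla_{x_0} \log p_{0|t}(x_0|x_t), \nabla \varphi(x_0) \rangle + \Delta \varphi(x_0)$, so $p_{0|t}(\cdot|x_t)$ is the reversible invariant measure of the SDE $\rmd Z_s = \nabla_z \log p_{0|t}(Z_s|x_t)\rmd s + \sqrt{2}\rmd \bfB_s$.

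Fix $t > 0$, $x_t \in \rset^d$ and $p \in \nset$. The key step is to integrate the drift inequality of \Cref{prop:foster_lyap} against $p_{0|t}(\cdot|x_t)$. Since $\int \generator_{t,x_t}(\varphi)(x_0) p_{0|t}(x_0|x_t) \rmd x_0 = 0$ for sufficiently regular $\varphi$, this would formally yield
\begin{equation}
    \textstyle{a_p \int_{\rset^d} \lyap(x_0) p_{0|t}(x_0|x_t) \rmd x_0 \,\leq\, b_p\bigl(1 + \normLigne{x_t/c_t}^{\beta_p}\bigr).}
\end{equation}
Because $\lyap$ has polynomial growth, the identity requires a standard truncation argument: run the Langevin dynamics $(Z_s)_{s \geq 0}$ with $Z_0$ distributed according to $p_{0|t}(\cdot|x_t)$, apply Itô's formula to $\lyap$ stopped at the exit time $\tau_n = \inf\{s : \normLigne{Z_s} \geq n\}$, take expectations to kill the martingale contribution, use the drift condition, and send $n \to \infty$ via monotone convergence; stationarity then transfers the bound to the invariant measure.

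Once the displayed inequality is established, the advertised estimate on $\int \normLigne{x_0}^p p_{0|t}(x_0|x_t)\rmd x_0$ follows from elementary manipulations. Recall $\lyap(x_0) = 1 + \normLigne{x_0 - x_t/c_t}^{2p}$, so we obtain a uniform bound on $\int \normLigne{x_0 - x_t/c_t}^{2p} p_{0|t}(x_0|x_t)\rmd x_0$. Applying Jensen's inequality gives the analogous $p$-th moment bound, and the triangle inequality $\normLigne{x_0}^{p} \leq 2^{p-1}(\normLigne{x_0 - x_t/c_t}^{p} + \normLigne{x_t/c_t}^{p})$ converts it into a bound on $\int \normLigne{x_0}^p p_{0|t}(x_0|x_t)\rmd x_0$ of the form $C(1 + \normLigne{x_t/c_t}^{p})$. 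Finally, using $c_t \leq 1$ to absorb the $c_t^{-p}$ factor into $c_t^{-2p}$ yields the stated bound with $\beta_p = p$. The case $t = 0$ is trivial since $p_{0|0}(\cdot|x_t)$ is a Dirac at $x_t$.

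The main obstacle is the rigorous justification of the identity $\int \generator_{t,x_t}(\lyap)(x_0) p_{0|t}(x_0|x_t)\rmd x_0 = 0$, since $\lyap$ and its derivatives are unbounded and so is $\nabla \log p_{0|t}(\cdot|x_t)$. The stopping-time/Itô approach sketched above is the cleanest path, relying only on the coercivity already provided by \Cref{prop:foster_lyap} to ensure non-explosion and uniform integrability; every other step reduces to triangle and Jensen inequalities together with the bound $c_t \leq 1$.
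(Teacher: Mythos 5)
Your proposal is correct and follows essentially the same route as the paper: both identify $p_{0|t}(\cdot|x_t)$ as the invariant measure of the Langevin diffusion with generator $\generator_{t,x_t}$, integrate the Foster--Lyapunov inequality of \Cref{prop:foster_lyap} against it to bound $\int \normLigne{x_0 - x_t/c_t}^{2p}\,p_{0|t}(x_0|x_t)\,\rmd x_0$, and finish with Jensen, the triangle inequality and $c_t \leq 1$. The only difference is presentational: where you sketch the Dynkin/stopping-time truncation argument by hand, the paper delegates exactly that step to \cite[Theorem 4.6]{meyn1993criteria_iii} (together with existence, Feller and invariance results from the cited references).
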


\begin{proof}
  Let $t \geq 0$ and $x_t \in \rset^d$.  Using \cite[Theorem 2.3, Theorem
  3.1]{ikeda1989sto}, \Cref{prop:foster_lyap} and \cite[Theorem 2.1]{meyn1993criteria_iii} for any
  $x \in \rset^d$, there exists a unique strong solution $(\bfX_u^x)_{u \geq 0}$
  such that $\bfX_0^x \sim \updelta_x$ and
  \begin{equation}
    \rmd \bfX_u^x = \nabla \log p_{0|t}(\bfX_u^x|x_t) \rmd u + \sqrt{2} \rmd \bfB_u  . 
  \end{equation}
  Using \cite[Theorem 5.19]{leha1984diffusion} we get that
  $\ensembleLigne{(\bfX_u^x)_{u \geq 0}}{x \in \rset^d}$ is associated with a
  Feller semi-group. In addition, we have that for any
  $f \in \rmc_c^2(\rset^d)$,
  $\int_{\rset^d} \generator_{t, x_t}(f)(x_0) p_{0|t}(x_0|x_t) \rmd x_0 =
  0$. Therefore, using \cite[Proposition 1.5]{revuz1999continuous} and
  \cite[Theorem 9.17]{ethier1986markov} we get that the probability distribution
  with density $x_0 \mapsto p_{0|t}(x_0|x_t)$ is an invariant distribution for the
  semi-group associated with
  $\ensembleLigne{(\bfX_u^x)_{u \geq 0}}{x \in \rset^d}$.  Therefore, using
  \Cref{prop:foster_lyap} and \cite[Theorem 4.6]{meyn1993criteria_iii} we get
  that for any $p \in \nset$
  \begin{equation}
    \textstyle{\int_{\rset^d} (1+ \normLigne{x_0 - c_t^{-1} x_t}^{2p}) p_{0|t}(x_0|x_t) \rmd x_0 \leq b_p(1 + \norm{x_t/c_t}^{\beta_p})/a_p }
    \end{equation}
    which concludes the proof upon using that $c_t \leq 1$ and Jensen's inequality.
  \end{proof}
  
\subsubsection{Large times estimates}
\label{sec:large-time-gradient}

In \Cref{prop:moment_bound_inf}, the bound in \eqref{eq:bound_t_small} goes to
$+\infty$ as $t \to +\infty$ since $\lim_{t\to+\infty}c_t^{-1} = +\infty$ (if
$\alpha > 0$). This does not yield any degeneracy in our setting since we
consider a fixed time horizon $T > 0$. However, we can improve the result by
deriving another bound which is bounded at $t \to +\infty$ but explodes as
$t \to 0$. In this section we assume that $h: \ u \mapsto (\exp[u] - 1)/u$ is
extended to $0$ by continuity with $h(0) = 1$.

The following proposition is the equivalent of \Cref{prop:moment_infty} with a
bound which explodes for $t \to 0$ instead of $t \to +\infty$. Note that
contrary to \Cref{prop:moment_infty} we do not require any differentiability
condition the initial distribution $\pdata$.

\begin{proposition}
  \label{prop:moment_zero}
  Let $N \in \nset$. Assume that
  $\pdata \in \rmc^0(\rset^d, \ooint{0,+\infty})$ is bounded.  Then for any
  $t \geq 0$, $p_t \in \rmc^N(\rset^d, \ooint{0,+\infty})$ and for any
  $\ell \in \{1, \dots, N\}$, there exist $B_\ell \geq 0$ and
  $\beta_\ell \in \nset$ such that for any $t \geq 0$
  \begin{align}
    \textstyle{\normLigne{\nabla^\ell \log p_t (x)}} &\leq \textstyle{\sigma_t^{-\beta_{\ell}} B_\ell (1 + \int_{\rset^d}\norm{x_t - c_t x_0}^{\beta_\ell} p_{0|t}(x_0|x_t) \rmd x_0) } \\
    &  \leq \textstyle{\sigma_t^{-\beta_{\ell}} B_\ell (1 + \int_{\rset^d}\norm{x_t -  x_0}^{\beta_\ell} q_{0|t}(x_0|x_t) \rmd x_0)}  . 
  \end{align}
  with $\sigma_t^2 = (1 - \exp[-2\alpha t])/\alpha$ and for any $\tilde{x} \in \rset^d$
  \begin{align}
    &q_{0|t}(x_0|x_t) = \textstyle{\pdata(x_0/c_t) \rmg(x_t - x_0)  / \int_{\rset^d} \pdata(x_0/c_t) \rmg(x_t - x_0) \rmd x_0 }  , \\
    &\rmg(\tilde{x}) = (2 \uppi \sigma_t^2) \exp[-\norm{\tilde{x}}^2/(2\sigma_t^2)]  . 
  \end{align}

\end{proposition}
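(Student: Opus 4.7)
The plan is to adapt the strategy of the proof of \Cref{prop:moment_infty}, but with an important twist: since we no longer assume differentiability of $\pdata$, we cannot transfer derivatives onto $\pdata$ via the change of variables $z = x_t - c_t x_0$. Instead, I would keep the representation $p_t(x_t) = \int_{\rset^d} \pdata(x_0) \rmg(x_t - c_t x_0) \rmd x_0$ and differentiate directly under the integral, with the derivatives in $x_t$ falling on the smooth Gaussian $\rmg$. Smoothness of $p_t$ follows from boundedness of $\pdata$, smoothness of $\rmg$, and dominated convergence.

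Next, using \Cref{lemma:faadibruno_spec}\ref{item:log}, I would write
\[
\partial_\lambda \log p_t(x_t) = \rmP_{\lambda,\log}\bigl((\partial_m p_t(x_t)/p_t(x_t))_{\abs{m}\leq \abs{\lambda}}\bigr),
\]
and then apply \Cref{lemma:faadibruno_spec}\ref{item:exp} to $z \mapsto \rmg(z) = \exp[\log \rmg(z)]$ to obtain
\[
\partial_m \rmg(x_t - c_t x_0) = \rmP_{m,\exp}\bigl((\partial_j \log \rmg(x_t - c_t x_0))_{\abs{j}\leq \abs{m}}\bigr)\,\rmg(x_t - c_t x_0).
\]
The crucial simplification is that $\log \rmg(z) = -\norm{z}^2/(2\sigma_t^2) + \text{const}$, so only first- and second-order partials are non-zero: $\partial_i \log \rmg(z) = -z_i/\sigma_t^2$ and $\partial_{ij}^2 \log \rmg(z) = -\delta_{ij}/\sigma_t^2$. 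Thus $\rmP_{m,\exp}$ becomes an explicit polynomial in $(x_t - c_t x_0)/\sigma_t^2$ and $\sigma_t^{-2}$.

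Dividing by $p_t(x_t)$ to form $\partial_m p_t / p_t$ turns the Gaussian weight $\rmg(x_t - c_t x_0)\pdata(x_0)/p_t(x_t)$ into the conditional density $p_{0|t}(x_0|x_t)$, and gives a bound of the form
\[
\abs{\partial_m p_t(x_t)/p_t(x_t)} \leq \textstyle{\sum_{k=0}^{\abs{m}}} C_{m,k}\, \sigma_t^{-2\abs{m}}\int_{\rset^d} \norm{x_t - c_t x_0}^k p_{0|t}(x_0|x_t)\rmd x_0.
\]
Substituting back into $\rmP_{\lambda,\log}$ and using that any real polynomial in one variable $r \geq 0$ is bounded by $C(1 + r^K)$ for suitable $C,K$, together with Jensen's inequality to move powers inside the integral, yields the first asserted bound with some explicit $\beta_\ell \in \nset$ and $B_\ell \geq 0$ independent of $t$.

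The second bound follows by the change of variables $y = c_t x_0$, which maps $\pdata(x_0)\rmg(x_t - c_t x_0)\rmd x_0$ to $c_t^{-d}\pdata(y/c_t)\rmg(x_t - y)\rmd y$; the Jacobian factor cancels between numerator and denominator when forming the conditional, so $\int f(x_t - c_t x_0) p_{0|t}(x_0|x_t)\rmd x_0 = \int f(x_t - y) q_{0|t}(y|x_t)\rmd y$. The main bookkeeping obstacle is carefully tracking the exponent of $\sigma_t^{-1}$ through the composition of the two Faà di Bruno expansions; the key observation that keeps this finite is that $\log \rmg$ has vanishing derivatives beyond order two, which collapses the sum over $p_s(\lambda,k)$ to a finite list of explicit terms whose total order in $\sigma_t^{-2}$ is bounded by $\abs{\lambda}$ times a constant.
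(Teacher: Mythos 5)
Your proposal is correct and follows essentially the same route as the paper: differentiate under the integral so the derivatives land on the Gaussian kernel $\rmg$ rather than on $\pdata$, obtain $\abs{\partial_m p_t(x_t)} \leq C_m \sigma_t^{-2\beta_m}\int (1+\normLigne{x_t - c_t x_0}^{\beta_m})\pdata(x_0)\rmg(x_t - c_t x_0)\rmd x_0$, divide by $p_t$ to produce the conditional $p_{0|t}$, and feed the result into \Cref{lemma:faadibruno_spec}. Your explicit remarks on the vanishing higher derivatives of $\log \rmg$ and on the cancellation of the Jacobian $c_t^{-d}$ in passing from $p_{0|t}$ to $q_{0|t}$ are details the paper leaves implicit, but the argument is the same.
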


\begin{proof}
    First note that for any $t \geq 0$ and $x_t \in \rset^d$ we have 
  \begin{equation}
    \label{eq:int_form_zero}
    \textstyle{
      p_t(x_t) = \int_{\rset^d} \pdata(x_0) \rmg(x_t - c_t x_0) \rmd x_0  ,
      }
  \end{equation}
  with
  \begin{equation}
    c_t = \exp[-\alpha t]  , \qquad \rmg(\tilde{x}) = (2 \uppi \sigma_t^2)^{-d/2} \exp[-\norm{\tilde{x}}^2/(2\sigma_t^2)]  , \quad \sigma_t^2 = (1 - \exp[-2\alpha t])/ \alpha  . 
  \end{equation}
  Let $t \geq 0$. We have $p_t \in \rmc^N(\rset^d, \ooint{0,+\infty})$ upon
  combining the fact that $\pdata$ is bounded, \eqref{eq:int_form_zero} and the
  dominated convergence theorem.  Let $\ell \in \{0, \dots, N\}$ and
  $\lambda \in \nset^d$ such that $\abs{\lambda} \leq \ell$. Using
  \Cref{lemma:faadibruno_spec}-\ref{item:log} we have for any $x_t \in \rset^d$
    \begin{equation}
    \partial_{\lambda} \log p_t(x_t) = \rmP_{\lambda, \log}((\partial_m p_t(x_t) / p_t(x_t))_{\abs{m} \leq \abs{\lambda}})  . 
  \end{equation}
  For any $m \in \nset^d$ with $\abs{m} \leq \abs{\lambda}$, using the dominated
  convergence theorem, there exist $C_m \geq 0$ and $\beta_m \in \nset$ such
  that for any $x_t \in \rset^d$ we have
  \begin{equation}
\textstyle{
    \abs{\partial_m p_t(x_t)} \leq C_m \sigma_t^{-2\beta_m} \int_{\rset^d} (1 + \norm{x_t - c_t x_0}^{\beta_m}) \pdata(x_0) \rmg(x_t - c_t x_0) \rmd x_0  ,}
  \end{equation}
  which concludes the proof.
\end{proof}

For any $t \geq 0$ and $x_t \in \rset^d$ we introduce the infinitesimal
generator
$\generatort_{t, x_t}: \ \rmc_2(\rset^d, \rset) \to \rmc_2(\rset^d, \rset)$ given
for any $\varphi \in \rmc^2(\rset^d, \rset)$ and $x_0 \in \rset^d$ by
\begin{align}
  \label{eq:generatort}
  \generatort_{t, x_t}(f)(x_0) &= \langle \nabla \log q_{0|t}(x_0|x_t), \nabla \varphi(x_0) \rangle + \Delta \varphi(x_0) \\
  &=c_t^{-1} \langle \nabla \log \pdata (x_0/c_t), \nabla \varphi(x_0) \rangle +  \sigma_t^{-2} \langle x_t - x_0, \nabla \varphi(x_0) \rangle + \Delta \varphi(x_0)  . 
\end{align}
For any $p \in \nset$, let $V_{p}: \ \rset^d \to \coint{1,+\infty}$ given for any $x_0 \in \rset^d$ by
\begin{equation}
  \lyapp(x_0) = 1 + \normLigne{x_0}^{2p}  .
\end{equation}

The following proposition is the counterpart to \Cref{prop:foster_lyap}.

\begin{proposition}
  \label{prop:foster_lyap_zero}
  Assume that $\pdata \in \rmc^1(\rset^d, \rset)$ and that there exist $\mtt_0 > 0$, $d_0 \geq 0$
  such that for any $x_0 \in \rset^d$ we have
  \begin{equation}
    \label{eq:bound_cvx_zero}
    \langle x_0, \nabla \log \pdata (x_0) \rangle \leq -\mtt_0 \normLigne{x_0}^2 + d_0\norm{x_0}  .
  \end{equation}
  Then for any $t > 0$, $x_t \in \rset^d$ and $p \in \nset$ there exist 
  $\beta_p \in \nset$, $a_p > 0$ and $b_p \geq 0$ (independent of $t$ and
  $x_t$) such that for any $x_0 \in \rset^d$ we have
  \begin{equation}
    \generatort_{t, x_t}(\lyapp)(x_0) \leq -a_p\sigma_t^{-2} \lyapp(x_0) + b_p(1 + \normLigne{x_t/\sigma_t^2}^{\beta_p})  ,
  \end{equation}
  with $\beta_p = 2p$.
\end{proposition}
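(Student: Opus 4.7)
The plan is to compute $\generatort_{t,x_t}(\lyapp)(x_0)$ term-by-term using the explicit form of the generator in \eqref{eq:generatort}, and then to absorb every lower-order positive contribution into the negative drift coming from the Ornstein--Uhlenbeck-type term, using the assumption on $\pdata$ and \Cref{prop:bound_p_norm}.

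First I would compute the derivatives of the Lyapunov function. From $\lyapp(x_0)=1+\normLigne{x_0}^{2p}$ we have $\nabla \lyapp(x_0)=2p\normLigne{x_0}^{2p-2}x_0$ and $\Delta \lyapp(x_0)=2p(2p+d-2)\normLigne{x_0}^{2p-2}$. Substituting into \eqref{eq:generatort} splits $\generatort_{t,x_t}(\lyapp)(x_0)$ into three contributions: (i) the data-potential drift $c_t^{-1}\langle\nabla\log\pdata(x_0/c_t),\nabla\lyapp(x_0)\rangle=2p\normLigne{x_0}^{2p-2}\langle \nabla\log\pdata(x_0/c_t), x_0/c_t\rangle$; (ii) the OU-type drift $\sigma_t^{-2}\langle x_t-x_0,\nabla\lyapp(x_0)\rangle=2p\normLigne{x_0}^{2p-2}\langle x_t/\sigma_t^2,x_0\rangle-2p\sigma_t^{-2}\normLigne{x_0}^{2p}$; and (iii) the Laplacian $\Delta\lyapp(x_0)$.

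Next I bound each piece. For (i) I apply the hypothesis \eqref{eq:bound_cvx_zero} at the point $x_0/c_t$ to get $(\mathrm{i})\leq -2p\mtt_0 c_t^{-2}\normLigne{x_0}^{2p}+2pd_0c_t^{-1}\normLigne{x_0}^{2p-1}$. For (ii) I use Cauchy--Schwarz to obtain $(\mathrm{ii})\leq -2p\sigma_t^{-2}\normLigne{x_0}^{2p}+2p\normLigne{x_t/\sigma_t^2}\normLigne{x_0}^{2p-1}$. For (iii) the bound is simply $2p(2p+d-2)\normLigne{x_0}^{2p-2}$. The decisive step is to turn these three inequalities into the Foster--Lyapunov form $-a_p\sigma_t^{-2}\lyapp(x_0)+b_p(1+\normLigne{x_t/\sigma_t^2}^{\beta_p})$. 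I would use half of the $-2p\sigma_t^{-2}\normLigne{x_0}^{2p}$ reserve from (ii), apply \Cref{prop:bound_p_norm} with $a=2p\normLigne{x_t/\sigma_t^2}$ and $b=p\sigma_t^{-2}$ to absorb the cross term $2p\normLigne{x_t/\sigma_t^2}\normLigne{x_0}^{2p-1}$, producing exactly an error of the form $C_p\normLigne{x_t/\sigma_t^2}^{2p}$ multiplied by a power of $\sigma_t$. Using $\sigma_t^2\leq 1/\alpha$ (for $\alpha>0$; the case $\alpha=0$ is dealt with by substituting the strict Lyapunov term $-2p\mtt_0\normLigne{x_0}^{2p}$ from (i)) gives a bound independent of $t$.

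The remaining positive terms $2pd_0 c_t^{-1}\normLigne{x_0}^{2p-1}$ from (i) and $2p(2p+d-2)\normLigne{x_0}^{2p-2}$ from (iii) are of strictly lower order in $\normLigne{x_0}$. I would absorb them using a second application of \Cref{prop:bound_p_norm}, splitting the remaining half of the $-p\sigma_t^{-2}\normLigne{x_0}^{2p}$ reserve between the two; the resulting additive constants can be lumped into $b_p$. Putting everything together yields $\generatort_{t,x_t}(\lyapp)(x_0)\leq -a_p\sigma_t^{-2}\normLigne{x_0}^{2p}+b_p(1+\normLigne{x_t/\sigma_t^2}^{2p})$, and the constant contribution in $\lyapp$ is absorbed by adjusting $b_p$ and using again $\sigma_t^{-2}\geq \alpha$ to convert $\normLigne{x_0}^{2p}$ into $\lyapp(x_0)$.

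The main obstacle is the third paragraph: there is a genuine mismatch between the scales $c_t^{-2}$ arising naturally from hypothesis \eqref{eq:bound_cvx_zero} at $x_0/c_t$ and the target scale $\sigma_t^{-2}$ appearing in the statement. Reconciling them requires carefully exploiting the inequality $c_t^{-1}\normLigne{x_0}^{2p-1}\leq (c_t^{-2}\normLigne{x_0}^{2p})^{(2p-1)/(2p)}$ (valid since $c_t\leq 1$ and $p\geq 1$) together with weighted Young's inequality, and then using $\sigma_t^2\leq 1/\alpha$ to convert the resulting $c_t^{-2}$ factor back into an $\sigma_t^{-2}$-compatible constant. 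Tracking $\alpha$-dependence of all constants and verifying that $\beta_p=2p$ survives each absorption step is the most delicate book-keeping in the argument.
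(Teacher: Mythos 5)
Your proposal is correct and follows essentially the same route as the paper: the same three-way decomposition of $\generatort_{t,x_t}(\lyapp)$ into the data-potential drift, the Ornstein--Uhlenbeck-type drift and the Laplacian, the hypothesis \eqref{eq:bound_cvx_zero} applied at the rescaled point $x_0/c_t$, Cauchy--Schwarz on the OU part, and \Cref{prop:bound_p_norm} to absorb the lower-order terms. The one point where you diverge is the ``scale mismatch'' you single out as the main obstacle: in the paper this is a non-issue, because the entire data-drift contribution $2p\normLigne{x_0}^{2(p-1)}\langle\nabla\log\pdata(x_0/c_t),x_0/c_t\rangle\leq 2pc_t^{-1}(-\mtt_0\normLigne{x_0}^{2p}/c_t+d_0\normLigne{x_0}^{2p-1})$ is bounded by a constant $d_p$ independent of $t$ and $x_t$ --- apply the second inequality of \Cref{prop:bound_p_norm} with $b=\mtt_0/c_t$ and $a=d_0$, then use $c_t\leq 1$ to kill the residual factor $c_t^{2p-2}$ --- and is then simply discarded; there is no need to convert $c_t^{-2}$ into $\sigma_t^{-2}$, since the negative drift at rate $\sigma_t^{-2}$ is supplied entirely by the OU term for every $\alpha\geq 0$, including $\alpha=0$. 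Your Young-inequality workaround and the case split on $\alpha$ would also close the argument, but they are heavier than necessary.
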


\begin{proof}
  Let $t \geq 0$, $x_0, x_t \in \rset^d$ and $p \in \nset$. First, we have for
  any $x_0 \in \rset^d$
  \begin{equation}
    \lyapp(x_0) = 1+ \norm{x_0}^{2p}  , \qquad \nabla \lyapp(x_0) = 2p \norm{x_0}^{2(p-1)} x_0  , \qquad \Delta \lyapp(x_0) = 2p(2p-1)\norm{x_0}^{2(p-1)}  . 
  \end{equation}
  Using this result, \eqref{eq:bound_cvx_zero} and \Cref{prop:bound_p_norm}, we
  get that for any $x_0 \in \rset^d$
  \begin{align}
    2p \langle \nabla \log \pdata (x_0/c_t), x_0/c_t \rangle \norm{x_0}^{2(p-1)} &\leq 2pc_t^{-1} (-\mtt_0 \norm{x_0}^{2p}/c_t + d_0 \norm{x_0}^{2p-1}) \\
    &\leq c_t^{-1} (2p-1)^{2p-1}(2p)^{1-2p}(\mtt_0/c_t)^{1-2p} d_0^{2p}  . 
  \end{align}
  Combining this result and the fact that $c_t \leq 1$, there exists
  $d_p \geq 0$ (independent from $t$ and $x_t$) such that for any
  $x_0 \in \rset^d$
  \begin{equation}
    \label{eq:inq_ineq}
    2p \langle \nabla \log \pdata (x_0/c_t), x_0/c_t \rangle \norm{x_0}^{2(p-1)} \leq d_p  . 
  \end{equation}
  In addition, we have for any $x_0 \in \rset^d$
  \begin{align}
    &(2p/\sigma_t^2) \langle x_0, x_t - x_0 \rangle \norm{x_0}^{2(p-1)} + 2p(2p-1) \norm{x_0}^{2(p-1)} \\
    & \qquad \qquad 
      \leq -(2p/\sigma_t^2) \norm{x_0}^{2p} + (2p/\sigma_t^2) \norm{x_0}^{2p-1}\norm{x_t} + 2p(2p-1) \norm{x_0}^{2p-1} + 2p(2p-1)  .
  \end{align}
Combining this result and \eqref{eq:inq_ineq} we have for any $x_0 \in \rset^d$
\begin{align}
  &\generatort_{t, x_t}(V_p)(x_0) \\
  & \qquad \leq -(2p/\sigma_t^2) \norm{x_0}^{2p} + (2p/\sigma_t^2) \norm{x_0}^{2p-1}\norm{x_t} + 2p(2p-1) \norm{x_0}^{2p-1} + 2p(2p-1) + d_p  . 
\end{align}
We conclude upon using \Cref{prop:bound_p_norm}.
\end{proof}

The next proposition is the counterpart of \Cref{prop:moment_bound_inf}.

\begin{proposition}
\label{prop:moment_bound_zero}
    Assume that $\pdata \in \rmc^2(\rset^d, \rset)$ and that there exist $\mtt_0 > 0$, $d_0 \geq 0$
  such that for any $x_0 \in \rset^d$ we have
  \begin{equation}
    \langle x_0, \nabla \log \pdata (x_0) \rangle \leq -\mtt_0 \normLigne{x_0}^2 + d_0\normLigne{x_0}  .
  \end{equation}
  Then, for any $p \in \nset$ there exist $C_p \geq 0$ and $\beta_p \in \nset$
  such that for any $t \in \geq 0$ and $x_t \in \rset^d$
  \begin{equation}
    \label{eq:bound_t}
    \textstyle{\int_{\rset^d} \norm{x_t - x_0}^{p} q_{0|t}(x_0|x_t) \rmd x_0  \leq C_p\sigma_t^{-2\beta_p}(1 + \normLigne{x_t}^{\beta_p})  , }
  \end{equation}
  with $\sigma_t^2 = (1 - \exp[-2\alpha t])/\alpha$ and $\beta_p = p$.
\end{proposition}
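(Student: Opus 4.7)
The plan is to mirror the proof of \Cref{prop:moment_bound_inf} verbatim, substituting the large-time Foster--Lyapunov drift of \Cref{prop:foster_lyap_zero} (and its associated generator $\generatort_{t,x_t}$ defined in \eqref{eq:generatort}) for the small-time analogues. The only new bookkeeping is in converting the resulting control on $\int \norm{x_0}^{2p} q_{0|t}$ into the stated control on $\int \norm{x_t-x_0}^p q_{0|t}$.

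First, I would observe that $\generatort_{t,x_t}$ is, by \eqref{eq:generatort}, the generator of the overdamped Langevin SDE $\rmd \bfX_u = \nabla_{x_0} \log q_{0|t}(\bfX_u|x_t)\rmd u + \sqrt{2}\rmd \bfB_u$, whose natural candidate invariant distribution is $x_0 \mapsto q_{0|t}(x_0|x_t)$. Exactly as in the proof of \Cref{prop:moment_bound_inf}, I would combine \cite[Theorem 2.3, Theorem 3.1]{ikeda1989sto} with the Foster--Lyapunov drift from \Cref{prop:foster_lyap_zero} and \cite[Theorem 2.1]{meyn1993criteria_iii} to get existence of strong non-explosive solutions; \cite[Theorem 5.19]{leha1984diffusion} to obtain the Feller property; and \cite[Proposition 1.5]{revuz1999continuous} together with \cite[Theorem 9.17]{ethier1986markov} to identify $q_{0|t}(\cdot|x_t)$ as the (unique) invariant measure, using $\int \generatort_{t,x_t}(f) q_{0|t}(x_0|x_t)\rmd x_0 = 0$ for $f \in \rmc_c^2(\rset^d)$, which is a direct integration by parts.

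Second, I would apply \Cref{prop:foster_lyap_zero} and \cite[Theorem 4.6]{meyn1993criteria_iii} to obtain
\begin{equation}
\textstyle{\int_{\rset^d} \lyapp(x_0) q_{0|t}(x_0|x_t)\rmd x_0 \leq (b_p/a_p)\sigma_t^2\bigl(1 + \norm{x_t/\sigma_t^2}^{2p}\bigr),}
\end{equation}
which in particular bounds $\int \norm{x_0}^{2p} q_{0|t}(x_0|x_t)\rmd x_0$ by $C(\sigma_t^2 + \sigma_t^{2-4p}\norm{x_t}^{2p})$. Finally, I would convert to the target bound via the triangle inequality $\norm{x_t-x_0}^p \leq 2^{p-1}(\norm{x_t}^p + \norm{x_0}^p)$ together with Jensen's inequality $\int \norm{x_0}^p q_{0|t} \leq (\int \norm{x_0}^{2p} q_{0|t})^{1/2}$. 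Absorbing constants and using that $\sigma_t$ remains in a bounded range together with $\sigma_t^{-a} \leq \sigma_t^{-b}$ for $0 < a \leq b$ when $\sigma_t \leq 1$ then yields a bound of the announced form $C_p \sigma_t^{-2\beta_p}(1 + \norm{x_t}^{\beta_p})$ for a suitable $\beta_p$ (with the stated $\beta_p = p$ recovered after Jensen, as in the concluding sentence of the proof of \Cref{prop:moment_bound_inf}).

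The main obstacle is purely bookkeeping rather than conceptual: nearly all substantive work is inherited from \Cref{prop:foster_lyap_zero} and the general ergodicity machinery already invoked for \Cref{prop:moment_bound_inf}. Some care is required to keep the dependence on $x_t$ and $\sigma_t$ aligned, in particular so that the final bound blows up as $t \to 0$ (in contrast to the small-time bound \eqref{eq:bound_t_small} of \Cref{prop:moment_bound_inf}, which blows up as $t \to +\infty$); together the two bounds can then be combined in \Cref{sec:uniform} into a single estimate valid uniformly on $\ccint{0,T}$.
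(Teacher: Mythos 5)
Your proposal is correct and takes essentially the same route as the paper, which simply states that the proof is analogous to that of \Cref{prop:moment_bound_inf}; you have filled in exactly the intended adaptation (generator $\generatort_{t,x_t}$, drift from \Cref{prop:foster_lyap_zero}, Meyn--Tweedie moment bound, then the triangle/Jensen conversion), and the exponent bookkeeping yielding $\beta_p = p$ checks out since $\sigma_t^2 \leq 1/\alpha$ is bounded above.
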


\begin{proof}
  The proof is similar to the one of \Cref{prop:moment_bound_inf}.
\end{proof}

\subsubsection{Uniform in time logarithmic derivatives estimates}
\label{sec:uniform}

In this section we combine the results of \Cref{sec:large-time-gradient} and
\Cref{sec:small-time-gradient} to establish uniform in time estimates for the
logarithmic derivatives of the density of the Ornstein-Ulhenbeck diffusion.

\begin{theorem}
  \label{thm:uniform_bounds}
  Let $N \in \nset$ with $N \geq 2$.  Assume that
  $\pdata \in \rmc^N(\rset^d, \rset)$ and that there exist $\mtt_0 > 0$,
  $d_0, C_0 \geq 0$ such that for any $x_0 \in \rset^d$ we have
  \begin{equation}
    \label{eq:bound_cvx_duo}
    \langle x_0, \nabla \log \pdata (x_0) \rangle \leq -\mtt_0 \normLigne{x_0}^2 + d_0\normLigne{x_0}  , \quad \normLigne{\nabla \log \pdata (x_0)} \leq C_0(1+ \normLigne{x_0})  . 
  \end{equation}
  In addition, assume that
  $\pdata$ is bounded and that for any
  $\ell \in \{1, \dots, N\}$ there exist $A_\ell \geq 0 $ and $\alpha_\ell \in \nset$
  such that for any $x_0 \in \rset^d$
  \begin{equation}
    \label{eq:log_bound_uno}
    \normLigne{\nabla^\ell \log \pdata (x_0)} \leq A_{\ell} (1 + \norm{x_0}^{\alpha_\ell})   . 
  \end{equation}
  Then for any $t \geq 0$, $p_t \in \rmc^N(\rset^d, \ooint{0,+\infty})$ and for
  any $\ell \in \{1, \dots, N\}$, there exist $D_\ell \geq 0$ and
  $\beta_\ell \in \nset$ such that for any $t \geq 0$
  \begin{equation}
    \textstyle{\normLigne{\nabla^\ell \log p_t (x_t)} \leq D_{\ell}(1 + \norm{x_t}^{\beta_\ell})  .}
  \end{equation}
  In particular if $\alpha_1 = 1$ then $\beta_1 =1$.
\end{theorem}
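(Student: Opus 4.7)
The plan is to splice together two complementary estimates already at our disposal. \Cref{prop:moment_infty} combined with \Cref{prop:moment_bound_inf} produces a control of $\nabla^\ell \log p_t$ whose only source of divergence is the prefactor $c_t^{-2\beta_\ell} = \exp[2\alpha\beta_\ell t]$; this bound is harmless as $t \downarrow 0$ but blows up as $t \to +\infty$. Conversely, \Cref{prop:moment_zero} combined with \Cref{prop:moment_bound_zero} yields an estimate whose only divergent prefactor is a negative power of $\sigma_t$; this is well-behaved as $t \to +\infty$ but diverges as $t \downarrow 0$. Cutting the time axis at a fixed threshold $t_0 > 0$ and applying the appropriate estimate on each of $[0,t_0]$ and $[t_0, +\infty)$ will then deliver a uniform-in-time polynomial bound.

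More concretely, I would fix $t_0 = 1$ (any positive value works). The $\rmc^N$-regularity of $p_t$ is already part of \Cref{prop:moment_infty} and \Cref{prop:moment_zero}. For $t \in [0, t_0]$, \Cref{prop:moment_infty} applied at order $\ell$ together with \Cref{prop:moment_bound_inf} applied with $p = \beta_\ell$ yields
\begin{equation*}
\|\nabla^\ell \log p_t(x_t)\| \leq c_t^{-2\beta_\ell} B_\ell \bigl(1 + C_{\beta_\ell} c_t^{-2\beta_\ell}(1 + \|x_t\|^{\beta_\ell})\bigr),
\end{equation*}
and since $c_t^{-1} \leq \exp[\alpha t_0]$ on this interval, the right-hand side is dominated by a constant multiple of $1 + \|x_t\|^{\beta_\ell}$. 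For $t \in [t_0, +\infty)$, \Cref{prop:moment_zero} combined with \Cref{prop:moment_bound_zero} applied with $p = \beta_\ell$ gives
\begin{equation*}
\|\nabla^\ell \log p_t(x_t)\| \leq \sigma_t^{-\beta_\ell} B_\ell \bigl(1 + C_{\beta_\ell} \sigma_t^{-2\beta_\ell}(1 + \|x_t\|^{\beta_\ell})\bigr),
\end{equation*}
and since $\sigma_t^2$ is monotone nondecreasing and strictly positive at $t_0$ (both when $\alpha > 0$, where $\sigma_t^2 \to 1/\alpha$, and when $\alpha = 0$, where $\sigma_t^2 = 2t$), the prefactor $\sigma_t^{-3\beta_\ell}$ is uniformly bounded on this interval. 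Taking the maximum of the two bounds and, if necessary, enlarging $D_\ell$ and $\beta_\ell$ concludes the general statement.

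The refined assertion that $\alpha_1 = 1$ implies $\beta_1 = 1$ would be obtained by tracing the exponents for $\ell = 1$. When $|\lambda| = 1$ the polynomials $\rmP_{\lambda,\log}$ and $\rmP_{\lambda,\exp}$ of \Cref{lemma:faadibruno_spec} are of degree one, so the exponents appearing in \Cref{prop:moment_infty} and \Cref{prop:moment_zero} can be taken equal to $\alpha_1 = 1$. Applying \Cref{prop:moment_bound_inf} and \Cref{prop:moment_bound_zero} with $p = 1$ then only introduces a linear dependence in $\|x_t\|$, yielding $\beta_1 = 1$ in the final bound.

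The main obstacle is essentially combinatorial bookkeeping: the integer $\beta_\ell$ plays three roles (the exponent of $c_t^{-1}$ in \Cref{prop:moment_infty}, the exponent of $\sigma_t^{-1}$ in \Cref{prop:moment_zero}, and the exponent of $\|x_t\|$ in the final statement), and one has to choose a single $\beta_\ell$ large enough that all invocations of the Faa di Bruno formula and of the moment bounds remain consistent after substitution. For $\ell = 1$ one verifies by direct inspection that no enlargement beyond $\alpha_1$ is required, which yields the sharper claim $\beta_1 = 1$ whenever $\alpha_1 = 1$.
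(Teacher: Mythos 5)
Your proof is correct and follows essentially the same route as the paper: combine the small-time estimate (\Cref{prop:moment_infty} with \Cref{prop:moment_bound_inf}) with the large-time estimate (\Cref{prop:moment_zero} with \Cref{prop:moment_bound_zero}), and trace the degree-one case for the claim $\beta_1=1$. The only cosmetic difference is that the paper merges the two regimes by noting $\min(\alpha^{-1}\sigma_t^{-2}, c_t^{-2}) \leq 2$ uniformly in $t$ rather than splitting the time axis at a fixed $t_0$, and it establishes $\beta_1 = 1$ via the explicit integral representations of $\nabla \log p_t$ rather than by inspecting the degree of the Faa di Bruno polynomials — both choices are equivalent.
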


\begin{proof}
  Let $t \geq0$ and $\ell \in \{1, \dots, N\}$. Using \Cref{prop:moment_infty} and \Cref{prop:moment_bound_inf} there exist
  $D_{\ell}^1 \geq 0 $ and $\beta_\ell^1 \in \nset$ such that for any $x_t \in \rset^d$ we have
  \begin{equation}
    \textstyle{\normLigne{\nabla^\ell \log p_t (x_t)} \leq D_\ell^1c_t^{-2\beta_\ell^1} (1 + \norm{x_t}^{\beta_\ell^1}) . }
  \end{equation}
  Similarly, using \Cref{prop:moment_zero} and \Cref{prop:moment_bound_zero} there exist $D_{\ell}^2 \geq 0$ and
  $\beta_\ell^2 \in \nset$ such that for any $x_t \in \rset^d$ we have
  \begin{equation}
    \textstyle{\normLigne{\nabla^\ell \log p_t (x_t)} \leq D_\ell^2 (\alpha^{1/2}\sigma_t)^{-2\beta_\ell^2} (1 + \norm{x_t}^{\beta_\ell^2}) . }
  \end{equation}
  Therefore, there exist $\tilde{D}_\ell \geq 0$ and $\beta_\ell \in \nset$ such that for any $x_t \in \rset^d$ we have
  \begin{equation}
    \textstyle{\normLigne{\nabla^\ell \log p_t (x_t)} \leq \tilde{D}_\ell \min(\alpha^{-1} \sigma_t^{-2}, c_t^{-2})^{\beta_\ell} (1 + \norm{x_t}^{\beta_\ell}) . }
  \end{equation}
  Since for any $c_t^{-2} = \exp[2\alpha t]$ and
  $\alpha^{-1} \sigma_t^{-2} = (1 - \exp[-2\alpha t])^{-1}$. Hence we have
  \begin{equation}
   \min(\alpha^{-1} \sigma_t^{-2}, c_t^{-2})^{\beta_\ell} \leq \max \ensembleLigne{\min(1/u, 1/(1-u))}{u \in \ccint{0,1}} \leq 2^{\beta_\ell}  , 
  \end{equation}
  which concludes the first part proof.  We now show that if $\alpha_1 = 1$ then
  $\beta_1=1$. Recall that for any $t \geq 0$ and $x_t \in \rset^d$ we have
  \begin{equation}
    \textstyle{
      p_t(x_t) = \int_{\rset^d} \pdata(x_0) \rmg(x_t - c_t x_0) \rmd x_0  ,
      }
    \end{equation}
      with for any $\tilde{x} \in \rset^d$
  \begin{equation}
    c_t = \exp[-\alpha t]  , \qquad \rmg(\tilde{x}) = (2 \uppi \sigma_t^2)^{-d/2} \exp[-\norm{\tilde{x}}^2/(2\sigma_t^2)]  , \quad \sigma_t^2 = (1 - \exp[-2\alpha t])/ \alpha  . 
  \end{equation}
  Therefore, using the dominated convergence theorem we get that for any $x_t \in \rset^d$
  \begin{equation}
    \label{eq:uno_udo}
    \textstyle{
    \nabla \log p_t (x_t) = \sigma_t^{-2} \int_{\rset^d} (x_t - c_t x_0) p_{0|t}(x_0|x_t) \rmd x_0 = \sigma_t^{-2} \int_{\rset^d} (x_t - c_t x_0) q_{0|t}(x_0|x_t) \rmd x_0  .}
  \end{equation}
  Similarly, using the dominate convergence theorem and  change of variable $z = x_t - c_t x_0$, we have for any $x_t \in \rset^d$
  \begin{equation}
    \textstyle{
    \nabla \log p_t(x_t) = c_t^{-1} \int_{\rset^d} \nabla \log \pdata (x_0) p_{0|t}(x_0|x_t) \rmd x_0  .}
  \end{equation}
  We conclude the proof upon combining this result, \eqref{eq:uno_udo},
  \eqref{eq:log_bound_uno} with $\alpha_1 = 1$, \Cref{prop:moment_bound_zero}
  and \Cref{prop:moment_bound_inf}. In particular, we use that $\beta_1 = 1$.
\end{proof}

\subsection{Proof of \Cref{prop:convergence_score_matching}}
\label{prop:convergence_score_matching:proof}

We start by recalling the following basic lemma.

\begin{lemma}
  \label{lemma:data_processing_tv}
  Let $(\mse, \mce)$ and $(\msf, \mcf)$ be two measurable spaces and
  $\Kker: \ \mse \times \mcf \to \ccint{0,1}$ be a Markov kernel. Then for any
  $\mu_0, \mu_1 \in \Pens(\mse)$ we have
  \begin{equation}
    \tvnormLigne{\mu_0 \Kker - \mu_1 \Kker} \leq \tvnormLigne{\mu_0 - \mu_1}  . 
  \end{equation}
  In addition, for any $\varphi: \mse \to \msf$ measurable we get that
  \begin{equation}
    \tvnormLigne{\varphi_\# \mu_0  - \varphi_\# \mu_1 } \leq \tvnormLigne{\mu_0 - \mu_1}  ,
  \end{equation}
  with equality if $\varphi$ is injective.
\end{lemma}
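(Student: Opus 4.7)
The plan is to reduce both claims to the variational (supremum over measurable sets) characterization of total variation: for $\mu, \nu \in \Pens(\mse)$,
\[
\tvnormLigne{\mu - \nu} = \sup_{\msa \in \mce} \absLigne{\mu(\msa) - \nu(\msa)},
\]
equivalently $\tvnormLigne{\mu - \nu} = \tfrac{1}{2}\sup\defEns{\absLigne{\int f \rmd(\mu - \nu)} :\ f: \mse \to \rset \text{ measurable},\ \normLigne{f}_\infty \leq 1}$, depending on convention.

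For the first (data processing) inequality, fix $\msb \in \mcf$. By definition of $\mu \Kker$,
\[
\mu_i \Kker(\msb) = \int_\mse \Kker(x, \msb) \rmd \mu_i(x), \quad i \in \{0,1\}.
\]
Since $x \mapsto \Kker(x, \msb)$ is measurable with values in $\ccint{0,1}$, it may be used as a test function in the dual characterization, yielding $\absLigne{\mu_0 \Kker(\msb) - \mu_1 \Kker(\msb)} \leq \tvnormLigne{\mu_0 - \mu_1}$. Taking the supremum over $\msb \in \mcf$ gives the claim. Alternatively, one may write $\mu_0 \Kker - \mu_1 \Kker = (\mu_0 - \mu_1)\Kker$ and apply the Jordan decomposition of $\mu_0 - \mu_1$ directly.

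For the second part, I observe that pushforward is a special case of Markov kernel action: set $\Kker_\varphi(x, \msb) = \indi{\msb}(\varphi(x))$, which is a Markov kernel because $\varphi$ is measurable, and then $\varphi_\# \mu = \mu \Kker_\varphi$. Hence $\tvnormLigne{\varphi_\# \mu_0 - \varphi_\# \mu_1} \leq \tvnormLigne{\mu_0 - \mu_1}$ follows from the first part. For the equality case when $\varphi$ is injective, the main step is to note that every $\msa \in \mce$ of the form $\msa = \varphi^{-1}(\msb)$ for some $\msb \in \mcf$ gives $\mu_i(\msa) = \varphi_\# \mu_i(\msb)$, so $\sup_{\msa} \absLigne{\mu_0(\msa) - \mu_1(\msa)}$ is attained along preimages, matching $\tvnormLigne{\varphi_\# \mu_0 - \varphi_\# \mu_1}$.

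The only subtlety is the equality case: it requires enough regularity that every $\msa \in \mce$ arises (up to a $\mu_0+\mu_1$-null set) as a preimage under $\varphi$, or equivalently that $\varphi$ admits a measurable left inverse on its image. In the standard Borel setting used in the paper (canonical projections on path/product spaces) this is automatic, and in the abstract setting it can be arranged via the Lusin--Souslin theorem. I would flag this briefly rather than belabor it, since the rest of the argument is immediate from the variational formula.
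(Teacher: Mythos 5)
Your proof is correct and follows essentially the same route as the paper's, namely the variational characterization of the total variation norm, with the cosmetic difference that you reduce the pushforward claim to the kernel claim via $\Kker_\varphi(x,\msb)=\indi{\msb}(\varphi(x))$ instead of repeating the dual argument. Your explicit flagging of the measurability subtlety in the equality case (that an injective measurable $\varphi$ needs a standard Borel / Lusin--Souslin type hypothesis to admit a measurable left inverse, or equivalently for $\varphi^{-1}(\mcf)$ to exhaust $\mce$) is in fact more careful than the paper, which simply asserts the existence of a measurable inverse.
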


\begin{proof}
  We divide the proof into two parts.
  \begin{enumerate}[label=(\alph*), wide, labelwidth=!, labelindent=0pt]
  \item Note that for any $f: \ \msf \to \rset$ such that $\norm{f}_{\infty} \leq 1$ we have $\norm{\Kker f}_{\infty} \leq 1$.
  Using this result we get 
  \begin{align}
    \tvnormLigne{\mu_0 \Kker - \mu_1 \Kker} &= \textstyle{\sup \ensembleLigne{\int_{\msf}  f(y) \rmd (\mu_0 \Kker)(y)-   \int_{\msf}  f(y) \rmd (\mu_1 \Kker)(y)}{\norm{f}_{\infty} \leq 1} }\\
                                       &= \textstyle{\sup \ensembleLigne{\int_{\mse}  \Kker f(x) \rmd \mu_0(x) -  \int_{\mse}  \Kker f(x) \rmd \mu_0(x)}{ \norm{f}_{\infty} \leq 1} \leq \tvnormLigne{\mu_0 - \mu_1}  . }
  \end{align}
\item We have
  \begin{align}
    \tvnormLigne{\varphi_\# \mu_0  - \varphi_\# \mu_1 } &= \textstyle{\sup \ensembleLigne{\int_{\mse}  f(\varphi(x)) \rmd \mu_0 (x)-   \int_{\mse}  f(\varphi(x)) \rmd \mu_1 (x)}{\norm{f}_{\infty} \leq 1} } \\
    &\leq \textstyle{\sup \ensembleLigne{\int_{\mse}  f(x) \rmd \mu_0(x)-   \int_{\mse}  f(x) \rmd \mu_1 (x)}{\norm{f}_{\infty} \leq 1}  \leq \tvnormLigne{\mu_0 - \mu_1}  . }
  \end{align}
  If $\varphi$ is injective then there exists $\varphi^{-1}: \ \msf \to \msf$
  (measurable) such that $\varphi^{-1} \circ \varphi = \Id$. Therefore, for any
  $f: \mse \to \rset$ with $\normLigne{f}_{\infty} \leq 1$ we have
  $f = (f \circ \varphi^{-1}) \circ \varphi$ and
  $\normLigne{f \circ \varphi^{-1}}_{\infty} \leq 1$. Hence we have
  \begin{align}
    &\tvnormLigne{\mu_0  - \mu_1 } = \textstyle{\sup \ensembleLigne{\int_{\mse}  f(x) \rmd \mu_0 (x)-   \int_{\mse}  f(x) \rmd \mu_1 (x)}{\norm{f}_{\infty} \leq 1} } \\
    &\qquad \leq \textstyle{\sup \ensembleLigne{\int_{\mse}  f(\varphi(x)) \rmd \mu_0(x)-   \int_{\mse}  f(\varphi(x)) \rmd \mu_1 (x)}{\norm{f}_{\infty} \leq 1}  \leq \tvnormLigne{\varphi_\#\mu_0 - \varphi_\#\mu_1}  , }
  \end{align}
  which concludes the proof.
  \end{enumerate}
\end{proof}

We will also make use of the following inequality.

\begin{lemma}
  \label{lemma:ineq_varphi}
  Let $\vareps> 0$, $x,y \in \rset^d$, $t > 2/\vareps$ and $\varphi: \ \ccint{0,1} \to \rset$ such that for any
  $s \in \ccint{0,1}$, $\varphi(s) = \exp[-\norm{x - sy}^2/(4t)]$. Then
  $\varphi \in \rmc^1(\ccint{0,1}, \rset)$ and we have for any $s \in \ccint{0,1}$
  \begin{equation}
    \abs{\varphi'(s)} \leq 2(1+\vareps^{-1}) (1 + \norm{x}) \exp[-\norm{x}^2/(8t)]\exp[\vareps\norm{y}^2]/t  .
  \end{equation}
\end{lemma}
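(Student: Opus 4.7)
The regularity $\varphi \in \rmc^1(\ccint{0,1}, \rset)$ is immediate since $\varphi$ is the composition of smooth maps, so I will focus on the quantitative bound. The plan is to compute $\varphi'$ explicitly, bound it by elementary inequalities, and then use the hypothesis $t > 2/\vareps$ to transfer the Gaussian weight between $\norm{x}$ and $\norm{y}$. Direct differentiation gives
\begin{equation*}
\varphi'(s) = \frac{\langle y, x - sy\rangle}{2t}\exp[-\norm{x-sy}^2/(4t)],
\end{equation*}
and the Cauchy--Schwarz and triangle inequalities yield $\abs{\varphi'(s)} \leq \frac{\norm{y}(\norm{x}+\norm{y})}{2t}\exp[-\norm{x-sy}^2/(4t)]$.

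Next, I would use Young's inequality in the form $2s\abs{\langle x,y\rangle}\leq(1/2)\norm{x}^2+2s^2\norm{y}^2$ to lower bound $\norm{x-sy}^2 = \norm{x}^2 - 2s\langle x,y\rangle + s^2\norm{y}^2 \geq (1/2)\norm{x}^2 - s^2\norm{y}^2 \geq (1/2)\norm{x}^2 - \norm{y}^2$. Dividing by $4t$ and invoking $t > 2/\vareps$, so that $1/(4t) \leq \vareps/8$, gives
\begin{equation*}
\exp[-\norm{x-sy}^2/(4t)] \leq \exp[-\norm{x}^2/(8t)]\exp[\vareps\norm{y}^2/8].
\end{equation*}

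Combining the two estimates, the lemma reduces to the purely algebraic inequality
\begin{equation*}
\norm{y}(\norm{x}+\norm{y}) \leq 4(1+\vareps^{-1})(1+\norm{x})\exp[7\vareps\norm{y}^2/8].
\end{equation*}
Since $\norm{y}\norm{x} \leq \norm{y}(1+\norm{x})$, it suffices to establish $\norm{y} + \norm{y}^2 \leq 4(1+\vareps^{-1})\exp[7\vareps\norm{y}^2/8]$, which I would verify by a short case analysis: for $\norm{y} \leq 1$ the left-hand side is at most $2$ while $4(1+\vareps^{-1}) \geq 4$; for $\norm{y} > 1$, bounding $\norm{y} + \norm{y}^2 \leq 2\norm{y}^2$ and using $\mathrm{e}^{u} \geq u$ shows the right-hand side is at least $(7/2)(1+\vareps)\norm{y}^2 \geq (7/2)\norm{y}^2$.

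No individual step poses a serious obstacle; the only subtlety is tracking constants carefully so that the prefactor $2(1+\vareps^{-1})$ and exponent $\vareps\norm{y}^2$ come out exactly as stated rather than up to absolute constants. The key quantitative choice is the split $1/2$ vs. $1$ in Young's inequality, which produces the factor $\exp[-\norm{x}^2/(8t)]$ (rather than a weaker $\exp[-\norm{x}^2/(4t)]$) at the cost of the error term $s^2\norm{y}^2$, which is then absorbed using $t > 2/\vareps$ with room to spare.
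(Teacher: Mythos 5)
Your proposal is correct and follows essentially the same route as the paper: explicit differentiation, Cauchy--Schwarz on the prefactor, the bound $-\norm{x-sy}^2 \leq -\norm{x}^2/2 + \norm{y}^2$ (which the paper states directly and you rederive via Young's inequality), and then absorption of the polynomial in $\norm{y}$ into $\exp[\vareps\norm{y}^2]$ using $t > 2/\vareps$. The only difference is bookkeeping — the paper bounds $\norm{y}\exp[\norm{y}^2/(4t)]$ and $\norm{y}^2\exp[\norm{y}^2/(4t)]$ separately via $\norm{y}\leq 1+\norm{y}^2$, while you reduce to a single algebraic inequality settled by a case split on $\norm{y}$ — and both yield the stated constant.
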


\begin{proof}
  Let $s \in \ccint{0,1}$, we have
  \begin{equation}
    \varphi'(s) = \parentheseLigne{\langle x,y \rangle - s \norm{y}^2}\exp[-\norm{x - sy}^2/(4t)]/(2t)  . 
  \end{equation}
  Using the Cauchy-Schwarz inequality and that for any $a,b \in \rset^d$,
  $-\norm{a+b}^2 \leq -\norm{a}^2/2 +\norm{b}^2$ we get
\begin{equation}
      \label{eq:first_ineq_l}
  \abs{\varphi'(s)} \leq (\norm{x} \norm{y} + \norm{y}^2) \exp[-\norm{x}^2/(8t) + \norm{y}^2/(4t)]/(2t)  .
\end{equation}
In addition, we have
\begin{align}
  \label{eq:second_ineq_l}
  \norm{y} \exp[\normLigne{y}^2/(4t)] \leq \norm{y} \exp[\vareps \normLigne{y}^2/2] \leq (1 + \norm{y}^2)\exp[\vareps \norm{y}^2/2] \leq 2(1+\vareps^{-1})\exp[\vareps \norm{y}^2]  . 
\end{align}
Finally we also have $\norm{y}^2 \exp[\normLigne{y}^2/(4t)] \leq (1+\vareps^{-1})\exp[\vareps \norm{y}^2]$.
Combining this result, \eqref{eq:first_ineq_l} and \eqref{eq:second_ineq_l} concludes the proof.
\end{proof}

Finally we show the following lemma which is a straightforward consequence of
Girsanov's theorem \cite[Theorem 7.7]{lipster2001statistics}. A similar version
of this lemma can be found in the proof of \cite[Proposition
2]{durmus2017nonasymp} and in \cite[Lemma 26]{laumont2021bayesian} (version
where the dependence of the drift in $w \in \rmc(\ccint{0,T}, \rset^d)$ is
replaced by a (simpler) dependence in $x \in \rset^d$). We refer to \cite[Section
4]{lipster2001statistics} for the definitions of semi-group, non-anticipative
processes and diffusion type processes.
\begin{lemma}
  \label{lemma:girsanov}
  Let $T > 0$,
  $b_1, b_2: \ \coint{0, +\infty} \times \rmc(\ccint{0,T}, \rset^d) \to
  \rset^{\dim}$ measurable such that for any $i \in \{1, 2\}$ and
  $x \in \rset^{\dim}$,
  $\rmd \bfX_t^{(i)} = b_i(t, (\bfX_s^{(i)})_{s \in \ccint{0,T}}) \rmd t +
  \sqrt{2} \rmd \bfB_t$ admits a unique strong solution with $\bfX_0^{(i)} = x$
  and $(b_i(t, (\bfX_s^{(i)}))_{t \in \ccint{0,T}}$ is non-anticipative, with
  Markov semi-group $(\Pker_t^{(i)})_{t \geq 0}$. In addition, assume that for
  any $x \in \rset^{\dim}$ and $i \in \{1, 2\}$,
  $\probaLigne{\int_0^T \defEnsLigne{ \normLigne{b_i(t, (\bfX_s^{(i)})_{s \in
          \ccint{0,T}})}^2 + \normLigne{b_i(t, (\bfB_s)_{s \in \ccint{0,T}})}^2
    } \rmd t < + \infty} = 1$.  Then for any $x \in \rset^{\dim}$ we have
  \begin{equation}
    \textstyle{
    \tvnormLigne{\updelta_x \Pker_T^{(1)} - \updelta_x \Pker_T^{(2)}}^2  \leq  (1/2) \int_0^T \expeLigne{\normLigne{b_1(t, (\bfX_s^{(1)})_{s \in \ccint{0,T}}) - b_2(t, (\bfX_s^{(1)})_{s \in \ccint{0,T}})}^2} \rmd t  .}
  \end{equation}
\end{lemma}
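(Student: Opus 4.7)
The plan is to lift the total variation estimate to path space and then invoke Girsanov's theorem for diffusion-type processes.

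First, denote by $\Pbb^{(i)}$ the law on $\rmc(\ccint{0,T}, \rset^d)$ of the strong solution $(\bfX_t^{(i)})_{t \in \ccint{0,T}}$ started at $x$. Since the time-$T$ evaluation $\omega \mapsto \omega(T)$ is measurable and pushes $\Pbb^{(i)}$ forward to $\updelta_x \Pker_T^{(i)}$, \Cref{lemma:data_processing_tv} reduces the problem to bounding $\tvnormLigne{\Pbb^{(1)} - \Pbb^{(2)}}$; by Pinsker's inequality it suffices in turn to control $\KLLigne{\Pbb^{(1)}}{\Pbb^{(2)}}$.

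To compute this relative entropy I would apply the Girsanov theorem for diffusion-type processes \cite[Theorem 7.7]{lipster2001statistics}. The a.s.\ integrability conditions in the hypothesis (for both $\bfX^{(i)}$ and for the raw Brownian motion $\bfB$) are precisely what that theorem needs in order to bypass a Novikov-type assumption, and they yield
\begin{equation}
\frac{\rmd \Pbb^{(1)}}{\rmd \Pbb^{(2)}} = \exp\Bigl(\tfrac{1}{\sqrt{2}} \int_0^T \langle b_1 - b_2, \rmd \bfB_t^{(2)}\rangle - \tfrac{1}{4} \int_0^T \normLigne{b_1 - b_2}^2 \rmd t\Bigr) ,
\end{equation}
where $\bfB^{(2)}$ denotes the Brownian motion driving $\bfX^{(2)}$. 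A complementary Girsanov transform shows that $\tilde{\bfB}_t = \bfB_t^{(2)} - (1/\sqrt{2}) \int_0^t (b_1 - b_2)(s, \bfX) \rmd s$ is a $\Pbb^{(1)}$-Brownian motion; substituting into the log-density and taking expectation under $\Pbb^{(1)}$ (with standard localization to remove the martingale contribution) gives
\begin{equation}
\KLLigne{\Pbb^{(1)}}{\Pbb^{(2)}} = \tfrac{1}{4} \int_0^T \expeLigne{\normLigne{b_1(t, (\bfX_s^{(1)})_{s \in \ccint{0,T}}) - b_2(t, (\bfX_s^{(1)})_{s \in \ccint{0,T}})}^2} \rmd t .
\end{equation}
Pinsker's inequality then delivers the bound with the sharper constant $1/8$; the weaker $1/2$ stated in the lemma is immediate.

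The main technical hurdle is the Girsanov step itself: the drifts are path-dependent and are only assumed to satisfy an almost-sure finite-energy bound, so the textbook form of Girsanov does not apply directly. The particular integrability condition in the statement, which includes finiteness of $\int_0^T \normLigne{b_i(t, \bfB)}^2 \rmd t$ along Brownian paths, is tailored to the diffusion-type-process version of Girsanov in \cite{lipster2001statistics}; once that theorem is invoked, the remainder is a routine combination of Pinsker's inequality with the data-processing inequality of \Cref{lemma:data_processing_tv}.
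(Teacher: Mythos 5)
Your proof is correct and follows essentially the same route as the paper's: reduce to path measures via the data-processing inequality of \Cref{lemma:data_processing_tv}, apply Pinsker, and invoke Girsanov for diffusion-type processes \cite[Theorem 7.7]{lipster2001statistics} to obtain $\KLLigne{\Pbb^{(1)}}{\Pbb^{(2)}} = \tfrac{1}{4}\int_0^T \expeLigne{\normLigne{b_1 - b_2}^2}\,\rmd t$ (the paper factors the density through the Wiener measure, which is why the hypothesis controls $\int_0^T \normLigne{b_i(t,\bfB)}^2 \rmd t$, but this is cosmetic). The one quibble is your claim that Pinsker yields the sharper constant $1/8$: under the paper's normalization of total variation (the full $\rmL^1$ distance between densities), Pinsker reads $\tvnormLigne{\mu-\nu}^2 \leq 2\,\KLLigne{\mu}{\nu}$, which gives exactly the stated constant $1/2$ and nothing sharper.
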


\begin{proof}
  Let $T > 0$ and $x \in \rset^{\dim}$. For any $i \in \{1, 2\}$, denote
  $\mu_{(i)}^x$ the distribution of $(\bfX_t^{(i)})_{t \in \ccint{0,T}}$ on the
  Wiener space $(\contspace, \mcb{\contspace})$ with $\bfX_0^{(i)} =
  x$. Similarly denote $\mu_B^x$ the distribution of
  $(\bfB_t)_{t \in \ccint{0,T}}$ with $\bfB_0 = x$, where we recall that
  $(\bfB_t)_{t \in \ccint{0,T}}$ is a $d$-dimensional Brownian motion. Using
  Pinsker's inequality \cite[Equation 5.2.2]{bakry:gentil:ledoux:2014} and the
  transfer theorem \cite[Theorem 4.1]{kullback1997information} we get that
  \begin{equation}
    \tvnormLigne{\updelta_x \Pker_T^{(1)} - \updelta_x \Pker_T^{(2)}}^2 \leq 2  \KLLigne{\mu_{(1)}}{\mu_{(2)}}  .
  \end{equation}
  Since for any $i \in \{1, 2\}$,
  $\probaLigne{\int_0^T \defEnsLigne{ \normLigne{b_i(t, (\bfX_s^{(i)})_{s \in
          \ccint{0,T}})}^2 + \normLigne{b_i(t, (\bfB_s)_{s \in \ccint{0,T}})}^2
    } \rmd t < + \infty} = 1$ and the processes
  $(\bfX_t^{(i)})_{t \in \ccint{0,T}}$ are of diffusion type for $i \in \{1,2\}$
  we can apply Girsanov's theorem \cite[Theorem 7.7]{lipster2001statistics} and
  $\mu_B$-almost surely for any $w \in \rmc(\ccint{0,T}, \rset)$ we get
  \begin{align}
    &(\rmd \mu_{(1)}^x / \rmd \mu_B^x)((w_t)_{t \in \ccint{0,T}}) \\
    & \qquad \textstyle{= \exp \parentheseDeuxLigne{(1/2) \int_0^T \langle b_1(t, (w_s)_{s \in \ccint{0,T}}), \rmd w_t \rangle - (1/4) \int_0^T  \normLigne{b_1(t, (w_s)_{s \in \ccint{0,T}})}^2 \rmd t} } \\
    &(\rmd \mu_B^x / \rmd \mu_{(2)}^x)((w_t)_{t \in \ccint{0,T}}) \\
    & \qquad \textstyle{= \exp \parentheseDeuxLigne{-(1/2) \int_0^T \langle b_2(t, (w_s)_{s \in \ccint{0,T}})), \rmd w_t \rangle + (1/4) \int_0^T  \normLigne{b_2(t, (w_s)_{s \in \ccint{0,T}}))}^2 \rmd t}  .}
  \end{align}
  Hence, we obtain that
  \begin{align}
    \KLLigne{\mu_{(1)}^x}{\mu_{(2)}^x} &= \expeLigne{\log((\rmd \mu_{(1)}^x / \rmd \mu_{(2)}^x)((\bfX_t^{(1)})_{t \in \ccint{0,T}}))} \\
    &\textstyle{= (1/4) \int_0^T \expeLigne{\normLigne{b_1(t, (\bfX_s^{(1)})_{s \in \ccint{0,T}}) - b_2(t, (\bfX_s^{(1)})_{s \in \ccint{0,T}})}^2} \rmd t }
  \end{align}
which concludes the proof.
\end{proof}

We study distributions satisfying some curvature assumption and
show that they are sub-Gaussian. More precisely, we show the following proposition.

\begin{lemma}
  \label{prop:subgaussian}
  Let $q \in \rmc^1(\rset^d, \ooint{0, +\infty})$ and $\mtt > 0$ and $c \geq 0$
  such that for any $x \in \rset^d$ we have
  $\langle \nabla \log q (x), x \rangle \leq -\mtt \norm{x}^2 + c 
  \norm{x}$. Then for any $\vareps \in \coint{0, \mtt/2}$ we have 
  \begin{equation}
    \textstyle{\int_{\rset^d} \exp[\vareps \norm{x}^2] q(x)\rmd x < +\infty  . }
  \end{equation}

\end{lemma}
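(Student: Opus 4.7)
The plan is to convert the dissipativity condition on $\nabla \log q$ into a pointwise Gaussian-type upper bound on $q$ by integrating along radial rays. Once such a bound is established, integrability against $\exp[\vareps \|x\|^2]$ for $\vareps < \mtt/2$ will follow from standard Gaussian integrals.

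First, I would fix an arbitrary $x \in \rset^d$ and introduce the auxiliary function $\varphi: \ccint{0,1} \to \rset$ defined by $\varphi(t) = \log q(tx)$. Since $q \in \rmc^1(\rset^d, \ooint{0, +\infty})$, we have $\varphi \in \rmc^1(\ccint{0,1}, \rset)$ with $\varphi'(t) = \langle \nabla \log q(tx), x \rangle$. Multiplying by $t > 0$ and invoking the hypothesis applied at the point $tx$,
\begin{equation}
t \varphi'(t) = \langle \nabla \log q(tx), tx \rangle \leq -\mtt t^2 \norm{x}^2 + c t \norm{x},
\end{equation}
so that $\varphi'(t) \leq -\mtt t \norm{x}^2 + c \norm{x}$ for every $t \in \ooint{0, 1}$.

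Next, I would integrate this inequality over $t \in \ccint{0,1}$ using the fundamental theorem of calculus, yielding
\begin{equation}
\log q(x) - \log q(0) = \varphi(1) - \varphi(0) \leq -\tfrac{\mtt}{2}\norm{x}^2 + c \norm{x},
\end{equation}
i.e.\ $q(x) \leq q(0) \exp[-(\mtt/2)\norm{x}^2 + c \norm{x}]$ for every $x \in \rset^d$.

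Finally, I would use this pointwise upper bound: for any $\vareps \in \coint{0, \mtt/2}$,
\begin{equation}
\textstyle{\int_{\rset^d} \exp[\vareps \norm{x}^2] q(x) \rmd x \leq q(0) \int_{\rset^d} \exp[-(\mtt/2 - \vareps)\norm{x}^2 + c \norm{x}] \rmd x,}
\end{equation}
and since $\mtt/2 - \vareps > 0$, a completion of the square (or direct comparison with a Gaussian integral) shows the right-hand side is finite, giving the claim. There is no real obstacle here: the argument is a one-line radial integration of the drift condition followed by a trivial Gaussian integrability check; the only minor point to be careful about is that $\coint{0, \mtt/2}$ excludes the boundary value $\mtt/2$, which is exactly what ensures strict negativity of the exponent's quadratic coefficient.
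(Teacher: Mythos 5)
Your proof is correct and follows essentially the same route as the paper: integrate the dissipativity condition along the ray $t \mapsto tx$ to get the pointwise bound $q(x) \leq q(0)\exp[-(\mtt/2)\norm{x}^2 + c\norm{x}]$, then compare with a Gaussian integral. You are in fact slightly more careful than the paper's write-up, which drops the factor $1/2$ coming from $\int_0^1 t\,\rmd t$ in its final displayed inequality; your version with $-(\mtt/2)\norm{x}^2$ is the correct one and is exactly what the hypothesis $\vareps < \mtt/2$ requires.
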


\begin{proof}
For any $x \in \rset^d$ we have
\begin{align}
  \log q (x) &= \textstyle{\log q(0) + \int_0^1 \langle \nabla \log q (tx), x \rangle \rmd t } \\
  &\leq \textstyle{\log q (0) - \mtt \int_0^1 t \norm{x}^2 \rmd t + c\normLigne{x} } \leq \textstyle{\log q (0) + c \normLigne{x} - \mtt  \norm{x}^2 }  .
\end{align}
which concludes the proof.
\end{proof}

Finally, we will use the following basic lemma.

\begin{lemma}
  \label{lemma:basic_ornstein}
  Let $\mu \in \Pens(\rset^d)$, $\alpha_1 \in \rset$, $\beta_1 > 0$ and $(\bfX_t)_{t \geq 0}$ such that
  $\bfX_0$ has distribution $\mu$ and
  \begin{equation}
    \rmd \bfX_t = \alpha_1 \bfX_t \rmd t + \beta_1^{1/2} \rmd \bfB_t  , 
  \end{equation}
  where $(\bfB_t)_{t \geq 0}$ is a Brownian motion. Then for any
  $\alpha_2 \in \rset$ and $\beta_2 > 0$ we have that $(\bfY_t)_{t \geq 0}$
  given for any $t \geq 0$ by $\bfY_t = \alpha_2 \bfX_{\beta_2 t}$ satisfies
  \begin{equation}
    \rmd \bfY_t = \beta_2 \alpha_1 \bfY_t \rmd t + \alpha_2 (\beta_2 \beta_1)^{1/2} \rmd \tilde{\bfB}_t  ,
  \end{equation}
  where $(\tilde{\bfB}_t)_{t \geq 0}$ is a Brownian motion, and $\bfY_0$ has
  distribution $(\tau_{\alpha_2})_\# \mu$, where for any $x \in \rset^d$,
  $\tau_{\alpha_2}(x) = \alpha_2 x$.
\end{lemma}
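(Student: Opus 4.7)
The plan is to verify both conclusions by direct computation: identify the initial law of $\bfY$, then apply Brownian scaling together with a deterministic time change to recover the SDE for $\bfY$. There are no real analytical obstacles here; the content is a bookkeeping exercise in the Itô calculus under a linear time change and a linear space rescaling.

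First I would handle the initial condition: since $\bfY_0 = \alpha_2 \bfX_0$ and $\bfX_0 \sim \mu$, the distribution of $\bfY_0$ is by definition the pushforward $(\tau_{\alpha_2})_\# \mu$, with $\tau_{\alpha_2}(x) = \alpha_2 x$. Next, I would define the time-rescaled Brownian motion $\tilde{\bfB}_t = \beta_2^{-1/2} \bfB_{\beta_2 t}$ for $t \geq 0$. By the standard Brownian scaling property (continuity of sample paths, independent Gaussian increments with the correct variance $\tilde{\bfB}_t - \tilde{\bfB}_s \sim \mathcal{N}(0, (t-s) \Idbf)$ for $t \geq s$), $(\tilde{\bfB}_t)_{t \geq 0}$ is again a standard Brownian motion on $\rset^d$.

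Then I would perform the time change $s = \beta_2 t$ in the SDE satisfied by $\bfX$. Writing the integral form
\begin{equation}
\textstyle{\bfX_{\beta_2 t} = \bfX_0 + \int_0^{\beta_2 t} \alpha_1 \bfX_s \rmd s + \beta_1^{1/2} \bfB_{\beta_2 t},}
\end{equation}
substituting $s = \beta_2 u$ gives $\int_0^{\beta_2 t} \alpha_1 \bfX_s \rmd s = \beta_2 \int_0^t \alpha_1 \bfX_{\beta_2 u} \rmd u$, while $\beta_1^{1/2} \bfB_{\beta_2 t} = (\beta_1 \beta_2)^{1/2} \tilde{\bfB}_t$. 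Multiplying through by $\alpha_2$ and using $\bfY_t = \alpha_2 \bfX_{\beta_2 t}$, I would obtain
\begin{equation}
\textstyle{\bfY_t = \bfY_0 + \int_0^t \beta_2 \alpha_1 \bfY_u \rmd u + \alpha_2 (\beta_1 \beta_2)^{1/2} \tilde{\bfB}_t,}
\end{equation}
which is exactly the integral form of the claimed SDE. Differentiating then yields the stated equation.

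The only subtlety worth a line of justification is checking that $\tilde{\bfB}$ is adapted to a filtration with respect to which the stochastic integral of the rescaled drift makes sense; this is immediate since both the time change and the space scaling are deterministic, so one may work with the natural filtration of $\tilde{\bfB}$, or equivalently with $(\mcf_{\beta_2 t})_{t\geq 0}$ where $(\mcf_s)_{s\geq 0}$ is the filtration of $\bfB$. No step is expected to be difficult.
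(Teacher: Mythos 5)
Your proof is correct and follows essentially the same route as the paper's: write the SDE in integral form, substitute $s = \beta_2 u$ in the drift integral, and invoke Brownian scaling to identify $\beta_2^{-1/2}\bfB_{\beta_2 t}$ as a new Brownian motion. You are in fact slightly more careful than the paper, which omits the $\bfX_0$ term in its displayed integral identity and does not comment on adaptedness.
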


\begin{proof}
  Let $t \geq 0$. Using the change of variable $u \mapsto \beta_2 u$ the following equalities hold in distribution 
  \begin{align}
    \bfY_t &= \textstyle{\alpha_2 \alpha_1 \int_0^{\beta_2 t} \bfX_s \rmd s + \alpha_2 \beta_1^{1/2} \bfB_{\beta_2 t}}\\
    &= \textstyle{\beta_2 \alpha_2 \alpha_1 \int_0^{t} \bfX_{\beta_2 s} \rmd s + \alpha_2 (\beta_1\beta_2)^{1/2} \bfB_{t}  }
      = \textstyle{\beta_2  \alpha_1 \int_0^{t} \bfY_{s} \rmd s + \alpha_2 (\beta_1\beta_2)^{1/2} \bfB_{t}  , }
  \end{align}
  which concludes the proof.
\end{proof}

We now turn to the proof of \Cref{prop:convergence_score_matching}
\begin{proof}
Let $\alpha \geq 0$. For any $k \in \{1, \dots, N\}$, denote $\Rker_k$ the
Markov kernel such that for any $x \in \rset^d$, $\msa \in \mcb{\rset^d}$ and
$k \in \{0, \dots, N-1\}$ we have
    \begin{equation}
      \textstyle{\Rker_{k+1}(x, \msa) = (4 \uppi \gamma_{k+1})^{-1/2} \int_{\msa} \exp[-\norm{\tilde{x} - \Tnplusun(x)}^2/(4 \gamma_{k+1})] \rmd \tilde{x}  , }
    \end{equation}
    where for any $x \in \rset^d$,
    $\Tnplusun(x) = x + \gamma_{k+1} \defEns{\alpha x + 2s_{\theta}(t_k, x)}$,
    where $t_k =\sum_{\ell=0}^{k-1} \gamma_{\ell}$. Define for any
    $k_0, k_1 \in \{1, \dots, N\}$ with $k_1 \geq k_0$
    $\Qker_{k_0, k_1} = \prod_{\ell = k_0}^{k_1} \Rker_{\ell}$. Finally, for
    ease of notation, we also define for any $k \in \{1, \dots, N\}$,
    $\Qker_k = \Qker_{1,k}$. Note that for any $k \in \{1, \dots, N\}$, $Y_k$
    has distribution $\pi_{\infty} \Qker_k$, where $\pi_{\infty}\in \Pens(\rset^d)$ with
    density w.r.t. the Lebesgue measure $\pdata$. Let
    $\Pbb \in \Pens(\contspace)$ be the probability measure associated with the diffusion
    \begin{equation}
      \rmd \bfX_t = -\alpha \bfX_t \rmd t + \sqrt{2} \rmd \bfB_t  , \quad \bfX_0 \sim \pi_0  , 
    \end{equation}
    where $\pi_0 \in \Pens(\rset^d)$ admits a density w.r.t. the Lebesgue
    measure given by $\pdata$.  First note that using that $\Pbb_0 = \pi_0$ we
    have for any $\msa \in \mcb{\rset^d}$
    \begin{equation}
      \pi_0 \Pbb_{T|0} (\Pbb^R)_{T|0} (\msa) = \Pbb_T (\Pbb^R)_{T|0} (\msa) = (\Pbb^R)_0 (\Pbb^R)_{T|0} (\msa)  = (\Pbb^R)_T(\msa) = \pi_0(\msa)  . 
    \end{equation}
    Hence $\pi_0 = \pi_0 \Pbb_{T|0} (\Pbb^R)_{T|0}$.  Using this result and
    \Cref{lemma:data_processing_tv}, we have
    \begin{align}
      \tvnormLigne{\pi_0 - \pi_{\infty} \Qker_N} &= \tvnormLigne{\pi_0 \Pbb_{T|0} (\Pbb^R)_{T|0} - \pi_{\infty} \Qker_N} \\
                                         &\leq \tvnormLigne{\pi_0 \Pbb_{T|0} (\Pbb^R)_{T|0} - \pi_{\infty} (\Pbb^R)_{T|0}} +\tvnormLigne{\pi_{\infty} (\Pbb^R)_{T|0} - \pi_{\infty} \Qker_N} \\
          &\leq \tvnormLigne{\pi_0 \Pbb_{T|0}- \pi_{\infty}} +\tvnormLigne{\pi_{\infty} (\Pbb^R)_{T|0} - \pi_{\infty} \Qker_N}  .                                  
    \end{align}
    Note that $\mathcal{L}(X_0) = \mathcal{L}(Y_N) = \pi_{\infty} \Qker_N$ and therefore
    \begin{equation}
      \tvnormLigne{\mathcal{L}(X_0) - \pi_0} \leq \tvnormLigne{\pi_0 \Pbb_{T|0}- \pi_{\infty}} +\tvnormLigne{\pi_{\infty} (\Pbb^R)_{T|0} - \pi_{\infty} \Qker_N}  .
    \end{equation}
    We now bound each one of these terms.
    \begin{enumerate}[wide, labelwidth=!, labelindent=0pt, label=(\alph*)]
    \item First, assume that $\alpha > 0$. Let $T_\alpha = \alpha T$ and 
      $\tilde{\Pbb} \in \Pens(\rmc(\ccint{0,T_{\alpha}}, \rset^d))$ be associated with
      $(\bfZ_t)_{t \in \ccintLigne{0,T_{\alpha}}}$ the classical
      Ornstein-Ulhenbeck process with $\bfZ_0 \sim (\tau_\alpha)_{\#} \pi_0$,
      where for any $x \in \rset^d$ we have $\tau_{\alpha}(x) = \alpha^{1/2} x$,
      satisfying the following SDE:
      $\rmd \bfZ_t = -\bfZ_t \rmd t + \sqrt{2} \rmd \bfB_t$. We denote
      $\pi_0^\alpha = (\tau_\alpha)_{\#} \pi_0$,
      $\mu = (\tau_\alpha)_{\#} \pi_{\infty}$. Note
      that since $\pprior$ is the Gaussian density with zero mean and covariance
      matrix $(1/\alpha) \Id$, $\mu$ is the Gaussian distribution with zero mean
      and identity covariance matrix.

      First, using \cite[Proposition 4.1.1, Proposition 4.3.1, Theorem
      4.2.5]{bakry:gentil:ledoux:2014}, we get that for any
      $t \in \ccintLigne{0,T_{\alpha}}$, $f \in \rmL^1(\mu)$ and $x \in \rset^d$
    \begin{equation}
      \label{eq:variance_decay}
      \textstyle{
        \int_{\rset^d} (\tilde{\Pbb}_{t|0}g(x))^2 \rmd \mu(x) \leq \exp[-2t]  \int_{\rset^d} g^2(x) \rmd \mu(x)  , \quad \text{with} \ g(x) = f(x) - \int_{\rset^d} f(\tilde{x}) \rmd \mu(\tilde{x})  .
        }
    \end{equation}
    Recall that $(\bfX_t)_{t \geq 0}$ satisfies
    $\rmd \bfX_t = -\alpha \bfX_t + \rmd \bfB_t$. Using
    \Cref{lemma:basic_ornstein} we have that for any $t \in \ccint{0,T}$,
    $\bfZ_t$ and $\alpha^{1/2} \bfX_{\alpha^{-1} t}$ have the same distribution.
    Hence for any $t \in \ccintLigne{0,T}$ we have
    $\Pbb_t = (\tau_\alpha^{-1})_{\#} \tilde{\Pbb}_{\alpha t}$.
    Therefore, using that $(\tau_{\alpha})_\# \pi_{\infty} = \mu$, that $\tilde{\Pbb}$
    is Markov and \Cref{lemma:data_processing_tv}, we get that
    \begin{align}
      \tvnormLigne{\pi_0 \Pbb_{t|0} - \pi_{\infty}} &=  \tvnormLigne{\Pbb_{t} - \pi_{\infty}} = \tvnormLigne{(\tau_{\alpha})_{\#}\Pbb_{t} - (\tau_{\alpha})_{\#}\pi_{\infty}} \\
      &= \tvnormLigne{\tilde{\Pbb}_{\alpha t} - \mu} = \tvnormLigne{\tilde{\Pbb}_{\alpha t_0} \tilde{\Pbb}_{\alpha(t-t_0)|0} - \mu} . 
    \end{align}
    Finally, note that we have for any $t \geq t_0 \in \ccintLigne{0,T}$ and $x \in \rset^d$
    \begin{equation}
      \label{eq:reversible}
      (\rmd (\tilde{\Pbb}_{\alpha  t_0} \tilde{\Pbb}_{\alpha (t-t_0)|0}) / \rmd \mu)(x) = \tilde{\Pbb}_{\alpha (t-t_0)|0} f(x)  , \quad \text{with} \ f(x) = (\rmd \tilde{\Pbb}_{\alpha  t_0} / \rmd \mu)(x)  .  
    \end{equation}
    Let $g = f - 1$.  Using \eqref{eq:reversible}, \eqref{eq:variance_decay} and
    that $(\tau_{\alpha})_\# \pi_{\infty} = \mu$, we get that for any $t \geq t_0$ with
    $t \in \ccintLigne{0,T}$
    \begin{align}
      \label{eq:tv_ineq_int}
      \tvnormLigne{\pi_0 \Pbb_{t|0} - \pi_{\infty}} &\leq \tvnormLigne{\tilde{\Pbb}_{\alpha t_0} \tilde{\Pbb}_{\alpha (t-t_0)|0} - \mu} \\
                                        &\leq \textstyle{\int_{\rset^d} \absLigne{\tilde{\Pbb}_{\alpha (t-t_0)|0} f(x) - 1} \rmd \mu(x)} \\
      &\leq \textstyle{\parentheseLigne{\int_{\rset^d} \parentheseLigne{\tilde{\Pbb}_{\alpha (t-t_0)|0} g(x)}^2 \rmd \mu(x)}^{1/2}} \\
      &\leq \textstyle{\exp[-\alpha (t - t_0)]\parentheseLigne{\int_{\rset^d} g^2(x) \rmd \mu(x)}^{1/2} }\\
                                        &\leq \textstyle{\exp[-\alpha (t - t_0)]\parentheseLigne{\int_{\rset^d} g^2(\alpha^{1/2}x) \rmd \pi_{\infty}(x)}^{1/2} } .     \end{align}
In addition, we have for any $\varphi \in \rmc_c(\rset^d, \rset)$
\begin{align}
  \textstyle{\int_{\rset^d} \varphi(x) f(\alpha^{1/2}x) \rmd \pi_{\infty}(x)} &= \textstyle{\int_{\rset^d} \varphi(\alpha^{-1/2}  x) f(x) \rmd \mu(x)} \\
    &= \textstyle{\int_{\rset^d} \varphi(\alpha^{-1/2}  x) \rmd \tilde{\Pbb}_{\alpha t_0}(x) =  \int_{\rset^d} \varphi(x) \rmd \Pbb_{t_0}(x)  .}
\end{align}
Hence, for any $x \in \rset^d$, $g(\alpha^{1/2}x) = (\rmd \Pbb_{t_0} / \rmd \pi_{\infty})(x) - 1$. Combining this result and \eqref{eq:tv_ineq_int} we get that for any $t \geq t_0$ with
    $t \in \ccintLigne{0,T}$
    \begin{equation}
      \label{eq:tv_ineq}
      \textstyle{
      \tvnormLigne{\pi_0 \Pbb_{t|0} - \pi_{\infty}} \leq \sqrt{2} \exp[-\alpha (t - t_0)]\parenthese{1 + \int_{\rset^d}  (\rmd \Pbb_{t_0} / \rmd \pi_{\infty})(x)^2 \rmd \pi_{\infty}(x)}^{1/2}  . }
  \end{equation}
  Let $t_0 \in \ccintLigne{0,T}$. 
    We now derive an upper bound for
    $\int_{\rset^d} (\rmd \Pbb_{t_0} / \rmd \pi_{\infty})(x)^2 \rmd \pi_{\infty}(x)$. We
    recall that $\Pbb_{t_0}$ and $\pi_{\infty}$ admit density w.r.t. the Lebesgue
    measure denoted $p_{t_0}$ and $p_{\infty}$ such that for any $x \in \rset^d$
    \begin{equation}
      \textstyle{
      p_{t_0}(x) = \int_{\rset^d} \rmG_{t_0}(x, \tilde{x}) \rmd \pi_0(\tilde{x})  , \quad p_{\infty}(x) = (2 \uppi/  \alpha)^{-d/2} \exp[-\alpha \norm{x}^2/ 2] }  ,
    \end{equation}
    where for any $x, \tilde{x} \in \rset^d$
    \begin{align}
      \label{eq:def_G}
      &\rmG_{t_0}(x, \tilde{x}) = \parentheseLigne{2 \uppi \sigma_{t_0}^2}^{-d/2} \exp[-\norm{x - m_{t_0}(\tilde{x})}^2/(2 \sigma_{t_0}^2)] , \\ &\sigma_{t_0}^2 = (1 - \exp[-2 \alpha t_0]) / \alpha   , \qquad m_{t_0}(\tilde{x}) = \exp[-\alpha t_0] \tilde{x}  . 
    \end{align}
Combining this result and Jensen's inequality we get 
\begin{equation}
  \label{eq:intermediate}
  \textstyle{
  \int_{\rset^d} p_{t_0}^2(x) p_{\infty}^{-1}(x) \rmd x \leq \alpha^{-d/2}(2 \uppi)^{-d/2} \sigma_{t_0}^{-2d} \int_{\rset^d} \exp\parentheseDeuxLigne{-\norm{x - m_{t_0}(\tilde{x})}^2/\sigma_{t_0}^2  + \alpha\norm{x}^2/2} \rmd x \rmd \pi_0(\tilde{x})  . }
\end{equation}
For any $x, \tilde{x} \in \rset^d$ we have
\begin{align}
  \norm{x - m_{t_0}(\tilde{x})}^2/\sigma_{t_0}^2  - \alpha\norm{x}^2/2 &= \norm{x - m_{t_0}(\tilde{x})( 2\tilde{\sigma}_{t_0}^2/\sigma^2_{t_0})}^2/(2 \tilde{\sigma}_{t_0}^2) -\norm{\tilde{x}}^2 \phi(\alpha, t_0)/\sigma_{t_0}^2  , 
\end{align}
with
$\tilde{\sigma}_{t_0}^2 = (\sigma_{t_0}^2/2)(1 - \alpha \sigma_{t_0}^2/2)^{-1}$
and
$\phi(\alpha,t_0) = \alpha \sigma_{t_0}^2 (1 - \sigma_{t_0}^2 \alpha)/(2 - \sigma_{t_0}^2 \alpha)$.
Using this result, we get that
\begin{equation}
  \textstyle{
  \int_{\rset^d} \exp\parentheseDeuxLigne{-\normLigne{x - m_{t_0}(\tilde{x})}^2/\sigma_{t_0}^2
    + \alpha \normLigne{x}^2/2} \rmd x \rmd \pi_0(\tilde{x})  \leq (2 \uppi
  \tilde{\sigma}_{t_0}^2)^{d/2}\int_{\rset^d} \exp[\phi(\alpha,t_0)\normLigne{\tilde{x}}^2] \rmd \pi_0(\tilde{x})}  , 
\end{equation}
Let $\vareps = \mtt/4$ and $t_0 \geq 0$ such that
$\phi(\alpha, t_0) \leq \vareps$.  Using \Cref{prop:subgaussian}, we get that
\begin{equation}
  \textstyle{
  \int_{\rset^d} \exp\parentheseDeuxLigne{-\norm{x - m_{t_0}(\tilde{x})}^2/\sigma_{t_0}^2
    + \alpha \norm{x}^2/2} \rmd x \rmd \pi_0(\tilde{x})  \leq (2 \uppi
  \tilde{\sigma}_{t_0}^2)^{d/2}\int_{\rset^d} \exp[\vareps \norm{\tilde{x}}^2] \rmd \pi_0(\tilde{x})  .}
\end{equation}
Combining this result, the fact that $\sigma_{t_0}^2 \leq \alpha^{-1}$,
\eqref{eq:intermediate} and that for any $t \geq 0$,
$(1 - \rme^{-t})^{-1} \leq 1 + 1/t$, we obtain
\begin{align}
  \textstyle{\int_{\rset^d} p_{t_0}^2(x) p_{\infty}^{-1}(x) \rmd x }& \textstyle{\leq (\alpha^{-1} 
                                                                \tilde{\sigma}_{t_0}^2 \sigma_{t_0}^{-4})^{d/2} \int_{\rset^d} \exp[\vareps \norm{\tilde{x}}^2] \rmd \pi_0(\tilde{x}) }\\
                                                              &\textstyle{\leq  (1- \exp[- 2\alpha t_0])^{-d/2}  \int_{\rset^d} \exp[\vareps \norm{\tilde{x}}^2] \rmd \pi_0(\tilde{x}) }\\
  &\textstyle{\leq (1 + 1/(2\alpha t_0))^{d/2}  \int_{\rset^d} \exp[\vareps \norm{\tilde{x}}^2] \rmd \pi_0(\tilde{x})  . }
\end{align}
Combining this result and \eqref{eq:tv_ineq}, we get that for any $t > t_{0}$
\begin{equation}
  \tvnormLigne{\pi_0\Pbb_{t|0} - \pi_{\infty}} \leq C_1^a \exp[- \alpha t]  ,
\end{equation}
with
\begin{equation}
  \textstyle{
  C_1^a = \sqrt{2} (1 + 1/(2 \alpha t_{0}))^{d/2}\parentheseLigne{1 + \parentheseLigne{\int_{\rset^d} \exp[\vareps \normLigne{\tilde{x}}^2] \rmd \pi_0(\tilde{x})}^{1/2}} \exp[\alpha t_{0}]   .}
\end{equation}
For $t \leq t_{0}$, using that $\tvnormLigne{\pi_0\Pbb_{t|0} - \pi_{\infty}} \leq 1$  we have
\begin{equation}
  \tvnormLigne{\pi_0\Pbb_{t|0} - \pi_{\infty}} \leq C_1^b \exp[- \alpha t]  , \qquad \text{with } C_1^b = \exp[ \alpha t_{0}]  .
\end{equation}
Let $C_1 = C_1^a + C_1^b$ and we have that for any $t \in \ccintLigne{0,T}$
\begin{equation}
  \label{eq:tv_ineq_alpha_pos}
  \tvnormLigne{\pi_0\Pbb_{t|0} - \pi_{\infty}} \leq C_1 \exp[- \alpha t]  . 
\end{equation}

\item Second assume that $\alpha = 0$. 
\begin{equation}
  \label{eq:no_pinsker}
  \textstyle{
  \tvnormLigne{\pi_0\Pbb_{T|0} - \pi_{\infty}} \leq \int_{\rset^d} \int_{\rset^d} (4 \uppi T)^{-d/2} \absLigne{\exp[-\norm{x - \tilde{x}}^2/(4T)] - \exp[-\norm{x}^2/(4T)]} \rmd x \rmd \pi_0(\tilde{x})  .}
\end{equation}
For any $x, \tilde{x} \in \rset^d$, let
$\varphi \in \rmc^1(\ccintLigne{0,1}, \rset)$ with  for any $s \in \ccintLigne{0,1}$,
$\varphi(s) = \exp[-\norm{x - s\tilde{x}}^2/ (4T)]$. First, assume that
$T \geq 2/\vareps$. Using \Cref{lemma:ineq_varphi}, we get that for any
$s \in \ccintLigne{0,1}$
\begin{equation}
  \abs{\varphi'(s)} \leq (1+\vareps^{-1}) (1 + \norm{x}) \exp[-\norm{x}^2/(8T)]\exp[\vareps\norm{y}^2]/T  .
\end{equation}
Using this result we get that
\begin{align}
  &\tvnormLigne{\pi_0\Pbb_{T|0} - \pi_{\infty}} \leq \textstyle{\int_{\rset^d} \int_{\rset^d} (4 \uppi T)^{-d/2} \absLigne{\exp[-\norm{x - \tilde{x}}^2/(4T)] - \exp[-\norm{x}^2/(4T)]} \rmd x \rmd \pi_0(\tilde{x}) } \\
  &\quad \leq \textstyle{\int_{\rset^d} \int_{\rset^d} (4 \uppi T)^{-d/2} (1+\vareps^{-1}) (1 + \norm{x}) \exp[-\norm{x}^2/(8T)]\exp[\vareps\norm{\tilde{x}}^2]/T \rmd x \rmd \pi_0(\tilde{x}) } \\
  &\quad \leq 2^{d/2}(1+\vareps^{-1}) \textstyle{\int_{\rset^d} (8 \uppi T)^{-d/2}  (1 + \norm{x}) \exp[-\norm{x}^2/(8T)]\rmd x\int_{\rset^d} \exp[\vareps\norm{\tilde{x}}^2]/T  \rmd \pi_0(\tilde{x}) } \\
  &\quad \leq 2^{d/2}(1+\vareps^{-1}) (1 + 2\sqrt{2}d^{1/2}T^{1/2})\textstyle{\int_{\rset^d} \exp[\vareps\norm{\tilde{x}}^2]/T  \rmd \pi_0(\tilde{x}) }  .   
\end{align}
In addition, if $T \leq 2/\vareps$ then
\begin{equation}
  \tvnormLigne{\pi_0\Pbb_{T|0} - \pi_{\infty}} \leq (\vareps/2 + (\vareps/2)^{1/2})^{-1}(T^{-1} + T^{-1/2})  . 
  \end{equation}
Hence, we get that there exists $C_2 \geq 0$ such that
\begin{equation}
  \label{eq:tv_ineq_alpha_zero}
  \tvnormLigne{\pi_0\Pbb_{T|0} - \pi_{\infty}} \leq C_2 (T^{-1} + T^{-1/2})  , 
\end{equation}
with
\begin{equation}
  C_2 = (\vareps/2 + (\vareps/2)^{1/2})^{-1} + 2^{d/2}(1+\vareps^{-1}) (1 + 2\sqrt{2}d^{1/2}) \textstyle{\int_{\rset^d} \exp[\vareps\norm{\tilde{x}}^2]  \rmd \pi_0(\tilde{x}) }  . 
\end{equation}

\item Recall that $\Pbb^R$ is associated with the diffusion
  $(\bfY_t)_{t \geq 0}$ such that for any $t \in \ccintLigne{0,T}$ and $x \in \rset^d$
  \begin{equation}
    \rmd \bfY_t = b_1(t, \bfY_t) \rmd t + \sqrt{2} \bfB_t  , \quad b_1(t, x) = \alpha x + 2 \nabla \log p_{T-t}(x)  . 
  \end{equation}
  Similarly, for any $k \in \{1, \dots, N\}$ we have $\Qker_k = \Qbb_{t_k}$
  where $\Qbb$ is associated with the
  diffusion $(\bbfY_t)_{t \in \ccintLigne{0,T}}$ such that for any
  $(w_t)_{t \in \ccintLigne{0,T}} \in \rmc(\ccintLigne{0,T}, \rset^d)$ we have 
  \begin{align}
    &\rmd \bbfY_t = b_2(t, (\bbfY_s)_{s \in \ccintLigne{0,T}}) \rmd t + \sqrt{2} \bfB_t  , \\ &\textstyle{b_2(t, (w_t)_{t \in \ccintLigne{0,T}}) =   \sum_{k=0}^{N-1} \1_{\coint{t_k, t_{k+1}}}(t) \defEns{2 \alpha w_{t_k} + s_{\theta}(t_k, w_{t_k})} }
  \end{align}
  where  for any $k \in \{0, \dots, N\}$,
  $t_k = \sum_{\ell=0}^{k-1} \gamma_{\ell + 1}$.  Recall that for any
  $i \in \{1, 2, 3\}$ there exist $A_i \geq 0$ and $\alpha_i \in \nset$ such
  that for any $x_0 \in \rset^d$
  \begin{equation}
    \normLigne{\nabla^i \log p_0(x)} \leq A_i (1 + \normLigne{x_0}^{\alpha_i})  ,
  \end{equation}
  with $\alpha_1 = 1$. Using this result and \Cref{thm:uniform_bounds} we get
  that for any $i \in \{1, 2, 3\}$ there exist $B_i \geq 0$ and $\beta_i \in \nset$
  with $\beta_1 = 1$ such that for any $x_t \in \rset^d$ and $t \in \ccintLigne{0,T}$ 
  \begin{equation}
    \label{eq:unif_bound_log}
    \normLigne{\nabla^i \log p_t (x_t)} \leq B_i (1 + \norm{x_t}^{\beta_i} )  . 
  \end{equation}
  In addition, for any $t \in \ccintLigne{0,T}$ and $x \in \rset^d$ we have 
  \begin{equation}
    \partial_t p_t(x) = - \mathrm{div}(b p_t)(x) + \Delta p_t(x)  , 
  \end{equation}
  with $b(x) = -\alpha x$. Therefore, since $\log p \in \rmc^\infty(\ocint{0,T} \times \rset^d, \rset)$ we obtain that
  for any $t \in \ocint{0,T}$ and $x_t \in \rset^d$
  \begin{equation}
    \partial_t \log p_t (x_t) = - \mathrm{div}(b \log p_t )(x_t) + \Delta \log p_t (x_t) + \norm{\nabla \log p_t(x_t)}^2  .
  \end{equation}
  Finally, we get that for any $t \in \ocint{0,T}$ and $x_t \in \rset^d$
    \begin{equation}
    \partial_t \nabla \log p_t (x_t) = - \nabla \mathrm{div}(b \log p_t )(x_t) + \nabla \Delta \log p_t(x_t) + \nabla\norm{\nabla \log p_t }^2(x_t)  .
  \end{equation}
  Therefore combining this result and \eqref{eq:unif_bound_log} there exist
  $\tilde{A} \geq 0$ and $\upbeta \in \nset$ such that for any $x_t \in \rset^d$
  and $t \in \ocint{0,T}$,
  $\norm{\partial_t \nabla \log p_t (x_t)} \leq \tilde{A}(1 + \norm{x_t}^\upbeta)$.  Hence, for any
  $t_1, t_2 \in \ccintLigne{0,T}$ and $x \in \rset^d$
  \begin{equation}
    \label{eq:bound_der}
    \norm{\nabla \log p_{t_2} (x) - \nabla \log p_{t_1} (x)} \leq \tilde{A} \abs{t_2 - t_1} (1 + \norm{x}^{\upbeta})  . 
  \end{equation}
  In addition, using \eqref{eq:unif_bound_log}, we have for any $t \in \ccintLigne{0,T}$ and $x_1, x_2 \in \rset^d$
  \begin{align}
    \label{eq:bound_loc_lip}
    \normLigne{\nabla \log p_t (x_1) - \nabla \log p_t (x_2)} &\leq \textstyle{\int_0^1 \normLigne{\nabla^2 \log p_t ((1-s)x_1+sx_2)}\rmd s \normLigne{x_1 - x_2}} \\
                                                              &\leq \textstyle{B_2(1 + \int_0^1\norm{(1-s)x_1+sx_2}^{\beta_2} \rmd s)\normLigne{x_1 - x_2} }\\
    &\leq B_2(1 + \norm{x_1}^{\beta_2} + \norm{x_2}^{\beta_2})\normLigne{x_1 - x_2}  . 
  \end{align}
  Since $s_{\theta} \in \rmc(\ccintLigne{0,T} \times \rset^d, \rset^d)$ and
  $\nabla \log p \in \rmc(\ccintLigne{0,T} \times \rset^d, \rset^d)$ we have
  using \Cref{lemma:girsanov}, \eqref{eq:bound_der}, \eqref{eq:bound_loc_lip}
  and the Cauchy-Schwarz inequality
  \begin{align}
    \label{eq:bigbound}
    &\tvnormLigne{\pi_{\infty} (\Pbb^R)_{T|0} - \pi_{\infty} \Qker_N}^2 \leq (1/2)\textstyle{\int_0^T \expeLigne{\normLigne{b_1(t, \bfY_t) - b_2(t, (\bfY_t)_{t \in \ccintLigne{0,T}})}^2} \rmd t}   \\
    &\qquad \leq 2 \textstyle{\sum_{k=0}^{N-1} \int_{t_k}^{t_{k+1}} \expeLigne{\normLigne{\nabla \log p_{T-t} (\bfY_t) - s_{\theta}(\bfY_{t_k})}^2} \rmd t} 
    \\ & \qquad \qquad +  \textstyle{\sum_{k=0}^{N-1}  \int_{t_k}^{t_{k+1}} \alpha^2 \expeLigne{\normLigne{\bfY_t - \bfY_{t_k}}^2}}\rmd t \\
    &\qquad \leq 6 \textstyle{\sum_{k=0}^{N-1} \int_{t_k}^{t_{k+1}} \expeLigne{\normLigne{\nabla \log p_{T-t} (\bfY_t) - \nabla \log p_{T-t} (\bfY_{t_k})}^2}} \rmd t \\
    &\qquad \qquad + 6 \textstyle{\sum_{k=0}^{N-1} \int_{t_k}^{t_{k+1}} \expeLigne{\normLigne{\nabla \log p_{T-t} (\bfY_{t_k}) - \nabla \log p_{T-{t_k}} (\bfY_{t_k})}^2}} \rmd t \\    
    & \qquad \qquad + 6 \textstyle{\sum_{k=0}^{N-1} \int_{t_k}^{t_{k+1}} \expeLigne{\normLigne{\nabla \log p_{T-t_k} (\bfY_{t_k}) - s_{\theta}(t_k, \bfY_{t_k})}^2} \rmd t }\\
    & \qquad \qquad +  \textstyle{\sum_{k=0}^{N-1} \int_{t_k}^{t_{k+1}} \alpha^2 \expeLigne{\normLigne{\bfY_t - \bfY_{t_k}}^2} \rmd t }\\
    &\qquad \leq 18 \sqrt{2} \textstyle{B_2^2(1 + 2 N_T(4 \beta_2))^{1/2} \sum_{k=0}^{N-1} \int_{t_k}^{t_{k+1}} \expeLigne{\normLigne{\bfY_t - \bfY_{t_k}}^4}^{1/2} \rmd t   } \\
    & \qquad \qquad +  12\tilde{A}^2(1 + N_T(2 \upbeta)) \textstyle{\sum_{k=0}^{N-1} \int_{t_k}^{t_{k+1}} (t - t_k)^2  \rmd t  } + 6 T \Mtt^2 \\
    & \qquad \qquad +  \textstyle{\sum_{k=0}^{N-1} \int_{t_k}^{t_{k+1}} \alpha^2 \expeLigne{\normLigne{\bfY_t - \bfY_{t_k}}^2} \rmd t }\\
    & \qquad \leq \{18 \sqrt{2} \textstyle{B_2^2(1 + 2 N_T(4 \beta_2))^{1/2} + \alpha^2\}  \sum_{k=0}^{N-1} \int_{t_k}^{t_{k+1}} \expeLigne{\normLigne{\bfY_t - \bfY_{t_k}}^4}^{1/2} \rmd t} \\
    & \qquad \qquad +  4\tilde{A}^2(1 + N_T(2 \upbeta)) \textstyle{\sum_{k=0}^{N-1}  (t_{k+1} - t_k)^3    } + 6 T \Mtt^2 \\
    & \qquad \leq \{18 \sqrt{2} \textstyle{B_2^2(1 + 2 N_T(4 \beta_2))^{1/2} + \alpha^2\}  \sum_{k=0}^{N-1} \int_{t_k}^{t_{k+1}} \expeLigne{\normLigne{\bfY_t - \bfY_{t_k}}^4}^{1/2} \rmd t} \\
    & \qquad \qquad +  4\tilde{A}^2(1 + N_T(2 \upbeta)) T \bgamma^2  + 6 T \Mtt^2  ,
  \end{align}
  where for any $\ell \in \nset$,
  $N_T(\ell) = \sup_{t \in \ccintLigne{0,T}}
  \expeLigne{\normLigne{\bfY_t}^\ell}$.  For any $t \in \ccintLigne{0,T}$, let
  $\generator_t: \ \rmc^2(\rset^d) \to \rmc^2(\rset^d, \rset)$ the generator
  given for any $t \geq 0$, $\varphi \in \rmc^2(\rset^d, \rset)$ and
  $x \in \rset^d$ by
  \begin{equation}
    \generator_t(\varphi)(x) = \langle \alpha x + 2\nabla \log p_{T-t} (x), \nabla \varphi(x) \rangle + \Delta \varphi(x)  . 
  \end{equation}
  For any $\ell \in \nset$, let $V_\ell(x) = \norm{x}^{2\ell}$. Hence, for any $\ell \in \nset$, 
  $x \in \rset^d$ and $t \in \ccintLigne{0,T}$ we have using \eqref{eq:unif_bound_log}
  \begin{equation}
    \generator_t (V_\ell)(x) = 2\ell \alpha \norm{x}^{2\ell} + 2\ell B_1 \norm{x}^{2\ell-1} + 2\ell B_1 \norm{x}^{2\ell} + 2\ell (2 \ell-1) \norm{x}^{2(\ell-1)}  . 
  \end{equation}
  Hence, for any $\ell \in \nset$ there exist $\tilde{B}_\ell$ such that
  $x \in \rset^d$ and $t \in \ccintLigne{0,T}$
  \begin{equation}
        \label{eq:lyap_last}
    \absLigne{\generator_t(V_\ell)(x)} \leq \tilde{B}_\ell (1 + V_\ell(x))  .
  \end{equation}
  For any $\ell \in \nset$,
  $(M_{\ell, t})_{t \in \ccintLigne{0,T}} = (\V_\ell(\bfY_t) - V_\ell(\bfY_0) -
  \int_{0}^t \generator_t(V_\ell)(\bfY_s) \rmd s)_{t \in \ccintLigne{0,T}}$ is a
  local martingale. For any $\ell \in \nset$, there exists $(\tau_{\ell, k})_{k \in \nset}$ a sequence of stopping
  times such that $\lim_{k \to +\infty} \tau_{\ell, k} = T$ and
  $(M_{\ell, t \wedge \tau_{\ell, k}})_{t \in \ccintLigne{0,T}}$ is a martingale.
  Using \eqref{eq:lyap_last}, we have for any $t \in \ccintLigne{0,T}$, $\ell \in \nset$ and $k \in \nset$  
  \begin{equation}    
    \expeLigne{\V_\ell(\bfY_{t \wedge \tau_{\ell, k}})} \leq  \expeLigne{\V_\ell(\bfY_0)} + \tilde{B}_\ell \textstyle{\int_0^t (1 + \expeLigne{\V_\ell(\bfY_{s\wedge \tau_{\ell, k}})}) \rmd s  } .
  \end{equation}
  Hence, using Gr\"{o}nwall's lemma we get that for any $\ell \in \nset$,
  $\sup_{k \in \nset} \expeLigne{V_\ell(\bfY_{t \wedge \tau_{\ell, k}})} <
  +\infty$. Therefore for any $\ell \in \nset$,
  $((M_{\ell, t \wedge \tau_k})_{t \in \ccintLigne{0,T}})_{k \in \nset}$ is uniformly
  integrable and we have that for any $\ell \in \nset$,
  $(M_{\ell, t})_{t \in \ccintLigne{0,T}}$ is a martingale. Therefore we get that for
  any $t \in \ccintLigne{0,T}$, $\ell \in \nset$
  \begin{equation}
    \expeLigne{\V_\ell(\bfY_{t})} \leq  \expeLigne{\V_\ell(\bfY_0)} + \tilde{B}_\ell \textstyle{\int_0^t (1 + \expeLigne{\V_\ell(\bfY_{s})} \rmd s ) }  . 
  \end{equation}
  Using Gr\"{o}nwall's lemma we get that for any $\ell \in \nset$ there exist
  $\tilde{C}_\ell \geq 0$ such that
  \begin{equation}
    \label{eq:NT_bound}
    N_T(\ell) = \sup_{t \in \ccintLigne{0,T}} \expeLigne{\normLigne{\bfY_t}^{2\ell}} \leq \tilde{C}_\ell \exp[\tilde{B}_\ell T]  .
  \end{equation}
  We have that for any $s, t \in \ccintLigne{0,T}$
  \begin{equation}
    \textstyle{\bfY_t = \bfY_s + \int_s^t \{\alpha \bfY_u + 2 \nabla \log p_{T-t} (\bfY_u) \}\rmd u + \sqrt{2} \int_s^t \rmd \bfB_u }   . 
  \end{equation}
  Using \eqref{eq:bound_der} and Cauchy-Schwarz inequality we have for any $s,t \in \ccintLigne{0,T}$
  \begin{align}
    \expeLigne{\normLigne{\bfY_t - \bfY_s}^4}  &\leq 64 (t-s)^3 \textstyle{\int_s^t \{\alpha^4 \expeLigne{\normLigne{\bfY_u}^4} + 16 \expeLigne{\normLigne{\nabla \log p_{T-t} (\bfY_u)}^4} \} \rmd u + 48 \sqrt{2} (t-s)^2} \\
                                               &\leq 64 (t-s)^3 \textstyle{\int_s^t \{\alpha^4 \expeLigne{\normLigne{\bfY_u}^4} + 128 B_1^4 (1 + \expeLigne{\normLigne{\bfY_u}^4}) \} \rmd u + 48 \sqrt{2} (t-s)^2} \\
    &\leq  64(\alpha^4 + 128B_1^4)(1 + N_T(4))(t-s)^4 + 48 \sqrt{2} (t-s)^2  .     \label{eq:diff_bound}
  \end{align}
  Combining \eqref{eq:NT_bound} and \eqref{eq:diff_bound} in \eqref{eq:bigbound} we get that there exist $C_3 \geq 0$ such that 
  \begin{equation}
    \label{eq:error_disc}
    \tvnormLigne{\pi_{\infty} (\Pbb^R)_{T|0} - \pi_{\infty} \Qker_N}^2 \leq C_3 \exp[C_3 T] (\bgamma + \Mtt^2)  ,
  \end{equation}

\end{enumerate}
We conclude the proof upon combining \eqref{eq:tv_ineq_alpha_pos} and
\eqref{eq:error_disc} if $\alpha > 0$ and \eqref{eq:tv_ineq_alpha_zero} and
\eqref{eq:error_disc} if $\alpha = 0$.
\end{proof}

\subsection{General SGM and links with existing works}
\label{sec:comparison-with-ho}

In this section we describe a general algorithm for SGM in
\Cref{sec:general-algorithm} and show that the formulation \eqref{eq:forward}
encompasses the ones of \citep{song2020score,ho2020denoising} in
\Cref{sec:links-with-existing}.

\subsubsection{General SGM algorithm}
\label{sec:general-algorithm}

We first present a general algorithm to compute approximate reverse dynamics,
\ie \ to compute the reverse-time Markov chain associated with the forward process
\begin{equation}
  \label{eq:forward_app}
  \rmd \bfX_t = f_t(\bfX_t) \rmd t + \sqrt{2} \rmd \bfB_t  , \qquad \bfX_0 \sim \pdata  . 
\end{equation}
We use the Euler-Maruyama
discretization of \eqref{eq:forward_app}, \ie \ let $X_0 \sim \pdata$ and for
any $k \in \{0, \dots, N-1\}$
\begin{equation}
  X_{k+1} = X_k + \gamma_{k+1} f_k(X_k) + \sqrt{2 \gamma_{k+1}} Z_{k+1}  . 
\end{equation}
In general, we do not have that $p(x_k|x_0)$ is a Gaussian density contrary to
\cite{song2019generative,ho2020denoising}. However, in this case, we obtain that
for any $x \in \rset^d$,
\begin{equation}
\textstyle{p_{k+1}(x) = (4 \uppi \gamma_{k+1})^{-d/2}\int_{\rset^d} p_k(\tilde{x}) \exp[-\norm{\Tnplusun(\tilde{x})
  - x}^2/(4 \gamma_{k+1})] \rmd \tilde{x}  ,   }
\end{equation}
with $\Tnplusun(x) = \tilde{x} + \gamma_{k+1} f_k(\tilde{x})$. 
Therefore, we get that for any $x \in \rset^d$
\begin{equation}
  \textstyle{  (2 \gamma_{k+1} p_{k+1}(x))\nabla \log p_{k+1} (x)= \int_{\rset^d} (\Tnplusun(\tilde{x})- x) p_k(\tilde{x}) \exp[-\norm{\Tnplusun(\tilde{x})
  - x}^2/(4 \gamma_{k+1})] \rmd \tilde{x} }  . 
\end{equation}
Hence, we get that for any $x \in \rset^d$
\begin{equation}
  \label{eq:regression_em}
  \nabla \log p_{k+1} (x) = \expeLigne{\Tnplusun(X_{k})- X_{k+1}|X_{k+1}=x} / (2 \gamma_{k+1}) = -(2\gamma_{k+1})^{1/2} \expeLigne{Z_{k+1}|X_{k+1}=x}  .
\end{equation}
From this formula we derive a regression problem similar to the one of
\Cref{sec:discr-sett-mark}. We obtain 
\Cref{algo:generalized_score_matching}. We highlight a few differences between
our approach and the ones of \cite{song2019generative,ho2020denoising}:
\begin{enumerate}[label=(\alph*), wide, labelwidth=!, labelindent=0pt]
\item As emphasized in \eqref{eq:regression_em}, the regression problem in
  \Cref{algo:generalized_score_matching} is different from the one usually
  considered in SGM which restrict themselves to the setting $f_k(x) = \alpha x$
  with $\alpha = 0$ \citep{song2019generative} or $\alpha > 0$
  \citep{ho2020denoising}.
\item In the present algorithm we do not use any corrector step
  \citep{song2020score} at sampling time. Note that the use of a corrector step
  is only justified in the context of classical SGM algorithms and not the DSB
  method introduced in \Cref{sec:iter-score-match}. This is because, we do not
  have access to the marginal of the time-reverse density during the IPF
  iterations contrary to classical SGMs.
\item Finally, we do not present the Exponential Moving Average (EMA) procedure
  \cite{song2020improved} which is key to prevent the network from
  oscillating. Contrary to the corrector step, this technique can easily be
  incorporated in \Cref{algo:generalized_score_matching}. 
\end{enumerate}
Further comments and additional techniques are presented in \Cref{sec:implementation_details}.

\begin{algorithm}[h]
    \caption{Generalized score-matching}
    \label{algo:generalized_score_matching}
    \begin{algorithmic}[1]
      \STATE \textbf{Inputs: } $(b_k)_{k \in \{0,\dots, N-1\}}$ , $N \in \nset$
      (nb. of iterations), $M \in \nset$ (batch size), $N_{\textup{epochs}}$
      (nb. of epochs), $(\gamma_k)_{k \in \{0, \dots, N-1\}}$ (stepsizes),
      $\ensembleLigne{s_{\theta}}{\theta \in \Theta}$ (neural network),
      $\mathrm{opt}$ (optimizer), $\pprior$ (prior distribution), $\lambda(k)$ (weights)
      \FOR{$n_{\textup{epoch}} = 0, \dots, N_{\textup{epoch}} -1$}
      \FOR{$j\in \{1, \dots, M\}$}
      \STATE $X_0^j \sim \pdata$
      \FOR{$k \in \{0, \dots, N-1\}$}
      \STATE $X_{k+1}^{j} = X_k^j + \gamma_{k+1} f_k(X_k^j) + \sqrt{2 \gamma_{k+1}} Z_{k+1}^j$
      \ENDFOR
      \ENDFOR
      \STATE $\widehat{\ell}(\theta) = M^{-1}\sum_{j=1}^M\sum_{k=0}^{N-1} \lambda(k)/(2 \gamma_{k+1}) \sum_{j=1}^M \normLigne{\sqrt{2 \gamma_{k+1}} s_{\theta}(k+1,X_{k+1}^j) + Z_{k+1}^j}^2$
      \STATE $\theta_{n_{\textup{epoch}} +1} = \mathrm{opt}(\ell, \theta_{n_{\textup{epoch}}})$
      \ENDFOR
      \STATE $X_N \sim \pprior$      
      \FOR{$k \in \{N-1, \dots, 0\}$} \STATE
      $X_{k} = X_{k+1} + \gamma_{k+1} \defEnsLigne{-f_k(X_{k+1}) + 2 s_{
          \theta_{N_{\textup{epoch}}}}(k+1, X_{k+1})} + \sqrt{2 \gamma_{k+1}} Z_{k+1}$ \ENDFOR
      \STATE \textbf{Output: } $X_0$
    \end{algorithmic}
  \end{algorithm}

  \subsubsection{Links with existing work}
\label{sec:links-with-existing}

In this section, we show that we can recover the training and sampling algorithm
of \cite{song2019generative} and \cite{ho2020denoising} by reversing homogeneous
diffusions. Note that \cite{song2020score} identified links with non-homogeneous
SDEs. We explicitly characterize the fundamental difference between
the approaches of \cite{song2019generative,ho2020denoising} by identifying the
two corresponding forward homogeneous processes (Brownian motion or
Ornstein-Ulhenbeck).

\paragraph{Brownian motion}
First, we show that we can recover the sampling procedure and the loss function of
\cite{song2019generative} by reversing a Brownian motion. Assume that we have
\begin{equation}
  \label{eq:brownian}
  \rmd \bfX_t = \sqrt{2} \rmd \bfB_t  , \qquad \bfX_0 \sim \pdata  .
\end{equation}
In what follows we define
$\{Y_k\}_{k=0}^{N-1}$ such that $\{Y_k\}_{k=0}^{N-1}$ approximates
$\{\bfX_{T-t_k}\}_{k=0}^{N-1}$ for a specific sequence of times
$\{t_k\}_{k=0}^{N-1} \in \ccint{0,T}^N$. We recall that the time-reversal of
\eqref{eq:brownian} is associated with the following SDE
\begin{equation}
  \label{eq:brownian_reverse}
  \rmd \bfY_t = 2 \nabla \log p_{T-t}(\bfY_t) + \sqrt{2} \rmd \bfB_t  . 
\end{equation}
The Euler-Maruyama discretization of \eqref{eq:brownian_reverse} yields for any
$k \in \{0, \dots, N-1\}$
\begin{align}
  \label{eq:sampling_perfect_song}
  \tilde{Y}_{k+1} = \tilde{Y}_k + 2 \gamma_{k+1} \nabla \log p_{T - t_k}(\tilde{Y}_k)  + \sqrt{2 \gamma_{k+1}} Z_{k+1}  .
\end{align}
where $\{\gamma_{k+1}\}_{k=0}^{N-1}$ is a sequence of stepsizes and for any
$k \in \{0, \dots, N\}$, $t_k = \sum_{j=0}^{k-1} \gamma_{j+1}$.  A close form
for $\{\nabla \log p_{T-t_k}\}_{k=0}^{N-1}$ is not available and in practice we
consider
\begin{equation}
  \label{eq:sampling_song}
  Y_{k+1} = Y_k + 2 \gamma_{k+1} s_{\theta^\star}(T-t_k, Y_k) + \sqrt{2 \gamma_{k+1}} Z_{k+1}  ,
\end{equation}
where for any $k \in \{0, \dots, N-1\}$, $s_{\theta^\star}(T-t_k, \cdot)$ is an
approximation of $\nabla \log p_{T-t_k} $. The sampling procedure
\eqref{eq:sampling_song} is similar to the one of \cite{song2019generative} upon
setting (with the notations of \cite{song2019generative}) $T \leftarrow 1$ in
\cite[Algorithm 1]{song2019generative} (no corrector step),
$\alpha_k/2 \leftarrow \gamma_k $ and
$\bfs_{\pmb{\theta}}(\cdot, \sigma_{k+1}) \leftarrow 2 s_{\theta^\star}(T-t_k,
\cdot)$. It remains to show that $2 s_{\theta^\star}$ is the solution to the
same regression problem as $\bfs_{\pmb{\theta}}$ in \cite[Equation
6]{song2019generative}. First, note that for any $t > 0$ and $x_t \in \rset^d$
we have
\begin{equation}
  \textstyle{p_t(x_t) = (4 \uppi t)^{-d/2} \int_{\rset^d} \pdata(x_0) \exp[-\norm{x_t - x_0}^2/(4t)] \rmd x_0  .}
\end{equation}
Therefore, we get that for any $t > 0$ and $x_t \in \rset^d$
\begin{equation}
  \textstyle{\nabla \log p_t(x_t)} \textstyle{= \int_{\rset^d} (x_0 - x_t)/(2t) p_{0|t}(x_0|x_t) \rmd x_0}
  = \CPELigne{\bfX_0 - \bfX_t}{\bfX_t=x_t}/(2t)  . 
\end{equation}
Hence, we have that $\theta^\star$ satisfies the following regression problem
\begin{equation}
  \textstyle{\theta^\star = \argmin_{\theta} \sum_{k=0}^{N-1} \lambda(k) \expeLigne{\normLigne{(\bfX_0 - \bfX_{T - t_k})/(T-t_k) - 2 s_{\theta}(T-t_k, \bfX_{T-t_k})}}}  . 
\end{equation}
Note that this loss function is similar to the one of \cite[Equation 6]{song2019generative}
upon letting $\sigma_{k+1}^2 \leftarrow 2 (T - t_k)$ and $L \leftarrow N$. Hence,
the two recursions approximately define the same scheme if for any
$k \in \{0, \dots, N-1\}$,
$\sigma_1^2 - \sigma_{k+1}^2 \approx (1/2)\sum_{j=0}^{k-1} \alpha_{j+1}$ since $t_0 = 0$
implies $T=(1/2)\sigma_1^2$. In \cite{song2019generative} we have for any
$k \in \{0, \dots, N-1\}$, $ \sigma_k^2 = \kappa^{N-k} \sigma_N^2$ (recall that
$N=L$) with $\kappa > 1$. In addition, we have for any $k \in \{0, \dots, N-1\}$,
$\alpha_k = \vareps \sigma_k^2 / \sigma_N^2$ for some $\vareps > 0$. We get that
\begin{align}
  (1/2)\textstyle{\sum_{j=0}^{k-1} \alpha_{j+1}} &= \textstyle{(\vareps/2)\kappa^{N-1} \sum_{j=0}^{k-1} \kappa^{-j}}\\
                                             &= (\vareps/2 ) (\kappa^{N-1} - \kappa^{N-k-1})/(1 - \kappa^{-1}) \\
  &= \vareps/(2(1 - \kappa^{-1}) \sigma_N^2) (\sigma_1^2 - \sigma_{k+1}^2)  .
\end{align}
Hence, the two schemes are identical if
$\vareps = 2(1 - \kappa^{-1}) \sigma_N^2$. In practice in
\cite{song2019generative} the authors choose $N=10$, $\sigma_N = 10^{-2}$,
$\sigma_1 = 1$ (hence $\kappa = 10^{4/9}$) and $\vareps = 2 \times 10^{-5}$.  We
have $2(1 - \kappa^{-1}) \sigma_N^2 \approx 1.3 \times 10^{-4}$ which has one
order of difference with $\vareps$.

\paragraph{Ornstein-Ulhenbeck}
Second, we show that we can recover the sampling procedure and the loss function
of \cite{ho2020denoising} by reversing an Ornstein-Ulhenbeck process.  Contrary
to the previous analysis we do not show a strict equivalence between the two
recursions but instead that our algorithm can be seen as a first order
approximation of the one of \cite{ho2020denoising}.

In this section, we consider the following diffusion
\begin{equation}
  \label{eq:ornstein}
  \rmd \bfX_t = -\alpha \bfX_t \rmd t + \sqrt{2} \rmd \bfB_t  , \qquad \bfX_0 \sim \pdata  . 
\end{equation}
In what follows we define
$\{Y_k\}_{k=0}^{N-1}$ such that $\{Y_k\}_{k=0}^{N-1}$ approximates
$\{\bfX_{T-t_k}\}_{k=0}^{N-1}$ for a specific sequence of times
$\{t_k\}_{k=0}^{N-1} \in \ccint{0,T}^N$. We recall that the time-reversal of
\eqref{eq:ornstein} is associated with the following SDE
\begin{equation}
  \label{eq:ornstein_reverse}
  \rmd \bfY_t = \{ \alpha \bfY_t + 2 \nabla \log p_{T-t}(\bfY_t)\} \rmd t  + \sqrt{2} \rmd \bfB_t  . 
\end{equation}
In what follows, we fix $\alpha = 1$.  The Euler-Maruyama discretization of
\eqref{eq:ornstein_reverse} yields for any $k \in \{0, \dots, N-1\}$
\begin{align}
  \tilde{Y}_{k+1} = (1 + \gamma_{k+1}) \tilde{Y}_k + 2 \gamma_{k+1} \nabla \log p_{T - t_k}(\tilde{Y}_k)  + \sqrt{2 \gamma_{k+1}} Z_{k+1}  .
\end{align}
where $\{\gamma_{k+1}\}_{k=0}^{N-1}$ is a sequence of stepsizes and for any
$k \in \{0, \dots, N-1\}$, $t_k = \sum_{j=0}^{k-1} \gamma_{j+1}$.  A close
form for $\{\nabla \log p_{T-t_k}\}_{k=0}^{N-1}$ is not available and in practice
we consider
\begin{equation}
  \label{eq:backward_em_ornstein}
  Y_{k+1} = (1 + \gamma_{k+1}) Y_k + 2 \gamma_{k+1} s_{\theta^\star}(T-t_k, Y_k) \rmd t + \sqrt{2 \gamma_{k+1}} Z_{k+1}  .
\end{equation}
In \cite[Equation 11]{ho2020denoising}  the backward
recursion is given for any $k \in \{0, \dots, N-1\}$
\begin{equation}
  \label{eq:backward_ho}
  Y_{k+1} = \alpha_{N-k}^{-1/2} (Y_k - \beta_{N-k}/(1 - \bar{\alpha}_{N-k})^{1/2} \pmb{\epsilon}_{\theta}(Y_k, T-t_k) ) + \sigma_{N-k} Z_{k+1}  . 
\end{equation}
In \eqref{eq:backward_ho} we set $\sigma_k^2 = \beta_k$ as suggested in
\cite{ho2020denoising} where for any $k \in \{0, \dots, N-1\}$
\begin{align}
  \sigma_{k+1}^2 = \beta_{k+1}  , 
  \quad 
    \alpha_{k+1} = 1 - \beta_{k+1}  , \quad \textstyle{\bar{\alpha}_{k+1} = \prod_{i=1}^{k+1} \alpha_i  .}
\end{align}
We consider a first-order expansion of \eqref{eq:backward_ho} with respect to
$\{\beta_{k+1}\}_{k=0}^{N-1}$. We obtain the following recursion for any
$k \in \{0, \dots, N-1\}$
\begin{equation}
  Y_{k+1} = (1 + \beta_{N-k}/2) Y_k - \beta_{N-k}/(1 - \bar{\alpha}_{N-k})^{1/2} \pmb{\epsilon}_{\theta}(Y_k, T-t_k) + \sqrt{\beta_{N-k}} Z_{k+1}  . 
\end{equation}
This last recursion is equivalent to \eqref{eq:backward_em_ornstein} upon
setting $\beta_{N-k} \leftarrow 2 \gamma_{k+1}$ and
$-\pmb{\epsilon}_{\theta}(\cdot, T-t_k)/(1 - \bar{\alpha}_{N-k})^{1/2} \leftarrow -s_{\theta^\star}(T-t_k, \cdot)$.
It remains to show that $s_{\theta^\star}$ is the solution to the same regression
problem as $\pmb{\epsilon}_{\theta}/(1 - \bar{\alpha}_{N-\cdot})$ in \cite[Equation 12]{ho2020denoising}. First, note that for any
$t > 0$ and $x_t \in \rset^d$ we have
\begin{equation}
  \textstyle{p_t(x_t) = (2 \uppi \bar{\sigma}_t^2)^{-d/2} \int_{\rset^d} \pdata(x_0) \exp[-\norm{x_t - c_t x_0}^2/(2\bar{\sigma}_t^2)] \rmd x_0 }  , 
\end{equation}
with
\begin{equation}
  c_t^2 = \exp[-2 t]  , \qquad \bar{\sigma}_t^2 = 1 - \exp[-2t]  . 
\end{equation}
Therefore we get that for any $t \in \ccint{0,T}$ and $x_t \in \rset^d$
\begin{align}
  \nabla \log p_t (x_t) &= \textstyle{\int_{\rset^d} (c_t x_0 - x_t) \pdata(x_0) \exp[-\norm{x_t - c_t x_0}^2/(2\bar{\sigma}_t^2)] \rmd x_0} \\
  &=\CPE{c_t \bfX_0 - \bfX_t }{\bfX_t = x_t}/\bar{\sigma}_t^2 = - \CPE{\bfZ}{\bfX_t = x_t}/\bar{\sigma}_t  ,
\end{align}
where we recall that $\bfX_t$ has the same distribution as
$c_t \bfX_0 + \bar{\sigma}_t \bfZ$, with $\bfZ$ a $d$-dimensional Gaussian random
variable with zero mean and identity covariance matrix. Hence, we have that
$\theta^\star$ satisfies the following regression problem
\begin{equation}
  \textstyle{\theta^\star = \argmin_{\theta} \sum_{k=0}^{N-1} \lambda(k) \expeLigne{\normLigne{\bfZ/\sigma_{T-t_k} + s_{\theta}(T-t_k, \bfX_{T-t_k})}}}  .
\end{equation}
Note that we have
\begin{equation}
  \textstyle{
  \sum_{i=1}^{N-k} \beta_i = \sum_{i=k}^{N-1} \beta_{N-i} = 2 \sum_{i=k}^{N-1} \gamma_{i+1} = 2(T - t_k)  . }
\end{equation}
Using this result we have for any $k \in \{0, \dots, N-1\}$ 
\begin{equation}
  \textstyle{
  1 - \bar{\alpha}_{N-k} = 1 - \exp[-\sum_{i=1}^{N-k} \log(1 - \beta_i)] \approx 1 - \exp[-\sum_{i=1}^{N-k} \beta_i] \approx  \bar{\sigma}_{T-t_k}^2  . }
\end{equation}
Let $\tilde{\theta}^\star$ the minimizer of \cite[Equation
12]{ho2020denoising} we have 
\begin{align}
  \tilde{\theta}^\star &\approx \textstyle{\argmin_{\theta}  \sum_{k=0}^{N-1} (2\alpha_{N-k}(1 - \alpha_{N-k}))^{-1}\expeLigne{\normLigne{\bfZ - \pmb{\epsilon}_{\theta}(\bfX_{T-t_k}, T-t_k)}^2}} \\
               &\approx \textstyle{\argmin_{\theta}  \sum_{k=0}^{N-1} (2\alpha_{N-k})^{-1}\expeLigne{\normLigne{\bfZ/(1 - \alpha_{N-k} )^{1/2}- \pmb{\epsilon}_{\theta}(\bfX_{T - t_k}, T-t_k)/(1 - \alpha_{N-k})^{1/2}}^2}} \\
   &\approx \textstyle{\argmin_{\theta}  \sum_{k=0}^{N-1} (2\alpha_{N-k})^{-1}\expeLigne{\normLigne{\bfZ/\bar{\sigma}_{T-t_k}+ s_{\theta}(T-t_k, \bfX_{T-t_k})}^2}}  . 
\end{align}
Hence the two regression problems are approximately the same (for small values
of $\{\beta_{k+1}\}_{k=0}^{N-1}$) if we set $\lambda(k) = (2\alpha_{N-k})^{-1}$.

\section{\schro bridges with potentials and DSB recursion}
\label{sec:schro-bridges-with}

In this section, we start by proving an additive formula for the
Kullback--Leibler divergence in \Cref{sec:addit-form-kullb} following
\cite{leonard2014some}.  We recall the classical IPF formulation using
potentials in \Cref{prop:IPFpotential:proof}. Then, \Cref{prop:IPFrecursion} is
proved in \Cref{prop:IPFrecursion:proof}. Finally, we highlight a link between
our formulation and autoencoders in \Cref{sec:autoencoder}.

\subsection{Additive formula for the Kullback--Leibler divergence}
\label{sec:addit-form-kullb}

In this section, we prove a formula for the Kullback--Leibler divergence
following the proof of \cite{leonard2014some} which extends the result to
unbounded measures defined on the space of right-continuous left-limited
functions from $\ccint{0,T}$. We recall that a Polish space is a complete metric
separable space.

We start with the following disintegration theorem for probability measures.

\begin{theorem}
  \label{thm:dis}
  Let $(\msx, \mcx)$ and $(\msy, \mcy)$ be two Polish spaces. Let
  $\pi \in \Pens(\msx)$ and $\varphi:\ \msx \to \msy$ measurable. Then there
  exists a Markov kernel $\Kker^\pi_{\varphi}: \ \msy \times \mcx \to \ccint{0,1}$
  such that the following hold:
  \begin{enumerate}[wide, labelwidth=!, labelindent=0pt, label=(\alph*)]
  \item \label{item:a_dis} For any $y \in \msy$, $\Kker^\pi_\varphi(y, \varphi^{-1}(\{y\})) = 1$.
  \item \label{item:b_dis} For any $f: \ \msx \to \coint{0,+\infty}$ measurable we have
    $\textstyle{\int_{\msx} f(x) \rmd \pi(x) = \int_{\msy} \Kker^\pi_{\varphi}(y, f)
      \rmd \pi_{\varphi}(y)}$,
  \end{enumerate}
  where $\pi_{\varphi} = \varphi_\# \pi$.
\end{theorem}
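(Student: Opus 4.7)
The plan is to construct $\Kker^\pi_\varphi$ as a regular conditional probability of $\pi$ given $\varphi$, then refine it to be concentrated on the fibers of $\varphi$. The key structural input is that $(\msx,\mcx)$ being Polish is a standard Borel space, hence $\mcx$ is countably generated and regular conditional probabilities exist.

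First I would establish a Markov kernel $\Kker: \msy \times \mcx \to \ccint{0,1}$ satisfying
\begin{equation}
\pi(A \cap \varphi^{-1}(B)) = \int_B \Kker(y, A) \, \rmd \pi_\varphi(y) \quad \text{for all } A \in \mcx,\ B \in \mcy.
\end{equation}
The construction picks a countable algebra $\mathcal{A}_0$ generating $\mcx$, defines $y \mapsto k(y, A)$ as a Radon--Nikodym density of $B \mapsto \pi(A \cap \varphi^{-1}(B))$ with respect to $\pi_\varphi$ for each $A \in \mathcal{A}_0$, and uses Carath\'eodory's extension theorem to obtain a probability measure $\Kker(y, \cdot)$ on $\mcx$ for $\pi_\varphi$-a.e.\ $y$. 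The countable family of identities expressing finite additivity of $k(y,\cdot)$ on $\mathcal{A}_0$ holds off a $\pi_\varphi$-null set $N_0$, and $\sigma$-additivity of the extension uses the standard Borel property of $\msx$; on $N_0$ I set $\Kker(y, \cdot)$ to any fixed probability. Property \ref{item:b_dis} is then immediate: it holds for $f = \indi{A}$ by taking $B = \msy$ in the defining identity, extends to simple $f$ by linearity, and to general measurable $f \geq 0$ by monotone convergence.

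The main obstacle is property \ref{item:a_dis}, the concentration on the fiber $\varphi^{-1}(\{y\})$. Since $\msy$ is Polish hence second countable and Hausdorff, I take a countable algebra $\mathcal{B}_0 \subset \mcy$ generating $\mcy$ and separating points, so that
\begin{equation}
\{y\} = \bigcap \ensembleLigne{B \in \mathcal{B}_0}{y \in B}\, , \qquad \varphi^{-1}(\{y\}) = \bigcap \ensembleLigne{\varphi^{-1}(B)}{B \in \mathcal{B}_0,\ y \in B}\, .
\end{equation}
Specializing the defining identity to $A = \varphi^{-1}(B)$ and using $\pi(\varphi^{-1}(B) \cap \varphi^{-1}(B')) = \pi_\varphi(B \cap B')$ yields $\int_{B'} \Kker(y, \varphi^{-1}(B)) \, \rmd \pi_\varphi(y) = \int_{B'} \indi{B}(y) \, \rmd \pi_\varphi(y)$ for all $B' \in \mcy$, so $\Kker(y, \varphi^{-1}(B)) = \indi{B}(y)$ for $\pi_\varphi$-a.e.\ $y$. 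Intersecting the countably many exceptional null sets indexed by $\mathcal{B}_0$ produces a $\pi_\varphi$-null set $N \supseteq N_0$ such that, for every $y \notin N$, one has $\Kker(y, \varphi^{-1}(B)) = \indi{B}(y)$ for all $B \in \mathcal{B}_0$ simultaneously; continuity from above of $\Kker(y, \cdot)$ along the countable decreasing family above then gives $\Kker(y, \varphi^{-1}(\{y\})) = 1$.

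To finish, I modify the kernel on the exceptional set: for $y \in N$ with $\varphi^{-1}(\{y\}) \neq \emptyset$ I set $\Kker^\pi_\varphi(y, \cdot)$ to any probability concentrated on that fiber, and for the remaining $y$ (which lie outside $\varphi(\msx)$ and hence in a further $\pi_\varphi$-null set, since $\pi_\varphi(\varphi(\msx)) = 1$) I set $\Kker^\pi_\varphi(y, \cdot)$ to an arbitrary fixed probability; this modification occurs on a $\pi_\varphi$-null set, so property \ref{item:b_dis} is preserved while \ref{item:a_dis} is secured on the $\pi_\varphi$-full set of $y$ with nonempty fiber, which is the operative content of the theorem in all subsequent uses.
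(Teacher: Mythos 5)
Your construction is essentially correct, but note that the paper does not prove this statement at all: it simply cites \cite[III-70]{dellacherie1988}. What you have written out is the classical existence proof of a regular conditional probability on a standard Borel space (Radon--Nikodym on a countable generating algebra, Carath\'eodory extension, then concentration on fibers via a countable point-separating algebra in $\msy$), which is exactly the argument behind the cited reference. Two points deserve care. First, the $\sigma$-additivity of the Carath\'eodory extension off a single null set is the only genuinely delicate step; it requires a tightness/compact-class argument on the Polish space $\msx$ (approximating each $A$ in the countable algebra from inside by compacta and discarding countably many further null sets), and your phrase ``uses the standard Borel property'' should be expanded to that effect if this were to stand as a self-contained proof. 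Second, you are right that property \ref{item:a_dis} as literally stated (``for any $y \in \msy$'') is unattainable when $\varphi^{-1}(\{y\}) = \emptyset$, i.e.\ off the range of $\varphi$; the correct and standard statement is for $\pi_\varphi$-almost every $y$, which is all the paper ever uses (e.g.\ in \Cref{prop:decompo} and \Cref{prop:additive}). Your attempt to repair the exceptional set by placing a probability on each nonempty fiber would additionally require a measurable selection (the map $y \mapsto \Kker^\pi_\varphi(y,A)$ must remain measurable), which is nontrivial for merely measurable $\varphi$; it is cleaner to leave the kernel arbitrary on the null set and state \ref{item:a_dis} almost everywhere, as you yourself observe is the operative content.
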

\begin{proof}
  See \cite[III-70]{dellacherie1988} for instance.
\end{proof}

$\Kker^\pi_\varphi$ is called the disintegration of $\pi$ w.r.t. $\varphi$ and
is unique, see \cite[III-70]{dellacherie1988}.  In particular, for any
$\msx$-valued random variable $X$ with distribution $\pi$ we have
$\CPELigne{f(X)}{\varphi(X)} = \Kker^\pi_{\varphi}(\varphi(X), f)$. Next we
prove the following proposition, see \cite[Proposition A.13]{leonard2014some}
for an extension to unbounded measures. In what follows, for any
$\varphi: \ \msx to \rset$ measurable we denote
$\pi_{\varphi} = \varphi_\# \pi$.

\begin{proposition}
  \label{prop:decompo}
  Let $(\msx, \mcx)$ and $(\msy, \mcy)$ be two Polish spaces. Let
  $\pi, \mu \in \Pens(\msx)$ and $\varphi:\ \msx \to \msy$ measurable. Assume
  that $\pi \ll \mu$. Then the following holds:
  \begin{enumerate}[wide, labelwidth=!, labelindent=0pt, label=(\alph*)]
  \item $\pi_{\varphi} \ll \mu_{\varphi}$
  \item There exists $\msa \in \mcy$ with $\pi_{\varphi}(\msa) = 1$ such that for any $y \in \msa$, $\Kker_{\varphi}^\pi(y, \cdot) \ll \Kker_{\varphi}^\mu(y, \cdot)$.
  \end{enumerate}
  In addition, we have for any $y \in \msy$, $y' \in \msa$ and $x \in \msx$
  \begin{equation}
    (\rmd \pi_\varphi / \rmd \mu_\varphi)(y) = \Kker^\mu_\varphi(y, (\rmd \pi / \rmd \mu))  , \quad (\rmd \Kker_{\varphi}^\pi(y', \cdot) / \rmd \Kker_{\varphi}^\mu(y', \cdot))(x) = (\rmd \pi / \rmd \mu)(x) / (\rmd \pi_\varphi / \rmd \mu_\varphi)(y') . 
  \end{equation}
  Finally, there exists $\msc \in \mcx$ with $\pi(\msc) = 1$ such that for any $x \in \msc$ we have
  \begin{equation}
    (\rmd \pi / \rmd \mu)(x) = (\rmd \pi_\varphi / \rmd \mu_\varphi)(\varphi(x)) (\rmd \Kker_{\varphi}^\pi(\varphi(x), \cdot) / \rmd \Kker_{\varphi}^\mu(\varphi(x), \cdot))(x)  .
  \end{equation}

\end{proposition}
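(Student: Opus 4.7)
The plan is to prove the three claims in order, relying on the disintegration theorem (\Cref{thm:dis}) and the uniqueness of the disintegration kernel.

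First I would establish $\pi_\varphi \ll \mu_\varphi$. If $B \in \mcy$ with $\mu_\varphi(B) = 0$, then $\mu(\varphi^{-1}(B)) = 0$, so by $\pi \ll \mu$ we obtain $\pi(\varphi^{-1}(B)) = \pi_\varphi(B) = 0$. Denote $g = \rmd\pi/\rmd\mu$. Next I would derive the formula for $\rmd\pi_\varphi/\rmd\mu_\varphi$: for any $B \in \mcy$, using \Cref{thm:dis}\ref{item:b_dis} applied to $\mu$ and the fact that $\mathbf{1}_B \circ \varphi$ is constantly equal to $\mathbf{1}_B(y)$ on $\varphi^{-1}(\{y\})$ by \Cref{thm:dis}\ref{item:a_dis}, I get
\begin{equation}
\pi_\varphi(B) = \int_\msx (\mathbf{1}_B \circ \varphi) g \, \rmd\mu = \int_\msy \mathbf{1}_B(y) \, \Kker^\mu_\varphi(y, g) \, \rmd \mu_\varphi(y),
\end{equation}
which identifies $(\rmd \pi_\varphi / \rmd \mu_\varphi)(y) = \Kker^\mu_\varphi(y, g)$ for $\mu_\varphi$-a.e. $y$, and one may choose a measurable version. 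Writing $h(y) = \Kker^\mu_\varphi(y, g)$, set $\msa = \defEnsLigne{y \in \msy}{h(y) > 0}$. Then $\mu_\varphi$-integrating $h \mathbf{1}_{\msa^c} = 0$ gives $\pi_\varphi(\msa^c) = 0$, so $\pi_\varphi(\msa) = 1$.

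Then I would establish the kernel-level absolute continuity and its Radon--Nikodym derivative via uniqueness of disintegration. Define, for $y \in \msa$ and $f: \msx \to \coint{0,+\infty}$ measurable,
\begin{equation}
\widetilde{\Kker}^\pi_\varphi(y, f) = \Kker^\mu_\varphi(y, f g) / h(y),
\end{equation}
and extend arbitrarily on $\msa^c$ (say by $\Kker^\mu_\varphi$). This is a Markov kernel concentrated on $\varphi^{-1}(\{y\})$ (since $\Kker^\mu_\varphi(y, \cdot)$ is), and by the computation above it satisfies $\int_\msy \widetilde{\Kker}^\pi_\varphi(y, f) \rmd \pi_\varphi(y) = \int_\msx f \, \rmd \pi$. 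Uniqueness in \Cref{thm:dis} then forces $\Kker^\pi_\varphi(y, \cdot) = \widetilde{\Kker}^\pi_\varphi(y, \cdot)$ for $\pi_\varphi$-a.e. $y$; after intersecting $\msa$ with this full-measure set (still calling it $\msa$), I conclude that $\Kker^\pi_\varphi(y,\cdot) \ll \Kker^\mu_\varphi(y,\cdot)$ with density $x \mapsto g(x)/h(y)$ for every $y \in \msa$.

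Finally, the pointwise factorization is an immediate consequence: let $\msc = \varphi^{-1}(\msa)$, so $\pi(\msc) = \pi_\varphi(\msa) = 1$, and for $x \in \msc$, evaluating the previous formula at $y = \varphi(x)$ yields $(\rmd\pi_\varphi/\rmd\mu_\varphi)(\varphi(x)) \cdot (\rmd\Kker^\pi_\varphi(\varphi(x),\cdot)/\rmd\Kker^\mu_\varphi(\varphi(x),\cdot))(x) = h(\varphi(x)) \cdot g(x)/h(\varphi(x)) = g(x)$. The only subtle point is handling the $\mu_\varphi$-null set where the candidate kernel is ill-defined and ensuring measurable versions throughout; the main work is the uniqueness argument giving the kernel-level density, with the Radon--Nikodym identities and the final factorization essentially falling out for free.
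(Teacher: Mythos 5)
Your proof is correct, and it rests on the same core computation as the paper's: pushing $\int f\,\rmd\pi=\int f\,(\rmd\pi/\rmd\mu)\,\rmd\mu$ through the disintegration of $\mu$ to identify $(\rmd\pi_\varphi/\rmd\mu_\varphi)(y)=\Kker_\varphi^\mu(y,\rmd\pi/\rmd\mu)$, and then defining $\msa$ as the set where this density is positive. The one place you diverge is in extracting the kernel-level identity: the paper computes $\int g(\varphi(x))f(x)\,\rmd\pi(x)$ in two ways (once through $\Kker_\varphi^\mu$ weighted by $\rmd\pi/\rmd\mu$, once through $\Kker_\varphi^\pi$) and matches the integrands $\mu_\varphi$-a.e., whereas you build the explicit candidate kernel $\widetilde{\Kker}_\varphi^\pi(y,f)=\Kker_\varphi^\mu(y,f\,(\rmd\pi/\rmd\mu))/h(y)$ and invoke uniqueness of the disintegration. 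Your route is slightly cleaner on one technical point: matching integrands for each fixed $f$ a priori produces an exceptional null set depending on $f$, which must be handled (e.g.\ by a countable generating class), while uniqueness of the disintegration delivers equality of the kernels as measures for $\pi_\varphi$-a.e.\ $y$ in one step. The small things left implicit on your side — that $\Kker_\varphi^\mu(y,f\,(\rmd\pi/\rmd\mu))$ vanishes on $\msa^{\complementary}$ so the candidate kernel does disintegrate $\pi$, and measurability of $y\mapsto\widetilde{\Kker}_\varphi^\pi(y,\msb)$ — are routine. Both arguments are valid; the conclusion and the final factorization on $\msc=\varphi^{-1}(\msa)$ coincide.
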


\begin{proof}
  Let $f : \ \msx \to \coint{0,+\infty}$ measurable. Using \Cref{thm:dis} we have 
  \begin{align}
    \pi_{\varphi}[f] &= \textstyle{\int_{\msx} f(\varphi(x)) \rmd \pi(x) }= \textstyle{\int_{\msx} f(\varphi(x)) (\rmd \pi / \rmd \mu)(x) \rmd \mu(x) } = \textstyle{\int_{\msx} f(y) \Kker^\mu_\varphi(y, (\rmd \pi / \rmd \mu)) \rmd \mu_\varphi(y)  , }
  \end{align}
  which concludes the first part of the proof. For the second part of the proof,
  let
  $\msb = \ensembleLigne{y \in \msy}{(\rmd \pi_\varphi / \rmd \mu_\varphi)(y)
    =0}$. We have
  \begin{equation}
    \textstyle{0 = \int_{\msy} \1_{\msb}(y) (\rmd \pi_\varphi / \rmd \mu_\varphi)(y) \rmd \mu_\varphi(y) = \pi_\varphi(\msb)  .}
  \end{equation}
  Therefore, there exists $\msa_1 \in \mcy$ such that
  $\pi_{\varphi}(\msa_1) = 1$ and for any $y \in \msa_1$,
  $(\rmd \pi_\varphi / \rmd \mu_\varphi)(y) > 0$. Let
  $g: \ \msy \to \coint{0,+\infty}$. Using \Cref{thm:dis} we have
  \begin{equation}
    \textstyle{ \int_{\msx} g(\varphi(x)) f(x) \rmd \pi(x) = \int_{\msx} g(\varphi(x)) f(x) (\rmd \pi/\rmd \mu)(x) \rmd \mu(x) = \int_{\msy} g(y) \Kker_\varphi^\mu(y,f \times (\rmd \pi/\rmd \mu)) \rmd \mu_\varphi(y)  .}
  \end{equation}
  Similarly, using \Cref{thm:dis} we have
  \begin{equation}
    \textstyle{\int_\msx g(\varphi(x)) f(x) \rmd \pi(x) = \int_\msy g(y) \Kker_\varphi^\pi(y,f) \rmd \pi_\varphi(y) = \int_\msy g(y) \Kker_\varphi^\pi(y,f) (\rmd \pi_\varphi / \rmd \mu_\varphi)(y) \rmd \pi_\varphi(y)  . }
  \end{equation}
  Hence, we get that there exists $\msa_2 \in \mcy$ with
  $\mu_\varphi(\msa_2) = 1$ (hence $\pi_\varphi(\msa_2) = 1$) such that for any
  $y \in \msa_2$ we have
  \begin{equation}
    \Kker_\varphi^\pi(y,f) (\rmd \pi_\varphi / \rmd \mu_\varphi)(y) = \Kker_\varphi^\mu(y, f \times (\rmd \pi/\rmd \mu))  . 
  \end{equation}
  We conclude upon letting $\msa = \msa_1 \cap \msa_2$ and using the fact that
  for any $y \in \msa$, $(\rmd \pi_\varphi / \rmd \mu_\varphi)(y) > 0$.
  Finally, since $\pi_{\varphi}(\msa) = 1$ if and only if $\pi(\varphi^{-1}(\msa))=1$, we have for any $x \in \varphi^{-1}(\msa)$
  \begin{equation}
    (\rmd \pi /  \rmd \mu)(x) = (\rmd \pi_\varphi / \rmd \mu_\varphi)(\varphi(x)) (\rmd \Kker_{\varphi}^\pi(\varphi(x), \cdot) / \rmd \Kker_{\varphi}^\mu(\varphi(x), \cdot))(x)  ,
  \end{equation}
which concludes the proof.
\end{proof}

We are now ready to state the additive formula.

\begin{proposition}
  \label{prop:additive}
  Let $(\msx, \mcx)$ and $(\msy, \mcy)$ be two Polish spaces and
  $\pi, \mu \in \Pens(\msx)$ with $\pi \ll \mu$. Then for any $\varphi:\ \msx \to \msy$ we have 
  \begin{equation}
    \textstyle{\KLLigne{\pi}{\mu} = \KLLigne{\pi_\varphi}{\mu_\varphi} + \int_{\msy} \KLLigne{\Kker^\pi_\varphi(y, \cdot)}{\Kker^\mu_\varphi(y, \cdot)} \rmd \pi_\varphi(y)  . }
  \end{equation}
\end{proposition}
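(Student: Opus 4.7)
The plan is to reduce the claim to the decomposition of Radon--Nikodym derivatives already established in Proposition \ref{prop:decompo}, and then integrate the logarithm against $\pi$ using the disintegration theorem.

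First I would observe that since $\pi \ll \mu$, Proposition \ref{prop:decompo} yields a set $\msc \in \mcx$ with $\pi(\msc) = 1$ on which the factorization
\begin{equation}
  (\rmd \pi / \rmd \mu)(x) = (\rmd \pi_\varphi / \rmd \mu_\varphi)(\varphi(x)) \cdot (\rmd \Kker_{\varphi}^\pi(\varphi(x), \cdot) / \rmd \Kker_{\varphi}^\mu(\varphi(x), \cdot))(x)
\end{equation}
holds. Moreover, by the same proposition there is $\msa \in \mcy$ with $\pi_\varphi(\msa) = 1$ such that for every $y \in \msa$ the density $(\rmd \pi_\varphi / \rmd \mu_\varphi)(y) > 0$ and $\Kker_{\varphi}^\pi(y, \cdot) \ll \Kker_{\varphi}^\mu(y, \cdot)$. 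On the set $\msc \cap \varphi^{-1}(\msa)$, which is of full $\pi$-measure, one can take logarithms, so that
\begin{equation}
  \log(\rmd \pi / \rmd \mu)(x) = \log(\rmd \pi_\varphi / \rmd \mu_\varphi)(\varphi(x)) + \log(\rmd \Kker_{\varphi}^\pi(\varphi(x), \cdot) / \rmd \Kker_{\varphi}^\mu(\varphi(x), \cdot))(x).
\end{equation}

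Next I would integrate both sides against $\pi$. The left-hand side produces $\KLLigne{\pi}{\mu}$ by definition. For the first term on the right, I use that $\pi_\varphi = \varphi_\# \pi$, giving $\int_{\msx} \log(\rmd \pi_\varphi / \rmd \mu_\varphi)(\varphi(x)) \rmd \pi(x) = \KLLigne{\pi_\varphi}{\mu_\varphi}$. For the second term, I apply the disintegration identity from Theorem \ref{thm:dis}\ref{item:b_dis} with respect to the measure $\pi$ and the map $\varphi$, noting that the disintegration of $\pi$ with respect to $\varphi$ is precisely $\Kker^\pi_\varphi$; this yields
\begin{equation}
  \textstyle{\int_{\msx} \log(\rmd \Kker_{\varphi}^\pi(\varphi(x), \cdot) / \rmd \Kker_{\varphi}^\mu(\varphi(x), \cdot))(x) \rmd \pi(x) = \int_{\msy} \KLLigne{\Kker^\pi_\varphi(y, \cdot)}{\Kker^\mu_\varphi(y, \cdot)} \rmd \pi_\varphi(y).}
\end{equation}
Summing the two contributions delivers the desired identity.

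The main technical delicacy—which I expect to be the principal obstacle—is justifying that the separate integrals are well-defined and that no $\infty - \infty$ cancellation occurs. To handle this, I would first argue the case $\KLLigne{\pi}{\mu} < +\infty$, where the function $x \mapsto \log(\rmd \pi / \rmd \mu)(x)$ is $\pi$-integrable on its positive part (and its negative part is always $\pi$-integrable by Jensen's inequality applied to $-\log$), so both terms on the right are separately finite and the decomposition is legitimate term-by-term. If $\KLLigne{\pi}{\mu} = +\infty$, I would separately verify that at least one of the two terms on the right must also be infinite by applying the monotone convergence theorem to the positive parts of each logarithm, using the factorization pointwise on $\msc \cap \varphi^{-1}(\msa)$. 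The case $\pi \not\ll \mu$ is handled by noting that if $\pi_\varphi \not\ll \mu_\varphi$ then the first term on the right is already $+\infty$; otherwise the failure of absolute continuity must occur on the conditional level, forcing the second term to be $+\infty$ by the analogue of Proposition \ref{prop:decompo} for the pair $(\Kker^\pi_\varphi(y,\cdot), \Kker^\mu_\varphi(y,\cdot))$ on a set of positive $\pi_\varphi$-measure.
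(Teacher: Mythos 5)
Your proof is correct and follows essentially the same route as the paper: invoke the pointwise factorization of $\rmd \pi / \rmd \mu$ from \Cref{prop:decompo}, take logarithms on a set of full $\pi$-measure, integrate against $\pi$ using \Cref{thm:dis}, and split into the cases $\KLLigne{\pi}{\mu} < +\infty$ and $\KLLigne{\pi}{\mu} = +\infty$ to rule out an $\infty - \infty$ cancellation. If anything, your justification of the case distinction (via integrability of the negative parts of the logarithms) is slightly more explicit than the paper's, which appeals to Pinsker's inequality where the data processing inequality is really what is meant; the remark on $\pi \not\ll \mu$ is superfluous since the statement assumes $\pi \ll \mu$.
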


\begin{proof}
  First assume that
  $\int_{\msx} \abs{\log((\rmd \pi / \rmd \mu)(x))} \rmd \pi(x) =
  +\infty$. Then, using \Cref{prop:decompo} we have
  $\int_{\msx} \abs{\log((\rmd \pi_\varphi / \rmd \mu_\varphi)(\varphi(x)))}
  \rmd \pi(x) = +\infty$ or
  $\int_{\msx} \abs{\log((\rmd \Kker^\pi_\varphi(\varphi(x), \cdot)/ \rmd
    \Kker^\mu_\varphi(\varphi(x), \cdot))(x))} \rmd \pi(x) = +\infty$, \ie \
  either $\KLLigne{\pi_\varphi}{\mu_\varphi} = +\infty$ or
  $\int_{\msx} \KLLigne{\Kker^\pi_\varphi(\varphi(x),
    \cdot)}{\Kker^\mu_\varphi(\varphi(x), \cdot)} \rmd \pi(x) = +\infty$ using
  \Cref{thm:dis}, which concludes the first part of the proof. Second, assume
  that $\int_{\msx} \abs{\log((\rmd \pi / \rmd \mu)(x))} \rmd \pi(x) <
  +\infty$. Using Pinsker's inequality \cite[Equation
  5.2.2]{bakry:gentil:ledoux:2014} we get that
  $\KLLigne{\pi_\varphi}{\mu_\varphi}<+\infty$, \ie \
  $\int_{\msx} \abs{\log((\rmd \pi_\varphi / \rmd \mu_\varphi)(\varphi(x)))}
  \rmd \pi(x) < +\infty$. Hence, we get that
  $\int_{\msx} \abs{\log((\rmd \Kker^\pi_\varphi(\varphi(x), \cdot)/ \rmd
    \Kker^\mu_\varphi(\varphi(x), \cdot))(x))} \rmd \pi(x) < +\infty$. Therefore
  we have
  \begin{equation}
    \textstyle{\KLLigne{\pi}{\mu} = \KLLigne{\pi_\varphi}{\mu_\varphi} + \int_{\msy} \KLLigne{\Kker^\pi_\varphi(y, \cdot)}{\Kker^\mu_\varphi(y, \cdot)} \rmd \pi_\varphi(y) }
  \end{equation}
which concludes the proof
\end{proof}

We emphasize that in the case where $\msx = \rset^d \times \rset^d$,
$\varphi = \mathrm{proj}_0$ the projection on the first variable and $\pi$,
$\mu$ admit densities w.r.t. the Lebesgue measure denoted $p$ and $q$ such that
for any $x, y \in \rset^d$, $p(x,y) = p_0(x)p_{1|0}(y|x)$ and
$q(x,y) = q_0(x)q_{1|0}(y|x)$ then one can avoid using disintegration theory and
\Cref{prop:additive} can be proved directly.

\subsection{Iterative Proportional Fitting  via potentials}
\label{prop:IPFpotential:proof}

In this section, before recalling the usual definition of the IPF via potentials
we provide a condition under which the IPF sequence is well-defined which is
used throughout \Cref{sec:iterativeproportionalfitting}.

\begin{proposition}
  \label{prop:existence_IPF}
  Assume that there exists $\tilde{\pi} \in \Pens_{N+1}$ such that
  $\tilde{\pi}_0 = \pdata$, $\tilde{\pi}_N = \pprior$ and
  $\KLLigne{\tilde{\pi}}{\pi^0} < +\infty$. Then the IPF sequence is well-defined.
\end{proposition}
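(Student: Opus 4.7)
The plan is to proceed by induction on $n$ with the twofold induction hypothesis that (i) $\pi^n$ is well-defined and (ii) $\KLLigne{\tilde\pi}{\pi^n} < +\infty$, using the same witness $\tilde\pi$ throughout since $\tilde\pi_0 = \pdata$ and $\tilde\pi_N = \pprior$ simultaneously. The base case is immediate from the assumption. For the inductive step from $\pi^{2n}$ to $\pi^{2n+1}$, I will apply the additive decomposition of Kullback--Leibler divergence from \Cref{prop:additive} with $\varphi$ the projection onto the $N$-th coordinate. Writing $\pi = \pi_N \otimes \pi_{|N}$ via disintegration on $(\rset^d)^{N+1}$, this yields, for every $\pi \in \Pens_{N+1}$,
\begin{equation}
  \KLLigne{\pi}{\pi^{2n}} = \KLLigne{\pi_N}{\pi^{2n}_N} + \int_{\rset^d} \KLLigne{\pi_{|N}(\cdot|x_N)}{\pi^{2n}_{|N}(\cdot|x_N)} \rmd \pi_N(x_N).
\end{equation}

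Applying this identity to the constrained class $\{\pi_N = \pprior\}$, the first term is the constant $\KLLigne{\pprior}{\pi^{2n}_N}$ and the second is non-negative, vanishing exactly when $\pi_{|N} = \pi^{2n}_{|N}$ $\pprior$-almost everywhere. Hence the unique minimizer is $\pi^{2n+1} = \pprior \otimes \pi^{2n}_{|N}$, provided the value of the minimum is finite. To check this, I apply the same decomposition to $\tilde\pi$: since $\tilde\pi_N = \pprior$,
\begin{equation}
  \KLLigne{\tilde\pi}{\pi^{2n}} = \KLLigne{\pprior}{\pi^{2n}_N} + \int_{\rset^d} \KLLigne{\tilde\pi_{|N}(\cdot|x_N)}{\pi^{2n}_{|N}(\cdot|x_N)} \rmd \pprior(x_N).
\end{equation}
As both summands are non-negative and their sum is finite by induction, each is individually finite. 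This both makes $\pi^{2n+1}$ a bona fide argmin and, by the same decomposition applied to $\pi^{2n+1}$, immediately yields
\begin{equation}
  \KLLigne{\tilde\pi}{\pi^{2n+1}} = \int_{\rset^d} \KLLigne{\tilde\pi_{|N}(\cdot|x_N)}{\pi^{2n}_{|N}(\cdot|x_N)} \rmd \pprior(x_N) < +\infty,
\end{equation}
restoring the induction hypothesis. The step from $\pi^{2n+1}$ to $\pi^{2n+2}$ is symmetric, now disintegrating along the $0$-th coordinate and using $\tilde\pi_0 = \pdata$.

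The only genuine subtlety I anticipate is ensuring the disintegrations $\pi^{2n}_{|N}$ and $\pi^{2n+1}_{|0}$ exist and are uniquely defined, but this is handled by the disintegration theorem on Polish spaces (\Cref{thm:dis}) together with the fact that $\pi^n \ll \tilde\pi$-compatible measures, which follows from finiteness of $\KLLigne{\tilde\pi}{\pi^n}$. Everything else is a clean two-line Pythagorean-type bookkeeping, so I do not expect further obstacles.
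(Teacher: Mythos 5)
Your proof is correct, but it takes a different route from the paper's. The paper's argument is shorter and more abstract: it invokes the Pythagorean identity of Csisz\'ar \citep[Theorem 2.2]{csiszar1975divergence}, $\KLLigne{\tilde{\pi}}{\pi^0} = \KLLigne{\tilde{\pi}}{\pi^{n}} + \sum_{j=0}^{n-1}\KLLigne{\pi^{j+1}}{\pi^j}$, to conclude $\KLLigne{\tilde{\pi}}{\pi^n}<+\infty$ at every stage, and then relies on the general existence theory of I-projections (the minimizer of $\KLLigne{\cdot}{\pi^n}$ over a convex, variation-closed set containing a point of finite divergence exists and is unique) to get well-definedness of $\pi^{n+1}$; it never writes the minimizer down. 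You instead make everything explicit via the chain rule for KL (\Cref{prop:additive}): you identify $\pi^{2n+1}=\pprior\,\pi^{2n}_{|N}$ in closed form, check finiteness of the optimal value, and propagate $\KLLigne{\tilde{\pi}}{\pi^{2n+1}}<+\infty$ by hand — in effect re-deriving the Pythagorean identity for the special case of marginal constraints. What your approach buys is self-containedness and the explicit form of the iterates (which the paper only establishes separately, in \Cref{prop:IPFrecursion}, by essentially the same disintegration argument); what the paper's approach buys is brevity and applicability to general convex constraint sets. One small correction to your closing remark: the disintegrations $\pi^{2n}_{|N}$ and $\pi^{2n+1}_{|0}$ exist unconditionally on Polish spaces by \Cref{thm:dis}; no absolute-continuity hypothesis is needed for that, and finiteness of $\KLLigne{\tilde{\pi}}{\pi^n}$ gives $\tilde{\pi}\ll\pi^n$, not a domination of $\pi^n$, so the justification you sketch there is both unnecessary and pointing the wrong way.
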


\begin{proof}
  We prove the existence of the IPF sequence by recursion. First, note that
  $\pi^1$ is well-defined since $\tilde{\pi} \in \Pens_{N+1}$ with
  $\tilde{\pi}_N = \pprior$ and $\KLLigne{\tilde{\pi}}{\pi^0}< +\infty$. Second,
  assume that the sequence is well-defined up to $n$ with $n \in \nset$. Using
  \cite[Theorem 2.2]{csiszar1975divergence} we have
  \begin{equation}
    \textstyle{
      \KLLigne{\tilde{\pi}}{\pi^0} = \KLLigne{\tilde{\pi}}{\pi^{n}} + \sum_{j=0}^{n-1} \KLLigne{\pi^{j+1}}{\pi^j}  .
      }
  \end{equation}
  Hence $\KLLigne{\tilde{\pi}}{\pi^n} < +\infty$. Using that $\tilde{\pi}_0 = \pdata$
  if $n$ is odd and that $\tilde{\pi}_N = \pprior$ if $n$ is even, we get that
  $\pi^{n+1}$ is well-defined, which concludes the proof.
\end{proof}

We now introduce the IPF using potentials. This construction is not new and can
be found in
\cite{bernton2019schr,chen2016entropic,chen2020optimal,pavon2018data,peyre2019computational}
for instance (in continuous state spaces). In discrete settings the recursion
can be found in the following earlier works
\cite{kruithof1937telefoonverkeersrekening,deming1940least,fortet1940resolution,sinkhorn1967concerning,kullback1968probability,ruschendorf1995convergence}. The
IPF is defined by the following recursion $\pi^0=p$ given in
\eqref{eq:mu_forward} and for $n\geq0$
\begin{align}\label{eq:IFPrecursion_pot}
  &\pi^{2n+1} = \argmin \ensemble{\KLLigne{\pi}{\pi^{2n}}}{\pi \in \Pens_{N+1}, \ \pi_N = \pprior}  , \\
  &\pi^{2n+2} = \argmin  \ensemble{\KLLigne{\pi}{\pi^{2n+1}}}{\pi \in \Pens_{N+1}, \ \pi_0 = \pdata}. 
\end{align}
In the classical IPF presentation we obtain under mild assumptions that
$\pi^{2n+1}$ admits a density $q^n$ w.r.t the Lebesgue measure and that
$\pi^{2n}$ admits a density $p^n$ w.r.t the Lebesgue measure, given by the
following expressions
\begin{align}
  \label{eq:forward_bad}
  &\textstyle{q^n(x_{0:N}) = \pdata^n(x_0) \prod_{k=0}^{N-1} p^{n+1}_{k+1|k}(x_{k+1}|x_k) }  , \\
  &\textstyle{p^{n+1}(x_{0:N}) = \pdata(x_0) \prod_{k=0}^{N-1} p^{n+1}_{k+1|k}(x_{k+1}|x_k) }  , 
\end{align}
where $(\pdata^n(x_0))_{n \in \nset}$ and $(p^n_{k+1|k}(x_{k+1}|x_k))_{n \in \nset}$ are
densities which are iteratively computed, with $p^0_{k+1|k} = p_{k+1|k}$.

In the context of generative modelling the derivation \eqref{eq:forward_bad} is
not useful because it does not provide a generative model, \ie \ a probabilistic
transition from $\pprior$ to $\pdata$ but instead defines a transition from
$\pdata$ to $\pprior$.  Therefore, in this section only, we reverse the roles of
$\pprior$ and $\pdata$ and consider a reference density $\bar{p}$ such that for
any $x_{0:N} \in \mcx$ we have
\begin{equation}
  \label{eq:mu_forward_pot}
  \textstyle{\bar{p}(x_{0:N}) = \pprior(x_0) \prod_{k=0}^{N-1} \bar{p}_{k+1|k}(x_{k+1}|x_k)  . }
\end{equation}
Then, we consider the following recursion $\pi^0= \bar{p}$ given in
\eqref{eq:mu_forward_pot} and for $n\in \nset$
\begin{align}\label{eq:IFPrecursion_pot_pot}
  &\pi^{2n+1} = \argmin \ensemble{\KLLigne{\pi}{\pi^{2n}}}{\pi \in \Pens_{N+1}, \ \pi_N = \pdata}  , \\
  &\pi^{2n+2} = \argmin  \ensemble{\KLLigne{\pi}{\pi^{2n+1}}}{\pi \in \Pens_{N+1}, \ \pi_0 = \pprior}  .    
\end{align}
Again, we emphasize that the roles of $\pprior$ and $\pdata$ are exchanged in this formulation.
Using the classical IPF presentation we obtain the following expressions under mild assumptions
\begin{align}
  \label{eq:forward_better_before}
  &\textstyle{\bar{q}^n(x_{0:N}) = \pprior^n(x_0) \prod_{k=0}^{N-1} \bar{p}^{n+1}(x_{k+1}|x_k)  , } \\
  &\textstyle{\bar{p}^{n+1}(x_{0:N}) = \pprior(x_0) \prod_{k=0}^{N-1} \bar{p}^{n+1}(x_{k+1}|x_k)  .}
\end{align}
In this case, we get that $\pi^{2n+1}$ (approximately) defines a generative model
for large values of $n \in \nset$ since it provides a transition from to
$\pprior$ to (approximately) $\pdata$. In the following proposition we give the
precise statement corresponding to \eqref{eq:forward_better_before}. We assume that
$\bar{p}^0 = \bar{p}$.

\begin{proposition}
\label{prop:existence_potential}
  Assume that $\KLLigne{\pprior \otimes \pdata}{\bar{p}_{0,N}} < +\infty$. Then
  $(\pi^n)_{n \in \nset}$ given by \eqref{eq:IFPrecursion_pot_pot} is
  well-defined and for any $n \in \nset$ we have that $\pi^{2n+1}$ and
  $\pi^{2n+2}$ admit a density w.r.t. the Lebesgue measures denoted $\bar{q}^n$
  and $\bar{p}^{n+1}$. In addition, we have for any $n \in \nset$ and
  $x_{0:N} \in \mcx$
\begin{align}
  \label{eq:forward_better}
  &\textstyle{\bar{q}^{n}(x_{0:N}) = \pprior^n(x_0) \prod_{k=0}^{N-1} \bar{p}^{n+1}(x_{k+1}|x_k) }  , \\
  &\textstyle{\bar{p}^{n+1}(x_{0:N}) = \pprior(x_0) \prod_{k=0}^{N-1} \bar{p}^{n+1}(x_{k+1}|x_k) }  , 
\end{align}
where for any $n \in \nset$ we have for any $x_{0:N} \in \mcx$ and $k \in \{0, \dots, N-1\}$
\begin{equation}
  \pprior^n(x_0) = \psi_0^n(x_0) \pprior(x_0)  , \quad \bar{p}^{n+1}(x_{k+1}|x_k) = \bar{p}^n(x_{k+1}|x_k) \psi_{k+1}^{n}(x_{k+1}) / \psi_{k}^{n}(x_{k})  ,
\end{equation}
with 
\begin{equation}
  \psi_N^{n}(x_N) = \pdata(x_N) / \bar{p}_N^{n}(x_N)  , \quad \textstyle{\psi_k^n(x_k) = \int_{\rset^d} \psi_{k+1}^n(x_{k+1}) \bar{p}^{n}(x_{k+1}|x_k) \rmd x_{k+1}  . }
\end{equation}
\end{proposition}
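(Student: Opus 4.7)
The plan is to proceed in three steps: (i) verify that the IPF sequence is well-defined by invoking Proposition \ref{prop:existence_IPF}; (ii) use the additive decomposition of the KL divergence (Proposition \ref{prop:additive}) to derive a closed-form expression for each iterate in terms of the previous one; (iii) propagate the result through an induction that maintains the Markov structure and gives the explicit potential representation.

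For step (i), I would construct a candidate $\tilde{\pi}\in\Pens_{N+1}$ by setting $\tilde{\pi}_{0,N} = \pprior\otimes\pdata$ and using the bridge conditional $\tilde{\pi}_{|0,N} = \bar{p}_{|0,N}$ of the reference chain. Applying Proposition \ref{prop:additive} to the endpoint projection $\omega\mapsto(x_0,x_N)$ yields $\KLLigne{\tilde{\pi}}{\bar{p}} = \KLLigne{\pprior\otimes\pdata}{\bar{p}_{0,N}}$, which is finite by hypothesis, so Proposition \ref{prop:existence_IPF} applies and $\tilde{\pi}_0=\pprior$, $\tilde{\pi}_N=\pdata$.

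For step (ii), each IPF subproblem is an $I$-projection onto an affine set defined by a single marginal constraint. Taking the odd step as example, Proposition \ref{prop:additive} applied with the projection $\omega\mapsto\omega_N$ gives, for any $\pi$ with $\pi_N=\pdata$,
\begin{equation*}
\KLLigne{\pi}{\pi^{2n}} = \KLLigne{\pdata}{\bar{p}^n_N} + \int \KLLigne{\pi_{|N}(\cdot\mid x_N)}{\pi^{2n}_{|N}(\cdot\mid x_N)}\, \pdata(\rmd x_N).
\end{equation*}
The first term does not depend on $\pi$ and the second is non-negative, vanishing uniquely when $\pi_{|N}=\pi^{2n}_{|N}$. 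Thus $\pi^{2n+1}$ admits a density and $\bar{q}^n = \bar{p}^n\cdot\psi_N^n$ with $\psi_N^n(x_N) = \pdata(x_N)/\bar{p}^n_N(x_N)$; an analogous computation at time $0$ gives $\bar{p}^{n+1} = \bar{q}^n\cdot\pprior/\bar{q}^n_0$.

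For step (iii), I would induct on $n$, with base case $\bar{p}^0 = \bar{p}$ given by \eqref{eq:mu_forward_pot}. Assuming $\bar{p}^n$ admits the claimed Markov factorisation, I would use the telescoping identity $\psi_N^n(x_N) = \psi_0^n(x_0)\prod_{k=0}^{N-1}\psi_{k+1}^n(x_{k+1})/\psi_k^n(x_k)$ (valid pointwise where the $\psi_k^n$ are strictly positive) to rearrange $\bar{q}^n$ into Markov form with transitions $\bar{p}^{n+1}(x_{k+1}\mid x_k) = \bar{p}^n(x_{k+1}\mid x_k)\psi_{k+1}^n(x_{k+1})/\psi_k^n(x_k)$ and initial density $\pprior^n = \pprior\psi_0^n$; the backward recursion $\psi_k^n(x_k) = \int\psi_{k+1}^n(x_{k+1})\bar{p}^n(x_{k+1}\mid x_k)\,\rmd x_{k+1}$ ensures that each factor is a valid probability kernel and that $\pprior^n$ integrates to one (by integrating all variables out). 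The even-step update then only swaps $\pprior^n$ for $\pprior$ without altering transitions, yielding the claimed form of $\bar{p}^{n+1}$. The hard part will be the bookkeeping of positivity and integrability: the potentials $\psi_k^n$ must stay finite and strictly positive for the backward recursion and the normalisation arguments to make sense, and $\bar{p}^n_N$ must dominate $\pdata$ at every stage. Both follow from the absolute continuity $\pprior\otimes\pdata\ll\bar{p}_{0,N}$ forced by the finite KL hypothesis, together with the observation that each IPF step modifies only one marginal while preserving absolute continuity of the joint, so null sets do not accumulate across iterations.
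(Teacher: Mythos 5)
Your proposal is correct and follows the same overall architecture as the paper's proof: construct $\tilde{\pi}=(\pprior\otimes\pdata)\bar{p}_{|0,N}$, apply \Cref{prop:additive} to reduce $\KLLigne{\tilde{\pi}}{\bar{p}}$ to $\KLLigne{\pprior\otimes\pdata}{\bar{p}_{0,N}}$, invoke \Cref{prop:existence_IPF} for well-definedness, and then telescope the endpoint potential $\psi_N^n$ into the transition kernels via the backward recursion. The one genuine difference is in how you identify each half-bridge: the paper cites Csisz\'ar's characterization of the $I$-projection \cite[Theorem 3.1]{csiszar1975divergence} to obtain $\bar{q}^n=\bar{p}^n\psi_N^n$ with $\psi_N^n$ a function of $x_N$ alone, whereas you derive the same conclusion directly from the additive KL decomposition: with $\pi_N=\pdata$ fixed, the term $\KLLigne{\pdata}{\bar{p}^n_N}$ is constant and the conditional term vanishes exactly at $\pi_{|N}=\pi^{2n}_{|N}$, so $\pi^{2n+1}=\pdata\,\pi^{2n}_{|N}$. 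This is in fact the argument the paper uses for \Cref{prop:IPFrecursion}, so it is fully consistent with the surrounding development; it is more self-contained (no external $I$-projection theorem) at the price of having to note that $\KLLigne{\pdata}{\bar{p}_N^n}<+\infty$ so that the decomposition determines a unique minimizer --- which, as you implicitly use, follows from $\KLLigne{\tilde{\pi}}{\pi^{2n}}<+\infty$ established in your step (i). Your closing remarks on positivity are also handled correctly: strict positivity of the reference transitions and of $\pprior,\pdata$ (assumed in the paper's setup) propagates through the recursion, which is what the paper expresses by saying that $\tilde{\pi}$ is equivalent to the Lebesgue measure.
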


\begin{proof}
  Let $\tilde{\pi} = (\pprior \otimes \pdata)\bar{p}_{|0,N}$. Using
  \Cref{prop:additive} we get that
  $\KLLigne{\tilde{\pi}}{\bar{p}} = \KLLigne{\pprior \otimes \pdata}{\bar{p}_{0,N}} < +\infty$.  Using
  \Cref{prop:existence_IPF} the IPF sequence is well-defined. In addition, using
  \cite[Theorem 3.1]{csiszar1975divergence} for any $n \in \nset$ there exists
  $\psi_N^n: \ \rset^d \to \coint{0,+\infty}$ such that for any
  $x_{0:N} \in \msa$ with $\tilde{\pi}(\msa) = 1$ we have
  \begin{equation}
    \bar{q}^{n}(x_{0:N}) = \bar{p}^{n}(x_{0:N}) \psi_N^n(x_N)  . 
  \end{equation}
  Since $\tilde{\pi} $ is equivalent to the Lebesgue measure we get that for any
  $x_{0:N} \in \rset^d$
  \begin{equation}
    \bar{q}^n(x_{0:N}) = \bar{p}^n(x_{0:N}) \psi_N^n(x_N)  . 
  \end{equation}
  Let $n \in \nset$.  We have for any $x_N \in \rset^d$,
  $\pdata(x_N) = \bar{q}^n(x_N) = \bar{p}^n_N(x_{N}) \psi_N^n(x_N)$. Hence, we get
  that for any $N \in \nset$, $\psi_N^n(x_N) = \pdata(x_N) / \bar{p}^{n}_N(x_N)$. For
  any $x_{0:N} \in \mcx$ and $k \in \{0, \dots, N-1\}$ let
  \begin{equation}
    \textstyle{\psi_k^n(x_k) = \int_{\rset^d} \psi_{k+1}^n(x_{k+1}) \bar{p}^n(x_{k+1}|x_k) \rmd x_{k+1}  . }
  \end{equation}
  We obtain that for any $x_{0:N} \in \mcx$
  \begin{equation}
\textstyle{    \bar{q}^n(x_{0:N}) = \pprior(x_0) \psi_0(x_0) \prod_{k=0}^{N-1}(\bar{p}^n(x_{k+1}|x_k) \psi_{k+1}(x_{k+1}) / \psi_k(x_k))  .}
  \end{equation}
  Hence, we get that for any $x_{0:N} \in \mcx$,
  $\bar{q}^{n}(x_0) = \pprior^n(x_0) \prod_{k=0}^{N-1} \bar{p}^{n+1}(x_{k+1}|x_k)$.  Using
  \Cref{prop:additive} we get that for any $x_{0:N} \in \mcx$,
  $\bar{p}^{n+1}(x_0) = \pprior(x_0) \prod_{k=0}^{N-1} \bar{p}^{n+1}(x_{k+1}|x_k)$, which
  concludes the proof.
\end{proof}

The previous expression is not symmetric and the IPF iterations appear as a
policy refinement of the original forward dynamic $\bar{p}$. In the next proposition
we present another potential formulation of the IPF iterations which is symmetric.

\begin{proposition}
  Assume that $\KLLigne{\pprior \otimes \pdata}{q_{0,N}} < +\infty$. Then
  $(\pi^n)_{n \in \nset}$ given by \eqref{eq:IFPrecursion_pot_pot} is
  well-defined and for any $n \in \nset$ we have that $\pi^{2n+1}$ and
  $\pi^{2n+2}$ admit a density w.r.t. the Lebesgue measures denoted $\bar{q}^n$
  and $\bar{p}^{n+1}$. In addition, we have for any $n \in \nset$ and
  $x_{0:N} \in \mcx$
\begin{align}
  \label{eq:forward_sym}
  &\textstyle{\bar{q}^n(x_{0:N}) = \varphi_0^n(x_0) \prod_{k=0}^{N-1} \bar{p}(x_{k+1}|x_k) \psi_N^{n}(x_N) }  , \\
  &\textstyle{\bar{p}^{n+1}(x_{0:N}) = \varphi_0^{n+1}(x_0)  \prod_{k=0}^{N-1} \bar{p}(x_{k+1}|x_k) \psi_N^n(x_N) }  , 
\end{align}
where for any $n \in \nset$ we have for any $x_{0:N} \in \mcx$ and $k \in \{0, \dots, N-1\}$
\begin{align}
  &\psi_N^{n}(x_N) =  \pdata(x_N) / \varphi_N^{n}(x_N)  , \quad \textstyle{\psi_k^n(x_k) = \int_{\rset^d} \psi_{k+1}^n(x_{k+1}) \bar{p}(x_{k+1}|x_k) \rmd x_{k+1}  , } \\
  &\varphi_0^{n+1}(x_0) =  \pprior(x_0) / \psi_0^{n}(x_0)  , \quad \textstyle{\varphi_{k+1}^{n+1}(x_{k+1}) = \int_{\rset^d} \varphi_{k}^{n+1}(x_{k}) \bar{p}(x_{k+1}|x_k) \rmd x_{k}  , } 
\end{align}
and $\varphi_0^0 = \pprior$ and $\psi_N^{-1} = 1$.
\end{proposition}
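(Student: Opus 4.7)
The plan is to proceed by induction on $n$, combining the Csisz\'ar projection theorem with the additive formula for Kullback--Leibler divergence (\Cref{prop:additive}) to obtain the symmetric multiplicative structure. First I would establish that the IPF sequence is well-defined: setting $\tilde{\pi} = (\pprior \otimes \pdata)\bar{p}_{|0,N}$, \Cref{prop:additive} gives $\KLLigne{\tilde{\pi}}{\bar{p}} = \KLLigne{\pprior \otimes \pdata}{\bar{p}_{0,N}} < +\infty$, so the existence hypothesis of \Cref{prop:existence_IPF} is satisfied. In particular each $\pi^n$ is absolutely continuous with respect to $\bar{p}$ (by the chain rule for KL) and hence admits a density.

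For the symmetric representation I would induct on $n$, with base case $\bar{p}^0 = \bar{p}$, corresponding to $\varphi_0^0 = \pprior$ and $\psi_N^{-1} \equiv 1$. Assuming that $\bar{p}^{n}(x_{0:N}) = \varphi_0^{n}(x_0) \prod_{k=0}^{N-1} \bar{p}(x_{k+1}|x_k) \psi_N^{n-1}(x_N)$ for some positive functions $\varphi_0^n, \psi_N^{n-1}$, I would apply Csisz\'ar's projection theorem (\cite[Theorem~3.1]{csiszar1975divergence}) to the constraint $\pi_N = \pdata$: this shows that $\bar{q}^{n}(x_{0:N})/\bar{p}^{n}(x_{0:N})$ depends only on $x_N$ (on a set of full measure, hence everywhere since the densities are equivalent to Lebesgue). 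So $\bar{q}^n(x_{0:N}) = \varphi_0^n(x_0) \prod_{k=0}^{N-1} \bar{p}(x_{k+1}|x_k) \psi_N^n(x_N)$ for some $\psi_N^n$. Analogously, the projection onto $\pi_0 = \pprior$ shows $\bar{p}^{n+1}/\bar{q}^n$ depends only on $x_0$, producing $\varphi_0^{n+1}$.

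To identify the potentials explicitly, I would marginalize. Writing $\bar{q}^n_N(x_N) = \psi_N^n(x_N) \int_{(\rset^d)^N} \varphi_0^n(x_0) \prod_{k=0}^{N-1} \bar{p}(x_{k+1}|x_k) \, \rmd x_{0:N-1}$ and integrating out the chain from left to right via the recursive definition $\varphi_{k+1}^n(x_{k+1}) = \int \varphi_k^n(x_k) \bar{p}(x_{k+1}|x_k) \rmd x_k$ yields $\bar{q}^n_N(x_N) = \psi_N^n(x_N) \varphi_N^n(x_N)$; enforcing $\bar{q}^n_N = \pdata$ then gives the stated formula $\psi_N^n = \pdata/\varphi_N^n$. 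Symmetrically, using $\psi_k^n(x_k) = \int \psi_{k+1}^n(x_{k+1}) \bar{p}(x_{k+1}|x_k) \rmd x_{k+1}$ to integrate $\bar{p}^{n+1}_0$ from right to left gives $\varphi_0^{n+1} \psi_0^n = \pprior$, i.e.\ $\varphi_0^{n+1} = \pprior/\psi_0^n$.

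The main obstacle is not any single algebraic step but rather the consistent bookkeeping: keeping track that the $(x_0,x_N)$-dependent factors accumulated at each half-step are precisely the boundary potentials, verifying positivity and finiteness of $\psi_k^n, \varphi_k^n$ throughout (which follows from the finiteness of KL at each iteration, ensuring the Radon--Nikodym derivatives obtained from Csisz\'ar's theorem are strictly positive almost everywhere, and then extending by continuity/equivalence to the Lebesgue measure), and checking that the recursive updates are compatible across half-steps so that the $\psi_N^n$ produced at the odd iteration matches the one appearing inside $\bar{p}^{n+1}$ at the subsequent even iteration. Once these points are settled the induction closes.
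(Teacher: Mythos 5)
Your proposal is correct and follows essentially the same route as the paper's proof: well-definedness via \Cref{prop:additive} together with \Cref{prop:existence_IPF}, Csisz\'ar's projection theorem to show each half-step multiplies the density by a function of $x_N$ (resp.\ $x_0$) alone, accumulation of these factors into $\psi_N^n$ and $\varphi_0^{n+1}$ by recursion, and identification of the potentials through the marginal constraints $\bar{q}^n_N = \pdata$ and $\bar{p}^{n+1}_0 = \pprior$. No substantive differences.
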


\begin{proof}
  Let $\tilde{\pi} = (\pprior \otimes \pdata)q_{|0,N}$. Using
  \Cref{prop:additive} we get that
  $\KLLigne{\tilde{\pi}}{q} = \KLLigne{\pprior \otimes \pdata}{p_{0,N}} < +\infty$.  Using
  \Cref{prop:existence_IPF} the IPF sequence is well-defined. In addition, using
  \cite[Theorem 3.1]{csiszar1975divergence} for any $n \in \nset$ there exists
  $\psi_N^n: \ \rset^d \to \coint{0,+\infty}$ such that for any
  $x_{0:N} \in \msa$ with $\tilde{\pi}(\msa) = 1$ we have
  \begin{equation}
    \bar{q}^n(x_{0:N}) = \bar{p}^n(x_{0:N}) \tilde{\psi}_N^n(x_N)  , \qquad \bar{p}^{n+1}(x_{0:N}) = \bar{q}^n(x_{0:N}) \tilde{\varphi}_0^n(x_0)  . 
  \end{equation}
  Since $\tilde{\pi} $ is equivalent to the Lebesgue measure we get that for any
  $x_{0:N} \in \rset^d$
  \begin{equation}
    \bar{q}^n(x_{0:N}) = \bar{p}^n(x_{0:N}) \tilde{\psi}_N^n(x_N)  ,  \qquad \bar{p}^{n+1}(x_{0:N}) = \bar{q}^n(x_{0:N}) \tilde{\varphi}_0^n(x_0)  .  
  \end{equation}
  For any $n \in \nset$, let $\psi_N^{n}= \psi_N^{n-1} \tilde{\psi}^n_N$ and
  $\varphi_0^{n+1}= \varphi_0^{n} \tilde{\varphi}_0^n$. By recursion, we get
  that for any $n \in \nset$ and $x_{0:N} \in \mcx$
\begin{align}
  \label{eq:forward_sym}
  &\textstyle{\bar{q}^n(x_{0:N}) = \varphi_0^n(x_0) \prod_{k=0}^{N-1} \bar{p}(x_{k+1}|x_k) \psi_N^{n}(x_N) }  , \\
  &\textstyle{\bar{p}^{n+1}(x_{0:N}) = \varphi_0^{n+1}(x_0)  \prod_{k=0}^{N-1} \bar{p}(x_{k+1}|x_k) \psi_N^n(x_N)  .}
\end{align}
Let $n \in \nset$. For any $x_N \in \rset^d$ we have
\begin{equation}
  \label{eq:equ_fin}
  \bar{q}^n_N(x_N) = \pdata(x_N) = \bar{p}^n_N(x_N) \tilde{\psi}_N^n(x_n)  . 
\end{equation}
In addition, for any $k \in \{0, \dots, N-1\}$ and $x_{0:N} \in \mcx$ we define
$\varphi_{k+1}^{n+1}(x_{k+1}) = \int_{\rset^d} \varphi_{k}^{n+1}(x_{k})
\bar{p}(x_{k+1}|x_k) \rmd x_{k}$. We have for any $x_{N} \in \rset^d$,
$\bar{p}_N^{n}(x_N) = \varphi_N^n(x_N)\psi_N^{n-1}(x_n) $. Combining this result
with \eqref{eq:equ_fin} we get that for any $x_N \in \rset^d$
\begin{equation}
\psi_N^n(x_N) = \pdata(x_N) / \varphi^n_N(x_N)  . 
\end{equation}
Similarly, we get that for any $x_0 \in \rset^d$,
$\varphi_0^{n+1}(x_0) = \pprior(x_0) / \psi_0^{n}(x_0)$, which concludes the
proof.
\end{proof}

\subsection{Proof of \Cref{prop:IPFrecursion}}\label{prop:IPFrecursion:proof}

Let $\tilde{\pi} = (\pprior \otimes \pdata)p_{|0,N}$. Using
\Cref{prop:additive} we get that
$\KLLigne{\tilde{\pi}}{p} = \KLLigne{\pprior \otimes \pdata}{p_{0,N}} <
+\infty$.  Using \Cref{prop:existence_IPF} the IPF sequence is
well-defined. Note that $\pi^0$ admits a density w.r.t. the Lebesgue measure
given by $p>0$. Let $n \in \nset$ and assume that $p^n > 0$ is given for any $x_{0:N} \in \mcx$ by
\begin{equation}
  \label{eq:markov_2n}
  \textstyle{p^n(x_{0:N}) = \pdata(x_0) \prod_{k=0}^{N-1} q^{n-1}(x_{k+1}|x_{k})  . }
\end{equation}
 Using
\Cref{prop:additive} we get that for any $\pi \in \Pens_{N+1}$ such
that $\pi_N = \pprior$ we have
\begin{equation}
  \textstyle{\KLLigne{\pi}{\pi^{2n}} = \KLLigne{\pprior}{\pi^{2n}_0} + \int_{\rset^d} \KLLigne{\pi_{|N}}{\pi^{2n}_{|N}} \pprior(x_N) \rmd x_N  . }
\end{equation}
Hence, we have that $\pi^{2n+1} = \pprior \pi^{2n}_{|N}$. Since $p^n > 0$ we
get that for any $\pi^{2n}_{|N}$ satisfies for any $\msa \in \mcb{\mcx}$ and $x_N \in \rset^d$
\begin{equation}
  \textstyle{\pi^{2n}_{|N}(\msa|x_N) = \int_{\msa} p^n(x_{0:N})/p^n(x_N) \rmd x_{0:N} \updelta_{x_N}(\msa_N)}  . 
\end{equation}
Therefore, $\pi^{2n+1}$ admits a density w.r.t. the Lebesgue measure denoted
$q^n$ and given for any $x_{0:N} \in \mcx$ by
\begin{align}
  q^n(x_{0:N}) &= p^n(x_{0:N}) \pprior(x_N)/p^n(x_N) \\
               &\textstyle{= \pprior(x_N) \prod_{k=0}^{N-1} p^n(x_{k+1}|x_k) p^n(x_k) / p^n(x_{k+1}) = \pprior(x_N) \prod_{k=0}^{N-1} p^n(x_{k}|x_{k+1}) }  , \end{align}
             where we have used \eqref{eq:markov_2n}. 
             Note that $q^n > 0$. Similarly, we get that for any $x_{0:N} \in \mcx$
\begin{equation}
  \textstyle{p^{n+1}(x_{0:N}) = \pdata(x_0) \prod_{k=0}^{N-1} q^n(x_{k+1}|x_{k})  . }
\end{equation}
Note that again that $p^{n+1} > 0$. We conclude by recursion.

\subsection{Link with autoencoders}
\label{sec:autoencoder}

Consider the maximum likelihood problem
\begin{equation}
  \textstyle{
    q^{\star} = \argmax \ensembleLigne{\mathbb{E}_{\pdata}[\log q_0(X_0)]}{q \in \Pens_d(\mcx), \ q_N = \pprior}  ,
    }
\end{equation}
where $\Pens_d(\mcx)$ is the subset of the probability distribution over $\mcx$
which admit a density w.r.t. the Lebesgue measure.
Using Jensen's inequality we have for any $q \in \Pens_d(\mcx)$
\begin{align}  
  \mathbb{E}_{\pdata}[\log q_0(X_0)] &= \textstyle{\int_{\rset^d}\log(\int_{(\rset^d)^{N-1}} q(x_{0:N}) p(x_{1:N}|x_0) / p(x_{1:N}|x_0) \rmd x_{1:N}) p_0(x_0)\rmd x_0} \\
                                      &\geq \textstyle{\int_{\mcx} \log(q(x_{0:N})/ p(x_{1:N}|x_0)) p(x_{0:N}) \rmd x_{0:N} } \geq -\KLLigne{p}{q} - \Ent(p_0)  . 
\end{align}
This Evidence Lower Bound (ELBO) is similar to the one identified in
\cite{ho2020denoising}. Maximizing this ELBO is equivalent to solving the
following problem
\begin{equation}
    \textstyle{
    q^0 = \argmin \ensembleLigne{\KLLigne{q}{p}}{q \in \Pens_d(\mcx), \ q_N = \pprior}  ,
    }
\end{equation}
which is the first step of IPF. Hence subsequent steps can be obtained by
maximizing ELBOs associated with the following maximum likelihood problems
for any $n \in \nset$
\begin{align}
    &\textstyle{
    q^{\star} = \argmax \ensembleLigne{\mathbb{E}_{\pdata}[\log q_0(X_0)]}{q \in \Pens_d(\mcx), \ q_N = \pprior}  ,
    }  \\
    &\textstyle{
    p^{\star} = \argmax \ensembleLigne{\mathbb{E}_{\pprior}[\log p_N(X_N)]}{p \in \Pens_d(\mcx), \ p_0 = \pdata} 
    }  .
\end{align}

\section{Alternative variational formulations}
\label{sec:altern-vari-form}

In this section, we draw links between IPF and score-matching techniques. We
start by proving \Cref{prop:generalizedscorematching} in
\Cref{prop:generalizedscorematching:proof}. We then present alternative
variational formulations in \Cref{sec:variational-formulas}.

\subsection{Proof of \Cref{prop:generalizedscorematching}}
\label{prop:generalizedscorematching:proof}

We only prove \eqref{eq:regressionb} since the proof \eqref{eq:regressionf} is
similar. Let $n \in \nset$ and $k \in \{0, \dots, N-1\}$. For any
$x_{k+1} \in \rset^d$ we have
\begin{equation}
  \textstyle{
    p^n_{k+1}(x_{k+1}) = (4 \uppi \gamma_{k+1})^{-d/2} \int_{\rset^d} p^n(x_k) \exp[-\normLigne{F_k^n(x_k) - x_{k+1}}^2/(4\gamma_{k+1})] \rmd x_k  ,
    }
\end{equation}
with $F_k^n(x_k) = x_k + \gamma_{k+1} f_k^n(x_k)$. Since $p^n_k>0$ is bounded
using the dominated convergence theorem we have for any $x_{k+1} \in \rset^d$
\begin{equation}
  \textstyle{\nabla \log p^n_{k+1} (x_{k+1}) = \int_{\rset^d} (F_k^n(x_k) - x_{k+1})/(2 \gamma_{k+1}) p_{k|k+1}(x_k | x_{k+1}) \rmd x_{k}  . }
\end{equation}
Therefore we get that for any $x_{k+1} \in \rset^d$
\begin{equation}
  \textstyle{b_{k+1}^n(x_{k+1}) = \int_{\rset^d} (F_k^n(x_k) - F_{k}^n(x_{k+1}))/\gamma_{k+1} p_{k|k+1}(x_k | x_{k+1}) \rmd x_{k}  . }
\end{equation}
This is equivalent to
\begin{equation}
  \textstyle{B_{k+1}^n(x_{k+1})  = \CPELigne{X_{k+1} + F_k^n(X_k) - F_{k}^n(X_{k+1})}{X_{k+1} = x_{k+1}}}  ,
\end{equation}
with $(X_k, X_{k+1}) \sim p_{k, k+1}(x_k, x_{k+1})$. Hence, we get that
\begin{equation}
\textstyle{B^n_{k+1}=\argmin_{\mathrm{B}\in \rmL^2(\rset^d, \rset^d)} \expeMarkovLigne{{p^{n}_{k,k+1}}}{\normLigne{\mathrm{B}(X_{k+1})-(X_{k+1} + F^{n}_k(X_{k})-F^{n}_{k}(X_{k+1}))}^2}}  ,
\end{equation}
which concludes the proof.

 \subsection{Variational formulas}
 \label{sec:variational-formulas}

 In \Cref{prop:generalizedscorematching} and \Cref{sec:iter-score-match} we
 present a variational formula for $B_{k+1}^n$ and $F_k^{n+1}$ for any
 $n \in \nset$ and $k \in \{0, \dots, N-1\}$, where we recall that for any
 $x \in \rset^d$ we have
 \begin{equation}
   B_{k+1}^n(x) = x + \gamma_{k+1} b_{k+1}^n(x)  , \qquad F_{k}^{n+1} = x + \gamma_{k+1} f_{k}^{n+1}(x)  ,
 \end{equation}
 where we have
 \begin{equation}
   \label{eq:drift_recursion}
   b_{k+1}^n(x) = -f_k^n(x)  + 2 \nabla \log p_{k+1}^n (x)  , \qquad f_{k}^{n+1}(x) = -b_{k+1}^n(x)  + 2 \nabla \log q_{k}^n (x)  .
 \end{equation}
 In the rest of this section we assume that  for any $n \in \nset$,
 $k \in \{0, \dots, N-1\}$ and $x \in \rset^d$ we have
 \begin{align}
   \textstyle{q_{k|k+1}^n(x_k|x_{k+1}) = (4 \uppi \gamma_{k+1})^{-d/2} \exp[-\normLigne{x_k - B_{k+1}^n(x_{k+1})}^2/(4\gamma_{k+1})] } , \\
   \textstyle{p_{k+1|k}^{n+1}(x_{k+1}|x_{k}) = (4 \uppi \gamma_{k+1})^{-d/2} \exp[-\normLigne{x_{k+1} - F_{k}^{n+1}(x_{k})}^2/(4\gamma_{k+1})] } . 
 \end{align}
 We recall that in this case \Cref{prop:generalizedscorematching} ensures that
 for any $n \in \nset$ and $k\in \{0, \dots, N-1\}$
\begin{align}
&\textstyle{B^n_{k+1}=\argmin_{\mathrm{B}\in \rmL^2(\rset^d, \rset^d)} \expeMarkovLigne{{p^{n}_{k,k+1}}}{\normLigne{\mathrm{B}(X_{k+1})-(X_{k+1} + F^{n}_k(X_{k})-F^{n}_{k}(X_{k+1}))}^2}},\\
&\textstyle{F^{n+1}_{k}=\argmin_{\mathrm{F}\in \rmL^2(\rset^d, \rset^d)} \expeMarkovLigne{q^{n}_{k,k+1}}{\normLigne{\mathrm{F}(X_k)-(X_{k} + B^{n}_{k+1}(X_{k+1})-B^n_{k+1}(X_{k}))}^2 }}.
\end{align}
In the rest of this section we derive other variational formulas and
discuss their practical limitations/advantages.

\subsubsection{Score-matching formula and sum of networks}
\label{sec:score-match-form_sum}
First, using \eqref{eq:drift_recursion} we have for any $n \in \nset$,
$k \in \{0, \dots, N-1\}$ and $x \in \rset^d$
\begin{align}
  &b_{k+1}^n(x) = \textstyle{\alpha x + 2 \sum_{j=0}^n \nabla \log p_{k+1}^j(x) - 2 \sum_{j=0}^{n-1} \nabla \log q_k^j(x)  ,} \label{eq:sumscore}\\
  &f_k^n(x) = \textstyle{-\alpha x + 2 \sum_{j=0}^{n-1} \nabla \log q_k^j(x) - 2 \sum_{j=0}^{n-1} \nabla \log p_{k+1}^{j}(x)  . } \label{eq:sumscore2}
\end{align}
In the following proposition we derive a variational formula for
$\nabla \log p_{k+1}^n$ and $\nabla \log q_{k}^n(x)$ for any $n \in \nset$ and
$k \in \{0, \dots, N-1\}$.

\begin{proposition}\label{prop:generalizedscorematching_sum} 
  For any $n \in \nset$ and $k\in \{0, \dots, N-1\}$ we have
\begin{align}
&\textstyle{\nabla \log p_{k+1}^n =\argmin_{u \in \rmL^2(\rset^d, \rset^d)} \expeMarkovLigne{{p^{n}_{k,k+1}}}{\normLigne{u(X_{k+1})-(F_k^n(X_k) - X_{k+1})/(2\gamma_{k+1})}^2}},\label{eq:regressionb_score}\\
&\textstyle{\nabla \log q_{k}^{n} =\argmin_{v \in \rmL^2(\rset^d, \rset^d)} \expeMarkovLigne{q^{n}_{k,k+1}}{\normLigne{v(X_k)-(B_{k+1}^{n}(X_{k+1})-X_{k})/(2 \gamma_{k+1})}^2 }}.\label{eq:regressionf_score}
 \end{align} 
\end{proposition}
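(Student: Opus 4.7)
My plan is to mirror the proof of \Cref{prop:generalizedscorematching} given in \Cref{prop:generalizedscorematching:proof}, since the two statements have the same structure: in both cases one identifies an $\rmL^2$ minimizer with a conditional expectation, then computes that conditional expectation explicitly using the Gaussian form of the one-step transition. The only twist here is that the quantity to be matched is $\nabla \log p^n_{k+1}$ (resp.\ $\nabla \log q^n_k$) rather than $B^n_{k+1}$ (resp.\ $F^{n+1}_k$), so the ``Tweedie-type'' identity will come out with a factor $1/(2\gamma_{k+1})$ instead of $1$.

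For \eqref{eq:regressionb_score}, I would first invoke the standard fact that for a square-integrable $\rset^d$-valued random pair $(X,Z)$, the minimizer of $u \mapsto \mathbb{E}[\normLigne{u(X)-Z}^2]$ over $\rmL^2$ is $x \mapsto \mathbb{E}[Z \mid X = x]$. Hence the minimizer in \eqref{eq:regressionb_score} is the map $x_{k+1} \mapsto \CPELigne{(F_k^n(X_k)-X_{k+1})/(2\gamma_{k+1})}{X_{k+1}=x_{k+1}}$ under $p^n_{k,k+1}$. Second, writing the marginal as
\begin{equation}
\textstyle{p^n_{k+1}(x_{k+1}) = (4\uppi \gamma_{k+1})^{-d/2} \int_{\rset^d} p_k^n(x_k) \exp[-\normLigne{F_k^n(x_k) - x_{k+1}}^2/(4\gamma_{k+1})] \rmd x_k,}
\end{equation}
since $p_k^n$ is positive and the integrand is dominated (the same argument as in \Cref{prop:generalizedscorematching:proof}), differentiating under the integral sign yields
\begin{equation}
\textstyle{\nabla \log p^n_{k+1}(x_{k+1}) = \int_{\rset^d} (F_k^n(x_k) - x_{k+1})/(2\gamma_{k+1}) \, p^n_{k|k+1}(x_k \mid x_{k+1}) \rmd x_k,}
\end{equation}
which is exactly the conditional expectation above, proving \eqref{eq:regressionb_score}.

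For \eqref{eq:regressionf_score}, I would argue symmetrically, exchanging the roles of the forward and the backward transition. By \Cref{prop:IPFrecursion} and the Gaussian assumption on $q^n_{k|k+1}$, we have
\begin{equation}
\textstyle{q^n_{k}(x_k) = (4\uppi \gamma_{k+1})^{-d/2} \int_{\rset^d} q_{k+1}^n(x_{k+1}) \exp[-\normLigne{x_k - B_{k+1}^n(x_{k+1})}^2/(4\gamma_{k+1})] \rmd x_{k+1}.}
\end{equation}
Differentiating under the integral sign gives $\nabla \log q^n_k(x_k) = \CPELigne{(B^n_{k+1}(X_{k+1}) - X_k)/(2\gamma_{k+1})}{X_k = x_k}$ under $q^n_{k,k+1}$, which is again the $\rmL^2$ minimizer appearing in \eqref{eq:regressionf_score}.

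There is no substantial obstacle: the proof is two lines of dominated-convergence-type differentiation plus the characterization of conditional expectation as an $\rmL^2$ projection. The only minor care needed is to justify differentiating under the integral, for which one invokes boundedness/integrability of $p_k^n$ (resp.\ $q_{k+1}^n$) exactly as in \Cref{prop:generalizedscorematching:proof}; this is guaranteed by the same hypotheses that make the IPF iterates in \Cref{prop:IPFrecursion} admit positive densities.
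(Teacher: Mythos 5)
Your proposal is correct and follows essentially the same route as the paper's proof: write the marginal $p^n_{k+1}$ (resp.\ $q^n_k$) as a Gaussian convolution against $p^n_k$ (resp.\ $q^n_{k+1}$), differentiate under the integral sign to identify $\nabla\log p^n_{k+1}$ (resp.\ $\nabla\log q^n_k$) with the conditional expectation of $(F^n_k(X_k)-X_{k+1})/(2\gamma_{k+1})$ (resp.\ $(B^n_{k+1}(X_{k+1})-X_k)/(2\gamma_{k+1})$), and conclude by the $\rmL^2$-projection characterization of conditional expectation. The paper only writes out the first identity and declares the second ``similar''; your symmetric treatment of \eqref{eq:regressionf_score} is exactly that similar argument.
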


\begin{proof}
  The proof is similar to the one of \Cref{prop:generalizedscorematching} but is
  provided for completeness.
  We only prove \eqref{eq:regressionb_score} since the proof \eqref{eq:regressionf_score} is
similar. Let $n \in \nset$ and $k \in \{0, \dots, N-1\}$. For any
$x_{k+1} \in \rset^d$ we have
\begin{equation}
  \textstyle{
    p^n_{k+1}(x_{k+1}) = (4 \uppi \gamma_{k+1})^{-d/2} \int_{\rset^d} p^n(x_k) \exp[-\normLigne{F_k^n(x_k) - x_{k+1}}^2/(4\gamma_{k+1})] \rmd x_k  ,
    }
\end{equation}
with $F_k^n(x_k) = x_k + \gamma_{k+1} f_k^n(x_k)$. Since $p^n_k>0$ is bounded
using the dominated convergence theorem we have for any $x_{k+1} \in \rset^d$
\begin{equation}
  \textstyle{\nabla \log p^n_{k+1} (x_{k+1}) = \int_{\rset^d} (F_k^n(x_k) - x_{k+1})/(2 \gamma_{k+1}) p_{k|k+1}(x_k | x_{k+1}) \rmd x_{k}  . }
\end{equation}
This is equivalent to
\begin{equation}
  \textstyle{\nabla \log p^n_{k+1} (x_{k+1})  = \CPELigne{(F_k^n(X_k) - X_{k+1})/(2 \gamma_{k+1})}{X_{k+1} = x_{k+1}}}  ,
\end{equation}
with $(X_k, X_{k+1}) \sim p_{k,k+1}(x_k, x_{k+1})$. Hence, we get that
\begin{equation}
\textstyle{\nabla \log p^n_{k+1} =\argmin_{u\in \rmL^2(\rset^d, \rset^d)} \expeMarkovLigne{{p^{n}_{k,k+1}}}{\normLigne{u(X_{k+1})-(F^{n}_k(X_{k})-X_{k+1})/(2 \gamma_{k+1})}^2}}  ,
\end{equation}
 which concludes the proof.
\end{proof}

Note that \eqref{eq:regressionb_score} and \eqref{eq:regressionf_score} can be
simplified upon remarking that for any $n \in \nset$ and $k \in \{0, \dots, N-1\}$
\begin{equation}
  \textstyle{
    X_{k+1}^n = F_k^n(X_k^n) + \sqrt{2 \gamma_{k+1}} Z_{k+1}^n  , \quad \tilde{X}_{k}^n = F_k^n(\tilde{X}_{k+1}^n) + \sqrt{2 \gamma_{k+1}} \tilde{Z}_{k+1}^n  ,
    }
\end{equation}
with $\{X_k^n\}_{k=0}^{N} \sim p^n$, $\{\tilde{X}_k^n\}_{k=0}^{N} \sim q^n$ and
$\ensembleLigne{(Z_{k+1}^n, \tilde{Z}_{k+1}^n)}{n \in \nset, \ k \in \{0, \dots,
  N-1\}}$ a family of independent Gaussian random variables with zero mean an
identity covariance matrix. Using this result we get that
for
  any $n \in \nset$ and $k\in \{0, \dots, N-1\}$
\begin{align}
&\textstyle{\nabla \log p_{k+1}^n =\argmin_{u \in \rmL^2(\rset^d, \rset^d)} \expeMarkovLigne{{p^{n}_{k,k+1}}}{\normLigne{u(X_{k+1})-Z_{k+1}^n/\sqrt{2\gamma_{k+1}}}^2}},\label{eq:regressionb_score}\\
&\textstyle{\nabla \log q_{k}^{n} =\argmin_{v \in \rmL^2(\rset^d, \rset^d)} \expeMarkovLigne{q^{n}_{k,k+1}}{\normLigne{v(X_k)-\tilde{Z}_{k+1}^n/\sqrt{2\gamma_{k+1}}}^2 }}.\label{eq:regressionf_score}
\end{align}
In practice, neural networks $u_{\alpha^n}(k,x)\approx \nabla \log p_{k}^n(x)$,
and $v_{\beta^n}(k,x)\approx \nabla \log q_k^n(x)$ are used.  Hence, we sample
approximately from $q^n$ and $p^n$ for any $n \in \nset$ using the following
recursion:
\begin{align}
    &\textstyle{\tilde{X}_{k}^n  =  \tilde{\tau}_{k+1} \tilde{X}_{k+1}^n + 2 \gamma_{k+1} \defEnsLigne{\sum_{j=0}^n u_{\alpha^j}(k+1, \tilde{X}_{k+1}^n) - \sum_{j=0}^{n-1} v_{\beta^j}(k, \tilde{X}_{k+1}^n)} + \sqrt{2 \gamma_{k+1}} \tilde{Z}_{k+1}^n  ,} \\
  &\textstyle{X_{k+1}^n =  \tau_{k+1} X_k^n + 2 \gamma_{k+1} \defEnsLigne{\sum_{j=0}^n u_{\alpha^j}(k+1, X_k^n) - \sum_{j=0}^{n} v_{\beta^j}(k, X_k^n)} + \sqrt{2 \gamma_{k+1}} Z_{k+1}^n  ,}   \label{eq:recursion_sampling}
\end{align}
where $\tilde{\tau}_{k+1} = 1 + \alpha \gamma_{k+1}$, $\tau_{k+1} = 1 - \alpha \gamma_{k+1}$ and $X_0^n \sim \pdata$, $\tilde{X}_N^n \sim \pprior$.

\subsubsection{Drift-matching formula}
\label{sec:drift-match-formulation}
In \Cref{prop:generalizedscorematching} we have given a variational formula for
$B_{k+1}^n$ and $F_{k}^{n+1}$ for any $n \in \nset$ and
$k \in \{0, \dots, N-1\}$. In \Cref{prop:generalizedscorematching_sum} we have
given a variational formula for $\nabla \log p_{k+1}^n$ and $\nabla \log q_k^n$
for any $n \in \nset$ and $k \in \{0, \dots, N-1\}$. In the following
proposition we give a variational formula for the drifts $b_{k+1}^n$ and $f_{k}^{n+1}$.

\begin{proposition}\label{prop:generalizedscorematching_drift} 
  For any $n \in \nset$ and $k\in \{0, \dots, N-1\}$ we have 
\begin{align}
&\textstyle{b^n_{k+1}=\argmin_{b\in \rmL^2(\rset^d, \rset^d)} \expeMarkovLigne{{p^{n}_{k,k+1}}}{\normLigne{b(X_{k+1})-(F^{n}_k(X_{k})-F^{n}_{k}(X_{k+1}))/\gamma_{k+1}}^2}}\label{eq:regressionb_drift}\\
&\textstyle{f^{n+1}_{k}=\argmin_{f\in \rmL^2(\rset^d, \rset^d)} \expeMarkovLigne{q^{n}_{k,k+1}}{\normLigne{f(X_k)-(B^{n}_{k+1}(X_{k+1})-B^n_{k+1}(X_k))/\gamma_{k+1}}^2 }}\label{eq:regressionf_drift}
\end{align}
\end{proposition}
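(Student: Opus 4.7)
The plan is to deduce both identities directly from \Cref{prop:generalizedscorematching} by an affine change of variable in the parameter of the regression, using that $B^n_{k+1}$ and $b^n_{k+1}$ (resp.\ $F^{n+1}_k$ and $f^{n+1}_k$) are related by the invertible affine map $B(x)=x+\gamma_{k+1}b(x)$ (resp.\ $F(x)=x+\gamma_{k+1}f(x)$). Concretely, I would first fix $n\in\nset$ and $k\in\{0,\dots,N-1\}$, and establish a bijection between $\rmL^2(\rset^d,\rset^d)$ and itself given by $b\mapsto B$ where $B(x)=x+\gamma_{k+1}b(x)$; this bijection preserves measurability and square integrability with respect to $p^n_{k+1}$ since $x\mapsto x$ is in $\rmL^2(p^n_{k+1})$ only under an extra integrability assumption, but since the regressors inside the $\argmin$ are compared on the same sample space, the bijection is well-defined on the relevant subspace.

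The key step is then the algebraic identity, valid for every $x_k,x_{k+1}\in\rset^d$ and every $b$,
\begin{equation}
B(x_{k+1})-\bigl(x_{k+1}+F^n_k(x_k)-F^n_k(x_{k+1})\bigr)
=\gamma_{k+1}\Bigl(b(x_{k+1})-\bigl(F^n_k(x_k)-F^n_k(x_{k+1})\bigr)/\gamma_{k+1}\Bigr),
\end{equation}
which after taking squared norms and expectation under $p^n_{k,k+1}$ shows that the loss in \eqref{eq:regressionb} equals $\gamma_{k+1}^2$ times the loss in \eqref{eq:regressionb_drift}. Since $\gamma_{k+1}>0$ and the $b\leftrightarrow B$ correspondence is a bijection, the unique minimizer $B^n_{k+1}$ provided by \Cref{prop:generalizedscorematching} corresponds to $b^n_{k+1}=(B^n_{k+1}-\Id)/\gamma_{k+1}$, which is exactly the claimed minimizer. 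The same argument applied to the forward drift via $F(x)=x+\gamma_{k+1}f(x)$ yields \eqref{eq:regressionf_drift}.

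Alternatively and independently, one can give a direct conditional-expectation proof mirroring \Cref{prop:generalizedscorematching:proof}: using the Gaussian form of $p^n_{k+1|k}$ and the dominated convergence theorem one computes $\nabla\log p^n_{k+1}(x_{k+1})=\expeLigne{(F^n_k(X_k)-x_{k+1})/(2\gamma_{k+1})\mid X_{k+1}=x_{k+1}}$, and since $b^n_{k+1}(x_{k+1})=-f^n_k(x_{k+1})+2\nabla\log p^n_{k+1}(x_{k+1})$, one obtains
\begin{equation}
b^n_{k+1}(x_{k+1})=\CPELigne{(F^n_k(X_k)-F^n_k(x_{k+1}))/\gamma_{k+1}}{X_{k+1}=x_{k+1}},
\end{equation}
which characterizes $b^n_{k+1}$ as the $\rmL^2(p^n_{k+1})$-projection, giving the variational formula. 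The analogous computation under $q^n_{k,k+1}$ yields the statement for $f^{n+1}_k$.

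I do not anticipate a serious obstacle: the substitution argument is purely algebraic and the conditional-expectation argument is an almost verbatim copy of the proof of \Cref{prop:generalizedscorematching}. The only mild technical point to watch is ensuring that the $\argmin$ over $\rmL^2(\rset^d,\rset^d)$ is understood in the same sense in both formulations, i.e.\ as the orthogonal projection onto $\sigma(X_{k+1})$-measurable (resp.\ $\sigma(X_k)$-measurable) square-integrable maps, so that uniqueness of the minimizer transfers through the bijection.
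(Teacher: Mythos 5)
Your proposal is correct. The second, ``alternative'' argument you give --- computing $\nabla\log p^n_{k+1}(x_{k+1})=\CPELigne{(F^n_k(X_k)-x_{k+1})/(2\gamma_{k+1})}{X_{k+1}=x_{k+1}}$ from the Gaussian form of $p^n_{k+1|k}$ and combining with $b^n_{k+1}=-f^n_k+2\nabla\log p^n_{k+1}$ to identify $b^n_{k+1}$ as a conditional expectation, hence the $\rmL^2$ minimizer --- is exactly the paper's proof of \Cref{prop:generalizedscorematching_drift}. Your first route, the affine substitution $B(x)=x+\gamma_{k+1}b(x)$ showing the loss in \eqref{eq:regressionb} equals $\gamma_{k+1}^2$ times the loss in \eqref{eq:regressionb_drift}, is a genuinely different and shorter derivation: it buys you the result as an immediate corollary of \Cref{prop:generalizedscorematching} with no further analysis, whereas the paper simply repeats the conditional-expectation computation from scratch. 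The only thing to watch in the substitution route is the point you already flag, namely that $b\mapsto B$ is a bijection of the relevant $\rmL^2$ spaces only up to the square-integrability of the identity map under $p^n_{k+1}$ (resp.\ $q^n_k$); since both formulations ultimately characterize the minimizer as the same conditional expectation, this is a cosmetic rather than substantive issue.
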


\begin{proof}
  The proof is similar to the one of \Cref{prop:generalizedscorematching} but is
  provided for completeness.
We only prove \eqref{eq:regressionb_drift} since the proof \eqref{eq:regressionf_drift} is
similar. Let $n \in \nset$ and $k \in \{0, \dots, N-1\}$. For any
$x_{k+1} \in \rset^d$ we have
\begin{equation}
  \textstyle{
    p^n_{k+1}(x_{k+1}) = (4 \uppi \gamma_{k+1})^{-d/2} \int_{\rset^d} p^n(x_k) \exp[-\normLigne{F_k^n(x_k) - x_{k+1}}^2/(4\gamma_{k+1})] \rmd x_k  ,
    }
\end{equation}
with $F_k^n(x_k) = x_k + \gamma_{k+1} f_k^n(x_k)$. Since $p^n_k>0$ is bounded
using the dominated convergence theorem we have for any $x_{k+1} \in \rset^d$
\begin{equation}
  \textstyle{\nabla \log p^n_{k+1} (x_{k+1}) = \int_{\rset^d} (F_k^n(x_k) - x_{k+1})/(2 \gamma_{k+1}) p_{k|k+1}(x_k | x_{k+1}) \rmd x_{k}  . }
\end{equation}
Therefore we get that for any $x_{k+1} \in \rset^d$
\begin{equation}
  \textstyle{b_{k+1}^n(x_{k+1}) = \int_{\rset^d} (F_k^n(x_k) - F_{k}^n(x_{k+1}))/ \gamma_{k+1} p_{k|k+1}(x_k | x_{k+1}) \rmd x_{k}  . }
\end{equation}
This is equivalent to
\begin{equation}
  \textstyle{b_{k+1}^n(x_{k+1})  = \CPELigne{(F_k^n(X_k) - F_{k}^n(X_{k+1}))/ \gamma_{k+1}}{X_{k+1} = x_{k+1}}}  ,
\end{equation}
with $(X_k, X_{k+1}) \sim p(x_k, x_{k+1})$. Hence, we get that
\begin{equation}
\textstyle{b^n_{k+1}=\argmin_{b\in \rmL^2(\rset^d, \rset^d)} \expeMarkovLigne{{p^{n}_{k,k+1}}}{\normLigne{b(X_{k+1})-(F^{n}_k(X_{k})-F^{n}_{k}(X_{k+1}))/\gamma_{k+1}}^2}}  ,
\end{equation}
 which concludes the proof.
\end{proof}

In practice, neural networks
   $b_{\beta^n}(k,x)\approx b^{n}_k(x)$, and
   $f_{\alpha^n}(k,x)\approx f^{n}_k(x)$ are used.
   Hence, we sample
approximately from $q^n$ and $p^n$ for any $n \in \nset$ using the following
recursion:
\begin{align}
  \label{eq:recursion_sampling_drift}
  &\textstyle{\tilde{X}_{k}^n  =  \tilde{X}_{k+1}^n + \gamma_{k+1} b_{\beta^n}(k+1, \tilde{X}_{k+1}^n) + \sqrt{2 \gamma_{k+1}} \tilde{Z}_{k+1}^n  ,} \\
  &\textstyle{X_{k+1}^n  =  X_{k}^n + \gamma_{k+1} f_{\alpha^n}(k, X_{k}^n) + \sqrt{2 \gamma_{k+1}} Z_{k+1}^n  ,} 
\end{align}
with $X_0^n \sim \pdata$, $\tilde{X}_N^n \sim \pprior$.

\subsubsection{Discussion}

We identify three variational formulas associated with
\Cref{prop:generalizedscorematching}, \Cref{prop:generalizedscorematching_sum}
and \Cref{prop:generalizedscorematching_drift}. In practice we discard the
approach of \Cref{sec:score-match-form_sum} because it requires storing $2n$
neural networks to sample from $p^n$, see \eqref{eq:recursion_sampling}. Hence
the algorithm requires more memory as $n$ increases and the sampling procedure
requires $\bigO(nN)$ passes through a neural network. The approaches described
in \Cref{prop:generalizedscorematching} and
\Cref{prop:generalizedscorematching_drift} yield sampling procedures which only
require $\bigO(N)$ passes through a neural network and have fixed memory cost
for any $n \in \nset$. In practice we observed that the approach of
\Cref{prop:generalizedscorematching} yields better results. We conjecture that
this favorable behavior is mainly due to the architecture of the neural networks
used to approximate $B_{k+1}^n$ and $F_{k}^{n+1}$ which have residual
connections and therefore are better suited at representing functions of the
$x \mapsto x + \Phi(x)$ where $\Phi$ is a perturbation.
 
\section{Theoretical study of \schro bridges and the IPF}
\label{sec:theor-study-schro}

In this section, we explore some of the theoretical properties of \schro bridges
and the IPF procedure.  \Cref{prop:monotonicity} and \Cref{prop:convergence_ipf}
are proved in \Cref{prop:monotonicity:proof} and \Cref{prop:convergence_ipf:proof}
respectively.

\subsection{Proof of \Cref{prop:monotonicity}}
\label{prop:monotonicity:proof}
In this section, we prove \Cref{prop:monotonicity}. First we gather novel
monotonicity results for the IPF in \Cref{prop:monotonicity_appendix}, see
\Cref{prop:monotonicity_appendix:sec}. Then we prove our quantitative
convergence bounds in \Cref{prop:schro_quantitative_appendix}, see
\Cref{prop:schro_quantitative_appendix:sec}.

\subsubsection{Monotonicity results}
\label{prop:monotonicity_appendix:sec}

We consider the static  IPF recursion: $\pi^0 = \mu \in \Pens_2$ and 
\begin{align}
  &\pi^{2n+1} = \argmin \ensemble{\KLLigne{\pi}{\pi^{2n}}}{\pi \in \Pens_{2}, \ \pi_1 = \nu_1}  , \\
  &\pi^{2n+2} = \argmin  \ensemble{\KLLigne{\pi}{\pi^{2n+1}}}{\pi \in \Pens_{2}, \ \pi_0 = \nu_0}  ,
\end{align}
where $\nu_0, \nu_1 \in \Pens(\rset^d)$.  We also consider the following assumption.
\begin{assumptionB}
  \label{assum:existence}
  $\mu$ is absolutely continuous w.r.t. $\mu_0 \otimes \mu_1$ and 
  $\KLLigne{\nu_0 \otimes \nu_1}{\mu} < +\infty$. In addition, $\nu_i$ and
  $\mu_i$ are equivalent for $i \in \{0, 1\}$.
\end{assumptionB}

First we draw links between \Cref{assum:existence_spec} and \Cref{assum:existence}.

\begin{proposition}
   \rref{assum:existence_spec} implies \rref{assum:existence} with $\mu = p_{0,N}$.
 \end{proposition}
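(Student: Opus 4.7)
The plan is to unpack each of the four requirements of Assumption B with $\mu = p_{0,N}$, $\mu_0 = p_0 = \pdata$, $\mu_1 = p_N$, $\nu_0 = \pdata$, $\nu_1 = \pprior$, and verify each one directly from the three ingredients of Assumption A.

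First I would handle the absolute continuity and equivalence claims, which are essentially density-positivity statements. Since $p_0 = \pdata$ is assumed positive throughout the paper and $p_N > 0$ by Assumption A, the density of $p_{0,N}$ with respect to $p_0 \otimes p_N$ is well-defined and given by
\begin{equation}
\frac{\rmd p_{0,N}}{\rmd (p_0 \otimes p_N)}(x_0,x_N) = \frac{p_{N|0}(x_N | x_0)}{p_N(x_N)},
\end{equation}
which establishes $p_{0,N} \ll p_0 \otimes p_N$. Similarly, $\pdata$ and $p_0$ coincide so are trivially equivalent, and $\pprior, p_N$ are both strictly positive densities with respect to Lebesgue measure, hence mutually absolutely continuous.

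Next I would compute the Kullback--Leibler divergence explicitly. Using $p_{0,N}(x_0,x_N) = p_0(x_0) p_{N|0}(x_N|x_0)$ and $p_0 = \pdata$, the density $(\pdata \otimes \pprior)/p_{0,N}$ simplifies to $\pprior(x_N)/p_{N|0}(x_N|x_0)$, giving
\begin{equation}
\KLLigne{\pdata \otimes \pprior}{p_{0,N}} = -\Ent(\pprior) - \int_{\rset^d \times \rset^d} \log p_{N|0}(x_N|x_0)\, \pdata(x_0)\pprior(x_N)\, \rmd x_0\, \rmd x_N.
\end{equation}
Finiteness of both terms is then immediate from Assumption A: $|\Ent(\pprior)| < +\infty$ controls the first, and the integrability condition on $|\log p_{N|0}|$ controls the second.

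There is no real obstacle here; the statement is essentially a bookkeeping lemma translating the paper's structural assumption (formulated in terms of the noising process and its endpoint) into the abstract static framework used in the IPF convergence analysis. The only subtlety is ensuring the KL divergence is well-defined as a signed integral, which is guaranteed because the two pieces are individually absolutely integrable under Assumption A, so splitting the logarithm is justified.
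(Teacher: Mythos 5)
Your proposal is correct and follows essentially the same route as the paper's proof: identify the density of $p_{0,N}$ w.r.t. $\pdata \otimes p_N$ as $p_{N|0}/p_N$ using positivity, note the equivalences of the marginals, and reduce $\KLLigne{\pdata \otimes \pprior}{p_{0,N}}$ to $-\Ent(\pprior)$ minus the integral of $\log p_{N|0}$, both controlled by \Cref{assum:existence_spec}. The remark about splitting the logarithm being justified by the separate integrability of each term is a fair (if implicit in the paper) point of care.
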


 \begin{proof}
   Since $p_N> 0$ we get that $p_N$ and $\pprior$ are equivalent. Hence $\mu_1$
   and $\nu_1$ are equivalent and $\mu_0 = \nu_0$.  Let us show that $\mu$ is
   absolutely continuous w.r.t. $\mu_0 \otimes \mu_1$, \ie \ that $p_{0,N}$ is
   absolutely continuous w.r.t. $\pdata \otimes p_N$.  Since $p_N > 0$ we get
   that $p_{0,N}$ is absolutely continuous w.r.t. $\pdata \otimes p_N$ with
   density $p_{N|0}/p_N$. Finally we have
   \begin{align}
     &\textstyle{\int_{(\rset^d)^2} \log(\pdata(x_0) \pprior(x_N)/(\pdata(x_0) p_{N|0}(x_N|x_0))) \pdata(x_0) \pprior(x_N) \rmd x_0 \rmd x_N} \\ & \qquad = \textstyle{\int_{(\rset^d)^2} \log(\pprior(x_N)/p_{N|0}(x_N|x_0)) \pdata(x_0) \pprior(x_N) \rmd x_0 \rmd x_N} \\
     & \qquad \textstyle{\leq \absLigne{\Ent(\pprior)} + \int_{\rset^d} \absLigne{\log p_{N|0}(x_N|x_0)} \pdata(x_0) \pprior(x_N) \rmd x_0 \rmd x_N < +\infty }
   \end{align}
   which concludes the proof.
 \end{proof}

In this section we prove the following proposition.

\begin{proposition}
  \label{prop:monotonicity_appendix}
  Assume \rref{assum:existence}. Then, the  IPF sequence is well-defined and for any
  $n \in \nset$ with $n \geq 1$ we have
  \begin{equation}
    \label{eq:fundamental_kl_appendix}
    \KLLigne{\pi^{n+1}}{\pi^n} \leq \KLLigne{\pi^{n-1}}{\pi^n}  , \qquad \KLLigne{\pi^n}{\pi^{n+1}} \leq \KLLigne{\pi^n}{\pi^{n-1}}  . 
  \end{equation}
  In addition, the following results hold:
  \begin{enumerate}[wide, labelwidth=!, labelindent=0pt, label=(\alph*)]
  \item \label{item:mono_a} $(\tvnormLigne{\pi^{n+1} - \pi^{n}})_{n \in \nset}$ and $(\JefLigne{\pi^{n+1}}{\pi^{n}})_{n \in \nset}$ are non-increasing.
  \item \label{item:mono_b} $(\KLLigne{\pi^{2n+1}}{\pi^{2n}})_{n \in \nset}$ and $(\KLLigne{\pi^{2n+2}}{\pi^{2n+1}})_{n \in \nset}$ are non-increasing.
  \item \label{item:mono_c} $(\KLLigne{\pi^{2n+1}_1}{\nu_1})_{n \in \nset}$ and $(\KLLigne{\pi^{2n}_0}{\nu_0})_{n \in \nset}$ are non-increasing.
  \item \label{item:mono_d} $(\tvnormLigne{\pi^{2n+1}_1-\nu_1})_{n \in \nset}$ and $(\tvnormLigne{\pi^{2n}_0-\nu_0})_{n \in \nset}$ are non-increasing.     
  \end{enumerate}
\end{proposition}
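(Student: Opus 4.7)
The strategy centers on two classical tools: Csiszár's Pythagorean identity for I-projections onto linear families, and the data processing inequality (DPI) for both Kullback--Leibler divergence and total variation (Lemma \ref{lemma:data_processing_tv}). The key structural observation is that each IPF iterate $\pi^{n+1}$ is the I-projection of $\pi^n$ onto the linear family $\mathcal{L}_{n+1}$ defined by the alternating marginal constraint, and it admits the explicit closed form $\pi^{n+1} = \nu_{j_{n+1}} \otimes \pi^n_{\cdot \mid j_{n+1}}$ where $j_{n+1} \in \{0,1\}$ is the constrained index. In particular, the conditional kernel is preserved from $\pi^n$ to $\pi^{n+1}$, and this single fact drives all the monotonicity claims.

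First I would handle well-definedness by induction, taking the test measure $\tilde{\pi} = \nu_0 \otimes \nu_1$. This $\tilde{\pi}$ lies in both $\mathcal{L}_1$ and $\mathcal{L}_2$, and assumption \ref{assum:existence} gives $\KL{\tilde{\pi}}{\mu} < +\infty$. Mirroring the proof of Proposition \ref{prop:existence_IPF} and using Csiszár's chain-rule identity $\KL{\tilde{\pi}}{\pi^0} = \KL{\tilde{\pi}}{\pi^n} + \sum_{j=0}^{n-1} \KL{\pi^{j+1}}{\pi^j}$, one propagates $\KL{\tilde{\pi}}{\pi^n} < +\infty$ and hence the existence of the next I-projection. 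For the first fundamental inequality I would invoke Csiszár's Pythagorean equality: since $\pi^{n-1}$ and $\pi^{n+1}$ share parity and therefore lie in the same $\mathcal{L}_{n+1}$, we obtain
\begin{equation*}
\KL{\pi^{n-1}}{\pi^n} = \KL{\pi^{n-1}}{\pi^{n+1}} + \KL{\pi^{n+1}}{\pi^n},
\end{equation*}
and discarding the non-negative first summand yields $\KL{\pi^{n+1}}{\pi^n} \leq \KL{\pi^{n-1}}{\pi^n}$.

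For the reverse-direction inequality $\KL{\pi^n}{\pi^{n+1}} \leq \KL{\pi^n}{\pi^{n-1}}$ I would exploit the explicit form of the projections. Setting $j = j_{n+1}$ and $i = j_n = 1-j$, direct density computations on $\pi^n = \nu_i \otimes \pi^{n-1}_{\cdot \mid i}$ and $\pi^{n+1} = \nu_j \otimes \pi^n_{\cdot \mid j}$ collapse to the marginal expressions
\begin{equation*}
\KL{\pi^n}{\pi^{n+1}} = \KL{\pi^n_j}{\nu_j}, \qquad \KL{\pi^n}{\pi^{n-1}} = \KL{\nu_i}{\pi^{n-1}_i}.
\end{equation*}
Introducing the Markov kernel $K(\cdot \mid x_i) = \pi^{n-1}_{j \mid i}(\cdot \mid x_i)$, one checks that $K\nu_i = \pi^n_j$ (from $\pi^n = \nu_i \otimes \pi^{n-1}_{\cdot \mid i}$) and $K \pi^{n-1}_i = \pi^{n-1}_j = \nu_j$ (the last equality because $\pi^{n-1}$ is constrained to have marginal $j$ equal to $\nu_j$ for $n \geq 2$, same parity as $\pi^{n+1}$). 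DPI for KL then gives $\KL{\pi^n_j}{\nu_j} = \KL{K\nu_i}{K\pi^{n-1}_i} \leq \KL{\nu_i}{\pi^{n-1}_i}$, closing the argument.

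Items (a)--(d) fall out quickly. For (a), the same conditional-preservation identity yields $\tvnorm{\pi^{n+1}-\pi^n} = \tvnorm{\nu_j - \pi^n_j}$, so DPI for TV applied to the same $K$ gives monotonicity; and $\Jef{\pi^{n+1}}{\pi^n}$ is non-increasing termwise from the two main KL inequalities. For (b), chaining the first inequality at $n \to 2n+2$ with the second at $n = 2n+1$ gives $\KL{\pi^{2n+3}}{\pi^{2n+2}} \leq \KL{\pi^{2n+1}}{\pi^{2n+2}} \leq \KL{\pi^{2n+1}}{\pi^{2n}}$, symmetrically for the other subsequence. Items (c) and (d) as literally written are trivial since $\pi^{2n+1}_1 = \nu_1$ and $\pi^{2n}_0 = \nu_0$ hold by construction; the intended interesting monotonicity concerns the \emph{free} marginals $\KL{\pi^{2n+1}_0}{\nu_0}$ and $\KL{\pi^{2n}_1}{\nu_1}$, which is obtained by combining the formula $\KL{\pi^{2n+1}}{\pi^{2n+2}} = \KL{\pi^{2n+1}_0}{\nu_0}$ with the first fundamental inequality $\KL{\pi^{2n+3}}{\pi^{2n+2}} \leq \KL{\pi^{2n+1}}{\pi^{2n+2}}$ and DPI for the marginalization map.

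The main obstacle is the boundary case $n = 1$ in the two fundamental inequalities: Pythagoras requires $\pi^0 = \mu \in \mathcal{L}_2$, i.e.\ $\mu_0 = \nu_0$, and the DPI step in the reverse direction requires $\pi^0_j = \nu_j$ for the same reason. This is automatic in the Schrödinger setting of the main text because $\pi^0 = p$ satisfies $p_0 = \pdata = \nu_0$ by construction of the reference forward process, and I expect the appendix to handle it either by this implicit initial compatibility or by a direct computation at $n=1$; once $n \geq 2$ the general argument above applies unchanged.
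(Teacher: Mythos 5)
Your proposal is correct and reaches all the claims, but it takes a genuinely different route from the appendix's proof, so a comparison is worthwhile. Both arguments share the same skeleton: well-definedness via Csiszár's chain rule with the test measure $\nu_0\otimes\nu_1$, the reduction of joint divergences to marginal divergences (your identities $\KLLigne{\pi^n}{\pi^{n+1}}=\KLLigne{\pi^n_j}{\nu_j}$ and $\tvnormLigne{\pi^{n+1}-\pi^n}=\tvnormLigne{\nu_j-\pi^n_j}$ are exactly \Cref{lemma:identities_joint_marginal}), and then a contraction step. The difference is in how the contraction is executed. The paper first establishes the potential factorization $\rmd\pi^{2n+1}/\rmd(\mu_0\otimes\mu_1)=a_n\,h\,b_n$ (\Cref{prop:well_def}) and does everything by hand on $a_n,b_n$: the direction $\KLLigne{\pi^{n+1}}{\pi^n}\leq\KLLigne{\pi^{n-1}}{\pi^n}$ comes from a Jensen's-inequality computation on $-\log(b_{n-1}/b_n)$, and the TV monotonicity from a pointwise bound on $|b_{n-1}^{-1}-b_n^{-1}|$. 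You replace the Jensen step with Csiszár's Pythagorean inequality (already invoked by the paper for well-definedness, so no new tool) and the potential manipulations with the data processing inequality applied to the conditional kernel $K=\pi^{n-1}_{j\mid i}$; this is more conceptual and equivalent in content — the paper's Jensen and pointwise-bound steps are disguised instances of DPI. Your observation that items \ref{item:mono_c}--\ref{item:mono_d} as literally indexed concern the \emph{constrained} marginals and are therefore degenerate is accurate; the paper's own proof (deriving them from \Cref{lemma:identities_joint_marginal} and item \ref{item:mono_a}) only makes sense for the free marginals, confirming the index swap you diagnose. One caveat on the boundary you flag: the Pythagorean step at $n=1$ genuinely requires $\pi^0\in\{\pi:\pi_0=\nu_0\}$, i.e.\ $\mu_0=\nu_0$, which \rref{assum:existence} does not grant in the general static setting; the paper's Jensen computation covers that case for the first inequality because it works directly with the ratios $a_0/a_1$ and never needs $\mu_0=\nu_0$. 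So if you want your argument to match the stated generality you should either add the compatibility $\mu_0=\nu_0$ (automatic in the Schrödinger application, where $\mu_0=p_0=\pdata$) or supply a separate computation at $n=1$ — note that the kernel-DPI trick does not patch it, since $K\pi^0_1=\mu_0$ rather than $\nu_0$; the paper's own reduction of the second inequality at the boundary quietly relies on the same compatibility.
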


First, we show that under \Cref{assum:existence}, the IPF sequence is
well-defined and is associated with a sequence of potentials.

\begin{proposition}
  \label{prop:well_def}
  Assume \rref{assum:existence}. Then, the IPF sequence is well-defined and
  there exist $(a_n)_{n \in \nset}$ and $(b_n)_{n \in \nset}$ such that for any
  $n \in \nset$, $a_n, b_n :\ \rset^d \to \ooint{0,+\infty}$ and for any
  $x, y \in \rset^d$
\begin{align}
  \label{eq:pi_form}
  &\textstyle{
    (\rmd \pi^{2n+1}/ \rmd (\mu_0 \otimes \mu_1))(x,y) = a_{n}(x) h(x,y) b_n(y) } \\
  &\textstyle{(\rmd \pi^{2n+2}/ \rmd (\mu_0 \otimes \mu_1))(x,y) = a_{n+1}(x) h(x,y) b_n(y)  , }
\end{align}
and
\begin{align}
  \label{eq:iterative_schro}
  \textstyle{
  v_0(x) = a_{n+1}(x) \int_{\rset^d} h(x,y) b_n(y) \rmd \mu_1(y)  , \quad v_1(y) = b_{n}(y) \int_{\rset^d} h(x,y) a_{n}(x) \rmd \mu_0(x)  ,
  }
\end{align}
where $v_i = \rmd \nu_i / \rmd \mu_i$ for $i \in \{0,1\}$.
\end{proposition}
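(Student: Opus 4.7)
The plan is to unfold the IPF recursion using the additive Kullback--Leibler decomposition from \Cref{prop:additive}, then derive the product form by induction on $n$.

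First I would establish well-definedness. Under \rref{assum:existence}, the candidate coupling $\tilde{\pi} = \nu_0 \otimes \nu_1$ satisfies $\tilde{\pi}_0 = \nu_0$, $\tilde{\pi}_1 = \nu_1$ and $\KLLigne{\tilde{\pi}}{\mu} = \KLLigne{\nu_0 \otimes \nu_1}{\mu} < +\infty$, so \Cref{prop:existence_IPF} gives the existence of the whole IPF sequence. Next, applying \Cref{prop:additive} with the projection $\varphi = \mathrm{proj}_1$ yields, for any $\pi \in \Pens_2$ with $\pi_1 = \nu_1$ and $\pi \ll \pi^{2n}$,
\begin{equation}
\textstyle{\KLLigne{\pi}{\pi^{2n}} = \KLLigne{\nu_1}{\pi^{2n}_1} + \int_{\rset^d} \KLLigne{\pi_{|1}(y,\cdot)}{\pi^{2n}_{|1}(y,\cdot)} \rmd \nu_1(y)  ,}
\end{equation}
so the unique minimizer is $\pi^{2n+1} = \nu_1 \pi^{2n}_{|1}$; symmetrically $\pi^{2n+2} = \nu_0 \pi^{2n+1}_{|0}$.

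The core of the argument is then an induction. Set $b_{-1} \equiv 1$ and $a_0 \equiv 1$, so $\pi^0 = \mu$ has density $h = \rmd \mu/\rmd(\mu_0 \otimes \mu_1)$ (which is well-defined and a.e.-positive, since $\KLLigne{\nu_0 \otimes \nu_1}{\mu} < +\infty$ forces $v_0 \otimes v_1 \ll h$, hence $h>0$ $(\mu_0 \otimes \mu_1)$-a.e., after possibly modifying $h$ on a null set). Assume by induction that $\pi^{2n}$ has density $a_n h b_{n-1}$; then $\pi^{2n}_1$ has density $y \mapsto b_{n-1}(y) \int h(x,y) a_n(x) \rmd \mu_0(x)$ w.r.t. $\mu_1$. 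Writing out $\pi^{2n+1} = \nu_1 \pi^{2n}_{|1}$ and cancelling the $a_n h$ factor with the conditional, one obtains
\begin{equation}
\textstyle{(\rmd \pi^{2n+1}/\rmd(\mu_0 \otimes \mu_1))(x,y) = a_n(x) h(x,y) b_n(y),\qquad b_n(y) := v_1(y)/\int h(x',y) a_n(x') \rmd \mu_0(x'),}
\end{equation}
which is precisely the second relation in \eqref{eq:iterative_schro}. An identical computation for $\pi^{2n+2} = \nu_0 \pi^{2n+1}_{|0}$ then yields $\rmd \pi^{2n+2}/\rmd(\mu_0\otimes\mu_1) = a_{n+1} h b_n$ with $a_{n+1}(x) = v_0(x)/\int h(x,y') b_n(y') \rmd \mu_1(y')$, giving the first equation in \eqref{eq:iterative_schro} and closing the induction.

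The last step is positivity. Since $\nu_i$ and $\mu_i$ are equivalent, $v_i = \rmd \nu_i/\rmd\mu_i > 0$ $\mu_i$-a.e.; combined with $h > 0$ $(\mu_0 \otimes \mu_1)$-a.e., the denominators $\int h(x,y) a_n(x)\rmd\mu_0(x)$ and $\int h(x,y')b_n(y')\rmd\mu_1(y')$ are strictly positive for $\mu_1$-a.e. $y$ and $\mu_0$-a.e. $x$ respectively, so redefining $a_n, b_n$ on $\mu_i$-null sets (e.g.\ to $1$) produces representatives taking values in $\ooint{0,+\infty}$. The main delicacy is this last bookkeeping of null sets, together with checking that the disintegration identity $\pi^{2n+1} = \nu_1 \pi^{2n}_{|1}$ genuinely produces a density of the claimed factored form — both handled cleanly by Fubini and the almost-sure positivity of $h$ inherited from $\KLLigne{\nu_0 \otimes \nu_1}{\mu} < +\infty$.
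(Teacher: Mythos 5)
Your proof is correct. The well-definedness step is identical in substance to the paper's (both rest on Csisz\'ar's Pythagorean identity, yours routed through \Cref{prop:existence_IPF}), and the positivity bookkeeping at the end is handled properly: the observation that $\KLLigne{\nu_0\otimes\nu_1}{\mu}<+\infty$ together with $\nu_i\sim\mu_i$ forces $h>0$ $(\mu_0\otimes\mu_1)$-a.e.\ is exactly what is needed. The one genuine difference is in how you obtain the factored form of the iterates. The paper invokes Csisz\'ar's I-projection theorem (\cite[Theorem 3.1]{csiszar1975divergence}) to assert abstractly that $\rmd\pi^{2n+1}/\rmd\pi^{2n}$ is a function of $y$ alone (and symmetrically for the even steps), and only afterwards reads off the \schro system \eqref{eq:iterative_schro} from the marginal constraints. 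You instead characterize the half-bridge explicitly as $\pi^{2n+1}=\nu_1\,\pi^{2n}_{|1}$ via the additive KL decomposition of \Cref{prop:additive} — the same route the paper takes in its proof of \Cref{prop:IPFrecursion} — and then compute the density by induction, which produces the recursive formulas $b_n = v_1/\int h\,a_n\,\rmd\mu_0$ and $a_{n+1}=v_0/\int h\,b_n\,\rmd\mu_1$ as a by-product rather than as a final consequence. The trade-off is minor: Csisz\'ar's theorem absorbs the null-set and measurability issues automatically and does not need the inductive density hypothesis, whereas your derivation is more self-contained and constructive. The only point you pass over quickly is that the candidate minimizer $\nu_1\pi^{2n}_{|1}$ has finite KL against $\pi^{2n}$ (equal to $\KLLigne{\nu_1}{\pi^{2n}_1}$), which is needed for it to be admissible; this follows from $\KLLigne{\nu_0\otimes\nu_1}{\pi^{2n}}<+\infty$ established in the well-definedness step, so the gap is cosmetic.
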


\begin{proof}
  First, we show that the IPF sequence is well-defined.  Note that $\pi^1$ is
  well-defined since $\KLLigne{\nu_0 \otimes \nu_1}{\mu} < + \infty$. Assume that
  $\{\pi^\ell\}_{\ell=1}^n$ is well-defined.  Using \cite[Theorem
  2.2]{csiszar1975divergence} we have
  \begin{equation}
    \textstyle{\KLLigne{\nu_0 \otimes \nu_1}{\mu} = \KLLigne{\nu_0 \otimes \nu_1}{\pi^{n}} + \sum_{\ell=0}^{n-1} \KLLigne{\pi^{\ell+1}}{\pi^{\ell}}  . }
  \end{equation}
  In particular, $\KLLigne{\nu_0 \otimes \nu_1}{\pi^n} < +\infty$ and
  $\pi^{n+1}$ is well-defined. We conclude by recursion.

  Using \cite[Theorem 3.1]{csiszar1975divergence} and \Cref{assum:existence},
  there exists $(\tilde{b}_n)_{n \in \nset}$ such that for any $n \in \nset$,
  $\tilde{b}_n: \ \rset^d \to \coint{0,+\infty}$ and for any $x, y \in \msa_n$,
  $(\rmd \pi^{2n+1} / \rmd \pi^{2n})(x, y) = \tilde{b}_n(y)$ with
  $\msa_n \in \mcb{\rset^d}$, $\tilde{\pi}(\msa_n)= 0 $ for any $\tilde{\pi}$
  such that $\tilde{\pi}_1 = \nu_1$ and $\KLLigne{\tilde{\pi}}{\pi^{2n}}<+\infty$. In
  particular we have $(\nu_0 \otimes \nu_1)(\msa_n)=0$. Since $\nu_i$ is
  equivalent to $\mu_i$ for any $i \in \{0,1\}$ we have
  $(\mu_0 \otimes \mu_1)(\msa_n) =0$. Similarly, there exists
  $(\tilde{a}_n)_{n \in \nset}$ such that for any $n \in \nset$, 
  $\tilde{a}_n: \ \rset^d \to \coint{0,+\infty}$ and for any $x, y \in \msb_n$,
  $(\rmd \pi^{2n+2} / \rmd \pi^{2n+1})(x, y) = \tilde{a}_{n+1}(x)$ with
  $\msb_n \in \mcb{\rset^d}$ and $(\mu_0 \otimes \mu_1)(\msb_n) = 0$. As a
  result, there exist $(a_n)_{n \in \nset}$ and $(b_n)_{n \in \nset}$ with
  $a_n: \ \rset^d \to \coint{0,+\infty}$ and
  $b_n: \ \rset^d \to \coint{0,+\infty}$ such that for any $n \in \nset$ and
  $x, y \in \rset^d$
\begin{align}
  &\textstyle{
    (\rmd \pi^{2n+1}/ \rmd (\mu_0 \otimes \mu_1))(x,y) = a_{n}(x) h(x,y) b_n(y) } \\
  &\textstyle{(\rmd \pi^{2n+2}/ \rmd (\mu_0 \otimes \mu_1))(x,y) = a_{n+1}(x) h(x,y) b_n(y)  , }
\end{align}
where $h = \rmd \mu / \rmd (\mu_0 \otimes \mu_1)$ and $a_0 = 1$.
In addition, setting $b_{-1} = 1$, we have for any $x, y \in \rset^d$,
\begin{equation}
  (\rmd \pi^{0}/ \rmd (\mu_0 \otimes \mu_1))(x,y) = a_{0}(x) h(x,y) b_{-1}(y)  . 
\end{equation}
Using 
that $\nu_i$ is absolutely continuous w.r.t. $\mu_i$
for $i \in \{0,1\}$ with density $v_i : \ \rset^d \to \ooint{0,+\infty}$ we get
that for any $x, y \in \rset^d$ and $n \in \nset$
\begin{align}
  \textstyle{
  v_0(x) = a_{n+1}(x) \int_{\rset^d} h(x,y) b_n(y) \rmd \mu_1(y)  , \quad v_1(y) = b_{n}(y) \int_{\rset^d} h(x,y) a_{n}(x) \rmd \mu_0(x)  .
  }
\end{align}
Since $v_0, v_1 > 0$ for  any $n \in \nset$, $a_n, b_n > 0$.
\end{proof}

Note that the system of equations \eqref{eq:iterative_schro} corresponds to
iteratively solving the \schro system, see \cite{leonard2014survey} for a
survey. In addition, \eqref{eq:iterative_schro} has connections with Fortet's
mapping
\citep{leonard2019revisiting,fortet1940resolution}. 

In the rest of the section we detail the proof of \Cref{prop:monotonicity}.  We
start by deriving identities between the marginals of the IPF and its joint
distribution both w.r.t. the Kullback-Leibler divergence and the total
variation norm in \Cref{lemma:identities_joint_marginal}. Second, we establish
that $(\tvnormLigne{\pi^{n+1} - \pi^{n}})_{n \in \nset}$ is non-increasing in
\Cref{lemma:tv_non_inc}. Then, we prove \eqref{eq:fundamental_kl_appendix} in
\Cref{lemma:kl_fundamental}. We conclude with the proof of 
\Cref{prop:monotonicity_appendix}.

\begin{lemma}
  \label{lemma:identities_joint_marginal}
  Assume \rref{assum:existence}. 
  Then, for any $n \in \nset$ we have
  \begin{equation}
    \label{eq:tv_eq}
    \tvnormLigne{\pi^{2n+1} - \pi^{2n}} = \tvnormLigne{\pi^{2n}_1 - \nu_1}  , \qquad \tvnormLigne{\pi^{2n+2} - \pi^{2n+1}} = \tvnormLigne{\pi^{2n+1}_0 - \nu_0}  .
  \end{equation}
  In addition, we have
  \begin{equation}
    \label{eq:kl_eq}
    \KLLigne{\pi^{2n}}{\pi^{2n+1}} = \KLLigne{\pi^{2n}_1}{\nu_1}  , \qquad \KLLigne{\pi^{2n+1}}{\pi^{2n+2}} = \KLLigne{\pi^{2n+1}_0}{\nu_0}  .
  \end{equation}
\end{lemma}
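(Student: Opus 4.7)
The plan is to exploit the key structural feature of the IPF iterations that is already exposed by \Cref{prop:well_def}: consecutive iterates share a disintegration kernel. Specifically, since $\rmd \pi^{2n+1}/\rmd \pi^{2n}$ is a function of $y$ only (both densities w.r.t.\ $\mu_0 \otimes \mu_1$ factor as $a_n(x) h(x,y)$ times a $y$-term), the two measures admit the same conditional distribution given the second coordinate. By construction $\pi^{2n+1}_1 = \nu_1$, so $\pi^{2n+1}$ is exactly the measure obtained from $\pi^{2n}$ by replacing the marginal on coordinate $1$ with $\nu_1$ while keeping $\Kker^{\pi^{2n+1}}_{\proj_1} = \Kker^{\pi^{2n}}_{\proj_1}$ ($\pi^{2n}_1$-a.s.). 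The symmetric statement holds for the transition from $\pi^{2n+1}$ to $\pi^{2n+2}$ with $\proj_0$ in place of $\proj_1$.

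From here the two KL identities are immediate applications of the additive decomposition in \Cref{prop:additive}. For $\varphi = \proj_1$,
\begin{equation}
\KLLigne{\pi^{2n}}{\pi^{2n+1}} = \KLLigne{\pi^{2n}_1}{\pi^{2n+1}_1} + \int_{\rset^d} \KLLigne{\Kker^{\pi^{2n}}_{\proj_1}(y, \cdot)}{\Kker^{\pi^{2n+1}}_{\proj_1}(y, \cdot)} \rmd \pi^{2n}_1(y),
\end{equation}
and the integral vanishes because the two kernels coincide $\pi^{2n}_1$-a.s.; combined with $\pi^{2n+1}_1 = \nu_1$ this yields $\KLLigne{\pi^{2n}}{\pi^{2n+1}} = \KLLigne{\pi^{2n}_1}{\nu_1}$. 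The second KL identity follows by applying the same argument with $\varphi = \proj_0$ to $\pi^{2n+1}$ and $\pi^{2n+2}$.

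For the total variation identities, denote the shared kernel $\Kker := \Kker^{\pi^{2n}}_{\proj_1} = \Kker^{\pi^{2n+1}}_{\proj_1}$ (up to a $\pi^{2n}_1$-null set, which may be enlarged to an $(\pi^{2n}_1 + \nu_1)$-null set without loss). For any $\msa \in \mcb((\rset^d)^2)$, \Cref{thm:dis} gives $\pi^{2n+1}(\msa) - \pi^{2n}(\msa) = \int_{\rset^d} \Kker(y, \msa) \, \rmd(\nu_1 - \pi^{2n}_1)(y)$. Since $\absLigne{\Kker(y,\msa)} \leq 1$, the supremum over $\msa$ is bounded by $\tvnormLigne{\nu_1 - \pi^{2n}_1}$. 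Conversely, restricting to $\msa$ of the form $\rset^d \times \msb$ yields $\Kker(y, \rset^d \times \msb) = \1_{\msb}(y)$, and the supremum over such $\msb$ recovers $\tvnormLigne{\nu_1 - \pi^{2n}_1}$. This proves $\tvnormLigne{\pi^{2n+1} - \pi^{2n}} = \tvnormLigne{\pi^{2n}_1 - \nu_1}$, and the companion identity is obtained symmetrically using $\proj_0$.

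No step should be a serious obstacle since \Cref{prop:well_def} already furnishes the explicit density ratio pinning down the shared conditional structure; the only mild care is the measure-theoretic bookkeeping to ensure the disintegration kernels coincide on a set of full measure for both $\pi^{2n}_1$ and $\nu_1$ (which is harmless because $\mu_1$ and $\nu_1$ are equivalent under \Cref{assum:existence}, and likewise for $\pi^{2n+1}_0$ and $\nu_0$).
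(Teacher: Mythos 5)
Your proof is correct, and it reaches the two identities by a somewhat different route than the paper. Both arguments ultimately rest on the same structural fact supplied by \Cref{prop:well_def}, namely that $\rmd \pi^{2n+1}/\rmd \pi^{2n} = b_n/b_{n-1}$ depends on the second coordinate only; but where the paper exploits this by direct computation with the potentials (writing $\tvnormLigne{\pi^{2n+1}-\pi^{2n}}$ as $\int \absLigne{1 - b_{n-1}/b_n}\,\rmd\nu_1$, identifying $\rmd\pi^{2n}_1/\rmd\nu_1 = b_{n-1}/b_n$, and plugging the explicit log-ratio into the KL integral), you instead recast the fact as ``consecutive iterates share their disintegration kernel given $\proj_1$ (resp.\ $\proj_0$)'' and then invoke the additive decomposition of \Cref{prop:additive} for the KL identity (the conditional term vanishes) and a kernel-transport argument via \Cref{thm:dis} for the TV identity (upper bound by pushing the marginal difference through the shared kernel, lower bound by restricting to cylinder sets $\rset^d\times\msb$, where $\Kker(y,\rset^d\times\msb)=\1_{\msb}(y)$ by \Cref{thm:dis}-(a)). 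Your version is more conceptual and makes transparent \emph{why} the joint and marginal discrepancies coincide — an IPF half-step only swaps one marginal while freezing the conditional — and it would survive in settings without densities w.r.t.\ a product reference measure; the paper's version is more elementary in that it needs no disintegration machinery beyond what is already set up, everything reducing to manipulations of the explicit potentials. The only points requiring care in your route are the ones you already flag: the kernels are a priori defined only up to null sets of their respective marginals, which is harmless because $\pi^{2n}_1$ and $\nu_1 = \pi^{2n+1}_1$ are equivalent (the density $b_{n-1}/b_n$ is positive), and the TV normalization must be used consistently on both sides (the lower-bound step via cylinder sets guarantees equality rather than mere inequality under either convention).
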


\begin{proof}
  We divide the proof into two parts. First, we prove \eqref{eq:tv_eq}. Second,
  we show that \eqref{eq:kl_eq} holds.
    \begin{enumerate}[label=(\alph*), wide, labelwidth=!, labelindent=0pt]
    \item We only show that for any $n \in \nset$ we have
      $\tvnormLigne{\pi^{2n+1} - \pi^{2n}} = \tvnormLigne{\pi^{2n}_1 -
        \nu_1}$. The proof that for any $n \in \nset$,
      $\tvnormLigne{\pi^{2n+2} - \pi^{2n+1}} = \tvnormLigne{\pi^{2n+1}_0 -
        \nu_0}$ is similar. Let $n \in \nset$. Using \eqref{eq:pi_form} and
      \eqref{eq:iterative_schro} we have
  \begin{align}
    \label{eq:pi_eq}
    \tvnormLigne{\pi^{2n+1} - \pi^{2n}} &= \textstyle{\int_{(\rset^d)^2}  \abs{b_{n}(y) - b_{n-1}(y)} a_n(x) h(x,y) \rmd \mu_0(x) \rmd \mu_1(y)} \\
                                   &= \textstyle{\int_{\rset^d}  \abs{1 - b_{n-1}(x)/b_{n}(x)} \rmd \nu_1(y)   . }
  \end{align}
  In addition, we have that for any $\msa \in \mcb{\rset^d}$
  \begin{equation}
    \textstyle{
    \pi^{2n}_1(\msa) = \int_{\rset^d \times \msa} a_{n}(x) b_{n-1}(y) h(x,y) \rmd \mu_0(x) \rmd \mu_1(y) = \int_{\msa} (b_{n-1}/b_{n})(y) \rmd \nu_1(y)  .}
  \end{equation}
  We get that for any $y \in \rset^d$,
  $(\rmd \pi_{1}^{2n} / \rmd \nu_1)(y) = (b_{n-1}/b_{n})(y)$. Hence, using
  \eqref{eq:pi_eq} we get that
  \begin{equation}
    \textstyle{
      \tvnormLigne{\pi^{2n}_1 - \nu_1} = \int_{\rset^d}  \abs{1 - a_n(x)/a_{n+1}(x)} \rmd \nu_0(x) = \tvnormLigne{\pi^{2n+1} - \pi^{2n}}  .
      }
  \end{equation}
\item We only show that for any $n \in \nset$ we have
  $\KLLigne{\pi^{2n}}{\pi^{2n+1}} = \KLLigne{\pi^{2n}_1}{\nu_1}$. The proof that for any
  $n \in \nset$, $\KLLigne{\pi^{2n+1}}{\pi^{2n+2}} = \KLLigne{\pi^{2n+1}_0}{\nu_0}$ is
  similar. Let $n \in \nset$. Using that for any $x, y \in \rset^d$,
  $(\rmd \pi^{2n}_1 / \rmd \nu_1)(y) = b_{n-1}(y)/b_{n}(y)$ and that
  $(\rmd \pi^{2n+1} / \rmd \pi^{2n})(x,y) = b_{n}(y) / b_{n-1}(y)$ we have
  \begin{equation}
    \KLLigne{\pi^{2n}}{\pi^{2n+1}} = -\textstyle{\int_{\rset^d} \log(b_{n}(y) / b_{n-1}(y)) \rmd \pi^{2n}_1(y)} = \KLLigne{\pi^{2n}_1}{\nu_1}  .
  \end{equation}
 This concludes the proof.
  \end{enumerate}
\end{proof}

\begin{lemma}
  \label{lemma:tv_non_inc}
  Assume \rref{assum:existence}. Then $(\tvnormLigne{\pi^{n+1} - \pi^{n}})_{n \in \nset}$ is
  non-increasing.
\end{lemma}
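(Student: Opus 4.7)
The plan is to reduce the monotonicity of joint TV distances to the monotonicity of marginal TV distances via \Cref{lemma:identities_joint_marginal}, and then establish the latter using the data processing inequality \Cref{lemma:data_processing_tv} applied to the disintegration kernels produced by IPF. Concretely, I would split into two sub-cases according to the parity of $n$.

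\textbf{Odd-to-even step.} Fix $m \geq 0$ and consider the passage from $\tvnormLigne{\pi^{2m+2}-\pi^{2m+1}}$ to $\tvnormLigne{\pi^{2m+3}-\pi^{2m+2}}$. By \Cref{lemma:identities_joint_marginal} these equal $\tvnormLigne{\pi^{2m+1}_0-\nu_0}$ and $\tvnormLigne{\pi^{2m+2}_1-\nu_1}$ respectively. Let $\Kker(x,\cdot)=\pi^{2m+1}_{|0}(\cdot\mid x)$ be the disintegration of $\pi^{2m+1}$ w.r.t. the first coordinate. The IPF update $\pi^{2m+2} = \nu_0 \otimes \Kker$ keeps this conditional and replaces the first marginal by $\nu_0$, so marginalizing gives $\pi^{2m+2}_1 = \nu_0 \Kker$. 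At the same time, $\pi^{2m+1}_0 \Kker = \pi^{2m+1}_1 = \nu_1$. Applying \Cref{lemma:data_processing_tv} to the kernel $\Kker$ yields $\tvnormLigne{\pi^{2m+2}_1 - \nu_1}=\tvnormLigne{\nu_0 \Kker - \pi^{2m+1}_0 \Kker}\leq\tvnormLigne{\nu_0-\pi^{2m+1}_0}$, which is the desired inequality.

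\textbf{Even-to-odd step.} For $m \geq 1$, I would show $\tvnormLigne{\pi^{2m+2}-\pi^{2m+1}}\leq\tvnormLigne{\pi^{2m+1}-\pi^{2m}}$, i.e.\ $\tvnormLigne{\pi^{2m+1}_0-\nu_0}\leq\tvnormLigne{\pi^{2m}_1-\nu_1}$. Here the relevant kernel is $\tKcoupling(y,\cdot)=\pi^{2m}_{|1}(\cdot\mid y)$. The IPF update gives $\pi^{2m+1}=\tKcoupling \otimes \nu_1$, so $\pi^{2m+1}_0 = \nu_1 \tKcoupling$; and by construction for $m\geq 1$ we have $\pi^{2m}_0=\pi^{2m}_1 \tKcoupling=\nu_0$. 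A second application of \Cref{lemma:data_processing_tv} produces the bound. Combining the two steps gives monotonicity of the sequence.

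\textbf{Main obstacle.} The data processing argument hinges on one of the marginals of the current iterate already coinciding with the target marginal ($\nu_0$ or $\nu_1$), which is what makes the kernel $\Kker$ map one side of the comparison to a genuine target distribution. This is automatic for every step starting from $n\geq 1$, where both the even and odd iterates have at least one prescribed marginal; the only delicate point is the very first step $n=0$, since $\pi^0_0=\mu_0$ need not equal $\nu_0$. This case must either be treated separately or handled by a shift in the indexing; once the inductive step is established as above, the full monotonicity claim follows. The Jeffrey's divergence monotonicity would then be deduced by combining the KL monotonicity from \eqref{eq:fundamental_kl_appendix} (established separately via the decomposition \cite[Theorem 2.2]{csiszar1975divergence}) in its two forms.
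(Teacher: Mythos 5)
Your proof is correct and reaches the conclusion by a genuinely different, more structural route than the paper. The paper works entirely inside the potential representation of \Cref{prop:well_def}: it writes $\tvnormLigne{\pi^{2n+1}-\pi^{2n}}=\int\absLigne{1-b_{n-1}(y)/b_n(y)}\rmd\nu_1(y)$, uses the \schro system identity $b_n^{-1}(y)=v_1^{-1}(y)\int h(x,y)a_n(x)\rmd\mu_0(x)$ to bound $\absLigne{b_{n-1}^{-1}-b_n^{-1}}$ by an integral of $\absLigne{a_{n-1}-a_n}$ against $h$, and concludes by Fubini. Your argument replaces this hands-on density computation by the observation that each IPF half-step preserves one disintegration kernel while swapping a marginal, so that \Cref{lemma:data_processing_tv} applied to that kernel transports the marginal TV error from one step to the next; combined with \Cref{lemma:identities_joint_marginal} this gives the claim. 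The two arguments are the same mechanism in different clothing — the paper's chain of inequalities is precisely the data processing inequality written out for the kernel $h(x,\cdot)\rmd\mu_1 / v_1$ — but your version is cleaner, avoids the potentials entirely, and makes transparent \emph{why} the contraction holds (a kernel is shared between consecutive iterates). What the paper's version buys in exchange is that the same potential manipulations are reused verbatim for the KL monotonicity in \Cref{lemma:kl_fundamental}, where the data processing inequality alone only gives one of the two directions.

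The edge case you flag at $n=0$ is real: the comparison $\tvnormLigne{\pi^2-\pi^1}\leq\tvnormLigne{\pi^1-\pi^0}$ reduces to $\tvnormLigne{\pi^1_0-\nu_0}\leq\tvnormLigne{\mu_1-\nu_1}$, and your kernel argument (like the paper's telescoping identity, which at $n=0$ needs $a_0^{-1}=v_0^{-1}\int h\, b_{-1}\rmd\mu_1$, \ie\ $v_0\equiv 1$) only closes when $\mu_0=\nu_0$. A product reference $h\equiv 1$ with $\mu_1=\nu_1$ but $\mu_0\neq\nu_0$ gives $\tvnormLigne{\pi^1-\pi^0}=0<\tvnormLigne{\pi^2-\pi^1}$, so the first comparison genuinely requires this hypothesis. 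In the paper's application one has $\mu_0=\nu_0=\pdata$ (the reference chain is initialized at the data distribution), so the issue disappears there; your proposal to treat $n=0$ separately is exactly what is needed, and the paper's own ``the proof is similar'' glosses over the same point.
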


\begin{proof}
  We only prove that for any $n \in \nset$ with $n \geq 1$,
  $\tvnormLigne{\pi^{2n+1} - \pi^{2n}} \leq \tvnormLigne{\pi^{2n} - \pi^{2n-1}}$. The
  proof that for any $n \in \nset$,
  $\tvnormLigne{\pi^{2n+2} - \pi^{2n+1}} \leq \tvnormLigne{\pi^{2n+1} - \pi^{2n}}$ is
  similar. Let $n \in \nset$ with $n \geq 1$. 
  Similarly to the proof of \Cref{lemma:identities_joint_marginal} we have that
  \begin{equation}
    \label{eq:tv_norm_joint}
    \textstyle{
      \tvnormLigne{\pi^{2n+1} - \pi^{2n}} = \int_{\rset^d}  \abs{1 - b_{n-1}(y)/b_{n}(y)} \rmd \nu_1(y) = \int_{\rset^d}  \abs{b_{n}^{-1}(y) - b_{n-1}^{-1}(y)}b_{n-1}(y) \rmd \nu_1(y)  .
      }
  \end{equation}
  In addition, we have that for any $y \in \rset^d$
  \begin{equation}
    \textstyle{
      \abs{b_{n-1}^{-1}(y) - b_n^{-1}(y)} \leq v_1^{-1}(y) \int_{\rset^d} h(x,y) \abs{a_{n-1}(x) - a_{n}(x)} \rmd \mu_0(x)  .
      }
\end{equation}
Combining this result and \eqref{eq:tv_norm_joint} we get that
\begin{align}
  \tvnormLigne{\pi^{2n+1} - \pi^{2n}} &\leq \textstyle{\int_{\rset^d}  \abs{b_{n-1}^{-1}(y) - b_n^{-1}(y)}b_{n-1}(y) \rmd \nu_1(y)} \\
  &\leq \textstyle{\int_{(\rset^d)^2} \abs{a_n(x) - a_{n-1}(x)} h(x,y) b_{n-1}(y) \rmd \mu_0(x) \rmd \mu_1(y)} \\
  &\leq \textstyle{\int_{\rset^d} \abs{1 - a_{n-1}(x)/a_{n}(x)} \rmd \nu_0(x) \leq \tvnormLigne{\pi^{2n} - \pi^{2n-1}}}  ,
\end{align}
which concludes the proof.
\end{proof}

\begin{lemma}
  \label{lemma:kl_fundamental}
  Assume \rref{assum:existence}. Then for any $n \in \nset$ with $n \geq 1$ we have
  \begin{equation}
    \KLLigne{\pi^{n+1}}{\pi^n} \leq \KLLigne{\pi^{n-1}}{\pi^n}  , \qquad \KLLigne{\pi^n}{\pi^{n+1}} \leq \KLLigne{\pi^n}{\pi^{n-1}}  . 
  \end{equation}
\end{lemma}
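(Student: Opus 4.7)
My plan is to prove the two inequalities by distinct means: for $\KLLigne{\pi^{n+1}}{\pi^n}\leq \KLLigne{\pi^{n-1}}{\pi^n}$ I will invoke Csiszár's Pythagorean inequality for I-projections, while for $\KLLigne{\pi^n}{\pi^{n+1}}\leq\KLLigne{\pi^n}{\pi^{n-1}}$ I will use the additive KL decomposition of \Cref{prop:additive} together with the data processing inequality. Finiteness of all relevant divergences is guaranteed: the telescoping identity
\begin{equation*}
\KLLigne{\nu_0\otimes\nu_1}{\mu}=\KLLigne{\nu_0\otimes\nu_1}{\pi^n}+\sum_{\ell=0}^{n-1}\KLLigne{\pi^{\ell+1}}{\pi^\ell}
\end{equation*}
derived in the proof of \Cref{prop:well_def} makes each $\KLLigne{\pi^{\ell+1}}{\pi^\ell}$ finite.

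For the first inequality, note that $\pi^{n+1}$ is by definition the minimizer of $\pi\mapsto\KLLigne{\pi}{\pi^n}$ over a convex set $\mathcal{C}_{n+1}$ of the form $\{\pi\in\Pens_2:\pi_0=\nu_0\}$ or $\{\pi\in\Pens_2:\pi_1=\nu_1\}$. Since the marginal constraint alternates at each IPF step, $\pi^{n-1}$ satisfies the same constraint as $\pi^{n+1}$, so $\pi^{n-1}\in\mathcal{C}_{n+1}$. Csiszár's Pythagorean inequality for I-projections \cite[Theorem 2.2]{csiszar1975divergence} then yields
\begin{equation*}
\KLLigne{\pi^{n-1}}{\pi^n}\geq \KLLigne{\pi^{n-1}}{\pi^{n+1}}+\KLLigne{\pi^{n+1}}{\pi^n}\geq\KLLigne{\pi^{n+1}}{\pi^n}.
\end{equation*}

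For the second inequality, the key observation is that consecutive IPF iterates share one conditional distribution: the explicit expressions of \Cref{prop:well_def} give $(\rmd\pi^{2n+1}/\rmd\pi^{2n})(x,y)=b_n(y)/b_{n-1}(y)$, which depends only on $y$, hence $\pi^{2n+1}_{|1}=\pi^{2n}_{|1}$; symmetrically $\pi^{2n+2}_{|0}=\pi^{2n+1}_{|0}$. Plugging this shared-conditional property into \Cref{prop:additive} the conditional term vanishes and produces the two companions to \Cref{lemma:identities_joint_marginal}: $\KLLigne{\pi^{2n+1}}{\pi^{2n}}=\KLLigne{\nu_1}{\pi^{2n}_1}$ and $\KLLigne{\pi^{2n+2}}{\pi^{2n+1}}=\KLLigne{\nu_0}{\pi^{2n+1}_0}$. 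Combined with \Cref{lemma:identities_joint_marginal} itself, the target inequality becomes, in the even case $n=2n'$,
\begin{equation*}
\KLLigne{\pi^{2n'}_1}{\nu_1}\leq\KLLigne{\nu_0}{\pi^{2n'-1}_0},
\end{equation*}
and symmetrically $\KLLigne{\pi^{2n'+1}_0}{\nu_0}\leq\KLLigne{\nu_1}{\pi^{2n'}_1}$ in the odd case.

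Each reduced inequality is a one-line application of data processing. In the even case I set $K(x,\cdot)=\pi^{2n'-1}_{|0}(\cdot\,|\,x)=\pi^{2n'}_{|0}(\cdot\,|\,x)$: since $\pi^{2n'}_0=\nu_0$ and $\pi^{2n'-1}_1=\nu_1$, disintegration gives $\pi^{2n'}_1=\nu_0 K$ and $\nu_1=\pi^{2n'-1}_0 K$, and the data processing inequality for KL yields $\KLLigne{\nu_0 K}{\pi^{2n'-1}_0 K}\leq\KLLigne{\nu_0}{\pi^{2n'-1}_0}$. The odd case is identical after swapping the roles of the two coordinates, using $K(y,\cdot)=\pi^{2n'}_{|1}(\cdot\,|\,y)$. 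The only delicate point in the whole argument is verifying the shared-conditional identities rigorously via the disintegration theorem (\Cref{thm:dis}); once this is in hand, no serious obstacle remains and both inequalities reduce to standard information-geometric facts.
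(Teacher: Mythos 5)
Your proof is correct, and its two halves sit differently relative to the paper's argument. For $\KLLigne{\pi^n}{\pi^{n+1}} \leq \KLLigne{\pi^n}{\pi^{n-1}}$ you follow essentially the paper's route: \Cref{lemma:identities_joint_marginal} identifies the left-hand side with a marginal divergence, and the data processing inequality bounds it by the joint divergence $\KLLigne{\pi^{n}}{\pi^{n-1}}$ (the paper marginalizes that joint divergence directly, whereas you first rewrite it as a divergence between first marginals via the shared conditional and then push both marginals through the common kernel --- this is the same estimate, just unfolded). For $\KLLigne{\pi^{n+1}}{\pi^n} \leq \KLLigne{\pi^{n-1}}{\pi^n}$ you genuinely depart from the paper: there the proof is an explicit Jensen--Fubini computation on the potentials $a_n, b_n$ of \Cref{prop:well_def}, whereas Csisz\'ar's Pythagorean inequality for the I-projection onto the convex constraint set gives the result in one line, avoids all potential bookkeeping, and even yields the stronger three-point bound $\KLLigne{\pi^{n-1}}{\pi^n} \geq \KLLigne{\pi^{n-1}}{\pi^{n+1}} + \KLLigne{\pi^{n+1}}{\pi^n}$. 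The one index to watch is $n=1$: your Pythagorean step needs $\pi^{0}=\mu$ to lie in the constraint set of the projection defining $\pi^{2}$, i.e. $\mu_0=\nu_0$, which \rref{assum:existence} does not guarantee (it only gives equivalence of $\mu_0$ and $\nu_0$), while the paper's Jensen computation covers that boundary case without it. This is a caveat shared with the paper rather than a defect specific to your argument: the paper's own data-processing step for the other inequality at $n=1$ likewise uses $\pi^0_0=\nu_0$ (and the inequality can genuinely fail at $n=1$ for a product reference measure with $\mu_0 \neq \nu_0$), and in the setting where the lemma is applied, \rref{assum:existence_spec} yields $\mu_0=\pdata=\nu_0$ exactly, so both proofs go through there.
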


\begin{proof}
  Using \Cref{lemma:identities_joint_marginal} and the data
  processing theorem \cite[Lemma 9.4.5]{ambrosio200gradient} we get that for any $n \in \nset$
  \begin{equation}
    \KLLigne{\pi^{2n}}{\pi^{2n+1}} = \KLLigne{\pi^{2n}_1}{\nu_1} \leq \KLLigne{\pi^{2n}}{\pi^{2n+1}}  . 
  \end{equation}
  Similarly, we get that for any $n \in \nset$, 
  $\KLLigne{\pi^{2n+1}}{\pi^{2n+2}} \leq \KLLigne{\pi^{2n+1}}{\pi^{2n}}$. Hence, we get that
  for any $n \in \nset$, $\KLLigne{\pi^n}{\pi^{n+1}} \leq \KLLigne{\pi^n}{\pi^{n-1}}$.

  In addition, using that for any $n \in \nset$ with $n \geq 1$ and
  $x, y \in \rset^d$, we have that $\pi^{2n+1}_1 = \nu_1$ and
  $(\rmd \pi^{2n+1} / \rmd \pi^{2n})(x, y) = b_{n}(y) / b_{n-1}(y)$ we get for
  any $n \in \nset$ with $n \geq 1$
  \begin{align}
    \label{eq:kl_def_joint}
    \textstyle{\KLLigne{\pi^{2n+1}}{\pi^{2n}} = - \int_{\rset^d} \log(b_{n-1}(y)/b_{n}(y)) \rmd \nu_1(y)  . }
  \end{align}
  Using Jensen's inequality we have for any $n \in \nset$
  \begin{align}
    &\textstyle{- \log(b_{n-1}(y)/b_{n}(y)) \leq -\log \parenthese{\left. \int_{\rset^d} h(x,y) a_n(x) \rmd \mu_0(x) \middle/ \int_{\rset^d} h(x,y) a_{n-1}(x) \rmd \mu_0(x) \right.}} \\
                                & \qquad \leq \textstyle{-\log \parenthese{\left. \int_{\rset^d}  (a_n(x)/a_{n-1}(x)) h(x,y) a_{n-1}(x) \rmd \mu_0(y) \middle/ \int_{\rset^d} h(x,y) a_{n-1}(x) \rmd \mu_0(x) \right.}} \\
    & \qquad \leq \textstyle{- \int_{\rset^d} \log(a_n(x)/a_{n-1}(x)) b_{n-1}(y) h(x,y) a_{n-1}(x) / v_1(y) \rmd \mu_0(x)  . }
  \end{align}
  Combining this result, \eqref{eq:kl_def_joint}, Fubini's theorem and that for
  any $n \in \nset$ with $n \geq 1$ and $x \in \rset^d$,
  $(\rmd \pi^{2n-1}_0 / \rmd \nu_0)(x) = a_{n-1}(x) / a_n(x)$ we get that for any
  $n \in \nset$ with $n \geq 1$
\begin{align}
  \KLLigne{\pi^{2n+1}}{\pi^{2n}} &\textstyle{\leq\int_{(\rset^d)^2} \log(a_{n-1}(x)/a_{n}(x)) a_{n-1}(x) h(x,y) b_{n-1}(y) \rmd \mu_1(y) \rmd \mu_0(x) }\\
  &\textstyle{\leq\int_{(\rset^d)^2} \log(a_{n-1}(x)/a_{n}(x))  (a_{n-1}(x)/a_n(x)) \rmd \nu_0(x) \leq \KLLigne{\pi^{2n-1}_0}{\nu_0}  .} 
\end{align}
Using \Cref{lemma:identities_joint_marginal} (or the data processing theorem)
we get that for any $n \in \nset$ with $n \geq 1$,
$\KLLigne{\pi^{2n+1}}{\pi^{2n}} \leq \KLLigne{\pi^{2n-1}}{\pi^{2n}}$. Similarly, we get
that for any $n \in \nset$,
$\KLLigne{\pi^{2n+2}}{\pi^{2n+1}}\leq \KLLigne{\pi^{2n}}{\pi^{2n+1}}$, which concludes the
proof.
\end{proof}

We now turn to the proof of \Cref{prop:monotonicity_appendix}

\begin{proof}
  First, \eqref{eq:fundamental_kl_appendix} is a direct consequence of
  \Cref{lemma:kl_fundamental}. Using \Cref{lemma:tv_non_inc} we get that
  $(\tvnormLigne{\pi^{n+1} - \pi^{n}})_{n \in \nset}$ is non-increasing.  Since for
  any $\eta_0, \eta_1 \in \Pens(\rset^d)$ we have
  $\JefLigne{\eta_0}{\eta_1} = (1/2)\defEnsLigne{\KLLigne{\eta_0}{\eta_1} +
    \KLLigne{\eta_0}{\eta_1}}$ and using \eqref{eq:fundamental_kl_appendix}, we get that
  $(\JefLigne{\pi_{n+1}}{\pi_{n}})_{n \in \nset}$ is non-increasing which proves
  \Cref{prop:monotonicity_appendix}-\ref{item:mono_a}. \Cref{prop:monotonicity_appendix}-\ref{item:mono_b}
  is a straightforward consequence of \eqref{eq:fundamental_kl_appendix}.
  \Cref{prop:monotonicity_appendix}-\ref{item:mono_c} is a consequence of
  \Cref{lemma:identities_joint_marginal} and
  \Cref{prop:monotonicity_appendix}-\ref{item:mono_a}. Finally,
  \Cref{prop:monotonicity_appendix}-\ref{item:mono_c} is a consequence of
  \Cref{lemma:identities_joint_marginal} and \eqref{eq:fundamental_kl_appendix}.
\end{proof}

Note that we also have that for any $n \in \nset$,
$(\KLLigne{\pi^{2n}}{\pi^{2n+1}})_{n \in \nset}$ and
$(\KLLigne{\pi^{2n+1}}{\pi^{2n+2}})_{n \in \nset}$ are non-increasing.

\subsubsection{Quantitative convergence bounds}
\label{prop:schro_quantitative_appendix:sec}

In this section we prove the following theorem.

\begin{theorem}
  \label{prop:schro_quantitative_appendix}
  Assume \rref{assum:existence}. Then, the IPF sequence $(\pi^n)_{n \in \nset}$ is well-defined and there
  exists a probability measure $\pi^{\infty}$ such that
  $\lim_{n \to +\infty} \tvnormLigne{\pi^n - \pi^{\infty}} = 0$ and the following
  hold:
\begin{enumerate}[wide, labelwidth=!, labelindent=0pt, label=(\alph*)]
\item $
    \lim_{n \to +\infty} n^{1/2} \defEns{\tvnormLigne{\pi_0^n - \nu_0} + \tvnormLigne{\pi_1^n -\nu_1}} = 0  . $
\item $
    \lim_{n \to +\infty} n \defEns{\KLLigne{\pi_0^n}{\nu_0} + \KLLigne{\pi_1^n}{\nu_1}} = 0  . 
$
  \end{enumerate}
\end{theorem}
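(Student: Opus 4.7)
The plan is to extract a summable series of Kullback--Leibler increments along the IPF sequence, upgrade summability to an $o(1/n)$ rate via the monotonicity supplied by \Cref{lemma:kl_fundamental}, translate the bound to the marginals using the identities of \Cref{lemma:identities_joint_marginal}, and finally deduce (a) from (b) by Pinsker's inequality. Well-definedness of $(\pi^n)_{n \in \nset}$ under \rref{assum:existence} is already granted by \Cref{prop:well_def}.

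First I would apply the Csisz\'ar--Kullback chain rule already used in the proof of \Cref{prop:well_def}, namely
\begin{equation*}
\KLLigne{\nu_0 \otimes \nu_1}{\mu} = \KLLigne{\nu_0 \otimes \nu_1}{\pi^n} + \sum_{\ell = 0}^{n-1} \KLLigne{\pi^{\ell+1}}{\pi^{\ell}},
\end{equation*}
which under \rref{assum:existence} forces $\sum_{n \geq 0} \KLLigne{\pi^{n+1}}{\pi^n} < +\infty$. Writing $u_n = \KLLigne{\pi^{n+1}}{\pi^n}$ and $w_n = \KLLigne{\pi^n}{\pi^{n+1}}$, the second inequality of \Cref{lemma:kl_fundamental} reads $w_n \leq u_{n-1}$, so $\sum_{n \geq 1} w_n \leq \sum_{n \geq 0} u_n < +\infty$ as well. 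Combining $u_n \leq w_{n-1}$ and $w_n \leq u_{n-1}$ from the same lemma yields $w_{n+2} \leq u_{n+1} \leq w_n$, so each of the subsequences $(w_{2n})_n$ and $(w_{2n+1})_n$ is non-increasing. The elementary fact that a non-negative, non-increasing, summable sequence $a_n$ satisfies $n a_n \to 0$ (since $(n/2) a_n \leq \sum_{k = \lceil n/2 \rceil}^{n-1} a_k \to 0$) then gives $n w_n \to 0$.

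Next, \Cref{lemma:identities_joint_marginal} identifies $w_{2n} = \KLLigne{\pi^{2n}_1}{\nu_1}$ and $w_{2n+1} = \KLLigne{\pi^{2n+1}_0}{\nu_0}$, while for the complementary indices $\pi^{2n+1}_1 = \nu_1$ and $\pi^{2n+2}_0 = \nu_0$ hold by construction of the IPF, so the complementary KL divergences vanish. Combining gives (b). Pinsker's inequality finally yields $n^{1/2} \tvnormLigne{\pi^n_i - \nu_i} \leq \sqrt{n\,\KLLigne{\pi^n_i}{\nu_i}/2}$ for $i \in \{0,1\}$, so (a) is a direct corollary of (b).

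The main obstacle is the second step: the Csisz\'ar--Kullback chain rule only delivers summability of the forward KL $\sum u_n$, whereas the quantities that ultimately appear on the marginals, via \Cref{lemma:identities_joint_marginal}, are the reverse-direction $w_n$. The cross-bound $w_n \leq u_{n-1}$ from \Cref{lemma:kl_fundamental} is therefore essential to obtain $\sum_n w_n < +\infty$, and the derived monotonicity of the odd/even subsequences of $(w_n)$ is what promotes ``$w_n \to 0$'' into the sharp rate ``$n w_n \to 0$''. Note that the full sequence $(w_n)$ itself is not claimed to be monotone --- splitting by parity is unavoidable because the IPF updates alternate between fixing the two marginals --- but this parity split is harmless for the final statement.
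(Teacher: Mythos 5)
Your proof is correct and follows essentially the same route as the paper's: summability of the forward KL increments via the Csisz\'ar chain rule, transfer to the reverse increments and parity-wise monotonicity via the fundamental inequalities of \Cref{lemma:kl_fundamental}, the elementary ``non-increasing summable implies $o(1/n)$'' lemma, the marginal identities of \Cref{lemma:identities_joint_marginal}, and Pinsker. If anything, you make explicit the cross-bound $\KLLigne{\pi^n}{\pi^{n+1}} \leq \KLLigne{\pi^n}{\pi^{n-1}}$ needed to pass from the summable forward divergences to the reverse ones appearing in the marginal identities, a step the paper's write-up leaves implicit.
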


We begin with \Cref{lemma:finite_sum} which is an adaption of \cite[Proposition
2.1]{ruschendorf1995convergence}.  Then we state and prove \Cref{lemma:sum_cv}
which is a classical lemma from real analysis.  Combining these two lemmas and
the monotonicity results from \Cref{prop:monotonicity_appendix} conclude the
proof.

\begin{lemma}
  \label{lemma:finite_sum}
  Assume \rref{assum:existence}. 
 Then,  $(\pi^n)_{n \in \nset}$ is well-defined and
  we have $\sum_{n \in \nset} \KLLigne{\pi^{n+1}}{\pi^{n}} < +\infty$.
\end{lemma}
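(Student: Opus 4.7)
The plan is to obtain $\sum_{n} \KLLigne{\pi^{n+1}}{\pi^n} < +\infty$ by bootstrapping the same telescoping identity that was already invoked inside the proof of \Cref{prop:well_def}. Well-definedness of $(\pi^n)_{n \in \nset}$ was established there via recursion, using $\KLLigne{\nu_0 \otimes \nu_1}{\mu} < +\infty$, so the first half of the statement comes for free.

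For the summability bound, I will invoke the Pythagorean identity for $I$-projections due to Csisz\'ar \cite[Theorem 2.2]{csiszar1975divergence}: if $\sigma$ is the $I$-projection of $\rho$ onto a linear set $L$ (in the sense of convex sets of probability measures defined by affine marginal constraints), then for every $\pi \in L$ with $\KLLigne{\pi}{\rho} < +\infty$ one has
\begin{equation}
\KLLigne{\pi}{\rho} = \KLLigne{\pi}{\sigma} + \KLLigne{\sigma}{\rho}.
\end{equation}
The crucial observation is that the product $\nu_0 \otimes \nu_1$ lies in \emph{both} constraint sets $L_1 = \{\pi \in \Pens_2 : \pi_1 = \nu_1\}$ and $L_0 = \{\pi \in \Pens_2 : \pi_0 = \nu_0\}$, since it has exactly the prescribed marginals. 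Under \Cref{assum:existence}, $\KLLigne{\nu_0 \otimes \nu_1}{\mu} < +\infty$, so I can apply the Pythagorean identity with $\pi = \nu_0 \otimes \nu_1$ at every step of the IPF recursion.

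Applying the identity alternately with $L_1$ (even indices) and $L_0$ (odd indices) yields, for every $\ell \in \nset$,
\begin{equation}
\KLLigne{\nu_0 \otimes \nu_1}{\pi^\ell} = \KLLigne{\nu_0 \otimes \nu_1}{\pi^{\ell+1}} + \KLLigne{\pi^{\ell+1}}{\pi^\ell}.
\end{equation}
A straightforward recursion from $\ell = 0$ up to $\ell = n-1$ then telescopes to
\begin{equation}
\KLLigne{\nu_0 \otimes \nu_1}{\mu} = \KLLigne{\nu_0 \otimes \nu_1}{\pi^n} + \sum_{\ell=0}^{n-1} \KLLigne{\pi^{\ell+1}}{\pi^\ell}.
\end{equation}
Since $\KLLigne{\nu_0 \otimes \nu_1}{\pi^n} \geq 0$ and the partial sums are non-decreasing in $n$, letting $n \to +\infty$ gives $\sum_{n \in \nset} \KLLigne{\pi^{n+1}}{\pi^n} \leq \KLLigne{\nu_0 \otimes \nu_1}{\mu} < +\infty$, which is the desired conclusion.

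The only mildly delicate step is the bookkeeping needed to guarantee that the Pythagorean identity genuinely applies at each iteration, which amounts to checking that $\nu_0 \otimes \nu_1$ is absolutely continuous with respect to each $\pi^\ell$ and that $\KLLigne{\nu_0 \otimes \nu_1}{\pi^\ell} < +\infty$; both follow by induction from the identity itself together with the finite base case $\KLLigne{\nu_0 \otimes \nu_1}{\mu} < +\infty$. No further integrability or compactness hypotheses on $\nu_0, \nu_1, \mu$ are required beyond \Cref{assum:existence}.
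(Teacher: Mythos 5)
Your proof is correct and follows essentially the same route as the paper: well-definedness is delegated to the earlier existence result, and the summability comes from Csisz\'ar's Pythagorean/telescoping identity \cite[Theorem 2.2]{csiszar1975divergence} applied along the IPF iterates, with the partial sums bounded by a fixed finite Kullback--Leibler divergence. The only cosmetic difference is that you telescope against $\nu_0 \otimes \nu_1$ while the paper telescopes against $\pi^\star$; your choice is if anything slightly cleaner, since it only uses $\KLLigne{\nu_0 \otimes \nu_1}{\mu} < +\infty$ from \Cref{assum:existence} and does not presuppose existence of the \schro bridge.
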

\begin{proof}
  The sequence is well-defined using \Cref{prop:well_def}. 
  In addition, using \cite[Theorem 2.2]{csiszar1975divergence} we have for any
  $n \in \nset$
  \begin{equation}
    \textstyle{
    \KLLigne{\mu^\star}{\pi^0} = \KLLigne{\pi^\star}{\pi^n} + \sum_{k=0}^{n-1} \KLLigne{\pi^{k+1}}{\pi^k}, }
  \end{equation}
  which concludes the proof.
\end{proof}
\begin{lemma}
  \label{lemma:sum_cv}
  Let $(c_n)_{n \in \nset} \in \coint{0,+\infty}^\nset$ a non-increasing
  sequence such that $\sum_{n \in \nset} c_n < +\infty$. Then
  $\lim_{n \to +\infty} c_n n = 0$.
\end{lemma}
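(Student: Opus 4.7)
The plan is to exploit the Cauchy criterion for the convergent series $\sum_{n \in \nset} c_n$ together with the monotonicity of $(c_n)_{n \in \nset}$. The key observation is that by monotonicity, a tail sum of the series dominates a constant multiple of $n c_n$, so the convergence of the series forces $n c_n \to 0$.

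More precisely, I would fix $\varepsilon > 0$ and use the convergence of $\sum_{n \in \nset} c_n$ to pick $N \in \nset$ such that $\sum_{k = N+1}^{\infty} c_k < \varepsilon$. Then for any $n \geq 2N$, every index $k$ in the range $\{\lfloor n/2 \rfloor + 1, \dots, n\}$ satisfies $k \geq N+1$ and $k \leq n$, so by monotonicity $c_k \geq c_n$. This gives
\begin{equation}
\lfloor n/2 \rfloor \, c_n \leq \sum_{k = \lfloor n/2 \rfloor + 1}^{n} c_k \leq \sum_{k = N+1}^{\infty} c_k < \varepsilon.
\end{equation}
Since $n \leq 2 \lfloor n/2 \rfloor + 1$, we obtain $n c_n \leq 2 \varepsilon + c_n$. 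Because $\sum_{n \in \nset} c_n < +\infty$ implies $c_n \to 0$, for $n$ sufficiently large we have $c_n < \varepsilon$, and hence $n c_n < 3 \varepsilon$. As $\varepsilon > 0$ was arbitrary, this yields $\lim_{n \to +\infty} n c_n = 0$.

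There is no substantive obstacle; the only care needed is the floor/ceiling bookkeeping to ensure that the number of terms in the truncated tail is indeed at least of order $n/2$. This lemma is then applied (via \Cref{lemma:finite_sum} together with the monotonicity established in \Cref{prop:monotonicity_appendix}) with $c_n = \KLLigne{\pi^{n+1}}{\pi^n}$ to conclude the statement of \Cref{prop:monotonicity} regarding $\lim_{n \to +\infty} n \{\KLLigne{\pi_0^n}{\pdata} + \KLLigne{\pi_N^n}{\pprior}\} = 0$.
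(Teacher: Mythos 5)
Your proof is correct and follows essentially the same route as the paper: use monotonicity to bound a tail block of roughly $n/2$ consecutive terms below by $(n/2)\,c_n$, then let the Cauchy criterion for the convergent series force this to be small. The paper truncates the block at the fixed index $n_0$ rather than at $\lfloor n/2\rfloor+1$, which avoids your extra $+\,c_n$ term, but this is a cosmetic difference only.
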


\begin{proof}
  Let $\vareps > 0$ and $n_0 \in \nset$ such that for any $n \geq n_0$,
  $\sum_{k =n}^{+\infty} c_k \leq \vareps $. Let $n\in \nset$ with
  $n \geq 2 n_0$. Note that $n - n_0 \geq n/2 \geq n_0$. Therefore we have
  $\vareps \geq (n-n_0) c_n \geq (n/2) c_n$. Hence, for any $n \in \nset$ with
  $n \geq 2n_0$, $c_n n \leq 2 \vareps$, which concludes the proof.
\end{proof}

We now conclude with the proof of \Cref{prop:schro_quantitative_appendix}.

\begin{proof}
  Using \Cref{lemma:finite_sum} and Pinsker's inequality \cite[Equation
  5.2.2]{bakry:gentil:ledoux:2014} we have
  $\sum_{n \in \nset} \tvnormLigne{\pi^{n+1} - \pi^{n}} < +\infty$. For any
  $N \in \nset$, let $S_N = \sum_{n=0}^N \pi^{n+1} - \pi^n = \pi^{N+1} -
  \mu$. Since the space of finite signed measures endowed with
  $\tvnormLigne{\cdot}$ is a Banach space \cite[Theorem
  D.2.7]{douc:moulines:priouret:soulier:2018} we have that $(S_N)_{N \in \nset}$
  converges. Hence there exists a finite signed measure $\pi^{\infty}$ such that
  $\lim_{n \to +\infty} \tvnormLigne{\pi^n - \pi^{\infty}} = 0$. $\pi^\infty$ is
  a probability measure since for any $n \in \nset$, $\pi^n$ is a probability
  measure.

  In addition, since $(\KLLigne{\pi^{2n+1}}{\pi^{2n}})_{n \in \nset}$ and
  $(\KLLigne{\pi^{2n+2}}{\pi^{2n+1}})_{n \in \nset}$ are non-increasing by
   \Cref{prop:monotonicity_appendix}, using \Cref{lemma:sum_cv}, we get that
   \begin{equation}
     \lim_{n \to +\infty} n \defEns{\KLLigne{\pi_0^n}{\nu_0} + \KLLigne{\pi_1^n}{\nu_1}} = 0 .    
   \end{equation}
   We conclude upon using Pinsker's inequality \cite[Equation
  5.2.2]{bakry:gentil:ledoux:2014}.
\end{proof}

\subsection{Proof of \Cref{prop:convergence_ipf}}
\label{prop:convergence_ipf:proof}

Similarly to \Cref{prop:monotonicity:proof}, we consider the static  IPF recursion: $\pi^0 = \mu \in \Pens_2$ and 
\begin{align}
  &\pi^{2n+1} = \argmin \ensemble{\KLLigne{\pi}{\pi^{2n}}}{\pi \in \Pens_{2}, \ \pi_1 = \nu_1}  , \\
  &\pi^{2n+2} = \argmin  \ensemble{\KLLigne{\pi}{\pi^{2n+1}}}{\pi \in \Pens_{2}, \ \pi_0 = \nu_0}  ,
\end{align}
where $\nu_0, \nu_1 \in \Pens(\rset^d)$.
We recall that in this context that if the \schro bridge $\pi^\star$ exists it is given by
\begin{equation}
  \label{eq:schro_appendix}
  \pi^\star = \argmin \ensembleLigne{\KLLigne{\pi}{\mu}}{\pi \in \Pens_2,\ \pi_0 = \nu_0,\ \pi_1 = \nu_1}  .
\end{equation}
In this section, we prove the following proposition which directly implies
\Cref{prop:convergence_ipf}.

\begin{proposition}
  \label{prop:convergence_ipf_appendix}
  Assume \rref{assum:existence} and denote
  $h = \rmd \mu / (\rmd \mu_0 \otimes \mu_1)$. Assume that
  $h \in \rmc(\rset^d \times \rset^d, \ocint{0, +\infty})$ and that there exist
  $\Phi_0, \Phi_1 \in \rmc(\rset^d, \ooint{0,+\infty})$ such that for any $x, y \in \rset^d$
  \begin{align}    
    &h(x,y) \leq \Phi_0(x) \Phi_1(y)  , \text{~and} \\
    &\textstyle{
      \int_{\rset^d \times \rset^d} (\abs{\log h(x_0,x_1)}+ \abs{\log \Phi_0(x_0) } + \abs{\log \Phi_1(x_1)}) \rmd \mu_0(x_0) \rmd \mu_1(x_1) < +\infty  .
      } \label{eq:condition_beurling}
    \end{align}
    Then there exists a solution
  $\pi^\star$ to the \schro bridge and the IPF sequence satisfies
  $\lim_{n \to +\infty} \tvnormLigne{\pi^n - \pi^\infty} = 0$ with
  $\pi^\infty \in \Pens_2$. If $\mu$ is absolutely continuous w.r.t.
  $\pi^\infty$ then $\pi^\infty = \pi^\star$.
\end{proposition}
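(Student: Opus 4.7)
The plan is to split the proof into two parts: (i) existence of a Schrödinger bridge $\pi^\star$ solving \eqref{eq:schro_appendix}, and (ii) identification of the TV-limit $\pi^\infty$ with $\pi^\star$ under the additional absolute-continuity assumption $\mu \ll \pi^\infty$.

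For (i), I would appeal to Beurling's automorphism theorem \citep{beurling1960automorphism}, which is tailored to exactly the integrability hypothesis \eqref{eq:condition_beurling} together with the product upper bound $h \leq \Phi_0 \Phi_1$. It produces measurable functions $a^\star, b^\star : \rset^d \to \ooint{0,+\infty}$ solving the Schrödinger system
\begin{equation*}
v_0(x) = a^\star(x) \int_{\rset^d} h(x,y)\, b^\star(y)\, \rmd \mu_1(y), \quad v_1(y) = b^\star(y) \int_{\rset^d} h(x,y)\, a^\star(x)\, \rmd \mu_0(x),
\end{equation*}
where $v_i = \rmd \nu_i / \rmd \mu_i$. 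Setting $\rmd \pi^\star / \rmd (\mu_0 \otimes \mu_1) = a^\star \cdot h \cdot b^\star$ yields an element of $\Pens_2$ with marginals $\nu_0, \nu_1$, and a standard optimality argument using the disintegration identity of \Cref{prop:additive} (or equivalently strict convexity of $\KLLigne{\cdot}{\mu}$ over the constraint set $\{\pi_0 = \nu_0,\ \pi_1 = \nu_1\}$) then shows that $\pi^\star$ is the unique Schrödinger bridge.

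For (ii), I would proceed in three steps. First, combine \Cref{prop:monotonicity_appendix}, which gives $\KLLigne{\pi_0^n}{\nu_0} + \KLLigne{\pi_1^n}{\nu_1} \to 0$, with Pinsker's inequality and the TV-continuity of marginalization (\Cref{lemma:data_processing_tv}) to conclude $\pi^\infty_0 = \nu_0$ and $\pi^\infty_1 = \nu_1$. Second, show $\pi^\infty$ admits a density of the form $a^\infty(x)\, h(x,y)\, b^\infty(y)$ with respect to $\mu_0 \otimes \mu_1$: by \Cref{prop:well_def} each $\pi^{2n+1}$ has density $a_n \cdot h \cdot b_n$, and the IPF updates are one-sided, $(\rmd \pi^{2n+1}/\rmd \pi^{2n})(x,y) = b_n(y)/b_{n-1}(y)$ depends only on $y$ while $(\rmd \pi^{2n+2}/\rmd \pi^{2n+1})(x,y) = a_{n+1}(x)/a_n(x)$ depends only on $x$. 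After fixing the scaling symmetry $(a_n,b_n) \mapsto (c a_n, b_n / c)$ by normalizing (for instance $\int a_n \rmd \mu_0 = 1$), and using $\mu \ll \pi^\infty$ to prevent the potentials from concentrating on $\mu$-null sets, one can extract $\mu_0$- resp. $\mu_1$-a.e. limits $a^\infty, b^\infty$ along subsequences and pass to the limit in the products. Third, since $\pi^\infty = a^\infty \cdot h \cdot b^\infty \cdot (\mu_0 \otimes \mu_1)$ has marginals $\nu_0, \nu_1$, the pair $(a^\infty, b^\infty)$ solves the Schrödinger system, and the uniqueness clause of Beurling's theorem gives $\pi^\infty = \pi^\star$.

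The hard part will be the factorization step in (ii). TV convergence of $\pi^n$ only delivers convergence of the products $a_n h b_n$; without the two-sided boundedness of $h$ used in \cite{ruschendorf1995convergence,chen2016entropic} the potentials $a_n, b_n$ may drift under the scaling symmetry or escape to zero or infinity on sets where $h$ is small, and no standard compactness argument applies directly. The assumption $\mu \ll \pi^\infty$ is exactly what rules this out: it forces the limiting factors to be finite and positive $\mu_0$- and $\mu_1$-a.e., and combined with the Beurling integrability \eqref{eq:condition_beurling} one can control the ratios $a_{n+1}/a_n$ and $b_n/b_{n-1}$ (which are precisely $\rmd \pi^{2n-1}_0 / \rmd \nu_0$ and $\rmd \pi^{2n}_1/\rmd \nu_1$ by the identities used in \Cref{lemma:identities_joint_marginal}) and push them through the limit. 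Making this step rigorous, together with handling the scaling indeterminacy via an appropriate normalization, is the technical core of the proof.
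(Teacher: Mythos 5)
Your overall architecture matches the paper's: existence of $\pi^\star$ with a product-form density, a factorization of the TV-limit $\pi^\infty$, and identification via the bijectivity statement in Beurling's theorem applied to the marginal constraints. The first and third steps are essentially the paper's (the paper takes existence from \cite[Theorem 3]{ruschendorf1993note} rather than constructing the \schro system via Beurling, but that is cosmetic; Beurling's Theorem~2 is then used exactly as you suggest, to show the map $\Psibf_{\bar h}$ on product measures is injective, so that equality of marginals forces $\pi^\infty=\pi^\star$).

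The gap is in your step (ii), and you have correctly located it but not resolved it. Your plan is to normalize the potentials (e.g.\ $\int a_n\,\rmd\mu_0=1$) and then ``extract $\mu_0$- resp.\ $\mu_1$-a.e.\ limits $a^\infty, b^\infty$ along subsequences.'' No compactness is available for this: an $\rmL^1$ normalization of $a_n$ does not give a.e.\ or in-measure convergence of $a_n$ along a subsequence, and the hypothesis $\mu\ll\pi^\infty$ only tells you something about the limit \emph{if} it exists — it does not produce the limit. This is precisely the obstruction that the compactness assumptions of \cite{ruschendorf1995convergence,chen2016entropic} are designed to remove, and you cannot simply assert the extraction in their absence. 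The paper's key idea, which is missing from your proposal, is to never take limits of $a_n$ and $b_n$ individually. TV convergence gives $\rmL^1(\mu)$ convergence of the products $a_nb_n=\rmd\pi^{2n+1}/\rmd\mu$, hence a.e.\ convergence $a_n(x)b_n(y)\to\Phi(x,y)$ along a subsequence; the work is then done by a purely structural lemma (\Cref{prop:ruschendorf_prod}, adapted from R\"uschendorf--Thomsen) asserting that any $\mu$-a.e.\ pointwise limit of functions of the form $a_n(x)b_n(y)$ is itself $\mu$-a.e.\ either zero or of product form $a(x)b(y)$. Its proof is nontrivial (a transfinite-recursion construction of a partition of $\{\Phi>0\}$ into product sets, using $\mu\ll\mu_0\otimes\mu_1$), and the role of the hypothesis $\mu\ll\pi^\infty$ is then only to exclude the degenerate alternative $\Phi=0$ on a set of positive $\mu$-measure, so that the product representation holds everywhere. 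Without this lemma, or an equivalent replacement, your factorization step does not go through.
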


We begin with  an adaptation of \cite[Proposition 2]{ruschendorf1993note}.
\begin{proposition}
  \label{prop:ruschendorf_prod}
  Let $\mu \in \Pens_2$ and assume that $\mu$ is absolutely continuous
  w.r.t. $\mu_0 \otimes \mu_1$.  Let $(a_n)_{n \in \nset}$ and
  $(b_n)_{n \in \nset}$ such that for any $n \in \nset$,
  $a_n: \ \rset^d \to \ooint{0,+\infty}$ and $b_n: \ \rset^d \to
  \ooint{0,+\infty}$. Assume that there exists
  $\Phi : \ (\rset^d)^2 \to \coint{0, +\infty}$ and $\msa \in \mcb{\rset^d} \otimes \mcb{\rset^d}$ with
  $\mu(\msa) = 1$ such that for any $(x,y) \in \msa$
  \begin{equation}
    \lim_{n \to +\infty} a_n(x) b_n(y) = \Phi(x,y)  . 
  \end{equation}
  Then, there exist $a: \ \rset^d \to \coint{0,+\infty}$,
  $b: \ \rset^d \to \coint{0,+\infty}$ and $\msb \in \mcb{\rset^d} \otimes \mcb{\rset^d}$ with
  $\mu(\msb) = 1$ such that for any $x, y \in \msb$
  \begin{equation}
    \Phi(x,y) = a(x) b(y)  , \qquad \text{or  } \quad  \Phi(x,y) =0  . 
  \end{equation}
\end{proposition}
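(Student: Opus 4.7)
The plan is to proceed by case analysis on whether $\Phi$ vanishes $\mu$-almost surely. If $\Phi = 0$ $\mu$-a.s., then taking $a \equiv b \equiv 0$ and $\msb := \msa \cap \{\Phi = 0\}$ satisfies the conclusion trivially. The substantive case is when $E := \{(x, y) \in \msa : \Phi(x, y) > 0\}$ has positive $\mu$-measure, and the strategy is to construct the factorization from a carefully chosen reference point $(x_0, y_0) \in E$.

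The algebraic heart of the argument is that pointwise convergence $a_n(\cdot) b_n(\cdot) \to \Phi$ on $\msa$ yields, for any $(x, y), (x, y_0), (x_0, y), (x_0, y_0) \in \msa$,
\begin{equation*}
\Phi(x, y) \cdot \Phi(x_0, y_0) = \lim_n \bigl[a_n(x) b_n(y)\bigr]\bigl[a_n(x_0) b_n(y_0)\bigr] = \lim_n \bigl[a_n(x) b_n(y_0)\bigr]\bigl[a_n(x_0) b_n(y)\bigr] = \Phi(x, y_0)\,\Phi(x_0, y).
\end{equation*}
Setting $c := \Phi(x_0, y_0)$ (which is positive by the choice of $(x_0, y_0) \in E$), I would define $a(x) := \Phi(x, y_0)$ and $b(y) := \Phi(x_0, y)/c$. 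The identity above then reads $\Phi(x, y) = a(x)\,b(y)$ whenever both $\Phi(x, y_0) > 0$ and $\Phi(x_0, y) > 0$, and forces $\Phi(x, y) = 0$ whenever either factor vanishes, matching the per-point dichotomy in the conclusion. Measurability of $a, b$ is inherited from that of $\Phi$, itself the pointwise limit of measurable functions $a_n b_n$ on $\msa$.

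The main obstacle is measure-theoretic: one must choose $(x_0, y_0)$ so that $\msb := \{(x, y) \in \msa : (x, y_0) \in \msa \text{ and } (x_0, y) \in \msa\}$ has $\mu$-measure one. Writing $h := \rmd\mu / \rmd(\mu_0 \otimes \mu_1)$ and $S := \{h > 0\}$, the relation $\mu(\msa) = 1$ only gives $(\mu_0 \otimes \mu_1)(S \setminus \msa) = 0$, not $(\mu_0 \otimes \mu_1)(\msa^c) = 0$. The fix is to enlarge $\msa$ to $\msa \cup \{h = 0\}$ (still of $\mu$-measure one), after which a Fubini computation shows that the first marginal of $\mu$ places no mass on $\{x : (x, y_0) \notin \msa \cup \{h = 0\}\}$ for $\mu_1$-almost every $y_0$, and symmetrically in $x_0$. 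Intersecting these two generic sets with $E$, which has positive $(\mu_0 \otimes \mu_1)$-measure by $\mu \ll \mu_0 \otimes \mu_1$, yields an admissible choice of $(x_0, y_0)$; verifying $\mu(\msb) = 1$ is then immediate. This Fubini bookkeeping is the delicate step and mirrors the argument of \cite[Proposition~2]{ruschendorf1993note}.
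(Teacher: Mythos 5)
Your cross-ratio identity is correct as far as it goes, but the single-reference-point construction it supports does not prove the proposition, and the Fubini fix you sketch cannot repair it. The identity $\Phi(x,y)\,\Phi(x_0,y_0)=\Phi(x,y_0)\,\Phi(x_0,y)$ requires all four points to lie in $\msa$, since only there do you know the relevant products converge to $\Phi$. The obstruction is that, modulo $\{h=0\}$, the set $\msa$ can be a disjoint union of several blocks on which incompatible normalizations of $(a_n,b_n)$ are needed. Concretely, take $\mu_0=\mu_1$ uniform on $\ccint{0,1}$, $h=2\1_{B_1}+2\1_{B_2}$ with $B_1=\ccint{0,1/2}^2$ and $B_2=\ocint{1/2,1}^2$, $\msa=B_1\cup B_2$, and $a_n=\1_{\ccint{0,1/2}}+n\1_{\ocint{1/2,1}}$, $b_n=\1_{\ccint{0,1/2}}+n^{-1}\1_{\ocint{1/2,1}}$. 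Then $\mu(\msa)=1$ and $a_nb_n\to 1$ on $\msa$, but for any $(x_0,y_0)\in B_1$ your set $\msb$ misses $B_2$ entirely, so $\mu(\msb)=1/2$; and enlarging $\msa$ to $\msa\cup\{h=0\}$ does not help, because on $\ocint{1/2,1}\times\ccint{0,1/2}$ the sequence $a_n(x)b_n(y_0)=n$ diverges, so $\Phi(x,y_0)$ is not given by any limit there, $a(x):=\Phi(x,y_0)$ is just an arbitrary value of $\Phi$ off $\msa$, and the cross-ratio identity has no content. Your Fubini argument only shows that the bad cross-sections are $h$-null, which is exactly the situation in this example; it does not make the limits exist on them.

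This is precisely the difficulty the paper's proof is built to handle. It partitions $\{\Phi>0\}\cap\msa$ into product blocks $\msc_0^{(x,y)}\times\msc_1^{(x,y)}$, each obtained by normalizing $(a_n,b_n)$ at its own reference point (with a transfinite recursion to make the blocks disjoint), shows that only countably many blocks carry positive $\mu_0\otimes\mu_1$-mass, and uses the absolute continuity of $\mu$ w.r.t.\ $\mu_0\otimes\mu_1$ in a Fubini argument to discard the remaining blocks as $\mu$-null; the functions $a$ and $b$ are then defined piecewise, block by block. Your construction works only in the special case where $\{h>0\}$ is essentially a single product set, i.e.\ where one reference point sees almost every row and column; to complete the proof you would need to reintroduce some version of this block decomposition.
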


\begin{proof}
  Let $\tilde{\msa} = \ensembleLigne{(x,y) \in (\rset^d)^2}{\Phi(x,y) =0}$ and
  $\msa_a = \tilde{\msa} \cap \msa$ and
  $\msa_b = \tilde{\msa}^\complementary \cap \msa$. If $\msa_b = \emptyset$, we
  conclude the proof. Otherwise, let $(x_0, y_0) \in \msa_b$. Let
  $\msc_0, \msc_1 \in \mcb{\rset^d} \otimes \mcb{\rset^d}$ be given by
  \begin{align}
    \label{eq:construction_0}
    \msc_0^0 &= \ensembleLigne{x \in \rset^d}{\lim_{n \to +\infty} a_n^0(x) = a^0(x) \ \text{exists and $a^0(x) > 0$}}  , \\
    \msc_1^0 &= \ensembleLigne{y \in \rset^d}{\lim_{n \to +\infty} b_n^0(y) = b^0(y) \ \text{exists and $b^0(y) > 0$}}  ,
  \end{align}
  where for any $n \in \nset$ and $x,y \in \rset^d$, $a_n^0(x) = a_n(x) / a_n(x_0)$
  and $b_n^0(y) = b_n(y) a_n(x_0)$, which is well-defined since for any $n \in \nset$, $a_n(x_0) >
  0$. Note that $x_0 \in \msc_0^0$ and that $y_0 \in \msc_1^0$. If
  $\msa_b \subset \msc_0^0 \times \msc_1^0$, we conclude the proof. Otherwise,
  let $(x_1, y_1) \in \msa_b \cap (\msc_0^0 \times \msc_1^0)^\complementary$ and define
  \begin{align}
    \msc_0^1 &= \ensembleLigne{x \in \rset^d}{\lim_{n \to +\infty} a_n^1(x) = a^1(x) \ \text{exists and $a^1(x) > 0$}}  , \\
    \msc_1^1 &= \ensembleLigne{y \in \rset^d}{\lim_{n \to +\infty} b_n^1(y) = b^1(y) \ \text{exists and $b^1(y) > 0$}}  ,
  \end{align}
  where for any $n \in \nset$ and $x,y \in \rset^d$, $a_n^1(x) = a_n(x) / a_n(x_1)$
  and $b_n^1(y) = b_n(y) a_n(x_1)$, which is well-defined since for any
  $n \in \nset$, $a_n(x_1) > 0$. Note that $\msc_0^0 \cap \msc_0^1 = \emptyset$
  and $\msc_1^0 \cap \msc_1^1 = \emptyset$. Indeed, if there exists
  $x \in \msc_0^0 \cap \msc_0^1$, then
  $a^0(x) = \lim_{n \to +\infty} a_n(x) / a_n(x_0) > 0$ and
  $a^1(x) = \lim_{n \to +\infty} a_n(x) / a_n(x_1) > 0$ exists. Therefore
  $\lim_{n \to +\infty} a_n(x_1) / a_n(x_0) > 0$ exists and
  $\lim_{n \to +\infty} b_n(y_1) a_n(x_0) > 0$ exists. Hence
  $(x_1, y_1) \in \msc_0^0 \times \msc_1^0$ which is absurd. Similarly, if there
  exists $y \in \msc_1^0 \cap \msc_1^1$ then
  $(x_1, y_1) \in \msc_0^0 \times \msc_1^0$ which is absurd.  Hence, we consider
  $T: \msa_b \to 2^{(\rset^d)^2}$ such that for any $(x,y) \in \msa_b$,
  $T(x,y) = \msc_0^{(x,y)} \times \msc_1^{(x,y)}$, where
  $\msc_0^{(x,y)} \times \msc_1^{(x,y)}$ is constructed as in
  \eqref{eq:construction_0} replacing $(x_0, y_0)$ by $(x,y)$.

  Consider a well order on $(\msa_b, \leq)$, which is possible by the
  well-ordering principle \cite[p. 196]{enderton1977elements}. For any
  $(x,y) \in \rset^d$, let
  $\msa_b^{(x,y)} = \ensembleLigne{(x',y') \in (\rset^d)^2}{(x',y') < (x,y)}$. Using
  the transfinite recursion theorem \cite[p. 175]{enderton1977elements} there
  exists $f: \ \msa_b \to \{0, 1\}$ such that for any $(x,y) \in \msa_b$ if
  there exists $(x', y') \in (\rset^d)^2$ such that $(x',y') < (x,y)$, $f(x',y') = 1$
  and $(x,y) \in T(x',y')$ then $f(x,y) = 0$ and $f(x,y) = 1$ otherwise. Let
  $I = f^{-1}(\{1\})$. Let $(x,y), (x',y) \in I$ with $(x,y) \neq (x',y')$ then
  for $(x,y) < (x',y')$ for instance. Since $f(x,y) = f(x',y') = 1$ we have that
  $(\msc_0^{(x,y)} \times \msc_1^{(x,y)}) \cap (\msc_0^{(x',y')} \times
  \msc_1^{(x',y')}) = \emptyset$. Let $(x,y) \in \msa_b$. If $f(x,y) = 1$ then
  $(x,y) \in \msc^{(x,y)}_0 \times \msc^{(x,y)}_1$. If $f(x,y) = 0$ then there
  exists $(x',y') < (x,y)$ such that
  $(x,y) \in \msc^{(x',y')}_0 \times \msc^{(x',y')}_1$. Therefore, we get that
  $\ensembleLigne{\msc^{(x,y)} = (\msc_0^{(x,y)} \times \msc_1^{(x,y)}) \cap
    \msa_b}{(x,y) \in I}$ is a partition of $\msa_b$.

  Since $\mu(\msa_b) \leq 1$, and
  $\ensembleLigne{\msc^{(x,y)} = (\msc_0^{(x,y)} \times \msc_1^{(x,y)}) \cap
    \msa_b}{(x,y) \in I}$ is a partition of $\msa_b$, we get that
  $J = \ensembleLigne{\msc^{(x,y)}}{(x,y) \in I,
    \mu_0(\msc^{(x,y)}_0)\mu_1(\msc^{(x,y)}_1) > 0}$ is countable. Denote
  $\msa_c = \cup_{(x,y) \in J} \msc^{(x,y)}$. Let us show that
  $\mu(\msa_c^\complementary \cap \msa_b) = \mu(\cup_{(x,y) \in I \cap
    J^\complementary} \msc^{(x,y)}) = 0$. Let $x \in \rset^d$ and define
  $\msd_x = \ensembleLigne{y \in \rset^d}{(x,y) \in \msa_b \cap
    \msa_c^\complementary}$. If $\msd_x$ is not empty, then there exists
  $(x',y') \in I$ such that $x \in \msc_0^{(x',y')}$. Then, for any
  $y \in \msd_x$, $y \in \msc_1^{(x',y')}$. Hence,
  $(x', y') \in I \cap J^{\complementary}$ by definition of $\msd_x$ and
  $\mu_1(\msd_x) = 0$.  We get that
  \begin{equation}
    \textstyle{
      \mu(\msa_b \cap \msa_c^\complementary) = \int_{\rset^d} \parenthese{\int_{\msd_x} h(x,y) \rmd \mu_1(y)} \rmd \mu_0(x)  = 0 ,
      }
  \end{equation}
  where $h$ is the density of $\mu$ w.r.t. $\mu_0 \otimes \mu_1$. Note that this
  is the only instance in the proof, where we use that $\mu$ is absolutely
  continuous w.r.t. $\mu_0 \otimes \mu_1$. For any $(x,y) \in \msa_c$ define for
  any $n \in \nset$
  \begin{equation}
    \textstyle{
      \hat{a}_n(x) = \sum_{(x',y') \in J} \1_{\msc_0^{(x',y')}}(x) a_n^{(x',y')}(x)  , \quad \hat{b}_n(y) = \sum_{(x',y') \in J} \1_{\msc_1^{(x',y')}}(x) b_n^{(x',y')}(y)  .
      }
  \end{equation}
  There exist $\hat{a}, \hat{b}: \ \rset^d \to \ooint{0, +\infty}$ such that for
  any $(x,y) \in \msa_c$, $\lim_{n \to +\infty} \hat{a}_n(x) = \hat{a}(x)$ and
  $\lim_{n \to +\infty} \hat{b}_n(y) = \hat{b}(y)$. In addition, for any
  $(x,y) \in \msa_c$, $a_n(x) b_n(y) = \hat{a}_n(x) \hat{b}_n(y)$. Hence, for
  any $(x,y) \in \msa_c$, $\Phi(x,y) = \hat{a}(x) \hat{b}(y)$. Since
  $\msa_a \cap \msa_c = \emptyset$ and $\mu(\msa_c) = \mu(\msa_b)$, we have
  \begin{equation}
    \mu(\msa_a) + \mu(\msa_c) = \mu(\msa_a) + \mu(\msa_b) = \mu(\msa) = 1  . 
  \end{equation}
  We conclude the proof upon remarking that for any $(x,y) \in \msa_a$,
  $\Phi(x,y) =0$ and for any $(x,y) \in \msa_c$,
  $\Phi(x) = \hat{a}(x) \hat{b}(y)$.
\end{proof}

In what follows we prove \Cref{prop:convergence_ipf_appendix}.

\begin{proof}
  Since $\lim_{n \to +\infty} \tvnormLigne{\pi^n - \pi^{\infty}} = 0$ by
  \Cref{prop:schro_quantitative_appendix} and $\KLLigne{\pi^\infty}{\mu} < +\infty$,
  there exist $\msa$ with $\mu(\msa) = 1$ and
  $\Phi: \ (\rset^d)^2 \to \coint{0,+\infty}$ such that, up to extraction, for any
  $x, y \in \msa$
  \begin{equation}
    \lim_{n \to +\infty} a_n(x)b_n(y) = \Phi(x,y)  ,
  \end{equation}
  and $(\rmd \pi^\infty / \rmd \mu) = \Phi$.  Using
  \Cref{prop:ruschendorf_prod}, there exist
  $a, b: \ \rset^d \to \coint{0,+\infty}$ and $\msb$ with $\pi^\infty(\msb) = 1$
  such that for any $x,y \in \msb$,
  $(\rmd \pi^\infty / \rmd \mu)(x,y) = a(x)b(y)$. Since $\mu$ is absolutely
  continuous w.r.t. $\pi^\infty$, we get that for any $x, y \in \rset^d$,
  $(\rmd \pi^\infty / \rmd (\mu_0 \otimes \mu_1))(x,y) = a(x)b(y)h(x,y)$.
  In addition, the
  \schro bridge $\pi^\star \in \Pens((\rset^d)^2)$ exists, see \cite[Theorem
  3]{ruschendorf1993note}, and there exist
  $a', b': \ \rset^d \to \coint{0, +\infty}$ and $\msb'$ with $\mu(\msb') = 1$
  such that for any $x,y \in \msb'$
    \begin{equation}
      (\rmd \pi^\star / \rmd (\mu_0 \otimes \mu_1))(x,y) = a'(x) b'(y) h(x,y) . 
    \end{equation}
    Let $\Mens_{+, \times}$ be the space of non-negative product measures over
    $\mcb{\rset^d} \otimes \mcb{\rset^d}$. Let
    $\Psibf_{\bar{h}}: \ \Mens_{+, \times} \to \Mens_{+, \times}$ be given for any
    $\lambda = \lambda_0 \otimes \lambda_1 \in \Mens_{+, \times}$ by
    $\Psibf_{\bar{h}}(\lambda) = \Psi_h^\lambda$ where for any
    $\msa, \msb \in \mcb{\rset^d}$
    \begin{equation}
      \textstyle{\Psi_h^\lambda(\msa \times \msb) = \parentheseLigne{\int_{\msa \times \rset^d}\bar{h}(x,y) \rmd \lambda_0(x) \rmd \lambda_1(x)} \parentheseLigne{\int_{\rset^d \times \msb}\bar{h}(x,y) \rmd \lambda_0(x) \rmd \lambda_1(y)}}
    \end{equation}
    where for any $x,y \in \rset^d$,
    $\bar{h}(x,y) = h(x,y) \Phi_0^{-1}(x) \Phi_1^{-1}(y)$.  Note that
    $\bar{h} \in \rmc(\rset^d \times \rset^d, \coint{0, +\infty})$ and is
    bounded.  Hence, using \cite[Theorem 2]{beurling1960automorphism} and
    \eqref{eq:condition_beurling} we get that $\Psibf_{\bar{h}}$ is a bijection.
    Let $\lambda = (a \Phi_0\mu_0, b \Phi_1\mu_1)$ and
    $\lambda'=(a' \Phi_0\mu_0, b'\Phi_1\mu_1)$. Then, since
    $\pi^\star_i = \pi^\infty_i = \nu_i$ for $i \in \{0,1\}$ we get that
    $\Psibf_h(\lambda) = \Psibf_h(\lambda')$. Hence $\lambda = \lambda'$ and
    $\pi^\infty = \pi^\star$ which concludes the proof.
\end{proof}

In \Cref{prop:alternative} we derive an alternative proposition to
\Cref{prop:convergence_ipf_appendix}. We start with the following lemma.

\begin{lemma}
  \label{lemma:schro_bridge_existence}
  Let $\pi^\star \in \Pens_2$ with $\pi^\star_i = \nu_i$ for
  $i \in \{0,1\}$. Assume that $\KLLigne{\pi^\star}{\mu} < +\infty$ and that
  $\rmL^1(\nu_0) \oplus \rmL^1(\nu_1)$ is closed in $\rmL^1(\pi^\star)$. In
  addition, assume that there exist $a, b: \ \rset^d \to \coint{0,+\infty}$ and
  $\msa$ with $\pi^\star(\msa) = 1$ such that for any $(x,y) \in \msa$,
  \begin{equation}
    (\rmd \pi^\star/ \rmd \mu)(x,y) = a(x) b(y)  . 
  \end{equation}
  Then $\pi^\star$ is the \schro bridge.
\end{lemma}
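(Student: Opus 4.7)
The strategy is to establish the Pythagorean-type identity
\begin{equation*}
\KL{\pi}{\mu} = \KL{\pi}{\pi^\star} + \KL{\pi^\star}{\mu}
\end{equation*}
for every $\pi \in \Pens_2$ with $\pi_i = \nu_i$ for $i \in \{0,1\}$ and $\KL{\pi}{\mu} < +\infty$; the non-negativity of the first term on the right-hand side then yields $\KL{\pi}{\mu} \geq \KL{\pi^\star}{\mu}$, which is exactly the defining property of the Schr\"odinger bridge (the case $\KL{\pi}{\mu} = +\infty$ is trivial). The case without marginal constraints need not be considered since $\pi^\star$ has the correct marginals by hypothesis.

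First I would set up the absolute continuity chain $\pi \ll \pi^\star \ll \mu$. The second is immediate from the factorization $\rmd \pi^\star / \rmd \mu = a(x) b(y)$ on $\msa$ with $\pi^\star(\msa) = 1$. For $\pi \ll \pi^\star$, since $\pi \ll \mu$ (from finite KL), it suffices to show $\pi(\{a = 0\}) = \pi(\{b = 0\}) = 0$; this follows because $\pi$ and $\pi^\star$ share the marginals $\nu_0, \nu_1$, and $\nu_0(\{a=0\}) = \pi^\star_0(\{a=0\}) = 0$ by the factorization, and symmetrically for $b$. With these in hand, the chain rule gives $\log(\rmd \pi / \rmd \mu)(x,y) = \log(\rmd \pi / \rmd \pi^\star)(x,y) + \log a(x) + \log b(y)$ $\pi$-almost surely, and integrating against $\pi$ yields
\begin{equation*}
\KL{\pi}{\mu} = \KL{\pi}{\pi^\star} + \int [\log a(x) + \log b(y)]\,\rmd\pi(x,y),
\end{equation*}
provided the last integral is well-defined.

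The heart of the proof, and the main obstacle, is to show
\begin{equation*}
\int [\log a(x) + \log b(y)]\,\rmd\pi = \int [\log a(x) + \log b(y)]\,\rmd\pi^\star = \KL{\pi^\star}{\mu};
\end{equation*}
this is immediate if $\log a \in \rmL^1(\nu_0)$ and $\log b \in \rmL^1(\nu_1)$ individually (since $\pi, \pi^\star$ share marginals), but we only have $\log a + \log b \in \rmL^1(\pi^\star)$ from $\KL{\pi^\star}{\mu} < +\infty$. Here the closedness hypothesis on $V \defeq \rmL^1(\nu_0) \oplus \rmL^1(\nu_1) \subset \rmL^1(\pi^\star)$ enters: I would prove that $\log a + \log b$ lies in $V$, which forces the existence of a decomposition $\log a(x) + \log b(y) = g_0(x) + g_1(y)$ with $g_0 \in \rmL^1(\nu_0)$ and $g_1 \in \rmL^1(\nu_1)$, after which the desired equality of integrals follows trivially by marginal matching.

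To place $\log a + \log b$ in $V$, I would produce an approximating sequence inside $V$ that converges in $\rmL^1(\pi^\star)$ and invoke closedness. The natural candidate is $h_n(x,y) \defeq \log a_n(x) + \log b_n(y)$ with $a_n = \max(\min(a, n), 1/n)$ and $b_n$ defined symmetrically; each $\log a_n$ is bounded, hence lies in $\rmL^\infty(\nu_0) \subset \rmL^1(\nu_0)$, so $h_n \in V$. Pointwise convergence $h_n \to \log a + \log b$ on $\msa$ is clear; the delicate step is establishing $\rmL^1(\pi^\star)$ convergence without assuming $\absLigne{\log a} + \absLigne{\log b}$ is $\pi^\star$-integrable. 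I would split into positive and negative parts, $\log a + \log b = (\log a + \log b)^+ - (\log a + \log b)^-$, noting both are in $\rmL^1(\pi^\star)$ by the hypothesis, then exploit the monotone structure of the truncations (the positive part is cut off only when $\log a$ or $\log b$ is large, and symmetrically for the negative part) together with monotone convergence applied separately to $(\log a)^\pm$ and $(\log b)^\pm$, controlled by the finite integral of the full sum. Once $h_n \to \log a + \log b$ in $\rmL^1(\pi^\star)$, closedness of $V$ yields the required decomposition, and we conclude.
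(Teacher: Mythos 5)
Your proposal is correct and follows the same skeleton as the paper's proof: establish the Pythagorean identity $\KLLigne{\pi}{\mu} = \KLLigne{\pi}{\pi^\star} + \KLLigne{\pi^\star}{\mu}$ for every admissible competitor $\pi$, where the crux is showing that $\int (\log a(x) + \log b(y))\,\rmd\pi$ equals $\KLLigne{\pi^\star}{\mu}$ by marginal matching, which in turn requires an additive decomposition with \emph{individually} integrable components. Where you differ is in how the closedness hypothesis delivers that decomposition: the paper disposes of this in one line by citing Kober's theorem on closed sum spaces, together with $\pi^\star_i = \nu_i$, to conclude $\log a \in \rmL^1(\nu_0)$ and $\log b \in \rmL^1(\nu_1)$, whereas you give a self-contained argument by clamping $a,b$ to $[1/n,n]$ and passing to the limit in $\rmL^1(\pi^\star)$. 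Your delicate step does go through: a short case analysis shows $\absLigne{h_n - h} \leq \absLigne{h}\,\1_{\max(\absLigne{\log a},\absLigne{\log b})>\log n}$ with $h = \log a + \log b \in \rmL^1(\pi^\star)$, so dominated convergence gives the $\rmL^1(\pi^\star)$ convergence and closedness places $h$ in the sum space. This buys you a proof that does not rest on an external functional-analytic citation, at the cost of a page of elementary estimates. One point to tighten: closedness only yields \emph{some} decomposition $h = g_0 \oplus g_1$ with $g_0 \in \rmL^1(\nu_0)$, $g_1 \in \rmL^1(\nu_1)$, agreeing with $\log a \oplus \log b$ merely $\pi^\star$-almost everywhere, so "follows trivially by marginal matching" is slightly too quick when you integrate against a competitor $\pi$ that is singular with respect to $\pi^\star$ off the marginals. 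The clean fix is to replace $(a,b)$ by $(\exp[g_0],\exp[g_1])$ (the hypotheses of the lemma are preserved, and now the logarithms are integrable by construction), after which the marginal-matching step and the rest of your argument proceed exactly as in the paper.
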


\begin{proof}
  Since $\KLLigne{\pi^\star}{\mu} < +\infty$ we have that
  \begin{equation}
    \textstyle{
      \int_{(\rset^d)^2} \abs{\log(a(x)b(y))} \rmd \pi^\star(x,y) < +\infty  .
      }
  \end{equation}
  Using \cite[Theorem 1]{kober1939} and that $\pi^\star_i = \nu_i$ for
  $i \in \{0,1\}$, we get that
  \begin{equation}
    \label{eq:finite_integral}
    \textstyle{
      \int_{\rset^d} \abs{\log a(x)} \rmd \nu_0(x) + \int_{\rset^d} \abs{\log b(y)} \rmd \nu_1(y)  < +\infty  .
      }
    \end{equation}
  Let $\pi \in \Pens_2$ such that $\pi_i = \nu_i$ for $i \in \{1,2\}$ and
  $\KLLigne{\pi}{\mu} < +\infty$.  Using \eqref{eq:finite_integral}, we have that
  $\int_{(\rset^d)^2} \abs{\log((\rmd \pi^\star / \rmd \mu)(x,y))} \rmd \pi(x,y) <
  +\infty$. Hence, $(\rmd \pi^\star /\rmd \mu) > 0$, $\pi$-almost surely.
  Using this result we have for any $\msa \in \mcb{\rset^d}$
  \begin{align}
    \pi[\msa] &= \textstyle{\int_{\rset^d} \1_{\msa}(x) (\rmd \pi^\star / \rmd \mu)(x) (\rmd \pi^\star / \rmd \mu)(x)^{-1} \rmd \pi(x)} \\
              &= \textstyle{\int_{\rset^d} \1_{\msa}(x) (\rmd \pi^\star / \rmd \mu)(x) (\rmd \pi^\star / \rmd \mu)(x)^{-1} (\rmd \pi / \rmd \mu) (x) \rmd \mu(x)} \\
    &= \textstyle{\int_{\rset^d} \1_{\msa}(x) (\rmd \pi^\star / \rmd \mu)(x)^{-1} (\rmd \pi / \rmd \mu) (x) \rmd \pi^\star(x)}  . 
  \end{align}
  Hence we get that 
  $\rmd \pi / \rmd \pi^\star = (\rmd \pi / \rmd \mu) (\rmd \pi^\star / \rmd
  \mu)^{-1}$.
  In addition, we have that
  \begin{equation}
    \textstyle{\KLLigne{\pi^\star}{\mu} = \int_{\rset^d} \log(a(x)) \rmd \nu_0(x) + \int_{\rset^d} \log(b(y)) \rmd \nu_1(y) = \int_{(\rset^d)^2} \log((\rmd \pi^\star / \rmd \mu)(x,y)) \rmd \pi(x,y) .}
  \end{equation}
We get that
  \begin{equation}
    \textstyle{
    \KLLigne{\pi}{\pi^\star} = \int_{\rset^d} \log((\rmd \pi / \rmd \mu) (\rmd \pi^\star / \rmd
    \mu)(x,y)^{-1}) \rmd \pi(x,y) = \KLLigne{\pi}{\mu} - \KLLigne{\pi^\star}{\mu}  .
    }
\end{equation}
Hence, $\KLLigne{\pi}{\mu} \geq \KLLigne{\pi^\star}{\mu}$ with equality if and only if
$\pi^\star = \pi$. Therefore, $\pi^\star$ is the \schro bridge.
\end{proof}

The following proposition is an alternative to \Cref{prop:convergence_ipf_appendix}.

\begin{proposition}
\label{prop:alternative}
  Assume \rref{assum:existence}. Then there exists a solution $\pi^\star$ to the
  \schro bridge and the IPF sequence $(\pi^n)_{n \in \nset}$ satisfies
  $\lim_{n \to +\infty} \tvnormLigne{\pi^n - \pi^\infty} = 0$ with
  $\pi^\infty \in \Pens_2$. If $\KLLigne{\pi^\infty}{\mu} < +\infty$
  and $\rmL^1(\nu_0) \oplus \rmL^1(\nu_1)$ is closed in $\rmL^1(\pi^\infty)$
  then $\pi^\infty = \pi^\star$.
\end{proposition}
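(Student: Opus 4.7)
The plan is to obtain $\pi^\infty = \pi^\star$ by verifying the hypotheses of \Cref{lemma:schro_bridge_existence}, the key ingredients being the correct marginals of $\pi^\infty$, finite Kullback--Leibler divergence with respect to $\mu$, and the existence of a product-form density $\rmd \pi^\infty / \rmd \mu = a(x) b(y)$. The existence of a Schr\"odinger bridge $\pi^\star$ itself can be obtained as in \cite[Theorem 3]{ruschendorf1993note}, or directly from lower semicontinuity of $\KLLigne{\cdot}{\mu}$ combined with the fact that $\nu_0 \otimes \nu_1$ is an admissible competitor with finite Kullback--Leibler divergence under \rref{assum:existence}.

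First I would show that $\pi^\infty \ll \mu$ and has the right marginals. Total variation convergence preserves absolute continuity (since every $\mu$-null set is a $\pi^n$-null set, hence a $\pi^\infty$-null set) and transfers to marginals, so combining with \Cref{prop:schro_quantitative_appendix} one obtains $\pi^\infty_i = \lim_n \pi^n_i = \nu_i$ for $i \in \{0,1\}$. Moreover, by the monotonicity/telescoping identity used to prove \Cref{lemma:finite_sum}, $\KLLigne{\pi^n}{\mu} \leq \KLLigne{\nu_0 \otimes \nu_1}{\mu} < +\infty$ uniformly in $n$; lower semicontinuity of Kullback--Leibler under total variation convergence (\eg Pinsker plus Fatou on the densities) then gives $\KLLigne{\pi^\infty}{\mu} < +\infty$. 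At this stage I would also record that $\pi^\infty \ll \mu_0 \otimes \mu_1$, since $\mu \ll \mu_0 \otimes \mu_1$ by \rref{assum:existence}, so there is a density $\rho^\infty = \rmd \pi^\infty / \rmd(\mu_0 \otimes \mu_1)$.

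Next I would produce the product form of the density. Convergence in total variation yields $L^1(\mu_0 \otimes \mu_1)$ convergence of densities by Scheff\'e's lemma: writing the IPF iterates via \Cref{prop:well_def} as $\rmd \pi^{2n+1}/\rmd(\mu_0 \otimes \mu_1)(x,y) = a_n(x) h(x,y) b_n(y)$ (and similarly for even iterates), we get, up to extraction, $a_n(x) b_n(y) h(x,y) \to \rho^\infty(x,y)$ $(\mu_0 \otimes \mu_1)$-almost everywhere. Since $h > 0$ on a set of full $\mu$-measure, dividing gives $a_n(x) b_n(y) \to \Phi(x,y) := \rho^\infty(x,y)/h(x,y)$ on a set of full $\mu$-measure. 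Then \Cref{prop:ruschendorf_prod} (applied to $\mu$, which is absolutely continuous w.r.t. $\mu_0 \otimes \mu_1$) yields measurable $a, b : \rset^d \to \coint{0,+\infty}$ and a set $\msb$ with $\mu(\msb)=1$ such that on $\msb$ either $\Phi(x,y) = a(x) b(y)$ or $\Phi(x,y) = 0$. Because $\pi^\infty$ puts zero mass on the set where its own density w.r.t. $\mu$ vanishes, on a set of full $\pi^\infty$-measure we have $\rmd \pi^\infty / \rmd \mu = a \otimes b$.

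Finally, with $\KLLigne{\pi^\infty}{\mu} < +\infty$, correct marginals, product-form density w.r.t. $\mu$, and the closedness assumption $\rmL^1(\nu_0) \oplus \rmL^1(\nu_1) \subset \rmL^1(\pi^\infty)$, \Cref{lemma:schro_bridge_existence} applies and identifies $\pi^\infty$ with the Schr\"odinger bridge $\pi^\star$. The main obstacle I anticipate is the application of \Cref{prop:ruschendorf_prod}: one has to extract a subsequence along which $a_n(\cdot) b_n(\cdot)$ converges \emph{pointwise} $\mu$-a.s.\ (Scheff\'e gives only $L^1$ convergence), and one must verify that the limit object $\Phi$ is indeed defined on a set of full $\mu$-measure rather than just full $(\mu_0 \otimes \mu_1)$-measure --- this hinges on $h>0$ being a $\mu$-a.s.\ statement via the definition of $h$ as a Radon--Nikodym density. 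The rest of the argument is essentially bookkeeping around convergence of marginals and lower semicontinuity.
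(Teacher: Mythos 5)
Your proposal is correct and follows essentially the same route as the paper's proof: pass from total-variation convergence to (subsequential) $\mu$-a.e.\ convergence of the products $a_n b_n$, invoke \Cref{prop:ruschendorf_prod} to factor the limiting density, and conclude with \Cref{lemma:schro_bridge_existence}. You are in fact more careful than the paper, which leaves implicit the verification of the hypotheses of \Cref{lemma:schro_bridge_existence} (that $\pi^\infty_i = \nu_i$ and $\KLLigne{\pi^\infty}{\mu} < +\infty$) and the Scheff\'e-plus-extraction step you correctly flag as the crux of applying \Cref{prop:ruschendorf_prod}.
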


\begin{proof}
  Since $\lim_{n \to +\infty} \tvnormLigne{\pi^n - \pi^{\infty}} = 0$ by
  \Cref{prop:schro_quantitative_appendix} and $\KLLigne{\pi^\infty}{\mu} < +\infty$,
  there exist $\msa$ with $\mu(\msa) = 1$ and
  $\Phi: \ (\rset^d)^2 \to \coint{0,+\infty}$ such that, up to extraction, for any
  $x, y \in \msa$
  \begin{equation}
    \lim_{n \to +\infty} a_n(x)b_n(y) = \Phi(x,y)  ,
  \end{equation}
  and $(\rmd \pi^\infty / \rmd \mu) = \Phi$.  Using
  \Cref{prop:ruschendorf_prod}, there exist $a, b: \ \rset^d \to \coint{0,+\infty}$
  and $\msb$ with $\pi^\infty(\msb) = 1$ such that for any $x,y \in \msb$,
  $(\rmd \pi^\infty / \rmd \mu)(x,y) = a(x)b(y)$. We conclude upon using
  \Cref{lemma:schro_bridge_existence}.
\end{proof}

\section{Geometric convergence rates and convergence to ground-truth}
\label{prop:geom_gaussian:proof}

In this section, we derive geometric convergence rates in
\Cref{sec:geom-conv-rates} in a Gaussian setting. In particular, we provide an
explicit upper-bound on the convergence rate that depends only on the covariance
of the reference measure and the target. In \Cref{sec:convergence_ground}, we
show that DSB (with Brownian reference measure) converges towards the \schro
bridge in a Gaussian setting where the ground-truth is available. In
\Cref{sec:experiments} we show that our implementation actually recovers the
Schr\"{o}dinger bridge in this setting.

\subsection{Geometric convergence rates}
\label{sec:geom-conv-rates}

In the following proposition we show that we recover a geometric convergence
rate in a Gaussian setting and derive intuition from this case study. We set
$N=1$ and assume that for any $x_0, x_N \in \rset^d$ we have
\begin{equation}
  p(x_0, x_N) \propto \exp[-\norm{x_0}^2 +2\alpha \langle x_0, x_N \rangle - \norm{x_N}^2]  ,
\end{equation}
with $\alpha \in \coint{0,1}$. In this case assume that there exists $\beta> 0$ such that the target marginals are
given for any $x_0, x_N \in \rset^d$ by
\begin{equation}
  \pdata(x_0) \propto \exp[-\beta \norm{x_0}^2]  , \qquad \pprior(x_N) \propto \exp[-\beta \norm{x_N}^2]  .
\end{equation}
\begin{proposition}
  \label{prop:geom_gaussian}
  Let $\alpha \in \ooint{0,1}$ and $\beta > 0$. Then the \schro bridge
  $\pi^\star$ exists and there exists $C \geq 0$ (explicit in the proof) such
  that for any $n \in \nset$, $\KLLigne{\pi^\star}{\pi^n} \leq C \kappa^{2n}$,
  with $\kappa < 1$ given by $\kappa = \rho / (1 + \rho)$ and
  $\rho = 2 \alpha / \beta^2$.   In addition, $\pi^\star$ admits a density w.r.t. the Lebesgue measure denoted
  $p^\star$ and given for any $x, y \in \rset^d$ by
  \begin{equation}
    \textstyle{
      p^\star(x,y) = \exp[-\gamma^\star\normLigne{x}^2 + 2\alpha\langle x,y \rangle - \gamma^\star\normLigne{y}^2] / \int_{\rset^d} \exp[-\gamma^\star\normLigne{x}^2 + 2\alpha\langle x,y \rangle - \gamma^\star\normLigne{y}^2] \rmd x \rmd y  ,
      }
  \end{equation}
  with $\gamma^\star = (\beta^2/2)(1 + (1 + 4\alpha^2/\beta^2)^{1/2})$.
\end{proposition}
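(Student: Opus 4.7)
\emph{Proof plan.} The key observation is that the Gaussian structure is preserved by the IPF updates, so the problem reduces to a scalar recursion. First, I would show by induction that every iterate $\pi^n$ has a positive density of the form
\begin{equation}
\pi^n(x,y) \propto \exp[-a_n\|x\|^2 + 2\alpha \langle x,y \rangle - b_n \|y\|^2],
\end{equation}
with $a_n, b_n > 0$ satisfying $a_n b_n > \alpha^2$. Starting from $a_0 = b_0 = 1$, the updates are obtained by direct computation: marginalizing the displayed density over $x$ gives a (multivariate) Gaussian in $y$ with precision proportional to $b_n - \alpha^2/a_n$, so the IPF step matching the $y$-marginal to $\pprior$ leaves $a$ unchanged and sends $b \mapsto \beta + \alpha^2/a$. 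The dual step matching $\pdata$ leaves $b$ unchanged and sends $a \mapsto \beta + \alpha^2/b$. At each step only one coefficient changes and Gaussianity is preserved, so the induction goes through.

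Composing two consecutive IPF steps yields the scalar recursion $a_{n+1} = f(a_n)$ with $f(a) = \beta + \alpha^2 a/(\beta a + \alpha^2)$. The positive fixed points of $f$ satisfy $a^2 - \beta a - \alpha^2 = 0$, whose unique positive root is $\gamma^\star = (\beta/2)(1 + (1 + 4\alpha^2/\beta^2)^{1/2})$ (which matches the expression in the statement once the normalization of $\pdata$, $\pprior$ is read consistently). Checking that the symmetric joint density with $a = b = \gamma^\star$ indeed has marginals $\pdata$ and $\pprior$, and verifying the integrability condition \ref{assum:existence_spec}, simultaneously proves existence of $\pi^\star$ and gives its explicit Gaussian form.

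For geometric convergence, I would study $f$ globally rather than only linearizing. Using the fixed-point identity $(\gamma^\star)^2 = \beta\gamma^\star + \alpha^2$, a direct algebraic manipulation gives
\begin{equation}
f(a) - \gamma^\star = (a - \gamma^\star) \cdot \frac{\alpha^4}{(\gamma^\star)^2 (\beta a + \alpha^2)},
\end{equation}
which, combined with the fact that $a_n$ stays in a positive interval bounded away from $0$ and $\infty$ (from monotonicity and boundedness arguments along the IPF trajectory, noting $a_n \geq \beta$ for $n \geq 1$), yields $|a_n - \gamma^\star| \leq \kappa^{2n}|a_0 - \gamma^\star|$ for a constant $\kappa < 1$ expressible in terms of $\alpha$ and $\beta$; the analogous bound holds for $|b_n - \gamma^\star|$. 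Since the Kullback--Leibler divergence $\KLLigne{\pi^\star}{\pi^n}$ between two centered Gaussians is a smooth function of $(a_n, b_n)$ that vanishes to second order at $(\gamma^\star, \gamma^\star)$, a Taylor expansion together with a uniform Hessian bound on the compact region occupied by the iterates gives the desired $\KLLigne{\pi^\star}{\pi^n} \leq C\kappa^{2n}$.

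The main obstacle is matching the constant $\kappa$ to the specific form $\rho/(1+\rho)$ with $\rho = 2\alpha/\beta^2$ claimed in the statement: the asymptotic contraction rate $f'(\gamma^\star) = (\alpha/\gamma^\star)^4$ is expressed in terms of $\gamma^\star$, not directly of $\alpha,\beta$, so reaching the clean form $\kappa = \rho/(1+\rho)$ requires a careful algebraic upper bound on $\alpha^4/((\gamma^\star)^2(\beta a + \alpha^2))$ that is uniform in $a$ along the IPF trajectory. Once this explicit rate is secured, the Gaussian KL expansion provides $C$ in closed form in terms of $a_0, b_0, \gamma^\star$, completing the proof.
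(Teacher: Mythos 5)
Your proposal is correct and follows essentially the same route as the paper: reduce the IPF to a scalar recursion on the Gaussian precision parameters, identify $\gamma^\star$ as the positive root of a quadratic fixed-point equation, prove geometric contraction of that recursion, and transfer the contraction to a bound on $\KLLigne{\pi^\star}{\pi^n}$. The one place where you genuinely improve on the paper's argument is the contraction step: by composing two half-steps into $f(a)=\beta+\alpha^2 a/(\beta a+\alpha^2)$ and using $(\gamma^\star)^2=\beta\gamma^\star+\alpha^2$ you obtain the exact identity $f(a)-\gamma^\star=(a-\gamma^\star)\,\alpha^4/\bigl((\gamma^\star)^2(\beta a+\alpha^2)\bigr)$, which yields a global contraction factor directly, whereas the paper linearizes near the fixed point and pays for it with an $\vareps$-perturbation argument ($|\xi_{n+1}|\leq(\alpha/\tau)^2|\xi_n|/(1-\vareps)$ for $n$ large, then a choice of $\vareps$ to land on $\kappa=\rho/(1+\rho)$). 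The "obstacle" you flag about matching $\kappa=\rho/(1+\rho)$ is exactly the step the paper resolves by that choice of $\vareps$; with your exact identity it is if anything easier, since $\beta a+\alpha^2\geq\beta^2+\alpha^2$ along the trajectory gives a uniform bound without any limiting argument. Two small points to tidy up: (i) for existence/identification of $\pi^\star$ you should invoke the characterization that a coupling with the prescribed marginals whose density w.r.t.\ the reference factorizes as $a(x)b(y)$ is the \schro bridge (\Cref{lemma:schro_bridge_existence}), not merely check marginals and integrability; (ii) do not try too hard to reconcile your $\gamma^\star$ with the displayed one — the paper's own constants in this proposition ($\beta$ vs.\ $\beta^2$, the additive $-1$ coming from parametrizing potentials relative to $\mu_0\otimes\mu_1$ rather than Lebesgue) are internally inconsistent, and your normalization is the self-consistent one.
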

Remark that if $\beta^2 = 1 -\alpha^2$ then $\gamma^\star$ and $p^\star = p$, \ie \
the IPF leaves $\mu$ invariant.  Note that the performance of the IPF improves
if $\kappa$ is close to $0$, \ie \ if $\rho= 2\alpha / \beta^2$ is close to
$0$. This is the case if $\alpha \approx 0$ (the marginals are almost
independent) or if $\beta \approx +\infty$ (the target distribution is close to
$\updelta_0$), see \Cref{fig:kappa_cv}.  This behavior is in accordance with the limit case where the
marginals are independent or one of the target distribution is a Dirac mass in
which case the IPF converges in two iterations.

\begin{figure}[h]
  \centering
  \includegraphics[width=0.5\linewidth]{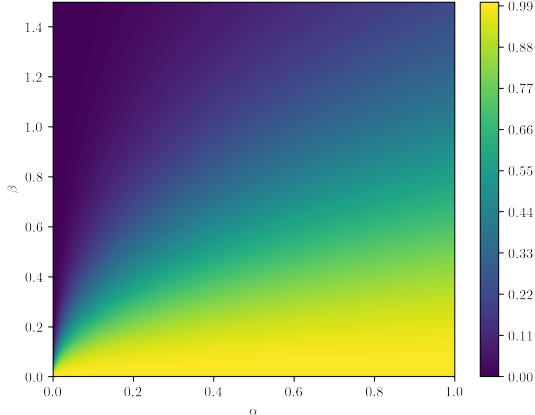}
\caption{Evolution of $\kappa^2$ depending on $\alpha$ and $\beta$.}
\label{fig:kappa_cv}
\end{figure}

Also, note that the convergence rate does not depend on the dimension but only
on the constants of the problem.  In what follows we first derive the IPF
sequence for this Gaussian problem and establish that $\alpha$ controls the
amount of information shared by the marginals. Then we prove
\Cref{prop:geom_gaussian}.  In the rest of this section, we let
$\mu \in \Pens_2$ with density $p$ w.r.t. the Lebesgue measure such that for any
$x_0, x_1 \in \rset^d$
\begin{equation}
    \textstyle{p(x_0, x_1) = \exp[-\norm{x_0}^2 +2\alpha \langle x_0, x_1 \rangle - \norm{x_1}^2] / \int_{\rset^d} \exp[-\norm{x_0}^2 +2\alpha \langle x_0, x_1 \rangle - \norm{x_1}^2] \rmd x_0 \rmd x_1  .}
  \end{equation}
We have that $\mu$ is the Gaussian distribution with zero mean and covariance matrix $\Sigma$ such that
\begin{equation}
  \textstyle{
  \Sigma = (2(1-\alpha^2))^{-1}
  \parenthese{
  \begin{matrix}
    \Id &\alpha \Id \\
    \alpha \Id &\Id
  \end{matrix}}  .
}
\end{equation}
We have that $\det(\Sigma) = 2^{2d}(1 - \alpha^2)^{-d}$ using Schur complement
\cite[Section 9.1.2]{petersen2008matrix}.  Hence we get that for any
$x_0, x_1 \in \rset^d$
\begin{equation}
  \textstyle{p(x_0, x_1) = \uppi^{-d} (1 - \alpha^2)^{d/2} \exp[-\norm{x_0}^2 +2\alpha \langle x_0, x_1 \rangle - \norm{x_1}^2]  . }
\end{equation}
In what follows, we denote $C = \uppi^d (1 - \alpha^2)^{-d/2}$.  Similarly, we
get that $\mu_0 = \mu_1$ and that they admit the density $p_0$ w.r.t. the
Lebesgue measure given for any $x \in \rset^d$ by
\begin{equation}
  p_0(x) = \uppi^{-d/2} (1 - \alpha^2)^{d/2} \exp[-\norm{x}^2(1 - \alpha^2)]  .
\end{equation}
In what follows, we denote $C_0 = \uppi^{d/2}(1 - \alpha^2)^{-d/2}$.  In this
case note that $\mu$ admits a density w.r.t. $\mu_0 \otimes \mu_1$ given for any
$x_0, x_1 \in \rset^d$ by
\begin{equation}
  h(x_0, x_1) = (\rmd \mu/ \rmd (\mu_0 \otimes \mu_1))(x_0,x_1) = (1-\alpha^2)^{-d/2} \exp[-\alpha^2\norm{x_0}^2 -2\alpha\langle x_0, x_1 \rangle -\alpha^2 \norm{x_1}^2]  . 
\end{equation}
Remark that 
$\pprior = \pdata =q$ with for any $x \in \rset^d$,
$q(x) = \uppi^{-d/2} \beta^{d/2} \exp[-\beta\norm{x}^2]$.
We have for any $x_1, x_0 \in \rset^d$
\begin{equation}
  p_{1|0}(x_1|x_0) = p(x_0, x_1) / p_0(x_0) = \uppi^{-d/2} (1 - \alpha^2)^{d/2} \exp[-\alpha^2\norm{x_0}^2 +2\alpha \langle x_0, x_1 \rangle - \norm{x_1}^2]  . 
\end{equation}
Hence, we have that \Cref{assum:existence_spec} holds and the IPF sequence is
well-defined and converges using \Cref{prop:convergence_ipf}.  In what follows
we start to show that $\alpha$ controls the amount of information shared by the
two marginals $\mu_0$ and $\mu_1$, \ie \ the mutual information. More precisely
we have the following result.

\begin{proposition}
  For any $\alpha \in \ooint{0,1}$ we have $\KLLigne{\mu}{\mu_0 \otimes \mu_1}= -(d/2)\log(1- \alpha^2)$.
\end{proposition}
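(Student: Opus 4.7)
The plan is to compute the Kullback--Leibler divergence directly from the explicit Gaussian densities already displayed just before the statement. Since both $\mu$ and $\mu_0 \otimes \mu_1$ admit closed-form densities w.r.t. Lebesgue, the log-density ratio is a simple quadratic form, and the expectation under $\mu$ reduces to computing second moments of a bivariate Gaussian whose covariance $\Sigma$ was already identified. No approximation argument or information-theoretic machinery is required.

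First, I would write for any $x_0, x_1 \in \rset^d$
\begin{equation}
\log \frac{p(x_0,x_1)}{p_0(x_0) p_1(x_1)} = -\frac{d}{2}\log(1-\alpha^2) - \alpha^2(\norm{x_0}^2 + \norm{x_1}^2) + 2 \alpha \langle x_0, x_1 \rangle,
\end{equation}
by subtracting the quadratic forms in the exponents and collecting the normalizing constants $\uppi^{-d}(1-\alpha^2)^{d/2}$ and $\uppi^{-d}(1-\alpha^2)^{d}$. Next, I would integrate this expression against $\mu$. Since $(X_0, X_1) \sim \mathcal{N}(0, \Sigma)$ with $\Sigma$ as given, reading off the blocks yields $\expeLigne{\norm{X_0}^2} = \expeLigne{\norm{X_1}^2} = d/(2(1-\alpha^2))$ and $\expeLigne{\langle X_0, X_1 \rangle} = \alpha d/(2(1-\alpha^2))$.

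Plugging these in, the quadratic contribution vanishes:
\begin{equation}
-\alpha^2 \cdot \frac{d}{1-\alpha^2} + 2\alpha \cdot \frac{\alpha d}{2(1-\alpha^2)} = 0,
\end{equation}
so only the constant term $-(d/2)\log(1-\alpha^2)$ survives, which gives the claimed identity. As a sanity check I would also note that this matches the well-known formula for the mutual information of a bivariate Gaussian with correlation coefficient $\alpha$ per coordinate, since the coordinates of $X_0$ and $X_1$ decouple.

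There is essentially no obstacle here: the main thing to be careful about is bookkeeping of normalizing constants (the $(1-\alpha^2)^{d/2}$ versus $(1-\alpha^2)^d$ factors) and confirming the covariance entries from $\Sigma$ so that the quadratic terms cancel exactly. The whole argument is a few lines of direct computation.
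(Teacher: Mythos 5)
Your proof is correct and follows essentially the same route as the paper: write the density ratio $\rmd\mu/\rmd(\mu_0\otimes\mu_1)$ explicitly, observe that the quadratic part $-\alpha^2\norm{x_0}^2+2\alpha\langle x_0,x_1\rangle-\alpha^2\norm{x_1}^2$ integrates to zero against $\mu$, and read off the constant $-(d/2)\log(1-\alpha^2)$. The only difference is that you make the vanishing of the quadratic term explicit via the blocks of $\Sigma$, whereas the paper simply asserts it; your version is slightly more detailed but identical in substance.
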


\begin{proof}
  For any $x, y \in \rset^d$ we have
  \begin{equation}
    (\rmd \mu / (\rmd \mu_0 \otimes \rmd \mu_1))(x,y) = \exp[-\alpha^2\normLigne{x}^2 + 2\alpha \langle x,y \rangle -\alpha^2\normLigne{y}^2](1-\alpha^2)^{-d/2}  .
  \end{equation}
We have that
\begin{equation}
  \textstyle{
\int_{\rset^d \times \rset^d} (-\alpha^2 \norm{x}^2   -\alpha^2 \norm{y}^2 + 2\alpha \langle x,y\rangle) \rmd \mu(x,y) = 0  .} 
\end{equation}
Hence, $\KLLigne{\mu}{\mu_0 \otimes \mu_1} = -(d/2)\log(1- \alpha^2)$, which concludes the proof.
\end{proof}

In what follows,
we denote by $(\pi^n)_{n \in \nset}$ the IPFP sequence, defined for any
$n \in \nset$ we have for any $x,y \in \rset^d$
\begin{equation}
  \textstyle{
    (\rmd \pi^{2n} / \rmd \mu)(x,y) = a_n(x) b_n(y) h(x,y)  , \qquad (\rmd \pi^{2n+1} / \rmd \mu)(x,y) = a_{n+1}(x) b_n(y) h(x,y)  ,
    }
\end{equation}
where for any $x, y \in \rset^d$
\begin{align} 
  &\textstyle{a_{n+1}(x) = (\rmd \nu_0 / \rmd \mu_0)(x) \parenthese{\int_{\rset^d} h(x,y) b_n(y) \rmd \mu_1(y)}^{-1}  , }\\
  &\textstyle{b_{n+1}(x) = (\rmd \nu_1 / \rmd \mu_1)(y) \parenthese{\int_{\rset^d} h(x,y) a_{n+1}(x) \rmd \mu_0(x)}^{-1} . }
\end{align}
We now turn to the proof of the \Cref{prop:geom_gaussian}.

\begin{proof}  Let $\alpha \in \ooint{0,1}$ and $\beta > 1$. We have for any $x, y \in \rset^d$
  \begin{equation}
    (\rmd \nu_0 / \rmd \mu_0)(x) = \exp[(1 -\beta^2 - \alpha^2) \normLigne{x}^2]/C_2  , \qquad (\rmd \nu_1 / \rmd \mu_1)(y) = \exp[(1 -\beta^2 - \alpha^2) \normLigne{y}^2]/C_2  ,
  \end{equation}
  with $C_2 = C_1 / C_0$ with
  $C_1 = \uppi^{d/2}\beta^{d/2}$.  For any
  $x \in \rset^d$ and $\gamma \geq 0$ we have
\begin{align}
  &\textstyle{(\rmd \nu_0 / \rmd \mu_0)(x) \parenthese{\int_{\rset^d} \exp[-\gamma \norm{y}^2] h(x,y) \rmd \mu_1(y)}^{-1}} \\
  & \qquad \textstyle{=  (C_0C_2)^{-1}C \exp[(1 - \beta^2 - \alpha^2)\norm{x}^2] \parenthese{\int_{\rset^d} \exp[-\gamma\norm{y}^2 - \norm{y - \alpha x}^2] \rmd y }}^{-1}\\
  & \qquad =  (C_0C_2)^{-1}C \exp[(1 - \beta^2 - \alpha^2)\norm{x}^2] \\
  & \qquad \qquad \textstyle{\times \parenthese{\int_{\rset^d} \exp[-(\gamma+1)\norm{y - \alpha /(\gamma +1)x}^2 -\alpha^2(1 - 1/(\gamma+1)) \norm{x}^2] \rmd y}^{-1} }\\
  & \qquad =  (C_0C_2)^{-1}C \exp[(1 - \beta^2 - \alpha^2+ \alpha^2\gamma/(\gamma+1))\norm{x}^2]\\
  & \qquad \qquad \textstyle{ \times \parenthese{\int_{\rset^d} \exp[-(\gamma+1)\norm{y - \alpha /(\gamma +1)x}^2] \rmd y}^{-1}} \\
  & \qquad = (C_0C_2\tilde{C}_{\gamma})^{-1} C \exp[(1 - \beta^2 - \alpha^2/(\gamma+1))\norm{x}^2]  ,
\end{align}
with $\tilde{C}_{\gamma} = \pi^{d/2} (1 + \gamma)^{-d/2}$.
Note that $a_0 = b_0 =1$. Let $n \in \nset$ and assume that for any
$y \in \rset^d$ $b_{n}(y) = \exp[-\gamma_{2n} \norm{y}^2] /C_{2n}$ with
$\gamma_{2n} \geq 0$ and $C_{2n} > 0$ then we have for any $x \in \rset^d$
\begin{equation}
  a_{n+1}(x) = (C_0 C_2 \tilde{C}_{\gamma_{2n}} )^{-1}C C_{2n} \exp[-(1-\beta^2 -\alpha^2/(\gamma_{2n}+1))\norm{x}^2] = \exp[-\gamma_{2n+1} \norm{x}^2] / C_{2n+1}  ,
\end{equation}
with
\begin{equation}
  \label{eq:recu_1}
  \gamma_{2n+1} = \beta^2 -1 + \alpha^2/(\gamma_{2n}+1)  , \qquad (C_0 C_2 \tilde{C}_{\gamma_{2n}}) / (CC_{2n}) = C_{2n+1}  . 
\end{equation}
Similarly, if we assume that for any
$x \in \rset^d$ $a_{n+1}(x) = \exp[-\gamma_{2n+1} \norm{x}^2] /C_{2n+1}$ with
$\gamma_{2n+1} \geq 0$ and $C_{2n+1} > 0$ then we have for any $y \in \rset^d$
\begin{align}
  b_{n+1}(y) &= (C_0 C_2 \tilde{C}_{\gamma_{2n+1}} )^{-1}(CC_{2n+1}) \exp[-(1-\beta^2 -\alpha^2/(\gamma_{2n+1}+1))\norm{y}^2] \\
  &= \exp[-\gamma_{2n+2} \norm{y}^2] / C_{2n+2}  ,
\end{align}
with
\begin{equation}
  \label{eq:recu_2}
  \gamma_{2n+2} = \beta^2 -1 +  \alpha^2/(\gamma_{2n+1}+1)  , \qquad (C_0 C_2 \tilde{C}_{\gamma_{2n+1}} ) / (CC_{2n+1}) = C_{2n+2}  . 
\end{equation}
Combining this result, \eqref{eq:recu_1} and using the recursion principle we
get that for any $n \in \nset$
\begin{equation}
  \label{eq:expression_potential}
  a_{n+1}(x) = \exp[-\gamma_{2n+1}\norm{x}^2] / C_{2n+1}  , \qquad b_{n+1}(y) = \exp[-\gamma_{2n+2}\norm{y}^2] / C_{2n+2}  .
\end{equation}
The recursion can be extended to $a_0$ and $b_0$ by setting
$\gamma_{-1} = \gamma_0 = 0$ and $C_{-1} = C_0 = 1$. Therefore, for any
$n \in \nset$ we have
\begin{equation}
  \label{eq:recu_gamma}
  \gamma_{n+1} = \beta^2 -1 +  \alpha^2/(\gamma_{n}+1)  .
\end{equation}
We now study the convergence of the sequence $(\gamma_n)_{n \in \nset}$. By
recursion, we have that for any $k, \ell \in \nset$, if
$\gamma_k \geq \gamma_{\ell}$ then for any $m \in \nset$ with $m$ even we have
$\gamma_{m+k} \geq \gamma_{m + \ell}$ and for any $m \in \nset$ with $m$ odd we
have $\gamma_{m+k} \leq \gamma_{m + \ell}$.     We have $\gamma_0 = 0$ and
\begin{equation}
  \label{eq:init}
      \gamma_1 = \beta^2 + \alpha^2 - 1  , \qquad \gamma_2 = \beta^2 - 1 + \alpha^2 / (\beta^2 + \alpha^2)  .
    \end{equation}
    We divide the rest of the proof into three parts.

  \begin{enumerate}[label=(\alph*), wide, labelwidth=!, labelindent=0pt]
  \item First assume that $\beta^2 > 1 - \alpha^2$.  Using \eqref{eq:init} we
    have that $\gamma_1 > \gamma_0$ and $\gamma_2 > \gamma_0$.  Therefore, we
    obtain that $(\gamma_{2n})_{n \in \nset}$ is non-decreasing, that
    $(\gamma_{2n+1})_{n \in \nset}$ is non-increasing and that for any
    $n \in \nset$, $0 \leq \gamma_{2n} \leq \gamma_{2n+1} \leq
    \gamma_1$. Therefore, $(\gamma_n)_{n \in \nset}$ converges and we denote
    $\gamma^\star$ its limit. We have
    $\gamma^\star = \beta^2 -1 + \alpha^2/(\gamma^\star +1)$. Hence,
    $\gamma^\star$ is a root of $X^2 + (2 - \beta^2)X + 1 -\alpha^2
    -\beta^2$. We get that $\gamma^\star = \gamma_0^\star$ or
    $\gamma^\star = \gamma_1^\star$ with
\begin{equation}
\gamma_0^\star = \beta^2/2 -1  - (1/2)\parentheseLigne{\beta^4 + 4\alpha^2}^{1/2}  , \qquad \gamma_1^\star = \beta^2/2 -1 + (1/2)\parentheseLigne{\beta^4 + 4\alpha^2}^{1/2}  , 
\end{equation}
$\gamma_0^\star, \gamma_1^\star$ are non-decreasing function of $\beta$.  We get
that for any $\beta \geq 0$ such that $\beta^2 \geq 1 - \alpha^2$,
$\gamma_0^\star \leq 0$. In addition, we have $\gamma_1^\star = 0$ for
$\beta^2 = 1 -\alpha^2$, hence for any $\beta \geq 0$ such that
$\beta^2 \geq 1- \alpha^2$, $\gamma_1^\star \geq 0$. Since $\gamma^\star \geq 0$
we have
\begin{equation}
  \label{eq:gammastar}
\gamma^\star = -1 + \beta^2/2 + (1/2)\parentheseLigne{\beta^4 + 4\alpha^2}^{1/2}   . 
\end{equation}
For any $n \in \nset$, denote $\xi_n = \gamma_n - \gamma^\star$ and
$\tau = \gamma^\star + 1$. Let $\vareps > 0$.  Since
$\lim_{n \to + \infty} \xi_n = 0$, there exists $n_0 \in \nset$ such that
$\abs{\xi_n}/\tau \leq \vareps$.  Using \eqref{eq:recu_gamma}, we obtain that
for any $n \in \nset$
\begin{align}
  \textstyle{\absLigne{\xi_{n+1}} = \alpha^2\absLigne{1/(\gamma_n+1) - \tau^{-1}} = (\alpha^2/\tau)\absLigne{1 - (\xi_n/\tau +1)^{-1}} \leq (\alpha/\tau)^2\absLigne{\xi_n}/(1 - \vareps)  . }
\end{align}
Hence, we get
that for any $\vareps \in \ooint{0,1}$, there exists $C_{\vareps} > 0$ such that
for any $n \in \nset$
\begin{equation}
  \label{eq:geom}
\abs{\xi_n} \leq C_{ \vareps} \kappa^n , \qquad \kappa = (\alpha/(\tau(1-\vareps)^{1/2}))^{2}  .
\end{equation}
Note that $\tau > \alpha$ using \eqref{eq:gammastar} and
$\kappa \in \ooint{0,1}$ if $\vareps < 1 - \alpha/ \tau$.

For any $n \in \nset$ and $x,y \in \rset^d$ we have
\begin{align}
  \Phi_n(x,y) &= a_{n+1}(x) b_{n+1}(y) = \exp[-\gamma_{2n+1} \norm{x}^2 - \gamma_{2n+2}\norm{y}^2] / (C_{2n+1} C_{2n+2}) \\
  &= \exp[-\gamma_{2n+1} \norm{x}^2 - \gamma_{2n+2}\norm{y}^2] / (\tilde{C} \tilde{C}_{\gamma_{2n+1}})  ,
\end{align}
with $\tilde{C} = C_0 C_2 / C$.
Therefore we obtain that for any $x, y \in \rset^d$,
$\Phi^\star(x,y) = \lim_{n \to +\infty} \Phi_n(x,y)$ exists and we have
\begin{equation}
  \Phi^\star(x,y) = \exp[-\gamma^\star \norm{x}^2 - \gamma^\star\norm{y}^2] / (\tilde{C} \tilde{C}_{\gamma^\star})  . 
\end{equation}
Using this result we get that for any $x,y \in \rset^d$
\begin{align}
  (\rmd \pi^{2n} / \rmd \pi^\star)(x,y) &= \exp[-\xi_{2n+1}\norm{x}^2 - \xi_{2n+2}\norm{y}^2] C_{\gamma^\star} / C_{\gamma_{2n+1}} \\
                                     &= \exp[-\xi_{2n+1}\norm{x}^2 - \xi_{2n+2}\norm{y}^2] \defEns{(1+\gamma_{2n+1})/(1 + \gamma^\star)}^{-d/2} \\
&= \exp[-\xi_{2n+1}\norm{x}^2 - \xi_{2n+2}\norm{y}^2] \defEns{1 + \xi_{2n+1}/(1 + \gamma^\star)}^{-d/2}  .
\end{align}
Therefore we have for any $x,y \in \rset^d$
\begin{align}
  \log\parenthese{(\rmd \pi^{2n} / \rmd \pi^\star)(x,y)} &\leq \abs{\xi_{2n+1}} \norm{x}^2 + \abs{\xi_{2n+2}} \norm{y}^2 + (d/2) \abs{\log\parenthese{1 + \xi_{2n+1}/(1 + \gamma^\star)}} \\
  &\leq \abs{\xi_{2n+1}} \norm{x}^2 + \abs{\xi_{2n+2}} \norm{y}^2 + (d/2)\abs{\xi_{2n+1}}  . 
\end{align}
Therefore we obtain that for any $n \in \nset$
\begin{equation}
  \KLLigne{\pi^{\star}}{\pi^n} \leq (d/2)(\beta^{-2} \abs{\xi_{2n+1}} + \beta^{-2} \abs{\xi_{2n+2}} + \abs{\xi_{2n+1}})  .
\end{equation}
A similar inequality holds for $\KLLigne{\pi^{\star}}{\pi^n}$. Therefore we get
that for any $\vareps \in \ooint{0, 1 - \alpha/\tau}$ there exists
$C_{\vareps} \geq 0$ such that for any $n \in \nset$ we have
\begin{equation}
  \KLLigne{\pi^\star}{\pi^n} \leq C_\vareps \kappa_{\vareps}^{2n}  ,
\end{equation}
with
\begin{align}
  \kappa_{\vareps} &= \alpha / (\tau(1-\vareps)^{1/2}) = (2\alpha) / ((\beta^2 + (\beta^4 + 4 \alpha^2)^{1/2})(1-\vareps)^{1/2}) \\
  &\leq \rho / ((1+(1 +\rho^2)^{1/2})(1- \vareps)^{1/2})  . 
\end{align}
Let $\vareps < 1 - (1 + \rho)/(1 + (1 +\rho^2)^{1/2})$. Then we get that
$\kappa_{\vareps} \leq \kappa$ which concludes the first part of the proof.
\item If $\beta^2 = 1 -\alpha^2$ then the IPF is stationary since
  the IPF leaves $\mu$ invariant.
\item Finally we assume that $\beta^2 < 1 - \alpha^2$. Using \eqref{eq:init} we
  have that $\gamma_1 < \gamma_0$ and $\gamma_2 < \gamma_0$ since
  $\beta^2 < 1 - \alpha^2$.  Therefore, we obtain that
  $(\gamma_{2n})_{n \in \nset}$ is non-increasing, that
  $(\gamma_{2n+1})_{n \in \nset}$ is non-decreasing and that for any
  $n \in \nset$, $0 \geq \gamma_{2n} \geq \gamma_{2n+1} \geq
  \gamma_1$. Therefore, $(\gamma_n)_{n \in \nset}$ converges and we denote
  $\gamma^\star$ its limit. We have
  $\gamma^\star = \beta^2 -1 + \alpha^2/(\gamma^\star +1)$. Hence,
  $\gamma^\star$ is a root of $X^2 + (2 - \beta^2)X + 1 -\alpha^2 -\beta^2$. We
  recall that the two roots of this polynomial are given by
\begin{equation}
\gamma_0^\star = \beta^2/2 -1  - (1/2)\parentheseLigne{\beta^4 + 4\alpha^2}^{1/2}  , \qquad \gamma_1^\star = \beta^2/2 -1 + (1/2)\parentheseLigne{\beta^4 + 4\alpha^2}^{1/2}  .
\end{equation}
We have
\begin{align}  
    \gamma_1 - \gamma_0^\star &= \textstyle{\beta^2 + \alpha^2 - 1 - \beta^2/2 + 1 -(1/2)(\beta^4 + 4\alpha^2)^{1/2} }\\
  &= (1/2)\parentheseLigne{\beta^2 + 2 \alpha^2 - (\beta^4 + 4\alpha^2)^{1/2}} \geq 0  .    
\end{align}
Since $\gamma_3 > \gamma_1$ we get that for any $n \in \nset$ with $n \geq 3$,
$\gamma_n \geq \gamma_3 > \gamma_0^\star$. Therefore
$\gamma^\star > \gamma_0^\star$ and then $\gamma^\star = \gamma_1^\star$. The
rest of the proof is similar to the case where $\beta^2 > 1 - \alpha^2$.

\end{enumerate}
\end{proof}

\subsection{Convergence to ground-truth}
\label{sec:convergence_ground}

In this section, we provide an analytic form for the \schro bridge in a Gaussian
context. Let $\nu_0$ be the $d$ dimensional Gaussian distribution with mean $-a$
(with $a \in \rset^d$) and covariance matrix $\Idbf \in \rset^{d \times d}$.
Similarly, let $\nu_1$ be the one-dimensional Gaussian distribution with mean
$a$ and covariance matrix $\Idbf$.  We consider the reference distribution
$\pi^0$ such that $\pi^0_0 = \nu_0$ and for any $x, y \in \rset^d$
\begin{equation}
  \textstyle{(\rmd \pi_{1|0}^0 / \rmd \Leb)(x,y) = (2 \uppi)^{-d/2} \exp[-\normLigne{x-y}^2/2]  , }
\end{equation} where $\lambda$ denotes the Lebesgue measure on $\rset$.
Note that $\pi_{1|0}^0$ can be obtained by running a $d$-dimensional Brownian motion up to time
$1$. We consider the following \schro bridge problem
\begin{equation}
  \label{eq:schro_gaussian}
  \pi^\star = \argmin \ensembleLigne{\KLLigne{\pi}{\pi^0}}{\pi \in \Pens(\rset^{2d}), \ \pi_0 = \nu_0  , \pi_1 = \nu_1}  . 
\end{equation}
Before giving the analytic solution of the SB problem we consider the following algebraic lemma.

\begin{lemma}
  \label{lemma:linear_algebra}
  Let $A \in \rset^{d \times d}$ and
  \begin{equation}
      M = \parenthese{\begin{matrix}
    \Idbf & A \\
    A^\top  & \Idbf
  \end{matrix}} , \qquad  M^S = \parenthese{\begin{matrix}
    \Idbf & (A+A^\top)/2 \\
    (A+A^\top)/2  & \Idbf
  \end{matrix}}
 ,
  \end{equation}
such that $M$ is symmetric and positive semi-definite. Then $\det(M) \leq \det(M^S)$.
\end{lemma}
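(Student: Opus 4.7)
The plan is to view $M^S$ as a symmetrized average of $M$ with a permuted copy of itself and then invoke concavity of $\log \det$ on the positive semi-definite cone (equivalently, Minkowski's determinant inequality).

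More precisely, introduce the companion block matrix
\begin{equation}
M' = \begin{pmatrix} \Idbf & A^\top \\ A & \Idbf \end{pmatrix}.
\end{equation}
Let $P = \begin{pmatrix} 0 & \Idbf \\ \Idbf & 0 \end{pmatrix}$ be the block swap permutation. A direct computation gives $M' = P^\top M P$, so $M'$ is symmetric, positive semi-definite (since $M$ is), and $\det(M') = \det(M)$. Observe that the arithmetic average satisfies
\begin{equation}
\tfrac{1}{2}(M + M') = \begin{pmatrix} \Idbf & (A + A^\top)/2 \\ (A + A^\top)/2 & \Idbf \end{pmatrix} = M^S.
\end{equation}

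The remaining step is to apply concavity of the log-determinant functional on the set of positive semi-definite matrices (a classical consequence of Minkowski's determinant inequality or, equivalently, of the concavity of $X \mapsto \log\det(X)$). This yields
\begin{equation}
\log \det (M^S) \;=\; \log \det\bigl(\tfrac{1}{2}(M + M')\bigr) \;\geq\; \tfrac{1}{2}\log \det(M) + \tfrac{1}{2}\log \det(M') \;=\; \log \det(M),
\end{equation}
where the last equality uses $\det(M) = \det(M')$. Exponentiating gives $\det(M) \leq \det(M^S)$.

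The only subtle point is that $\log\det$ is properly defined (finite) only on the strictly positive definite cone; if $\det(M) = 0$ the inequality is automatic, and otherwise $M$ is positive definite so both $M$ and $M'$ lie in the interior of the PSD cone, justifying the concavity step. No step here is hard — the main idea is simply to recognize the symmetrization as an average over the permutation orbit, which reduces the problem to a one-line application of a standard determinant inequality.
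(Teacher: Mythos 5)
Your proof is correct and follows essentially the same route as the paper's: both construct the swapped companion matrix, show it is positive semi-definite with the same determinant, and apply concavity of $\log\det$ to the average $\tfrac{1}{2}(M+M') = M^S$. Your verification via the conjugation $M' = P^\top M P$ is a slightly cleaner packaging of the paper's eigenvector-swapping argument, and your explicit handling of the degenerate case $\det(M)=0$ is a small but welcome addition.
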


\begin{proof}
  Let $M^{\mathrm{up}} = M$ and $M^{\mathrm{down}} = \parenthese{\begin{matrix}
    \Idbf & A^\top \\
    A  & \Idbf
  \end{matrix}}$. Since $M^{\mathrm{up}}$ is symmetric and real-valued, $M^{\mathrm{up}}$ is
diagonalizable. Let $x, y \in \rset^d$ and $\theta \geq 0$ such that
$M^{\mathrm{up}} X = \theta X$ with $X = (x,y)$. Let $Y = (y, x)$. We have
$M^{\mathrm{down}} Y = \theta Y$. Hence $M^{\mathrm{down}}$ is symmetric,
positive semi-definite and
$\det(M^{\mathrm{up}}) = \detLigne{M^{\mathrm{down}}}$. Hence using that
$M \mapsto \log(\detLigne{M})$ is concave on the space of symmetric positive
semi-definite matrices we get that
$\detLigne{M^{\mathrm{up}}}\leq \detLigne{(M^{\mathrm{up}}+
  M^{\mathrm{down}})/2} = \detLigne{M^S}$, which concludes the proof.
\end{proof}

\begin{proposition}
  The solution to \eqref{eq:schro_gaussian} exists and $\pi^\star$ is a Gaussian
  distribution with mean $m \in \rset^{2d}$ and covariance matrix
  $\Sigma \in \rset^{2d \times 2d}$ where
  \begin{equation}
  \textstyle{
    m = (-a, a)  , \qquad   \Sigma = \parenthese{\begin{matrix}
    \Idbf & \beta \Idbf \\
    \beta \Idbf  & \Idbf
  \end{matrix}}  ,}
\end{equation}
where $\beta = (-1 + \sqrt{5}) / 2$ and $\Idbf$ is the $d$-dimensional identity matrix.
\end{proposition}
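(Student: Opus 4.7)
The plan is to pass to the static formulation of the Schr\"{o}dinger bridge and exploit the Gaussian structure to reduce to an explicit finite-dimensional convex problem. By the static reformulation in \Cref{sec:schro-bridge-problem}, $\pi^\star$ coincides (modulo the disintegration against $\pi^0_{|0,1}$, which is Gaussian here) with the minimizer of $\KL{\pi}{p_{0,1}}$ over couplings $\pi \in \Pens_2$ with $\pi_0 = \nu_0$, $\pi_1 = \nu_1$, where $p_{0,1}$ is the reference joint. Since $X_1 = X_0 + B_1$ with $X_0 \sim \nu_0$ and $B_1 \sim \mathcal{N}(0,\Idbf)$, the distribution $p_{0,1}$ is Gaussian with mean $(-a,-a)$ and covariance
\[
\Sigma_0 = \begin{pmatrix} \Idbf & \Idbf \\ \Idbf & 2\Idbf \end{pmatrix},
\qquad
\Sigma_0^{-1} = \begin{pmatrix} 2\Idbf & -\Idbf \\ -\Idbf & \Idbf \end{pmatrix}.
\]

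The next step is to argue it suffices to minimize over the Gaussian family. One route is to invoke the exponential-product characterization of the Schr\"{o}dinger bridge (cf.\ \Cref{prop:well_def}): any minimizer satisfies $(\rmd \pi^\star / \rmd p_{0,1})(x,y) = a(x)b(y)$, and the marginal constraints together with the Gaussianity of $\nu_0, \nu_1$ and $p_{0,1}$ force $\log a$ and $\log b$ to be quadratic, hence $\pi^\star$ is Gaussian. Alternatively, since \Cref{assum:existence_spec} clearly holds here, \Cref{prop:convergence_ipf} gives uniqueness, and the Gaussian invariance of the IPF iterates in this setting (completely analogous to the computation carried out in \Cref{prop:geom_gaussian:proof}) yields a Gaussian limit, which must coincide with $\pi^\star$.

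Restricting to Gaussian couplings $\pi = \mathcal{N}(m,\Sigma)$, the marginal constraints fix $m = (-a,a)$ and force the diagonal blocks of $\Sigma$ to equal $\Idbf$, leaving
\[
\Sigma = \begin{pmatrix} \Idbf & C \\ C^\top & \Idbf \end{pmatrix}, \qquad C \in \rset^{d\times d},\quad \Idbf - C^\top C \succeq 0.
\]
Computing the Gaussian KL and using $(m-m_0)^\top \Sigma_0^{-1}(m-m_0) = 4\|a\|^2$ (a constant), the problem reduces, up to additive constants, to
\[
\min_C\; \bigl\{\, -\log\det(\Idbf - C^\top C) - 2\,\trace(C) \,\bigr\}.
\]

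Finally, since $\trace(C)=\trace((C+C^\top)/2)$ is unchanged by symmetrization, \Cref{lemma:linear_algebra} applied to $\Sigma$ shows that $-\log\det(\Idbf - C^\top C)$ can only decrease when $C$ is replaced by its symmetric part; hence we may assume $C=C^\top$. Diagonalizing $C = U\Lambda U^\top$ with $\Lambda = \mathrm{diag}(\lambda_1,\dots,\lambda_d)$ decouples the problem into $d$ identical one-dimensional minimizations of $f(\lambda) = -\log(1-\lambda^2) - 2\lambda$ over $(-1,1)$. The stationarity condition $f'(\lambda) = 2\lambda/(1-\lambda^2) - 2 = 0$ yields $\lambda^2 + \lambda - 1 = 0$, whose unique root in $(-1,1)$ is $\beta = (-1+\sqrt{5})/2$; convexity of $f$ on $(-1,1)$ (direct second-derivative check) confirms this is the minimizer. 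Hence $C = \beta \Idbf$, giving the claimed form of $\Sigma$. The main obstacle is the reduction to the Gaussian family, which I would handle via the exponential-product characterization of the bridge combined with the fact that quadratic-exponential potentials are the only ones compatible with Gaussian marginals.
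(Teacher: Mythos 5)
Your proof is correct and follows essentially the same route as the paper's: fix the mean and diagonal blocks via the marginal constraints, symmetrize the off-diagonal block using \Cref{lemma:linear_algebra}, diagonalize, and minimize $f(\lambda)=-\log(1-\lambda^2)-2\lambda$ to get $\beta=(-1+\sqrt{5})/2$. Your treatment of the reduction to the Gaussian family (via the product-form characterization or the Gaussian IPF iterates) and your convexity check of $f$ are in fact slightly more explicit than the paper's, which simply defers to the argument of \Cref{prop:geom_gaussian}.
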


\begin{proof}
  The fact that $\pi^\star$ exists and is Gaussian is similar to
  \Cref{prop:geom_gaussian}. $\pi^\star$ has mean $m$ since
  $\pi^\star_i = \nu_i$ for $i \in \{0,1\}$. Similarly, we have that
  $\Sigma_{00} = \Sigma_{11} =\Idbf$ since $\pi^\star_i = \nu_i$ for
  $i \in \{0,1\}$.   We have that $\pi^0$ admits a density $p^0$ with
  respect to the Lebesgue measure such that for any $x,y \in \rset$ we have
  \begin{equation}
    p^0(x,y) \propto \exp[-(1/2)\{2\norm{x}^2 + \norm{y}^2 + 2\langle a, x \rangle - 2 \langle x, y \rangle +\norm{a}^2\}]  . 
  \end{equation}
  Hence $\pi^0$ is a Gaussian distribution with mean $m^0$ and covariance matrix
  $\Sigma^0$ where
  \begin{equation}
  \textstyle{
    m^0 = (-a, -a)  , \qquad \Sigma^0  = \parenthese{\begin{matrix}
    \Idbf & \Idbf \\
    \Idbf  & 2\Idbf
  \end{matrix}}  .}
\end{equation}
The Kullback--Leibler divergence between a Gaussian distribution $\pi$, with mean
$\tilde{m}$ and covariance matrix $\tilde{\Sigma}$, and $\pi^0$, with mean $m^0$ and covariance $\Sigma^0$ is given by
\begin{equation}
  \textstyle{ \KLLigne{\pi}{\pi^0} = (1/2)\{ \log(\detLigne{\Sigma^0}/\detLigne{\tilde{\Sigma}}) - d + \trace(\left(\Sigma^0\right)^{-1} \tilde{\Sigma}) + (\tilde{m} - m^0)^\top \left(\Sigma^0\right)^{-1}(\tilde{m}-m^0)\}  .}
\end{equation}
Assume that $\tilde{m} = (-a, a)$ and
$\tilde{\Sigma} = \parenthese{\begin{matrix}
    \Idbf & S \\
    S^\top & \Idbf
  \end{matrix}}$ with $S \in \rset^{d \times d}$ such that $\tilde{\Sigma}$ is
positive semi-definite . Then we have
\begin{equation}
  \KLLigne{\pi}{\pi^0} = (1/2)\{-\log(\detLigne{\tilde{\Sigma}}) -2 \trace(S) + C\}   ,
\end{equation}
where $C \geq 0$ is a constant which does not depend on $\Sigma$.  In what follows, let $\tilde{\Sigma'} = \parenthese{\begin{matrix}
    \Idbf & (S + S^\top)/2 \\
    (S + S^\top)/2 & \Idbf
  \end{matrix}}$ and denote $\pi$ the distribution with mean
$\tilde{m}$ and covariance matrix $\tilde{\Sigma}'$. Using \Cref{lemma:linear_algebra} we have 
\begin{align}
  \KLLigne{\pi'}{\pi^0} &= (1/2)\{-\log(\detLigne{\tilde{\Sigma}'}) -2 \trace(S) + C\} \\
  &\leq (1/2)\{-\log(\detLigne{\tilde{\Sigma}}) -2 \trace(S) + C\} = \KLLigne{\pi}{\pi^0}   .
\end{align}
Hence, we can assume that $S = S^\top$ and therefore (since $S$ is real-valued),
$S$ is diagonalizable. Let $\{\lambda_i\}_{i=1}^d$ the eigenvalues of $S$.
Using Schur complements \cite[Section 9.1.2]{petersen2008matrix} we have 
\begin{equation}
\textstyle{\detLigne{\tilde{\Sigma}} = \detLigne{\Idbf - S^2} = \detLigne{\Idbf - S}\detLigne{\Idbf + S} =  \prod_{i=1}^d(1 - \lambda_i^2)  .  }
\end{equation}
Therefore we have that for any $\lambda \in \ooint{0,1}$
\begin{equation}
  \textstyle{
    \KLLigne{\pi}{\pi^0} = (1/2) \sum_{i=1}^d f(\beta_i) + C  , \qquad f(\lambda) = -\log(1-\lambda^2) - 2\lambda  . 
  }
\end{equation}
Hence we get that $\Sigma_{0,1} = \beta \Idbf$ with $\beta = \argmin_I f$, where
$I = \ooint{-1,0} \cup \ooint{0,1}$.  We have that $f'(\beta) =0$ if and only if
$\beta = (-1 + \sqrt{5})/2$ or $\beta = -(1 + \sqrt{5})/2$. We conclude the
proof using that $\beta \in I$.
\end{proof}

\section{Continuous-time \schro bridges}
\label{prop:continuous_schro:proof}

In this section, we prove \Cref{prop:continuous_schro} in
\Cref{sec:proof-crefpr_continuous} and draw a link between the potential
approach to \schro bridges and DSB in continuous time in
\Cref{sec:dsb-continuous-time}.

\subsection{Proof of \Cref{prop:continuous_schro}}
\label{sec:proof-crefpr_continuous}

We recall the continuous \schro problem is given by
\begin{equation}
  \label{eq:dynamic_schro}
  \textstyle{
    \Pi^\star = \argmin \ensemble{\KLLigne{\Pi}{\Pbb}}{\Pi \in \Pens(\mathcal{C}), \ \Pi_0 = \pdata, \ \Pi_T = \pprior}, \quad T = \sum_{k=0}^{N-1}\gamma_{k+1}.
    }
\end{equation}

In this section, we prove \Cref{prop:continuous_schro}.
We start with the following property which 
can be found in \cite[Proposition 2.3, Proposition 2.10]{leonard2014survey} and
establishes basic properties of dynamic continuous \schro bridges.

\begin{proposition}
  The solution to \eqref{eq:dynamic_schro} exists if and only if the solution to
  the static \schro bridge
  exists.  In addition, if the solution exists
  and $\Pbb$ is Markov then the \schro bridge is Markov.
\end{proposition}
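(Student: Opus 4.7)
The plan is to use the additive decomposition of the Kullback--Leibler divergence along the endpoint map $\omega \mapsto (\omega(0), \omega(T))$, which reduces the dynamic problem to the static one. Concretely, applying \Cref{prop:additive} with the map $\varphi: \contspace \to (\rset^d)^2$, $\varphi(\omega) = (\omega(0), \omega(T))$, for any $\Pi \in \Pens(\contspace)$ with $\Pi \ll \Pbb$ I obtain the identity
\begin{equation}
\KLLigne{\Pi}{\Pbb} = \KLLigne{\Pi_{0,T}}{\Pbb_{0,T}} + \int_{(\rset^d)^2} \KLLigne{\Pi_{|0,T}(x_0, x_T, \cdot)}{\Pbb_{|0,T}(x_0, x_T, \cdot)} \rmd \Pi_{0,T}(x_0, x_T) .
\end{equation}

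The equivalence of existence will then follow from the fact that the second term is nonnegative and vanishes if and only if $\Pi_{|0,T} = \Pbb_{|0,T}$, $\Pi_{0,T}$-almost surely. Since the constraints $\Pi_0 = \pdata$ and $\Pi_T = \pprior$ only affect the endpoint marginal $\Pi_{0,T}$, minimizing $\KLLigne{\Pi}{\Pbb}$ over admissible path measures reduces to minimizing $\KLLigne{\pi^\rms}{\Pbb_{0,T}}$ over $\pi^\rms \in \Pens_2$ with $\pi^\rms_0 = \pdata$ and $\pi^\rms_T = \pprior$, with the minimum attained by the ``glued'' measure $\Pi^\star = \pi^{\rms, \star} \Pbb_{|0,T}$. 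This readily gives both directions of the equivalence: a static minimizer lifts to a dynamic one via gluing, and a dynamic minimizer projects to a static one via its endpoint marginal.

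For the Markov property, assuming $\Pbb$ is Markov and that the bridge exists, I would invoke the classical factorization result \cite[Theorem 3.1]{csiszar1975divergence} (already used in the proof of \Cref{prop:well_def}) to obtain measurable $\varphi, \psi: \rset^d \to \coint{0,+\infty}$ such that $(\rmd \Pi^\star / \rmd \Pbb)(\omega) = \varphi(\omega(0)) \psi(\omega(T))$, $\Pbb$-almost surely. For $0 \leq s \leq t \leq T$ and a bounded measurable $f$, I would compute the conditional expectation of $f(\omega(t))$ under $\Pi^\star$ given $\mathcal{F}_s$ via the change-of-measure formula: since $\varphi(\omega(0))$ is $\mathcal{F}_s$-measurable, it cancels, leaving a ratio of $\Pbb$-conditional expectations whose integrands depend only on $(\omega(t), \omega(T))$. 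The Markov property of $\Pbb$ then collapses both into functions of $\omega(s)$ alone, yielding the Markov property of $\Pi^\star$.

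The main obstacle I anticipate is rigorously justifying the $h$-transform factorization $(\rmd \Pi^\star / \rmd \Pbb) = \varphi(\omega(0)) \psi(\omega(T))$ in appropriate generality: applying Csisz\'ar's factorization theorem requires checking, for instance, that $\KLLigne{\pdata \otimes \pprior}{\Pbb_{0,T}} < +\infty$ together with equivalence of the relevant endpoint marginals, and handling null sets on the path space $\contspace$ carefully. The reduction from the dynamic to the static problem via the disintegration formula is the clean step; the delicate part is extracting the product form of the density and then carrying the Markov collapse through the conditional expectations on $\contspace$.
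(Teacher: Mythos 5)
Your argument is correct in outline and is essentially the proof of the result that the paper merely cites (\cite[Propositions 2.3 and 2.10]{leonard2014survey}); since the paper gives no proof of its own, you are reconstructing the standard argument rather than diverging from it. The existence equivalence via \Cref{prop:additive} applied to the endpoint map $\omega \mapsto (\omega(0), \omega(T))$ is complete as stated: the bridge term is nonnegative and vanishes exactly when $\Pi_{|0,T} = \Pbb_{|0,T}$ holds $\Pi_{0,T}$-a.s., so dynamic minimizers are precisely static minimizers glued to $\Pbb_{|0,T}$, and the two problems have the same optimal value.

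The one step that needs more than you give it is the product factorization $(\rmd \Pi^\star/\rmd \Pbb)(\omega) = \varphi(\omega(0))\,\psi(\omega(T))$. Csisz\'ar's Theorem 3.1, as invoked in \Cref{prop:well_def}, applies to the I-projection onto a \emph{single} marginal constraint and yields a density depending on one endpoint only; for the intersection of the two constraints the product form of the optimal density is not an immediate corollary and can fail without integrability or domination hypotheses. This is exactly what the R\"uschendorf--Beurling machinery in \Cref{prop:convergence_ipf:proof} (the condition \eqref{eq:growth}, \Cref{prop:ruschendorf_prod}) is there to supply, by realizing the static minimizer as a limit of IPF iterates whose densities are products of potentials. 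You correctly identify this as the obstacle, so the gap is one of attribution rather than of structure: once the factorization is granted, your $h$-transform computation (cancel $\varphi(\omega(0))$ as $\mcf_s$-measurable, tower the $\Pbb$-conditional expectation through $\mcf_t$, and apply the Markov property of $\Pbb$ twice) does yield the Markov property of $\Pi^\star$, and this is the same mechanism used in the cited reference.
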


We now turn to the proof of \Cref{prop:continuous_schro}. 
First we highlight that $(\Pi^n)_{n \in \nset}$ is well-defined since its static
counterpart $(\pi_n)_{n \in \nset}$ is well-defined using \Cref{prop:well_def}.
We only prove that for any $n \in \nset$, $(\Pi^{2n+1})^R$ is the path measure
associated with the process $(\bfY_t^{2n+1})_{t \in \ccint{0,T}}$ such that
$\bfY_0^{2n+1}$ has distribution $\pprior$ and satisfies
\begin{equation}
  \textstyle{
    \rmd \bfY_t^{2n+1} = b^{n}_{T-t}(\bfX_t^{2n+1}) \rmd t  + \sqrt{2} \rmd \bfB_t  .
    }
    \end{equation}
    The proof for $\Pi^{2n+2}$ is similar. Let $n \in \nset$ and assume that
    $\Pi^{2n}$ is the path measure associated with the process
    $(\bfX_t^{2n})_{t \in \ccint{0,T}}$ such that $\bfX_0^{2n}$ has
    distribution $\pdata$ and satisfies
    \begin{equation}
      \label{eq:weak_2n}
      \textstyle{
        \rmd \bfX_t^{2n} = f^{n}_t (\bfX_t^{2n}) \rmd t  + \sqrt{2} \rmd \bfB_t  .
        }
    \end{equation}
    We have that
    \begin{equation}
      \textstyle{
        \Pi^{2n+1} = \argmin \ensemble{\KLLigne{\Pi}{\Pi^{2n}}}{\Pi \in \Pens(\mathcal{C}), \ \Pi_T = \pprior}  .
        }
    \end{equation}    
    Let $\phi = \mathrm{proj}_T$ such that for any $\omega \in \mathcal{C}$,
    $\mathrm{proj}_T(\omega) = \omega_T$. Using \Cref{prop:additive} we
    get that for any $\Pi \in \Pens(\mathcal{C})$ we have
    \begin{equation}
      \textstyle{
      \KLLigne{\Pi}{\Pi^{2n}} = \KLLigne{\Pi_T}{\Pi_T^{2n}} + \int_{\rset^d} \KLLigne{\Kker(x, \cdot)}{\Kker^{2n}(x, \cdot)} \rmd \Pi_T(x)  , }
    \end{equation}
    where $\Kker$ and $\Kker^{2n}$ are the disintegrations of $\Pi$ and
    $\Pi^{2n}$ with respect to $\phi$. Therefore, we get that
    $\Pi^{2n+1} = \pprior \Kker^{2n}$. 
    Since $\KLLigne{\Pi^{2n}}{\Qbb}< +\infty$ and $\Pi^{2n}$ is Markov, Using
    \cite[Theorem 4.9]{cattiaux2021time} we get that $(\Pi^{2n})^R = \Pi_T \Kker^{2n}$ satisfies the
    martingale problem associated with the diffusion
    \begin{equation}
      \label{eq:reverse_diff}
      \textstyle{
        \rmd \bfY^{2n}_t = \defEns{-f^{n}_{T-t}(\bfY^{2n}_t) + 2 \nabla \log p_{T-t}^n(\bfY^{2n}_t)} \rmd t  + \sqrt{2} \rmd \bfB_t  .
        }
    \end{equation}
    Since $\Pi^{2n+1} = \pprior \Kker^{2n}$ we get that $\Pi^{2n+1}$ also
    satisfies the martingale problem associated with \eqref{eq:reverse_diff} and
    is Markov which concludes the proof by recursion.

\subsection{IPF in continuous time and potentials}
\label{sec:dsb-continuous-time}

First, we recall that the  IPF $(\Pi^n)_{n \in \nset}$
with $\Pi^0 = \Pbb$ associated with \eqref{eq:forward} and for any $n \in \nset$
\begin{align}
  \textstyle{\Pi^{2n+1}} &= \textstyle{\argmin \ensemble{\KLLigne{\Pi}{\Pi^{2n}}}{\Pi \in \Pens(\mathcal{C}), \ \Pi_T = \pprior}, } \\
  \textstyle{\Pi^{2n+2}} &= \textstyle{\argmin \ensemble{\KLLigne{\Pi}{\Pi^{2n+1}}}{\Pi \in \Pens(\mathcal{C}), \ \Pi_0 = \pdata}.}
\end{align}

In this section, we draw a link between our time-reversal approach and the
potential approach in continuous time. More precisely, we explicit an identity
between the two in \Cref{prop:continuous_schro_potential}.

\begin{proposition}
  \label{prop:continuous_schro_potential}
  Assume \rref{assum:existence_spec} and that there exist
  $\Mbb \in \Pens(\contspace)$, $U \in \rmc^1(\rset^d, \rset)$, $C \geq 0$
  such that for any $n \in \nset$, $x \in \rset^d$,
  $\KLLigne{\Pi^n}{\Mbb} < +\infty$,
  $\langle x, \nabla U(x) \rangle \geq - C(1+\normLigne{x}^2)$ and $\Mbb$ is
  associated with
  \begin{equation}
    \label{eq:diff_q_app}
    \textstyle{
      \rmd \bfX_t = -\nabla U(\bfX_t) \rmd t + \sqrt{2} \rmd \bfB_t, 
      }
    \end{equation}
    with $\bfX_0$ distributed according to the invariant distribution of
    \eqref{eq:diff_q}.  For any $n \in \nset$, let 
    $\{\varphi_t^{n,\star}, \varphi_t^{n,\circ}\}_{t=0}^T$ such that for any
    $t \in \ccint{0,T}$, $\varphi_T^{n,\star}: \ \rset^d \to \rset$, 
    $\varphi_0^{n,\circ}: \ \rset^d \to \rset$, for any $x_0, x_T \in \rset^d$
    \begin{equation}
      \varphi_T^{\star, n}(x_T) = \pprior(x_T) / p_T^n(x_T)  , \qquad  \varphi_0^{\circ, n}(x_0) = \pdata(x_0) / p_0^{n+1}(x_0)  ,
    \end{equation}
    and for any $t \in \ooint{0,T}$ and $x_t \in \rset^d$
    \begin{equation}
      \textstyle{\varphi_t^{\star, n}(x_t) = \int \varphi_T^{\star, n}(x_T) p_{T|t}^n(x_T|x_t) \rmd x_T  , \qquad \varphi_t^{\circ, n+1}(x) = \int \varphi_0^{\circ, n+1}(x_0)  q^{n}_{0|t}(x_0|x_t) \rmd x_0 . }
    \end{equation}
    We have for any $n \in \nset$, $t \in \ccint{0,T}$ and $x_t \in \rset^d$
    \begin{equation}
      \label{eq:potential_backward}
      q_t^{n}(x_t) = p_t^{n}(x_t) \varphi_t^{\star, n}(x_t)  , \qquad p_t^{n+1}(x_t) = q_t^{n}(x_t) \varphi_t^{\circ, n}(x_t)  .
    \end{equation}
    In particular, for any $n \in \nset$ we have
    \begin{enumerate}[wide, labelwidth=!, labelindent=0pt, label=(\alph*)]
      \item $(\Pi^{2n+1})^R$ is associated with $\rmd \bfY_t^{2n+1} = b^{n}_{T-t}(\bfY_t^{2n+1}) \rmd t  + \sqrt{2} \rmd \bfB_t$ with $\bfY_0^{2n+1} \sim \pprior$;
  \item $\Pi^{2n+2}$ is associated with
    $\rmd \bfX_t^{2n+2} = f^{n+1}_t( \bfX_t^{2n+2}) \rmd t + \sqrt{2} \rmd
    \bfB_t$ with $\bfX_0^{2n+2} \sim \pdata$;
      \end{enumerate}
      with for any $x \in \rset^d$ and $t \in \ooint{0,T}$
      \begin{align}
        \label{eq:dynamic_link}
        \textstyle{
        f_t^n(x) = f(x) + 2 \sum_{k=1}^n \nabla \log \varphi_t^{\star,n}(x)  , \ b_t^n(x) = -f(x) + \nabla \log p_t^0(x) + 2 \sum_{k=1}^n \nabla \log \varphi_t^{\circ,n}(x)  .}
      \end{align}
\end{proposition}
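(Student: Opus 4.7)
The plan is to argue by induction on $n$, combining the SDE representation of the IPF iterates already obtained in \Cref{prop:continuous_schro} with a Bayes-style derivation of the potential identities \eqref{eq:potential_backward}, and then telescoping to recover the explicit form of the drifts.

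\textbf{Step 1 (Potential identities).} I will first establish \eqref{eq:potential_backward}. From the proof of \Cref{prop:continuous_schro}, we have $\Pi^{2n+1} = \pprior \otimes \Pi^{2n}_{|T}$, where $\Pi^{2n}_{|T}$ is the disintegration of $\Pi^{2n}$ with respect to the terminal projection. Marginalizing at time $t$ gives, for any $x_t \in \rset^d$,
\begin{equation*}
q_t^n(x_t) = \int \pprior(x_T)\, \Pi^{2n}_{t|T}(x_t|x_T)\, \rmd x_T.
\end{equation*}
Since $\Pi^{2n}$ is Markov with marginals $p^n_t$, Bayes' rule yields $\Pi^{2n}_{t|T}(x_t|x_T) = p^n_t(x_t)\, p^n_{T|t}(x_T|x_t)/p^n_T(x_T)$, and using $\pprior(x_T)/p^n_T(x_T) = \varphi_T^{\star,n}(x_T)$ together with the definition of $\varphi_t^{\star,n}$ as a conditional expectation gives $q_t^n(x_t) = p_t^n(x_t)\varphi_t^{\star,n}(x_t)$. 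The second identity in \eqref{eq:potential_backward} is obtained by the same argument applied to the decomposition $\Pi^{2n+2} = \pdata \otimes \Pi^{2n+1}_{|0}$, using that $\Pi^{2n+1}$ is Markov (consequence of \Cref{prop:continuous_schro}) with marginals $q^n_t$.

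\textbf{Step 2 (Telescoping of drifts).} From \Cref{prop:continuous_schro} the drifts satisfy the two relations
\begin{equation*}
b^n_t(x) = -f^n_t(x) + 2\nabla \log p^n_t(x), \qquad f^{n+1}_t(x) = -b^n_t(x) + 2\nabla \log q^n_t(x).
\end{equation*}
Substituting the first into the second and invoking the identity $\log q^n_t - \log p^n_t = \log \varphi_t^{\star,n}$ from Step 1 gives
\begin{equation*}
f^{n+1}_t(x) - f^n_t(x) = 2\nabla \log q^n_t(x) - 2\nabla \log p^n_t(x) = 2\nabla \log \varphi_t^{\star,n}(x).
\end{equation*}
Since $f^0_t = f$, a straightforward induction on $n$ yields $f^n_t(x) = f(x) + 2\sum_{k=0}^{n-1} \nabla \log \varphi_t^{\star,k}(x)$. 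The analogous computation for $b^n_t$ uses the second identity in \eqref{eq:potential_backward}, namely $\log p^{n+1}_t - \log q^n_t = \log \varphi_t^{\circ,n}$. Combined with the relations above it gives $b^{n+1}_t(x) - b^n_t(x) = 2\nabla \log \varphi_t^{\circ,n}(x)$, and initializing at $b^0_t(x) = -f(x) + 2\nabla \log p^0_t(x)$ yields the claimed formula for $b^n_t$ by induction.

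\textbf{Step 3 (SDE characterization).} The SDE representations in items (a) and (b) are immediate consequences of \Cref{prop:continuous_schro} once the drifts $f^n_t,b^n_t$ have been identified in Step 2, since the path measures $\Pi^{2n+1}$ and $\Pi^{2n+2}$ are uniquely characterized by their initial marginal and the associated martingale problem.

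The main technical obstacle is the rigorous application of Bayes' rule in Step 1: this requires that the transition densities $p^n_{T|t}$ exist and are strictly positive, and that the marginal $p^n_t$ is strictly positive as well. Under \Cref{assum:existence_spec} and the absolute continuity assumption $\KLLigne{\Pi^n}{\Mbb}<+\infty$, $\Pi^n$ inherits equivalence to the strictly positive Markov transition of $\Mbb$, so these densities are well-defined and positive, and the manipulations are justified. The rest of the argument is essentially bookkeeping of the telescoping identities.
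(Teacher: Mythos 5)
Your proof is correct and takes essentially the same route as the paper's: the paper obtains the key factorization $q^n_{t,T}(x_t,x_T)=p^n_{t,T}(x_t,x_T)\varphi_T^{\star,n}(x_T)$ from the Csisz\'ar characterization of the half-bridge projection (the density $\rmd\Pi^{2n+1}/\rmd\Pi^{2n}$ depends only on $\omega_T$) and then integrates in $x_T$, which is exactly what your decomposition $\Pi^{2n+1}=\pprior\,\Pi^{2n}_{|T}$ plus Bayes' rule yields, and your telescoping in Step 2 is just the explicit version of the paper's remark that \eqref{eq:dynamic_link} follows from \eqref{eq:potential_backward} together with \Cref{prop:continuous_schro}. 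As a minor aside, your derivation in fact produces the corrected constants/indices (a factor $2$ on $\nabla\log p^0_t$ in $b^n_t$, and $\varphi_0^{\circ,n}=\pdata/q_0^n$ rather than $\pdata/p_0^{n+1}$), which appear to be typos in the stated proposition.
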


\begin{proof}
  We only prove that \eqref{eq:potential_backward} holds. Then
  \eqref{eq:dynamic_link} is a direct consequence of
  \eqref{eq:potential_backward} and \Cref{prop:continuous_schro}. Let
  $n \in \nset$. Similarly to the proof of \Cref{prop:existence_potential},
  there exists $\varphi_T^{\star, n}: \ \rset^d \to \rset_+$ such that for any
  $\{\omega_t\}_{t=0}^T \in \contspace$ we have
  \begin{equation}
    \label{eq:density_cont}
    (\rmd \Pi^{2n+1} / \rmd \Pi^{2n})(\{\omega_t\}_{t=0}^T) = \varphi_T^{\star, n}(\omega_T)  . 
  \end{equation}
  Note that as in \Cref{prop:continuous_schro}, that for any
  $s, t \in \ccint{0,T}$, $\Pi^{2n+1}_{s,t}$ admits a positive density w.r.t the Lebesgue
  measure denoted $q^n_{s,t}$ and $\Pi^{2n}_{s,t}$ admits a positive density w.r.t the Lebesgue
  measure denoted $p^n_{s,t}$. Combining this result and \eqref{eq:density_cont}, we
  get that for any $t \in \ccint{0,T}$ and $x_t, x_T \in \rset^d$ we have
  \begin{equation}
    q_{t,T}^n(x_t, x_T) = p_{t,T}^n(x_t, x_T)\varphi_T^{\star, n}(x_T)  . 
  \end{equation}
  We have that for any $t \in \ccint{0,T}$
  \begin{equation}
    \textstyle{q_t(x_t) = p_t^{n}(x_t)  \int \varphi_T^{\star, n}(x_T) p_{T|t}^n(x_T|x_t) \rmd x_T = p_t^{n}(x_t) \varphi_t^{\star, n}(x_t) .}
  \end{equation}
  The proof for that for any $n \in \nset$, $t \in \ccint{0,T}$ and
  $x_t \in \rset^d$, $p_t^{n+1}(x_t) = q_t^{n}(x_t) \varphi_t^{\circ, n}(x_t)$,
  is similar.
\end{proof}

The link between the two formulations is explicit in
\eqref{eq:potential_backward}. Then, \eqref{eq:dynamic_link} is a
straightforward consequence of \eqref{eq:potential_backward} and should be
compared with \Cref{sec:proof-crefpr_continuous}. Another proof of
\Cref{prop:continuous_schro_potential} make use of a generalization of
\eqref{eq:potential_backward} to joint densities and use the fact that for any $n \in \nset$, $\Pi^{n+1}$ is a Doob $h$-transform of $\Pi^n$ (see \cite[Paragraph 39.1]{rogers2000diffusions} for a definition). Note that this relationship
between the potential and the density of the half-bridge is not new. In
particular, a similar version of this equation can be found in
\cite{bernton2019schr}. In \cite{finlay2020learning}, the authors establish a
similar relationship in the case of the full \schro bridge.

\subsection{Likelihood computation for Schr\"{o}dinger bridges}
\label{sec:likel-comp-schr}

We provide here details on the likelihood computation of generative
models obtained with Schr\"{o}dinger bridges. Under the conditions of
\cite[Theorem 4.12]{leonard2011stochastic}, we define $(\bfX_t^\star)_{t \in \ccint{0,T}}$ the
diffusion associated with $\Pi^\star$, see \eqref{eq:dynamic_schro} as well as
its time reversal, $(\bfY_t^\star)_{t \in \ccint{0,T}}$. There exist
$f^\star, b^\star: \ \ccint{0,T} \times \rset^d \to \rset^d$ such that 
$(\bfX_t^\star)_{t \in \ccint{0,T}}$ and $(\bfY_t^\star)_{t \in \ccint{0,T}}$
are weak solutions to the following SDEs
\begin{equation}
  \rmd \bfX_t^\star = f^\star_t(\bfX_t^\star) \rmd t + \sqrt{2} \rmd \bfB_t  , \qquad  \rmd \bfY_t^\star = b^\star_{T-t}(\bfY_t^\star) \rmd t + \sqrt{2} \rmd \bfB_t  .
\end{equation}
We assume that for any $t \in \ccint{0,T}$ there exists
$p_t^\star : \ \rset^d \to \rset_+$ such that for any $x \in \rset^d$,
$(\rmd \Pi^\star_t / \rmd \Leb)(x) = p_t^\star(x)$. In addition, we assume that
$p^\star \in \rmc^\infty(\ccint{0,T} \times \rset^d, \rset_+)$. In this case, we
have that $\Pi^\star$ is also associated with the process
$(\tbfX_t^\star)_{t \in \ccint{0,T}}$ associated with the ODE
\begin{equation}
  \rmd \tbfX^\star_t= \{f^\star_t(\tbfX_t^\star) - \nabla \log p_t^\star(\tbfX_t^\star) \} \rmd t  ,
\end{equation}
and $\tbfX_T^\star$ has distribution $\pprior$; see e.g. \cite[Section A]{song2020score}. 
Since $(\bfY_t^\star)_{t \in \ccint{0,T}}$ is the time-reversal of
$(\bfX_t^\star)_{t \in \ccint{0,T}}$ we have that for any $t \in \ccint{0,T}$
and $x \in \rset^d$
\begin{equation}
  b^\star_t(x) = -f^\star_t(x) + 2 \nabla \log p_t^\star(x)  . 
\end{equation}
Therefore, we get that $(\tbfX_t^\star)_{t \in \ccint{0,T}}$ is associated with the ODE
\begin{equation}
\label{eq:ode_forward}
  \rmd \tbfX^\star_t = \frac{1}{2}\left(f^\star_t(\tbfX_t^\star) - b^\star_t(\tbfX_t^\star) \right) \rmd t  . 
\end{equation}
Using this result we can compute the log-likelihood of the model using the
instantaneous change of variable formula \citep{chen2018neural}, see also
\cite[Appendix D.2]{song2020score}
\begin{equation}
  \label{eq:neural_ode}
  \textstyle{\log \pdata(\tbfX_0^\star) = \log \pprior(\tbfX_T^\star) + \tfrac{1}{2}\int_0^T \mathrm{div}(f^\star_t - b^\star_t)(\tbfX_t^\star) \rmd t \eqsp . }
\end{equation}
As in \cite{song2020score}, we can use the Skilling--Hutchinson trace estimator
to compute the divergence operator \citep{skilling1989eigenvalues,hutchinson1989stochastic}. In practice, we
discretize the dynamics of $(\tbfX_t^\star)_{t \in \ccint{0,T}}$ and use the
network $B_{\beta^n}$ obtained with the last iterate of \Cref{algo:ipf_score}
and solve the ODE backward in time, recalling that $\tbfX_T^\star$ has
distribution $\pprior$. Similarly, we can define
\begin{equation}
  \rmd \tbfY_t = \{b^\star_{T-t}(\tbfY_t^\star) - \nabla \log p_{T-t}^\star(\tbfY_t^\star) \} \rmd t  ,
\end{equation}
and $\bfY_0^\star$ has distribution $\pprior$. Similarly to \eqref{eq:ode_forward}, we get that $(\tbfY_t^\star)_{t \in \ccint{0,T}}$ is associated with the ODE
\begin{equation}
  \rmd \tbfY_t = \frac{1}{2}\left(b^\star_{T-t}(\tbfY_t^\star)-f^\star_{T-t}(\tbfY_t^\star)\right) \rmd t  . 
\end{equation}
Similarly to \eqref{eq:neural_ode}, we have
\begin{equation}
  \label{eq:neural_ode}
  \textstyle{\log \pdata(\tbfY_T^\star) = \log \pprior(\tbfY_0^\star) + \tfrac{1}{2} \int_0^T \mathrm{div}(b^\star_{T-t} - f^\star_{T-t})(\tbfY_t^\star) \rmd t \eqsp . }
\end{equation}
In practice, we discretize the dynamics of $(\tbfY_t^\star)_{t \in \ccint{0,T}}$
and use the networks $F_{\alpha^n}$, $B_{\beta^n}$ obtained with the last iterate of
\Cref{algo:ipf_score} and solve the ODE forward in time, recalling that $\tbfY_0^\star$ has distribution $\pprior$. Note that in this case, we solve the ODE forward in time contrary to  \cite{durkan2021maximum}.

\section{Training Techniques}
\label{sec:implementation_details}
    \label{sec:arch-deta-addit}

    In this section we present some practical guidelines for the implementation of DSB, based on \Cref{algo:ipf_score}. We emphasize that, contrarily to previous approaches  \cite{song2020score,song2020improved,ho2020denoising,nichol2021beatgans}, we do not weight the loss functions as we do not notice any improvement.  Let
$I \subset \{0, N-1\} \times \{1, M\}$. We define the generalized losses
$\hat{\ell}_{n,I}^b$ and $\hat{\ell}_{n,I}^f$ given by
\begin{align}
\label{eq:scorematchforward_gen}
     &\textstyle{\hlb_{n,I}(\beta) = M^{-1}\sum_{(k,j) \in I} \normLigne{B_\beta(k+1,X^{j}_{k+1})-(X^j_{k+1}+F^{n}_k(X^{j}_{k+1})-F^{n}_{k}(X^{j}_{k}))}^2}  , \\
\label{eq:scorematchbackward_gen}
    &\textstyle{\hlf_{n+1, I}(\alpha) = M^{-1} \sum_{(k,j) \in I} \normLigne{F_\alpha(k,X^j_k)-(X^j_{k}+B^{n}_{k+1}(X^j_{k+1})-B^n_{k+1}(X^j_k))}^2}  . 
\end{align}
We first describe three techniques to compute these losses, then further methods to improve performance.
\vspace{0.2cm}
\begin{technique} Simulated Trajectory \label{tech_1} \end{technique}
The losses \eqref{eq:scorematchforward_gen} and
\eqref{eq:scorematchbackward_gen} may be computed by simulating diffusion
trajectories as described in \Cref{algo:ipf_score}. For each sample $j \in \{1,\ldots, M\}$ the skeleton of points in the sampled trajectory, $\{X_k^j\}_k$, will be correlated hence only a single uniformly sampled time-step per sample is used to compute the loss per gradient step. In addition, after the initial DSB iteration, simulating the diffusion trajectory involves computationally heavy neural network operations per diffusion step.

\vspace{0.2cm}
\begin{technique}Closed Form Sampling \label{tech_2} \end{technique}
  Since $f_\alpha^0(x) = -\alpha x$, with fixed $\alpha$, it is
  not necessary to compute full trajectories for the first IPF iteration and one
  may sample points along the trajectory in closed-form by sampling from a Gaussian distribution with appropriate mean and covariance. This technique also improves the computational speed of the first DSB iteration.

\vspace{0.2cm}
\begin{technique}Cached Trajectory \end{technique}
After the initial DSB iterations it is not possible perform closed form sampling as per \Cref{tech_2}. Simulating the full diffusion trajectory is both wasteful and expensive as described in \Cref{tech_1}. In order to obtain a speed-up we consider a cached-version of \Cref{algo:ipf_score} given by \Cref{algo:ipf_score_cached} which entails storing and then resampling diffusion trajectories. Resampled trajectories are then used to compute losses \eqref{eq:scorematchforward_gen} and \eqref{eq:scorematchbackward_gen}. The cache may be refreshed at a certain frequency by once again simulating the diffusion. One may tune the cache-size and refresh frequency to available memory. This modification allows for significant speed-up as the trajectories are not
  simulated at each training iteration. 
  

\begin{algorithm}[h]
    \caption{Cached Diffusion \schro Bridge}
    \label{algo:ipf_score_cached}
    \begin{algorithmic}[1] 
      \FOR{$n \in \{0, \dots,L\}$} 
        \WHILE{not converged}
        \STATE Sample and store $\{X^j_{k}\}_{k,j=0}^{N,M}$ where  $X^j_0 \sim \pdata$ and \\
      $X^{j}_{k+1} =X^{j}_{k} +\gamma_{k+1}
      f_{\alpha^{n}}(k,X^{j}_{k})+\sqrt{2 \gamma_{k+1}} Z^{j}_{k+1}$
      \WHILE{not refreshed}
      \STATE Sample $I$ (uniform in $\{0, N-1\} \times \{1, M\}$)
      \STATE Compute $\hlb_{n, I}(\beta^n)$ using \eqref{eq:scorematchforward_gen}
      \STATE $\beta^{n} = \textrm{Gradient Step}(\hlb_{n, I}(\beta^n))$
      \ENDWHILE
      \ENDWHILE
      \WHILE{not converged}
      \STATE Sample $\{X^j_{k}\}_{k,j=0}^{N,M}$, where $X^j_N \sim \pprior$, and \\
      $X^j_{k}=X^j_{k+1}+\gamma_{k} b_{\beta^n}(k,X^j_{k})+\sqrt{2 \gamma_{k+1}} Z^{j}_{k}$      
      \WHILE{not refreshed}
      \STATE Sample $I$ (uniform in $\{0, N-1\} \times \{1, M\}$)
      \STATE Compute $\hlf_{n+1, I}(\alpha^{n+1})$ using \eqref{eq:scorematchbackward_gen}
      \STATE $\alpha^{n+1} = \textrm{Gradient Step}(\hlf_{n+1, I}(\alpha^{n+1}))$
      \ENDWHILE
      \ENDWHILE
      \ENDFOR
      \STATE \textbf{Output: } $(\alpha^{L+1}, \beta^{L})$
    \end{algorithmic}
  \end{algorithm}
 
 \vspace{0.2cm}
  \begin{technique} \label{sec:gaussian_variance}Tune Gaussian Prior mean/ variance \end{technique} The
  convergence of the IPF is affected by the mean and covariance matrix of
  the target Gaussian. In \Cref{sec:addit-exper_2d} we investigate
  possible choices for these values. In practice we recommend to choose the
  variance of the Gaussian prior $\pprior$ to be slightly larger than the one of
  the target dataset and to choose the mean of $\pprior$ to be equal to the one
  of the target dataset. This remark is in accordance with \cite[Technique
  1]{song2020improved}.

\vspace{0.2cm}
\begin{technique} Network Refinement / Fine Tuning \end{technique}
Training large networks from scratch, per DSB iteration, is very expensive. However, from \eqref{eq:sumscore}-\eqref{eq:sumscore2},
\begin{align}
  &b_{k+1}^n(x) = \textstyle{b_{k+1}^{n-1}(x) + 2\nabla \log p_{k+1}^{n}(x) - 2 \nabla \log q_k^{n-1}(x)  ,} \\
  &f_k^n(x) = \textstyle{f_k^{n-1}(x) + 2 \nabla \log q_k^{n-1}(x) - 2 \nabla \log p_{k+1}^{n-1}(x)  . }
\end{align}
One may therefore initialize networks at DBS iteration $n$ from $n-1$ in order
to reduce training time. In future work, we plan to investigate more
sophisticated warm-start approaches through meta-learning.

\vspace{0.2cm}
\begin{technique} Exponential Moving Average \end{technique} Similar to
\cite[Technique 5]{song2020improved}, we found taking the exponential moving
average of network parameters across training iterations, with rate $0.999$,
improved performance.
 
\section{ Additional Experimental Results and Details} \label{sec:addn_exp} 
We provide additional examples for the two-dimensional setting in \Cref{sec:addit-two-dimens}. We then turn to higher dimensional generative modeling in
\Cref{sec:generative-modeling-1}. Finally, we detail our dataset interpolation
experiments in \Cref{sec:dataset-morphing}. Code is available here:
\href{https://github.com/JTT94/diffusion\_schrodinger\_bridge}{https://github.com/JTT94/diffusion\_schrodinger\_bridge}.

\subsection{Two-dimensional experiments}
  \label{sec:addit-exper_2d}
  \label{sec:addit-two-dimens}
  
  In the case of two-dimensional distributions we use a simple
  architecture for the networks $f_{\alpha}$ and $b_{\beta}$, see
  \Cref{fig:2d_architecture}. We use the variational formulation
  \Cref{sec:drift-match-formulation} because our network architecture does not
  have a residual structure.  To optimize our networks we use ADAM
  \cite{kingma:ba:2014} with momentum $0.9$ and learning rate $10^{-4}$.
  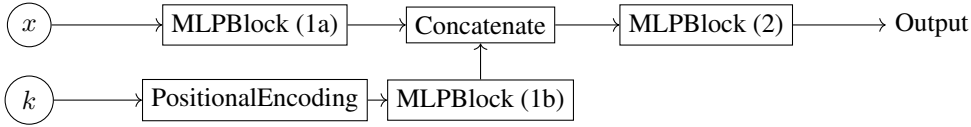
\begin{figure}[H]
  \centering
  \begin{tikzpicture}
    \node (Output) at (0,0) {Output};

    \node[shape=rectangle,draw=black] (MLP2) at ($(Output)+(\xNodeMoins,0)$) {MLPBlock (2)};
    \node[shape=rectangle,draw=black] (Concatenate) at ($(MLP2)+(\xNodeMoins,0)$) {Concatenate};
    \node[shape=rectangle,draw=black] (MLP1a) at ($(Concatenate)+(\xNodeMoins,0)$) {MLPBlock (1a)};
    \node[shape=rectangle,draw=black] (MLP1b) at ($(Concatenate)+(0,\xNodemoinstiny)$) {MLPBlock (1b)};
    \node[shape=rectangle,draw=black] (PosBlock) at ($(MLP1b)+(\xNodeMoins,0)$) {PositionalEncoding};
    \node[shape=circle,draw=black] (Input) at ($(MLP1a)+(\xNodeMoins,0)$) {$x$};
    \node[shape=circle,draw=black] (Time) at ($(PosBlock)+(\xNodeMoins,0)$) {$k$};    

    \path [->] (MLP2) edge (Output);
    \path [->] (Concatenate) edge (MLP2);
    \path [->] (MLP1b) edge (Concatenate);
    \path [->] (MLP1a) edge (Concatenate);
    \path [->] (Input) edge (MLP1a);
    \path [->] (PosBlock) edge (MLP1b);
    \path [->] (Time) edge (PosBlock);
  \end{tikzpicture}
  \caption{Architecture of the networks used in the two-dimensional
    setting. Each MLP Block is a Multilayer perceptron network. The
    ``PositionalEncoding'' block applies the sine transform described in
    \cite{vaswani2017attention}. MLPBlock (1a) has shape $(2,16,32)$, MLPBlock
    (1b) has shape $(1,16,32)$ and MLPBlock has shape $(64,128,128,2)$. The
    total number of parameters is 26498.}
  \label{fig:2d_architecture}
\end{figure}

 In all two-dimensional experiments we fix $\gamma_k = 10^{-2}$ and use a batch
  size of $512$. The mean and variance of $\pprior$ are matched to those of
  $\pdata$. The cache contains $10^4$ samples and is refreshed every $10^3$
  iterations. We train each DSB step for $10^4$ iterations.  All two-dimensional
  experiments are run on Intel(R) Core(TM) i7-10850H CPU @ 2.70GHz CPUs.

  In \Cref{fig:ipf_iteration} we present additional two-dimensional experiments.

\captionsetup[subfigure]{labelformat=empty}
\begin{figure}[h]
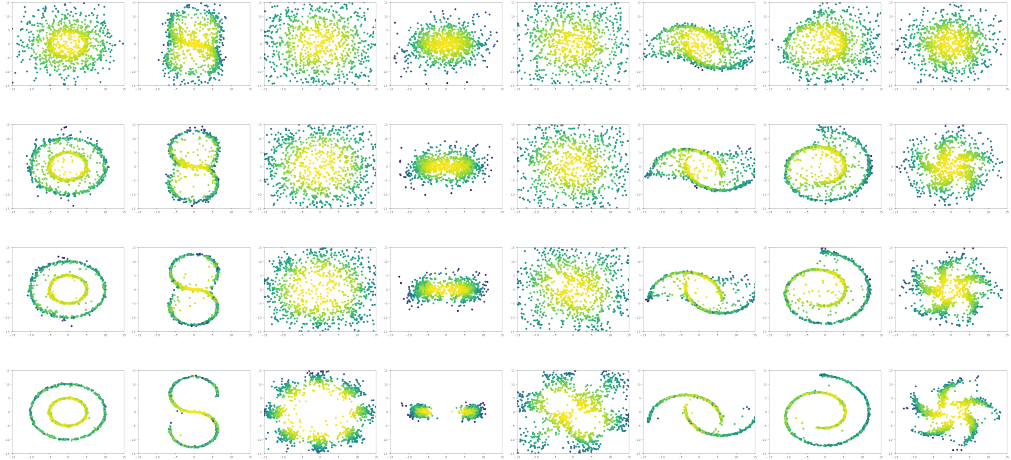

  \centering   
  \subfloat{\includegraphics[width=0.12\linewidth]{./fig/circle_gaussian_adaptive/im/9999_backward_1_registration_19.png}}
  \subfloat{\includegraphics[width=0.12\linewidth]{./fig/scurve/im/9999_backward_1_registration_19.png}}
  \subfloat{\includegraphics[width=0.12\linewidth]{./fig/8gaussians/im/9999_backward_1_registration_19.png}}
  \subfloat{\includegraphics[width=0.12\linewidth]{./fig/mixture/im/9999_backward_1_registration_19.png}}
  \subfloat{\includegraphics[width=0.12\linewidth]{./fig/checker/im/9999_backward_1_registration_19.png}}
  \subfloat{\includegraphics[width=0.12\linewidth]{./fig/moon/im/9999_backward_1_registration_19.png}}
  \subfloat{\includegraphics[width=0.12\linewidth]{./fig/swiss/im/9999_backward_1_registration_19.png}}
  \subfloat{\includegraphics[width=0.12\linewidth]{./fig/pinwheel/im/9999_backward_1_registration_19.png}}\\
  \subfloat{\includegraphics[width=0.12\linewidth]{./fig/circle_gaussian_adaptive/im/9999_backward_3_registration_19.png}}
  \subfloat{\includegraphics[width=0.12\linewidth]{./fig/scurve/im/9999_backward_3_registration_19.png}}
  \subfloat{\includegraphics[width=0.12\linewidth]{./fig/8gaussians/im/9999_backward_3_registration_19.png}}
  \subfloat{\includegraphics[width=0.12\linewidth]{./fig/mixture/im/9999_backward_3_registration_19.png}}
  \subfloat{\includegraphics[width=0.12\linewidth]{./fig/checker/im/9999_backward_3_registration_19.png}}
  \subfloat{\includegraphics[width=0.12\linewidth]{./fig/moon/im/9999_backward_3_registration_19.png}}
  \subfloat{\includegraphics[width=0.12\linewidth]{./fig/swiss/im/9999_backward_3_registration_19.png}}
  \subfloat{\includegraphics[width=0.12\linewidth]{./fig/pinwheel/im/9999_backward_3_registration_19.png}}\\
  \subfloat{\includegraphics[width=0.12\linewidth]{./fig/circle_gaussian_adaptive/im/9999_backward_5_registration_19.png}}
  \subfloat{\includegraphics[width=0.12\linewidth]{./fig/scurve/im/9999_backward_5_registration_19.png}}
  \subfloat{\includegraphics[width=0.12\linewidth]{./fig/8gaussians/im/9999_backward_5_registration_19.png}}
  \subfloat{\includegraphics[width=0.12\linewidth]{./fig/mixture/im/9999_backward_5_registration_19.png}}
  \subfloat{\includegraphics[width=0.12\linewidth]{./fig/checker/im/9999_backward_5_registration_19.png}}
  \subfloat{\includegraphics[width=0.12\linewidth]{./fig/moon/im/9999_backward_5_registration_19.png}}
  \subfloat{\includegraphics[width=0.12\linewidth]{./fig/swiss/im/9999_backward_5_registration_19.png}}
  \subfloat{\includegraphics[width=0.12\linewidth]{./fig/pinwheel/im/9999_backward_5_registration_19.png}}\\
  \subfloat{\includegraphics[width=0.12\linewidth]{./fig/circle_gaussian_adaptive/im/9999_backward_15_registration_19.png}}
  \subfloat{\includegraphics[width=0.12\linewidth]{./fig/scurve/im/9999_backward_15_registration_19.png}}
  \subfloat{\includegraphics[width=0.12\linewidth]{./fig/8gaussians/im/9999_backward_20_registration_19.png}}
  \subfloat{\includegraphics[width=0.12\linewidth]{./fig/mixture/im/9999_backward_20_registration_19.png}}
  \subfloat{\includegraphics[width=0.12\linewidth]{./fig/checker/im/9999_backward_20_registration_19.png}}
  \subfloat{\includegraphics[width=0.12\linewidth]{./fig/moon/im/9999_backward_20_registration_19.png}}
  \subfloat{\includegraphics[width=0.12\linewidth]{./fig/swiss/im/9999_backward_20_registration_19.png}}
  \subfloat{\includegraphics[width=0.12\linewidth]{./fig/pinwheel/im/9999_backward_20_registration_19.png}}
  \caption{The first row corresponds to iteration 1 of DSB, the second to
    iteration 3 of DSB, the third to iteration 5 of DSB and the last to
    iteration 20 of DSB.}
  \label{fig:ipf_iteration}
\end{figure}

We found that the variance of $\pprior$ has an impact on the convergence speed
of DSB, see \Cref{fig:effect_init_figure} for an illustration.
This remark is in accordance with \cite[Technique 1]{song2020improved}.
In practice we
recommend to set the variance to be larger than the variance of the target
dataset, see \Cref{sec:gaussian_variance} in \Cref{sec:implementation_details}.

\captionsetup[subfigure]{labelformat=parens}
\begin{figure}[h]
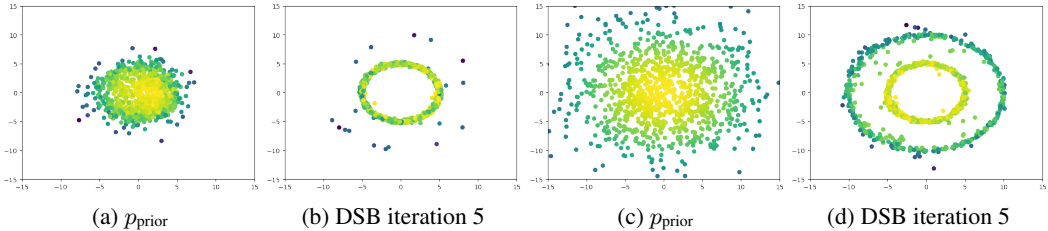

  \centering \subfloat[$\pprior$]{\includegraphics[width=0.25\linewidth]{./fig/circle_gaussian_non_adaptive/im/9999_backward_1_registration_19.png}}
  \hfill \subfloat[DSB iteration 5]{\includegraphics[width=0.25\linewidth]{./fig/circle_gaussian_non_adaptive/im/9999_backward_5_registration_19.png}}
  \hfill \subfloat[$\pprior$]{\includegraphics[width=0.25\linewidth]{./fig/circle_gaussian_adaptive/im/9999_backward_1_registration_1.png}}
  \hfill \subfloat[DSB iteration 
  5]{\includegraphics[width=0.25\linewidth]{./fig/circle_gaussian_adaptive/im/9999_backward_5_registration_19.png}}
  \caption{Effect of the variance of $\pprior$ on the convergence of DSB. If
    $\pprior$ has a small variance $\sigma^2$ (here $\sigma^2 = 5$ in (a) and
    (b)) then DSB converges more slowly. If
    $\sigma^2 \approx \sigma_{\mathrm{data}}^2$, where
    $\sigma_{\mathrm{data}}^2$ is the variance of $\pdata$ then we observe more
    diversity in the samples obtained using DSB even for few iterations.}
  \label{fig:effect_init_figure}
\end{figure}
\captionsetup[subfigure]{labelformat=parens}

\begin{figure}[h]
  \centering
           \begin{tikzpicture}
        \node[inner sep=0pt, label={\small},] (scurve_orig) at (0+\offset,0)
        {\includegraphics[width=0.25\linewidth, trim=0.0cm 0.0cm 0.0cm 0.6cm, clip]{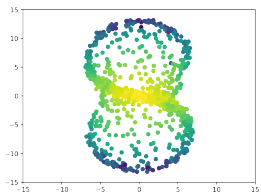}};
        \end{tikzpicture}
  \caption{Failure of DSB for low $N$. DSB iteration $3$ with $N=2$ and $30,000$ training steps per DSB iteration. The results deteriorate significantly after $5$ iterations of the algorithm.}
  \label{fig:failure}
\end{figure}

Finally, since DSB does not require the number of Langevin iterations $N$ to be
large, one may question why not use $N=1$ in order to derive a
feed-forward generative model. In practice this choice of $N$ is not desirable for two reasons.
\begin{enumerate*}[label=(\alph*)]
\item Firstly, since $p_N$ is not a good approximation of $\pprior$, theoretical results
  such as \cite[Corollary 1]{leger2020gradient} indicates that more IPF iterations are needed.
\item Second, in our experiments we observe that in order to obtain similar
  results to $N=10$ with $N=1$ we need to substantially increase the size of the
  networks, even for a large number of IPF iterations, see \Cref{fig:failure}.
\end{enumerate*}

\subsection{Generative Modeling}
\label{sec:generative-modeling-1}

\paragraph{Implementation details}
We use a reduced version of the U-net architecture from
\citet{nichol2021improved} for $F_{\alpha}$ and $B_{\beta}$, where we set the
number of channels to $64$ rather than $128$ for computational resource
purposes. We tried the architecture of \cite{song2020improved}, however we
observed worse results in our framework. Although we observed improvement using
the corrector scheme of \citet{song2020score}, this improvement was
similar to augmenting the number of steps in the Langevin
scheme. We therefore chose to avoid using such techniques altogether because of
the increase in computing time when sampling, often by doubling the number of
passes through the network. 

We chose the sequence $\{\gamma_{k}\}_{k=0}^N$ to be invariant by time reversal,
\ie \ for any $k \in \{0, \dots, N\}$, $\gamma_k = \gamma_{N-k}$.  In practice,
we assume that $N$ is even and let
$\gamma_k = \gamma_0 + (2k/N) (\bar{\gamma} - \gamma_0)$ for
$k \in \{0, \dots, N/2\}$ with $\gamma_0 = 10^{-5}$ and
$\bar{\gamma} = 10^{-1}$. The rest of the sequence is obtained by symmetry.

In the case of the MNIST dataset (dimension $d=28 \times 28 = 784$) we set the
batch size to $128$, the number of samples in the cache to $5 \times 10^4$ with
$10$ time-points sampled from each trajectory for each sample of $\pdata$. We end up with an
effective cache of size $5 \times 10^5$. The cache is refreshed each $10^3$
iterations and the networks are trained for $5 \times 10^3$ iterations. Again we
use the ADAM optimizer with momentum $0.9$ and learning rate
$10^{-4}$. $\pprior$ is a Gaussian density with zero mean and identity
covariance matrix. We have presented results for varying number of diffusion steps, $N$.

In the case of the CelebA dataset (dimension $d = 32 \times 32 \times 3 = 3072$)
we set the batch size to $256$, number of steps $N=50$, the number of samples in the cache to $250$ with
$1$ time-point sampled from each trajectory for each sample of $\pdata$.  The cache is
refreshed each $10^2$ iterations and the networks are trained for
$5 \times 10^3$ iterations. Again we use the ADAM optimizer with momentum $0.9$
and learning rate $10^{-4}$. $\pprior$ is a Gaussian density with zero mean and
identity covariance matrix.

Our results on MNIST and CelebA are computed using up to 4 NVIDIA Tesla V100 from the
Google Cloud Platform.

\paragraph{Additional examples}

In this section we present additional examples for our high-dimensional
generative modeling experiments. In \Cref{fig:interpolation} we perform
interpolation in the latent space. More precisely we let $X_N^0$ and $X_N^1$ be
two samples from $\pprior$. We then compute
$X_N^\lambda = (1 - \lambda) X_N^0 + \lambda X_N^1$ for different values of
$\lambda \in \ccint{0,1}$. For each value of $\lambda \in \ccint{0,1}$ we
associate $X_0^\lambda$ which corresponds to the output sample obtained using
the generative model given by DSB with final condition $X_N^\lambda$. Note that
in order to obtain a deterministic embedding we fix the Gaussian random
variables used in the sampling. One could also have used the deterministic
embedding used by \cite{song2020score}, \ie \ a neural ordinary differential
equation that admits the same marginals as the diffusion thus enabling exact
likelihood computation, see \Cref{sec:likel-comp-schr} for details.

\captionsetup[subfigure]{labelformat=empty}
\begin{figure}[h]
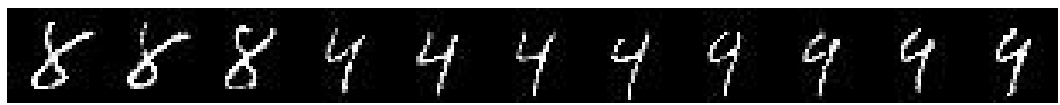
  
  \centering
  \hfill \subfloat[]{\includegraphics[width=0.09\linewidth]{./fig/interp/7_00000.jpg}}
  \hfill \subfloat[]{\includegraphics[width=0.09\linewidth]{./fig/interp/7_00002.jpg}}
  \hfill \subfloat[]{\includegraphics[width=0.09\linewidth]{./fig/interp/7_00004.jpg}}
  \hfill \subfloat[]{\includegraphics[width=0.09\linewidth]{./fig/interp/7_00006.jpg}}
  \hfill \subfloat[]{\includegraphics[width=0.09\linewidth]{./fig/interp/7_00008.jpg}}
  \hfill \subfloat[]{\includegraphics[width=0.09\linewidth]{./fig/interp/7_00010.jpg}}
  \hfill \subfloat[]{\includegraphics[width=0.09\linewidth]{./fig/interp/7_00012.jpg}}
  \hfill \subfloat[]{\includegraphics[width=0.09\linewidth]{./fig/interp/7_00014.jpg}}
  \hfill \subfloat[]{\includegraphics[width=0.09\linewidth]{./fig/interp/7_00016.jpg}}
  \hfill \subfloat[]{\includegraphics[width=0.09\linewidth]{./fig/interp/7_00018.jpg}}
  \hfill \subfloat[]{\includegraphics[width=0.09\linewidth]{./fig/interp/7_00019.jpg}}
  \caption{Interpolation in the latent space for MNIST.}
      \label{fig:interpolation}
\end{figure}
\captionsetup[subfigure]{labelformat=empty}

In \Cref{fig:high_quality} we present high quality samples for MNIST.  In order
to obtain these high quality samples we consider our baseline MNIST
configuration but instead of choosing $N = 10$ time steps we consider $N=
30$. In addition, we train the networks for $15 \times 10^3$ iterations instead
of $5 \times 10^3$. The number of samples in the cache is $M = 500$

\begin{figure}[h]
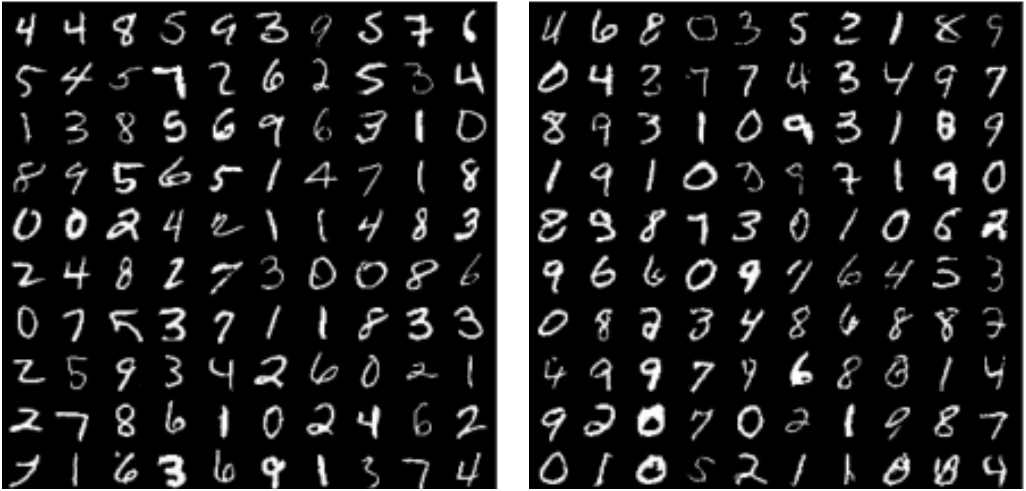
  
  \centering
  \hfill
  \begin{tikzpicture}
        \node[inner sep=0pt, label={\small }] (scurve_uno) at (0,0)
        {\includegraphics[width=0.5\linewidth, trim=1.2cm 0.95cm 0cm 0cm, clip]{./fig/original.png}};            
        \node[inner sep=0pt, label={\small},] (scurve_orig) at (7,0)
            {\includegraphics[width=0.5\linewidth, trim=1.2cm 0.95cm 0cm 0cm, clip]{./fig/high_quality_mnist.png}};
      \end{tikzpicture}
      \hfill
  \caption{MNIST samples: original dataset (left) and generated MNIST samples (right) after $12$ DSB iterations}
  \label{fig:high_quality}
\end{figure}

In \Cref{fig:temperature} we present a temperature scaling exploration of the
embedding obtained for CelebA. Similarly to the interpolation experiment we fix
the Gaussian random variables in order to obtain a deterministic mapping from
the latent space to the image space. 

\begin{figure}[h]
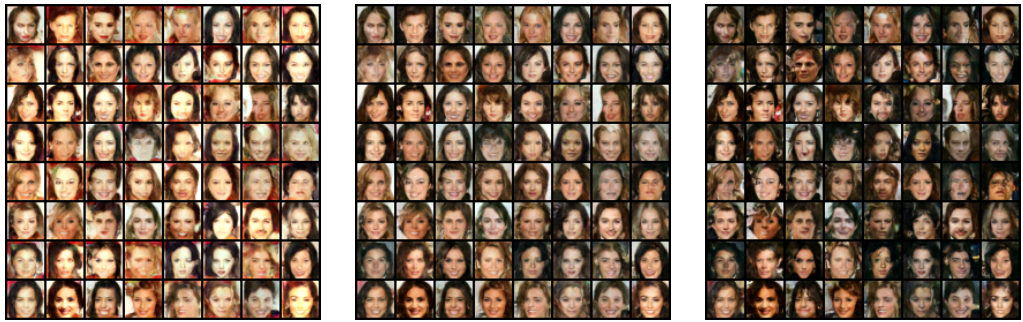

  \centering
  \hfill
  \includegraphics[width=0.3\linewidth]{./fig/temperature/im_grid_49_ratio_0.png}
  \hfill
  \includegraphics[width=0.3\linewidth]{./fig/temperature/im_grid_49_ratio_2.png}
  \hfill
  \includegraphics[width=0.3\linewidth]{./fig/temperature/im_grid_49_ratio_10.png}
  \hfill
  \caption{Temperature scaling in the latent space.}
  \label{fig:temperature}
\end{figure}

In \Cref{fig:ornstein_ulhenbeck} we explore the latent space of our embedding of
CelebA. To do so, we obtain samples using a Ornstein-Ulhenbeck process targeting
$\pprior$. We refer to our project page
\href{https://vdeborto.github.io/publication/schrodinger_bridge/}{project webpage} for an animated version of
this latent space exploration.

\begin{figure}[H]
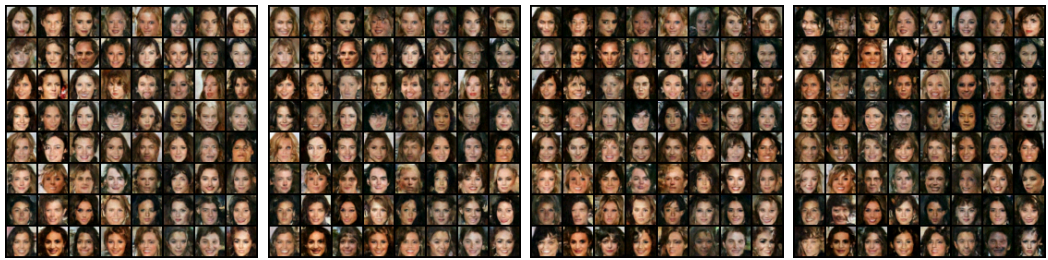

  \centering
  \hfill
  \includegraphics[width=0.24\linewidth]{./fig/im_grid_49_ratio_0.png}
  \hfill
  \includegraphics[width=0.24\linewidth]{./fig/im_grid_49_ratio_13.png}
  \hfill
  \includegraphics[width=0.24\linewidth]{./fig/im_grid_49_ratio_36.png}
  \hfill
  \includegraphics[width=0.24\linewidth]{./fig/im_grid_49_ratio_86.png}
  \hfill    
  \caption{Exploration of the latent space. Samples are generated using a
    Ornstein-Ulhenbeck process targeting $\pprior$ to obtain the initial
    condition then using the generative model given by DSB. From left to right
    to right: samples at time $t=0, 1.3, 3.6, 8.6$.}
  \label{fig:ornstein_ulhenbeck}
\end{figure}

\newpage

\subsection{Dataset interpolation}
\label{sec:dataset-morphing}

For the dataset interpolation task we keep the same parameters and architecture
as before except that the number of Langevin steps is increased to $50$ steps in
the two-dimensional examples and to $30$ steps in the EMNIST/MNIST interpolation
task. We also change the reference dynamics which is chosen to be the one
obtained with the DSB where $\pprior$ is a Gaussian. This choice allows us to
speed up the training of DSB in this setting. Animated plots are available at
\href{https://vdeborto.github.io/publication/schrodinger_bridge/}{project webpage}.

\paragraph{EMNIST/MNIST}

In order to perform translation between the dataset of handwritten letters
(EMNIST) and handwritten digits (MNIST) we reduce EMNIST to $5$ letters so that
it contains as many classes as MNIST (we distinguish upper-case and lower-case
letters), see \cite{cohen2017emnist} for the original dataset.

 \begin{figure}[H]
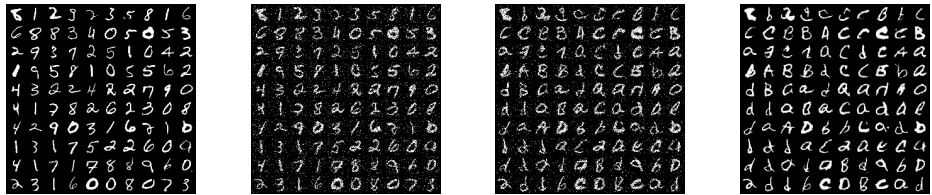

        \centering
         \begin{tikzpicture}
        \node[inner sep=0pt, label={\small }] at (0+\offset,0)
            {\includegraphics[width=0.18\linewidth]{./fig/mnist_to_emnist/im_grid_0.png}};
        \node[inner sep=0pt, label={\small }]  at (\ww+\offset,0)
        {\includegraphics[width=0.18\linewidth]{./fig/mnist_to_emnist/im_grid_10.png}};        
        \node[inner sep=0pt, label={\small }]  at (2*\ww+\offset,0)
            {\includegraphics[width=0.18\linewidth]{./fig/mnist_to_emnist/im_grid_20.png}};
        \node[inner sep=0pt, label={\small }] at (3*\ww+\offset,0)
            {\includegraphics[width=0.18\linewidth]{./fig/mnist_to_emnist/im_grid_29.png}};
          \end{tikzpicture}
          \caption{Iteration 10 of the IPF with $T = 1.5$ ($30$ diffusions
            steps). From left to right: $t=0, 0.4, 1.25, 1.5$.}
  \label{fig:transport_mnist}          
\end{figure}

\paragraph{Two dimensional examples} We present dataset interpolation for a number of classical two-dimensional datasets.

\captionsetup[subfigure]{labelformat=empty}
\begin{figure}[H]
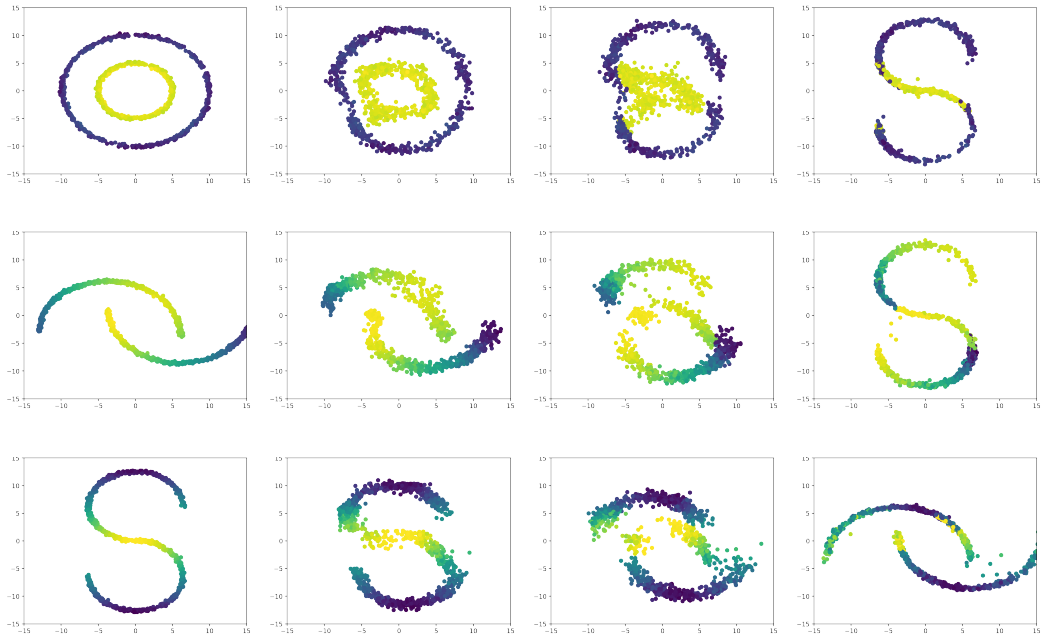

  \centering
  \begin{tikzpicture}
    \node[inner sep=0pt, label={\small},] (scurve_orig) at (0,0)
    {\includegraphics[width=0.24\linewidth, trim=0cm 0cm 0cm 0cm, clip]{./fig/circle_scurve/30000_backward_9_registration_0.png}};
    \node[inner sep=0pt, label={\small},] (scurve_orig) at (3.5,0)
    {\includegraphics[width=0.24\linewidth, trim=0cm 0cm 0cm 0cm, clip]{./fig/circle_scurve/30000_backward_9_registration_15.png}};
    \node[inner sep=0pt, label={\small},] (scurve_orig) at (7,0)
    {\includegraphics[width=0.24\linewidth, trim=0cm 0cm 0cm 0cm, clip]{./fig/circle_scurve/30000_backward_9_registration_30.png}};
    \node[inner sep=0pt, label={\small},] (scurve_orig) at (10.5,0)
    {\includegraphics[width=0.24\linewidth, trim=0cm 0cm 0cm 0cm, clip]{./fig/circle_scurve/30000_backward_9_registration_49.png}};

    \node[inner sep=0pt, label={\small},] (scurve_orig) at (0,-3)
    {\includegraphics[width=0.24\linewidth, trim=0cm 0cm 0cm 0.75cm, clip]{./fig/moon_scurve/30000_backward_9_registration_0.png}};
    \node[inner sep=0pt, label={\small},] (scurve_orig) at (3.5,-3)
    {\includegraphics[width=0.24\linewidth, trim=0cm 0cm 0cm 0.75cm, clip]{./fig/moon_scurve/30000_backward_9_registration_15.png}};
    \node[inner sep=0pt, label={\small},] (scurve_orig) at (7,-3)
    {\includegraphics[width=0.24\linewidth, trim=0cm 0cm 0cm 0.75cm, clip]{./fig/moon_scurve/30000_backward_9_registration_30.png}};
    \node[inner sep=0pt, label={\small},] (scurve_orig) at (10.5,-3)
    {\includegraphics[width=0.24\linewidth, trim=0cm 0cm 0cm 0.75cm, clip]{./fig/moon_scurve/30000_backward_9_registration_49.png}};

    \node[inner sep=0pt, label={\small},] (scurve_orig) at (0,-6)
    {\includegraphics[width=0.24\linewidth, trim=0cm 0cm 0cm 0.75cm, clip]{./fig/moon_scurve/30000_forward_9_registration_0.png}};
    \node[inner sep=0pt, label={\small},] (scurve_orig) at (3.5,-6)
    {\includegraphics[width=0.24\linewidth, trim=0cm 0cm 0cm 0.75cm, clip]{./fig/moon_scurve/30000_forward_9_registration_15.png}};
    \node[inner sep=0pt, label={\small},] (scurve_orig) at (7,-6)
    {\includegraphics[width=0.24\linewidth, trim=0cm 0cm 0cm 0.75cm, clip]{./fig/moon_scurve/30000_forward_9_registration_30.png}};
    \node[inner sep=0pt, label={\small},] (scurve_orig) at (10.5,-6)
    {\includegraphics[width=0.24\linewidth, trim=0cm 0cm 0cm 0.75cm, clip]{./fig/moon_scurve/30000_forward_9_registration_49.png}};        
  \end{tikzpicture}
  \label{fig:2d_interp}
  \caption{Dataset interpolation (DSB iteration 9). From left to right: $t=0, 0.15, 0.30, 0.5$.}
\end{figure}

\end{document}